\algrenewcommand\algorithmicprocedure{}
\newif\ifcomments
\newcommand{\fir}[1]{\mathbf{#1}}
\newcommand{\seco}[1]{\underline{#1}}
\newcommand{\rebut}[1]{{#1}}
\title{Optimizing Data Collection for Machine Learning}
\author{%
  Rafid Mahmood$^1$ ~~~ James Lucas$^1$ ~~~ Jose M. Alvarez$^1$ ~~~ Sanja Fidler$^{1, 2, 3}$ ~~~ Marc T. Law$^1$ \\
  $^1$NVIDIA ~~ $^2$University of Toronto ~~ $^3$Vector Institute \\
  \texttt{\{rmahmood, jalucas, josea, sfidler, marcl\}@nvidia.com} 
  \\ Project Page:   \url{https://nv-tlabs.github.io/LearnOptimizeCollect/}
}
\begin{document}

\maketitle

\begin{abstract}
Modern deep learning systems require huge data sets to achieve impressive performance, but there is little guidance on how much or what kind of data to collect. Over-collecting data incurs unnecessary present costs, while under-collecting may incur future costs and delay workflows. We propose a new paradigm for modeling the data collection workflow as a formal \emph{optimal data collection problem} that allows designers to specify performance targets, collection costs, a time horizon, and penalties for failing to meet the targets. Additionally, this formulation generalizes to tasks requiring multiple data sources, such as labeled and unlabeled data used in semi-supervised learning. To solve our problem, we develop Learn-Optimize-Collect (LOC), which minimizes expected future collection costs. Finally, we numerically compare our framework to the conventional baseline of estimating data requirements by extrapolating from neural scaling laws. We significantly reduce the risks of failing to meet desired performance targets on several classification, segmentation, and detection tasks, while maintaining low total collection costs.
\end{abstract}

\section{Introduction}
\label{sec:introduction}

When deploying a deep learning model in an industrial application, designers often mandate that the model must meet a pre-determined baseline performance, such as a target metric over a validation data set. 
For example, an object detector may require a certain minimum mean average precision before being deployed in a safety-critical setting. 
One of the most effective ways of meeting target performances is by collecting more training data for a given model.

Determining how much data is needed to meet performance targets can impact costs and development delays.
Overestimating the data requirement incurs excess costs from collection, cleaning, and annotation. 
For instance, annotating segmentation masks for a driving data set takes between $15$ to $40$ seconds per object. For $100{,}000$ images the annotation could require between $170$ and $460$ days-equivalent of time~\citep{acuna2018efficient, mahmood2022howmuch}.
On the other hand, collecting too little data may incur future costs and workflow delays from having to collect more later. 
For example, in medical imaging applications, this means further clinical data acquisition rounds
that require expensive clinician time. 
In the worst case, designers may even realize that a project is infeasible only after collecting insufficient data.

The growing literature on sample complexity in machine learning has identified neural scaling laws that scale model performance with data set sizes according to power laws
\citep{frey1999modeling, gu2001modelling, hestness2017deep, rosenfeld2019constructive, kaplan2020scaling, hoiem2021learning, bahri2021explaining, bisla2021theoretical}.
For instance,~\citet{rosenfeld2019constructive} fit power law functions on the performance statistics of small data sets to extrapolate the learning curve with more data.
In contrast,~\citet{mahmood2022howmuch} consider estimating data requirements and show that even 
small errors in a power law model of the learning curve can translate to massively over- or underestimating how much data is needed.
Beyond this, different data sources have different costs and scale differently with performance~\citep{mikami2021scaling,acuna2021f,Prakash_2021_ICCV,acuna2022domain}.
For example, although unlabeled data may be easier to collect than labeled data, some semi-supervised learning tasks may need an order of magnitude more unlabeled data to match the performance of a small labeled set. 
Thus, collecting more data based only on estimation will fail to capture uncertainty and collection costs.

In this paper, we propose a new paradigm for modeling the data collection workflow as an \emph{optimal data collection problem}. Here, a designer must minimize the cost of collecting enough data to obtain a model capable of a desired performance score.
They have multiple collection rounds, where after each round, they re-evaluate the model and decide how much more data to order.
The data has per-sample costs and moreover, the designer pays a penalty if they fail to meet the target score within a finite horizon. 
Using this formal framework, we develop an optimization approach for minimizing the expected future collection costs and show that this problem can be optimized in each collection round via gradient descent. 
Furthermore, our optimization problem immediately generalizes to decisions over multiple data sources (e.g., unlabeled, long-tail, cross-domain, synthetic) that have different costs and impacts on performance. 
Finally, we demonstrate the value of optimization over na\"{i}vely estimating data set requirements (e.g.,~\citep{mahmood2022howmuch}) for several machine learning tasks and data sets. 

Our contributions are as follows.
(1) We propose the optimal data collection problem in machine learning, which formalizes data collection workflows.
(2) We introduce Learn-Optimize-Collect (LOC), a learning-and-optimizing framework that minimizes future collection costs, can be solved via gradient descent, and has analytic solutions in some settings. 
(3) We generalize the data collection problem and LOC to a multi-variate setting where different types of data have different costs. To the best of our knowledge, this is the first exploration of data collection with general multiple data sets in machine learning, covering for example, semi-supervised and long-tail learning. 
(4) We perform experiments over classification, segmentation, and detection tasks to show, on average, approximately a $2\times$ reduction in the chances of failing to meet performance targets, versus estimation baselines. 

\section{Related work}
\label{sec:setup_related_work}

\begin{figure*}[!t]
\begin{center}
\begin{minipage}{0.84\linewidth}
\includegraphics[width=1\textwidth]{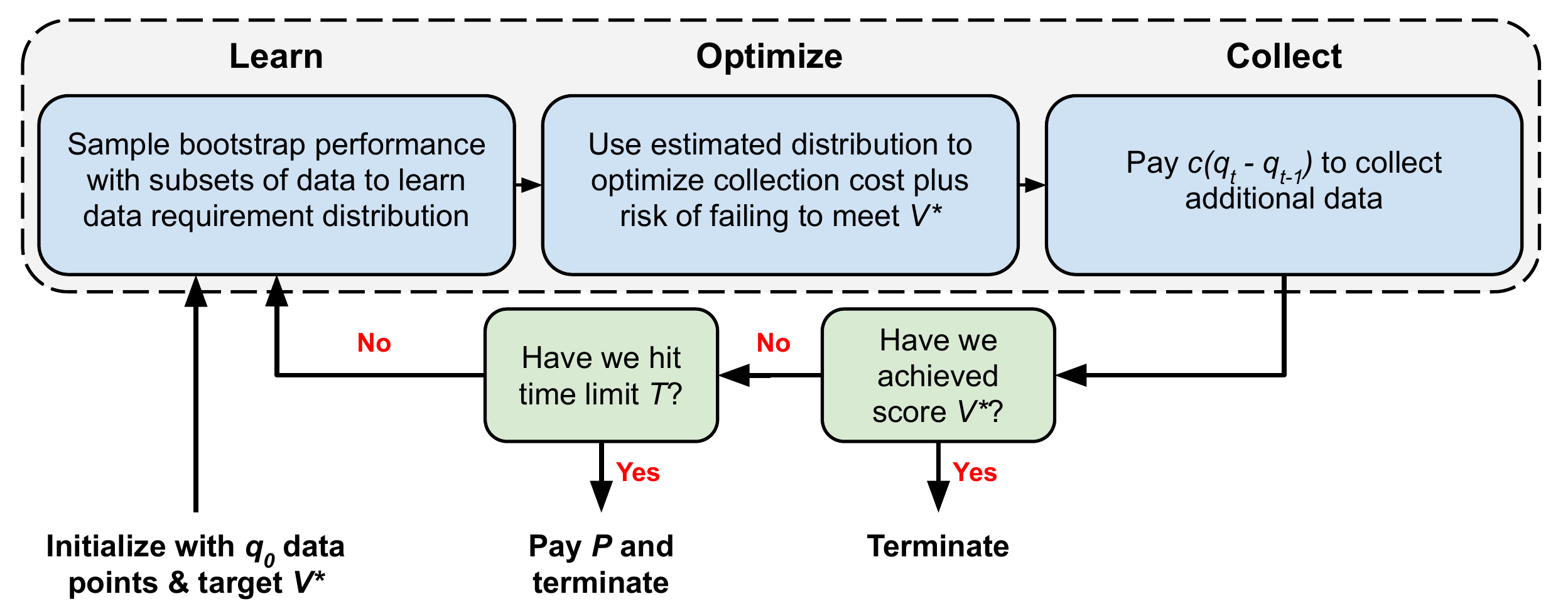} 
\end{minipage}
    \vspace{-2mm}
    \caption{\label{fig:flowchart} In the optimal data collection problem, we iteratively determine the amount of data that we should have, pay to collect the additional data, and then re-evaluate our model. 
    Our approach, Learn-Optimize-Collect, optimizes for the minimum amount of data $q^*_t$ to collect.
    }
\end{center}
\vspace{-7mm}
\end{figure*}

\noindent \textbf{Neural Scaling Laws.}
\rebut{
According to the neural scaling law literature, the performance of a model on a validation set scales with the size of the training data set $q$ via a power law $V \propto \theta_0 q^{\theta_1}$~\citep{hestness2017deep, rosenfeld2019constructive, hoiem2021learning, bahri2021explaining, bisla2021theoretical, jones2003introduction, sun2017revisiting,  figueroa2012predicting, viering2021shape, zhai2021scaling}.
\citet{hestness2017deep} observe this property over vision, language, and audio tasks,~\citet{bahri2021explaining} develop a theoretical relationship under assumptions on over-parametrization and the Lipschitz continuity of the loss, model, and data, and~\citet{rosenfeld2019constructive} estimate power laws using smaller data sets and models to extrapolate future performance.
}
Multi-variate scaling laws have also been considered for some specific tasks, for example in transfer learning from synthetic to real data sets~\citep{mikami2021scaling}.
Finally,~\citet{mahmood2022howmuch} explore data collection by estimating the minimum amount of data needed to meet a given target performance over multiple rounds.
Our paper extends these prior studies by developing an optimization problem to minimize the expected total cost of data collected. Specifically, we incorporate the uncertainty in any regression estimate of data requirements and further generalize to multiple data sources with different costs.

\noindent \textbf{Active Learning.} 
\rebut{
In active learning, a model sequentially collects data by selecting new subsets of an unlabeled data pool to label under a pre-determined labeling budget that replenishes after each round~\citep{settles2009active, sener2017active, yoo2019learning, sinha2019variational, mahmood2021low}. In contrast, our work focuses on systematically determining an optimal collection budget. After determining how much data to collect, we can use active learning techniques to collect the desired amount of data.
}

\noindent \textbf{Statistical Learning Theory.}
\rebut{
Theoretical analysis of the sample complexity of machine learning models is typically only tight asymptotically, but some recent work have empirically analyzed these relationships~\citep{jiang2020neurips, jiang2021methods}.
Particularly,~\citet{bisla2021theoretical} study generalization bounds for deep neural networks, provide empirical validation, and suggest using them to estimate data requirements. 
In contrast, our paper formally explores the consequences of collection costs on data requirements.
}

\noindent \textbf{Optimal Experiment Design.}
The topic of how to collect data, select samples, and design scientific experiments or controlled trials is well-studied in econometrics~\citep{smith1918standard, cohn1993neural, emery1998optimal}. 
For example,~\citet{bertsimas2015power} optimize the assignment of samples into control and trial groups to minimize inter-group variances.
Most recently,~\citet{carneiro2020optimal} optimize how many samples and covariates to collect in a statistical experiment by minimizing a treatment effect estimation error or maximizing $t$-test power.
However, our focus on industrial machine learning applications differs from experiment design by having target performance metrics and continual rounds of collection and modeling. 

\section{Main Problem}
\label{sec:data_collection_problem}

In this section, we give a motivating example before introducing the formal data collection problem.
We include a table of notation in Appendix~\ref{sec:app_notation}.

\textbf{Motivating Example.}
\emph{A startup is developing an object detector for use in autonomous vehicles within the next $T=5$ years.
Their model must achieve a mean Average Precision greater than $V^*=95\%$ on a pre-determined validation set or else they will lose an expected profit of $P = \$1,000,000$. 
Collecting training data requires employing drivers to record videos and annotators to label the data, where the marginal cost of obtaining each image is approximately $c = \$1$. 
In order to manage annual finances, the startup must plan how much data to collect at the beginning of each year.
}

Let $z \sim p(z)$ be data drawn from a distribution $p$. For instance, $z := (x, y)$ may correspond to images $x$ and labels $y$. 
Consider a prediction problem for which we train a model with a data set $\dataset$ of points sampled from $p(z)$. Let $V(\dataset)$ be a score function evaluating the model trained on $\dataset$. 

\textbf{Optimal Data Collection.}
We possess an initial data set $\dataset_{q_0} := \{z_i\}_{i=1}^{q_0}$ of $q_0$ points; we omit the subscript on $\dataset$ referring to its size when it is obvious.
Our problem is defined by a target score $V^* > V(\dataset_{q_0})$, a cost-per-sample $c$ of collection, a horizon of $T$ rounds, and a penalty $P$ for failure. 
At the end of each round $t \in \{1, \dots, T\}$, let $q_t$ be the current amount of data collected.
Our goal is to minimize the total collection cost while building a model that can achieve the target score:
\begin{align}\nonumber
     &\min_{q_1, \dots, q_T} \; c (q_T - q_0) + P \Ind \{ V(\dataset_{q_T}) < V^* \} \quad\;\;\;\;\;\;\;\;\;\; \st \; q_0 \leq q_1 \leq \cdots \leq q_T \\ \label{eq:formal_problem}
    =&\min_{q_1, \dots, q_T} \; c \sum_{t=1}^T(q_t - q_{t-1}) + P \Ind \{ V(\dataset_{q_T}) < V^* \} \quad\;\; \st \; q_0 \leq q_1 \leq \cdots \leq q_T
\end{align}
The collection cost is measured by the difference in data set size between the final and the 0-th round $c ( q_T - q_0 ) = c \sum_{t=1}^T(q_t - q_{t-1})$, 
Because we collect data iteratively over multiple rounds (see Figure \ref{fig:flowchart}), we break \eqref{eq:formal_problem} into the sum of differences per round. 
Specifically in each round, we 
\begin{enumerate}
    \item Decide to grow the data set to $q_t \geq q_{t-1}$ points by sampling $\hat\dataset := \{ \hat{z}_i \}_{i=1}^{q_t - q_{t-1}} \sim p(z)$.  Pay a cost $c (q_t - q_{t-1})$ and update $\dataset \gets \dataset \cup \hat\dataset$.
    
    \item Train the model and evaluate the score. If $V(\dataset) \geq V^*$, then terminate. 
    
    \item If $t = T$, then pay the penalty $P$ and terminate. Otherwise, repeat for the next round.
\end{enumerate}
%

The model score typically increases monotonically with data set size~\citep{hestness2017deep, rosenfeld2019constructive}. This means that the minimum cost strategy for~\eqref{eq:formal_problem} is to collect just enough data such that $V(\dataset_{q_T}) = V^*$.
We can estimate this minimum data requirement by modeling the score function as a stochastic process. 
Let $V_q := V(\dataset_q)$ and let $\{ V_q \}_{q \in \field{Z}_+}$ be a stochastic process whose indices represent training set sizes in different rounds. 
Then, collecting data in each round yields a sequence of subsampled data sets $\dataset_{q_{t-1}} \subset \dataset_{q_t}$ and 
their performances $V(\dataset_{q_t})$. 
The minimum data requirement is the stopping time
\begin{align}\label{eq:min_dat_req}
    D^* := \argmin_q \left\{q \;|\; V_q \geq V^* \right\}.
\end{align}
which is a random variable giving the first time that we pass the target. 
Note that $q^*_1 = \cdots = q^*_T = D^*$ is a minimum cost solution to the optimal data collection problem, incurring a total cost $c (D^* - q_0)$\footnote{We assume that $c (D^* - q_0) < P$, since otherwise the optimal strategy would be to collect no data.}.

Estimating $D^*$ using past observations of the learning curve is difficult since we have only $T$ rounds. Further,~\citet{mahmood2022howmuch} empirically show that small errors in fitting the learning curve can cause massive over- or under-collection. Thus, robust policies must capture the uncertainty of estimation.

\section{Learn-Optimize-Collect (LOC)}
\label{sec:optimization_framework}



Our solution approach, which we refer to as Learn-Optimize-Collect (LOC), minimizes the total collection cost while incorporating the uncertainty of estimating $D^*$. 
Although $D^*$ is a discrete random variable, it is realized typically on the order of thousands or greater. 
To simplify our problem and ensure differentiability, we assume that $D^*$ is continuous and has a well-defined density.
\begin{assumption}\label{ass:cdf_pdf_exists}
The random variable $D^*$ is absolutely continuous and has a cumulative density function (CDF) $F(q)$ and probability density function (PDF) $f(q) := dF(q)/dq$.
\end{assumption}
%
In Section~\ref{sec:optimization_framework_opt_problem}, we first develop an optimization model when given access to the CDF $f(q)$ and PDF $F(q)$. 
In Section~\ref{sec:optimization_framework_learning}, we estimate these distributions and combine them with the optimization model.
Finally in Section~\ref{sec:compare_vs_CVPR}, we delineate our optimization approach from prior regression methods.

\subsection{Optimization Model}
\label{sec:optimization_framework_opt_problem}

We propose an optimization problem that for any $t$, can simultaneously solve for the optimal amounts of data to collect $q_t, \dots, q_{T}$ in all future rounds. Consider $t=1$ 
and to develop intuition, suppose we know a priori the exact stopping time $D^*$. 
Then, problem~\eqref{eq:formal_problem} can be re-written as
\begin{equation} \label{eq:original_prob}
    \min_{q_1, \cdots q_T} \; L(q_1, \dots, q_T; D^*)  \qquad\qquad \st \; q_0 \leq q_1 \leq \cdots \leq q_T
\end{equation}
where the objective function is defined recursively as follows
\begin{align*}
    L(q_1, \dots, q_T; D^*) &:= c (q_1 - q_0) + \Ind \{ q_1 < D^* \} \Big( c (q_2 - q_1) + \Ind \{ q_2 < D^* \} \Big( c (q_3 - q_2) \dots \\
        & \qquad \qquad \;\; \dots + \Ind \{ q_{T-1} < D^* \} \Big( c (q_T - q_{T-1}) + P \Ind \{q_T < D^*\} \Big)\cdots\Big)\Big) \\
        &= c \sum_{t=1}^T (q_t - q_{t-1}) \prod_{s=1}^{t-1} \Ind \{ q_s < D^* \}  + P \prod_{t=1}^T  \Ind \{ q_s < D^* \} \\
        &= c \sum_{t=1}^T (q_t - q_{t-1}) \Ind \{ q_{t-1} < D^* \}  + P \Ind \{ q_T < D^* \}.
\end{align*}
The objective differs slightly from \eqref{eq:formal_problem} due to the indicator terms, which ensure that once we collect enough data, we terminate the problem.
The second line follows from gathering the terms. The third line follows from observing that $q_1 \leq q_2 \leq \cdots \leq q_T$ are constrained.

In practice, we do not know $D^*$ a priori since it is an unobserved random variable.
Instead, suppose we have access to the CDF $F(q)$. Then, we take the expectation over the objective $\field{E} [ L(q_1, \dots, q_T; D^*) ]$ to formulate a \emph{stochastic optimization problem} for determining how much data to collect:
\begin{equation} \label{eq:expected_risk_prob}
    \min_{q_1, \cdots q_T} \; c \sum_{t=1}^T (q_t - q_{t-1}) \left( 1 - F(q_{t-1}) \right) + P \left( 1 - F(q_T) \right) \quad \st \; q_0 \leq q_1 \leq \cdots \leq q_T.
\end{equation}
Note that the collection variables should be discrete $q_1, \dots, q_T \in \field{Z}_+$, but similar to the modeling of $D^*$, we relax the integrality requirement, optimize over continuous variables, and round the final solutions.
Furthermore, although problem~\eqref{eq:expected_risk_prob} is constrained, we can re-formulate it with variables $d_t := q_t - q_{t-1}$; this consequently replaces the current constraints with only non-negativity constraints $d_t \geq 0$.
Finally due to Assumption~\ref{ass:cdf_pdf_exists}, problem~\eqref{eq:expected_risk_prob} can be optimized via gradient descent.

\subsection{Learning and Optimizing the Data Requirement}
\label{sec:optimization_framework_learning}

Solving problem~\eqref{eq:expected_risk_prob} requires access to the true distribution $F(q)$, which we do not have in reality. 
In each round, given a current training data set $\dataset_{q_t}$ of $q_t$ points, we must estimate these distribution functions $F(q)$ and $f(q)$ and then incorporate them into our optimization problem. 


Given a current data set $\dataset_{q_t}$, we may sample an increasing sequence of $R$ subsets
$\dataset_{q_t/R} \subset \dataset_{2q_t/R} \subset \cdots \subset \dataset_{q_t}$, fit our model to each subset, and compute the scores to obtain a data set of the learning curve $\set{R} := \{ (rq_t/R, V(\dataset_{rq_t/R})) \}_{r=1}^R$. In order to model the distribution of $D^*$, we can take $B$ bootstrap resamples of $\set{R}$ to fit a series of regression functions and obtain corresponding estimates $\{ \hat{D}_b \}_{b=1}^B$. \rebut{Given a set of estimates of the data requirement, we estimate the PDF via Kernel Density Estimation (KDE). Finally to fit the CDF, we numerically integrate the PDF. }

In our complete framework, LOC, we first estimate $F(q)$ and $f(q)$. We then use these models to solve problem~\eqref{eq:expected_risk_prob}.
Note that in the $t$-th round of collection, we fix the prior decision variables $q_1, \dots q_{t-1}$ constant. 
Finally, we collect data as determined by the optimal solution $q^*_t$ to problem~\eqref{eq:expected_risk_prob}.
Full details of the learning and optimization steps, including the complete Algorithm, are in Appendix~\ref{sec:app_opt}.

\subsection{Comparison to~\citet{mahmood2022howmuch}}
\label{sec:compare_vs_CVPR}

Our prediction model extends the previous approach of~\citet{mahmood2022howmuch}, who consider only point estimation of $D^*$.
They (i) build the set $\set{R}$, (ii) fit a parametric function $\hat{v}(q; \btheta)$ to $\set{R}$ via least-squares minimization, and (iii) solve for $\hat{D} = \argmin_{q} \{ q \;|\; \hat{v}(q; \btheta) \geq V^* \}$. 
They use several parametric functions from the neural scaling law literature, including the power law function (i.e.,  $\hat{v}(q;\btheta) := \theta_0 q^{\theta_1} + \theta_2$~\citep{mahmood2022howmuch,hoiem2021learning} where $\btheta := \{ \theta_0, \theta_1, \theta_2 \}$), and use an ad hoc correction factor obtained by trial and error on past tasks to help decrease the failure rate.
Instead, we take bootstrap samples of $\set{R}$ to fit multiple regression functions, estimate a distribution for $\hat{D}$, and incorporate them into our novel optimization model.
Finally, we show in the next two sections that our optimization problem has analytic solutions and extends to multiple sources.

\section{Analytic Solutions for the $T=1$ Setting}
\label{sec:main_theory}

In this section, we explore analytic solutions for problem~\eqref{eq:expected_risk_prob}. 
The unobservable $D^*$ and sequential decision-making nature suggest this problem can be formulated as a Partially Observable Markov Decision Process (POMDP) with an infinite state and action space (see Appendix~\ref{sec:app_mdp_alternatives}), but such problems rarely permit exact solution methods~\citep{zhang2022dynamic}.
Nonetheless, we can derive exact solutions for the simple case of a single $T=1$ round, re-stated below
\begin{equation}\label{eq:expected_risk_prob_one_round}
    \min_{q_1} \; c(q_1 - q_0) + P(1 - F(q_1)) \qquad\qquad\qquad \st \; q_0 \leq q_1
\end{equation}
\begin{thm}\label{thm:t1_has_analytic_sol}
    Assume $F(q)$ is strictly increasing and continuous. If there exists $q_1 \geq q_0, \hat\epsilon \geq 0$ where  
    \begin{align}\label{eq:expected_risk_prob_one_round_required_condition}
        \frac{c}{P} \leq \frac{F(q_1) - F(q_0)}{q_1 - q_0}, \qquad \hat\epsilon \leq 1 - F(q_0), \qquad P = c / f ( F^{-1}(1 - \hat\epsilon) )
    \end{align}
    then there exists an $\epsilon \leq 1 - F(q_0)$ that satisfies
    $P = c / f ( F^{-1}(1 - \epsilon) )$ and
    an optimal solution to the corresponding problem~\eqref{eq:expected_risk_prob_one_round} is $q_1^* := F^{-1}(1 - \epsilon)$. 
    Otherwise, the optimal solution is $q_1^* := q_0$.
\end{thm}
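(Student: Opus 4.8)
The plan is to reduce problem~\eqref{eq:expected_risk_prob_one_round} to a one–dimensional calculus argument on the objective
\[
g(q) := c(q-q_0) + P\left(1 - F(q)\right), \qquad q \in [q_0, \infty),
\]
and to locate its global minimizer explicitly. First I would record three elementary facts. (i) By Assumption~\ref{ass:cdf_pdf_exists}, $F$ is differentiable, so $g$ is differentiable with $g'(q) = c - P f(q)$. (ii) We have $g(q_0) = P(1 - F(q_0)) \ge 0$, and since $c,P>0$ and $F$ is a CDF, $g(q) \ge c(q-q_0) \to \infty$ as $q\to\infty$, i.e. $g$ is coercive. (iii) Clearing denominators, the first inequality in \eqref{eq:expected_risk_prob_one_round_required_condition} is exactly the statement $c(q_1-q_0) \le P(F(q_1)-F(q_0))$, i.e. $g(q_1) \le g(q_0)$; so that inequality is feasible for some $q_1 > q_0$ if and only if the sublevel set $\{q > q_0 : g(q) \le g(q_0)\}$ is nonempty.

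Next I would dispatch the ``Otherwise'' case and establish existence of a minimizer. If no $q_1 > q_0$ satisfies the first inequality of \eqref{eq:expected_risk_prob_one_round_required_condition}, then by (iii) $g(q) > g(q_0)$ for all $q > q_0$, so $q_1^* = q_0$ is the unique optimum, which is the claimed fallback. Otherwise, fix such a $q_1 > q_0$, so $g(q_1) \le g(q_0)$; since $g$ is continuous and coercive on the closed set $[q_0,\infty)$ it attains its infimum at some $q^\star$, with $g(q^\star) \le g(q_1) \le g(q_0)$.

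I would then argue $q^\star$ can be taken in the open interval $(q_0,\infty)$. If $q^\star = q_0$, then $g(q_0)$ is the global minimum, so $g(q_1) \le g(q_0)$ forces $g(q_1) = g(q_0)$, and hence $q_1 \in (q_0,\infty)$ is itself a global minimizer; replace $q^\star$ by $q_1$. Thus in all cases there is a global minimizer $q^\star$ in the interior of the feasible region, where the first-order necessary condition gives $g'(q^\star) = c - P f(q^\star) = 0$, i.e. $f(q^\star) = c/P > 0$, equivalently $P = c / f(q^\star)$. Finally I would translate this to the $\epsilon$-parametrization: since $F$ is continuous and strictly increasing it has a well-defined inverse on its range, so setting $\epsilon := 1 - F(q^\star)$ gives $q^\star = F^{-1}(1-\epsilon)$; moreover $0 \le \epsilon \le 1 - F(q_0)$ because $q_0 \le q^\star$ and $F \le 1$, and $P = c/f(F^{-1}(1-\epsilon))$, so $q_1^* := F^{-1}(1-\epsilon) = q^\star$ is an optimal solution to \eqref{eq:expected_risk_prob_one_round}. (En route this also shows the hypotheses are consistent: whenever the first inequality of \eqref{eq:expected_risk_prob_one_round_required_condition} is feasible, the $\hat\epsilon := 1 - F(q^\star)$ produced above automatically satisfies the remaining two conditions, so the existence of $\hat\epsilon$ is not an extra restriction.)

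The main obstacle is that $g$ need not be convex: if the density $f$ is not monotone, $g$ can have several stationary points, some of them local maxima, so one cannot simply invoke convexity to conclude that a stationary point is globally optimal. The argument above circumvents this by going after the global minimizer directly via continuity plus coercivity and only then reading off its stationarity; the remaining care is purely bookkeeping — distinguishing the interior-minimizer case from $q^\star = q_0$, handling the edge case where the first inequality of \eqref{eq:expected_risk_prob_one_round_required_condition} holds with equality, and checking that strict monotonicity and continuity of $F$ legitimize the use of $F^{-1}$ on the relevant interval.
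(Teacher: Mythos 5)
Your proof is correct, and it takes a genuinely different route from the paper's. The paper recasts problem~\eqref{eq:expected_risk_prob_one_round} as a two-variable problem in $(q_1,\epsilon)$ with the chance constraint $\epsilon \geq 1 - F(q_1)$, proves via auxiliary lemmas that the inner $\epsilon$-constrained problem~\eqref{eq:constrained_expected_risk} is convex with optimal solution $F^{-1}(1-\epsilon)$, and then compares the two candidate regimes ($\epsilon \gtrless 1 - F(q_0)$) by showing that condition~\eqref{eq:expected_risk_prob_one_round_required_condition} is exactly what makes the interior regime beat the boundary value $P(1-F(q_0))$. You instead work directly with the one-dimensional objective $g(q)=c(q-q_0)+P(1-F(q))$: you observe that the first inequality in~\eqref{eq:expected_risk_prob_one_round_required_condition} is literally the statement $g(q_1)\leq g(q_0)$, obtain a global minimizer from coercivity plus continuity, push it into the interior, and read off stationarity $f(q^\star)=c/P$ there. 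What your approach buys is a cleaner treatment of the non-convexity: the paper's opening dichotomy (``either $f(q_1)=c/P$ or $q_1^*=q_0$'') implicitly treats a stationary point as a minimizer without ruling out local maxima or saddle-type stationary points of the non-convex $g$, whereas your Weierstrass-style argument targets the global minimizer first and only then invokes the first-order condition, which also makes transparent why the existence of $\hat\epsilon$ in the hypothesis is not an independent restriction. What the paper's route buys is the explicit link to the quantile/chance-constraint formulation~\eqref{eq:constrained_expected_risk}, which is the interpretation the authors lean on afterwards. One small caveat applying equally to both arguments: Assumption~\ref{ass:cdf_pdf_exists} (absolute continuity) only gives differentiability of $F$ almost everywhere, so asserting $g'(q)=c-Pf(q)$ at the minimizer tacitly assumes $f$ is defined pointwise there; you inherit this from the paper rather than introduce it.
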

When the penalty $P$ is specified via a failure risk $\epsilon$, the optimal solution to problem~\eqref{eq:expected_risk_prob_one_round} is equal to a quantile of the distribution of $D^*$. 
We defer the proof and some auxiliary results to Appendix~\ref{sec:app_proofs}.

Theorem~\ref{thm:t1_has_analytic_sol} further provides guidelines on choosing values for the cost and penalty parameters.  
While $c$ is the dollar-value cost per-sample, which includes acquisition, cleaning, and annotation, $P$ can reflect their inherent regret or opportunity cost of failing to meet their target score. 
A designer can accept a risk $\epsilon$ of failing to collect enough data $\mathrm{Pr} \{ q^* < D^* \} = \epsilon$. 
From Theorem~\ref{thm:t1_has_analytic_sol}, their optimal strategy should be to collect $F^{-1}(1- \epsilon)$ points, which is also the optimal solution to problem~\eqref{eq:expected_risk_prob_one_round}.



\section{The Multi-variate LOC: Collecting Data from Multiple Sources}
\label{sec:multivariate}

So far, we have assumed that a designer only chooses how much data to collect and must pay a fixed per-sample collection cost. 
We now explore the multi-variate extension of the data collection problem where there are different types of data with different costs. 
For example, consider long-tail learning where samples for some rare classes are harder to obtain and thus, more expensive~\citep{haixiang2017learning}, semi-supervised learning where labeling data may cost more than collecting unlabeled data~\citep{van2020survey},
or domain adaptation where a source data set is easier to obtain than a target set~\citep{ben2010theory}. 
In this section, we highlight our main formulation and defer the complete multi-variate LOC to Appendix~\ref{sec:app_multivariate}.

Consider $K \in \field{N}$ data sources (e.g., $K=2$ with labeled and unlabeled) and for each $k \in \{1, \dots, K\}$, let $z^k \sim p_k(z^k)$ be data drawn from their distribution. We train a model with a data set $\dataset := \cup_{k=1}^K \dataset^k$ where each $\dataset^k$ contains points of the $k$-th source. 
The performance or score function of our model is $V(\dataset^1, \dots, \dataset^K)$. For each $k$, we initialize with $q_0^k$ points. Let $\bq_0 = (q^1_0, \dots, q^K_0)^\tpose$ denote the vector of data set sizes and let $\bc = (c^1, \dots, c^K)^\tpose$ denote costs (i.e., $c^k$ is the cost of collecting data from $p_k(z^k)$). 
Given a target $V^*$, penalty $P$, and $T$ rounds, we want to minimize the total cost of collection
\begin{equation*}
    \min_{\bq_1, \dots, \bq_T} \; \bc^\tpose \sum_{t=1}^T(\bq_t - \bq_{t-1}) + P \Ind \{ V(\dataset_{q_T^1}, \dots, \dataset_{q_T^K}) < V^* \} \quad\;\; \st \; \bq_0 \leq \bq_1 \leq \bq_2 \leq \cdots \leq \bq_T
\end{equation*}
We follow the same steps shown in Section~\ref{sec:optimization_framework} for this problem. First, the learning curve is now a stochastic process $\{ V_{\bq} \}_{\bq \in \field{Z}_+^K}$ indexed in $K$ dimensions. Next, the multi-variate analogue of the minimum data requirement in~\eqref{eq:min_dat_req} is the minimum cost amount of data needed to meet the target: 
%
\begin{align*}
    \bD^* := \argmin_{\bq} \left\{ \bc^\tpose \bq \;|\; V_{\bq} \geq V^* \right\}
\end{align*}
We randomly pick a unique solution to break ties.
From Assumption~\ref{ass:cdf_pdf_exists}, $\bD^*$ is a random vector with a PDF $f(\bq)$ and a CDF $F(\bq) := \int^{\bq}_{\bzero} f(\bhq) d\bhq$. 
Finally, the multi-variate analogue of problem~\eqref{eq:expected_risk_prob} is 
%
\begin{align}\label{eq:expected_risk_multivariate}
    \min_{\bq_1, \cdots, \bq_T} \; \bc^\tpose \sum_{t=1}^T (\bq_t - \bq_{t-1}) \left( 1 - F(\bq_{t-1}) \right) + P \left( 1 - F(\bq_T) \right) \; \st \; \bq_0 \leq \bq_1 \leq \cdots \leq \bq_T
\end{align}
The Multi-variate LOC requires multi-variate PDFs, which we can fit in the same way as discussed in Section~\ref{sec:optimization_framework_learning}. 
However, we now need multi-variate regression functions that can accommodate different types of data.
In Appendix~\ref{sec:app_multivariate}, we propose an additive family of power law regression functions that can handle an arbitrary number of $K$ sources. In our experiments, we also generalize the estimation approach of \citet{mahmood2022howmuch} to the multi-source setting for comparison.

\section{Empirical Results}
\label{sec:numerical_results}

We explore the data collection problem over two sets of experiments covering single-variate $K=1$ (Section~\ref{sec:optimization_framework}) and multi-variate $K = 2$ (Section~\ref{sec:multivariate}) problems. We consider image classification, segmentation, and object detection tasks. 
For every data set and task, LOC significantly reduces the number of instances where we fail to meet a data requirement $V^*$, while incurring a competitive cost with respect to the conventional baseline of na\"{i}vely estimating the data requirement~\citep{mahmood2022howmuch}. 

In this section, we summarize the main results. We detail our data collection and experiment setup in Appendix~\ref{sec:app_experiments}. We expand our full results \rebut{and experiments with additional baselines} in Appendix~\ref{sec:app_experiment_results} .

\subsection{Data and Methods}

When $K=1$, the designer decides how much data to sample without controlling the type of data. We explore classification on CIFAR-10~\citep{krizhevsky2009learning}, CIFAR-100~\citep{krizhevsky2009learning}, and ImageNet~\citep{deng2009imagenet}, where we train ResNets~\citep{he2016deep} to meet a target validation accuracy. 
We explore semantic segmentation using Deeplabv3~\citep{chen2017rethinking} on BDD100K~\citep{yu2020bdd100k}, which is a large-scale driving data set, as well as Bird's-Eye-View (BEV) segmentation on nuScenes~\citep{nuscenes2019} using the `Lift Splat' architecture~\citep{liftsplat}; for both tasks, we desire a target mean intersection-over-union (IoU).
We explore 2-D object detection on PASCAL VOC~\citep{pascal-voc-2007, pascal-voc-2012} using SSD300~\citep{liu2016ssd}, where we evaluate mean average precision (mAP).

When $K=2$, the designer collects two types of data with different costs.
We first divide CIFAR-100 into two subsets containing data from the first and last 50 classes, respectively. 
Here, we assume that the first 50 classes are more expensive to collect than the last; this mimics a real-world scenario where collecting data for some classes (e.g., long-tail) is more expensive than others.
We then explore semi-supervised learning on BDD100K where the labeled subset of this data is more expensive than the unlabeled data; the cost difference between these two types is equal to the cost of data annotation.

We use a simulation model of the deep learning workflow following the procedure of~\citet{mahmood2022howmuch}, to approximate the true problem while simplifying the experiments (see Appendix~\ref{sec:app_experiments} for full details). 
To avoid repeatedly sampling data, re-training a model, and evaluating the score, each simulation uses a piecewise-linear approximation of a `ground truth' learning curve that returns model performance as a function of data set size.
In our problems, we initialize with $q_0 = 10\%$ of the full data set (we use $20\%$ for VOC). 
Then in each round, we solve for the amount of data to collect and then call the piecewise-linear learning curve to obtain the current score.

We compare LOC against the conventional estimation approach of~\citet{mahmood2022howmuch} who fit a regression model to the learning curve statistics, extrapolate the learning curve for larger data sets, and then solve for the minimum data requirement under this extrapolation.
There are many different regression models that can be used to fit learning curves~\citep{jones2003introduction, figueroa2012predicting, hestness2017deep, hoiem2021learning}. Since power laws are the most commonly studied approach in the neural scaling law literature, we focus on these.
In Appendix \ref{sec:app_experiment_alternate_regression_functions}, we show that our optimization approach can be incorporated with other regression models.

\subsection{Main Results}

We consider $T=1, 3, 5$ rounds and $V^* \in [V(\dataset_{q_0}) + 1, V(\dataset)]$ targets, where $\dataset$ is the entire data set.
We evaluate all methods on (i) the failure rate, which is how often the method fails to achieve the given $V^*$ within $T$ rounds, and (ii) the cost ratio, which is the suboptimality of an algorithm for solving problem \eqref{eq:expected_risk_prob}, i.e., $\bc^\tpose (\bq^*_T - \bq_0) / \bc^\tpose (\bD^* - \bq_0) - 1$. Note that the suboptimality does not count the penalty for failure since this would distort the average metrics.
For $K=1$, we also measure the ratio of points collected $q_T^* / D^*$.
Although there is a natural trade-off between low cost ratio (under-collecting) and failure rate (over-collecting), we emphasize that our goal is to have low cost but with zero chance of failure.

\begin{figure*}[!t]
\begin{center}
\begin{minipage}{0.16\linewidth}\includegraphics[width=1\textwidth]{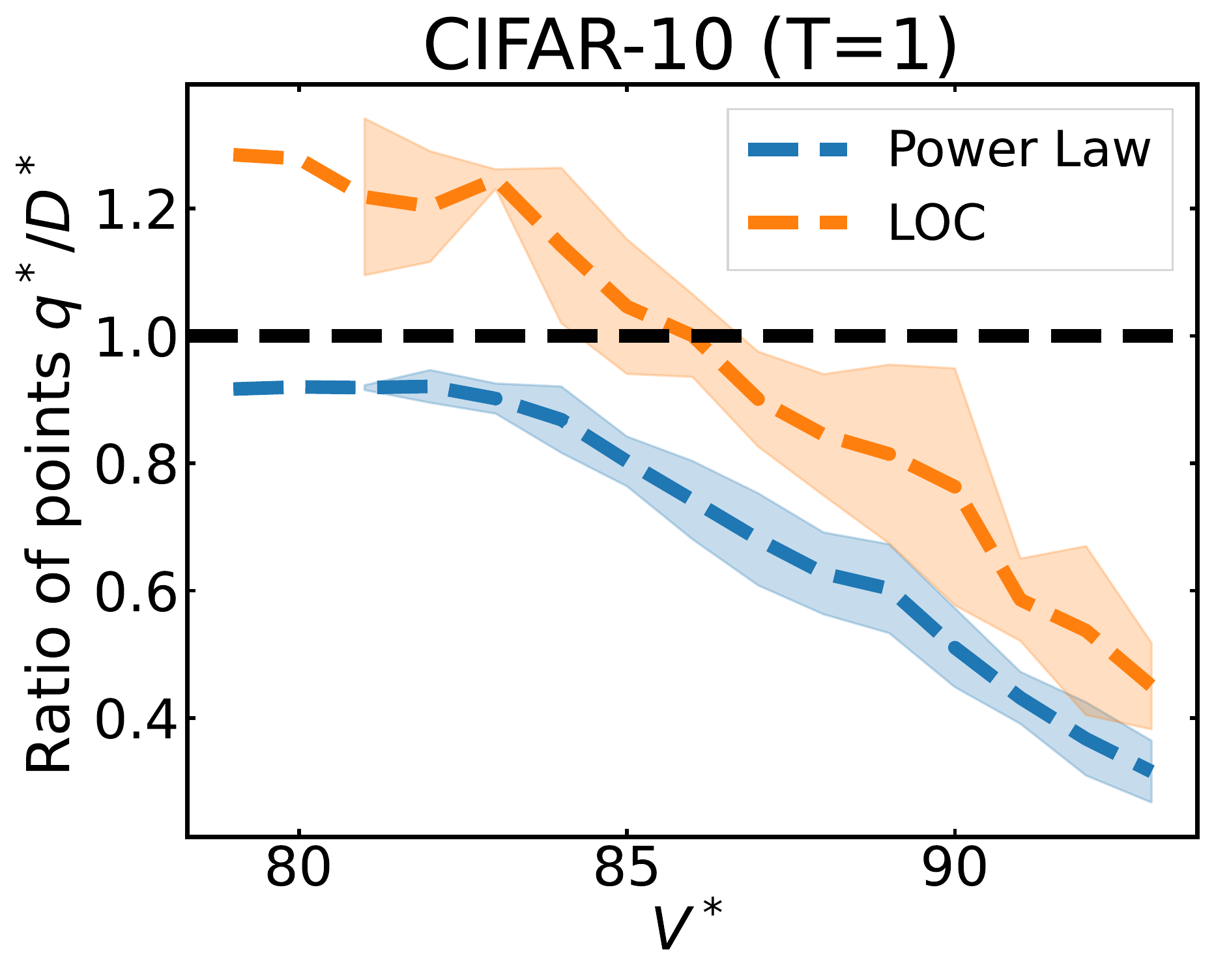}\end{minipage}
\begin{minipage}{0.16\linewidth}\includegraphics[width=1\textwidth]{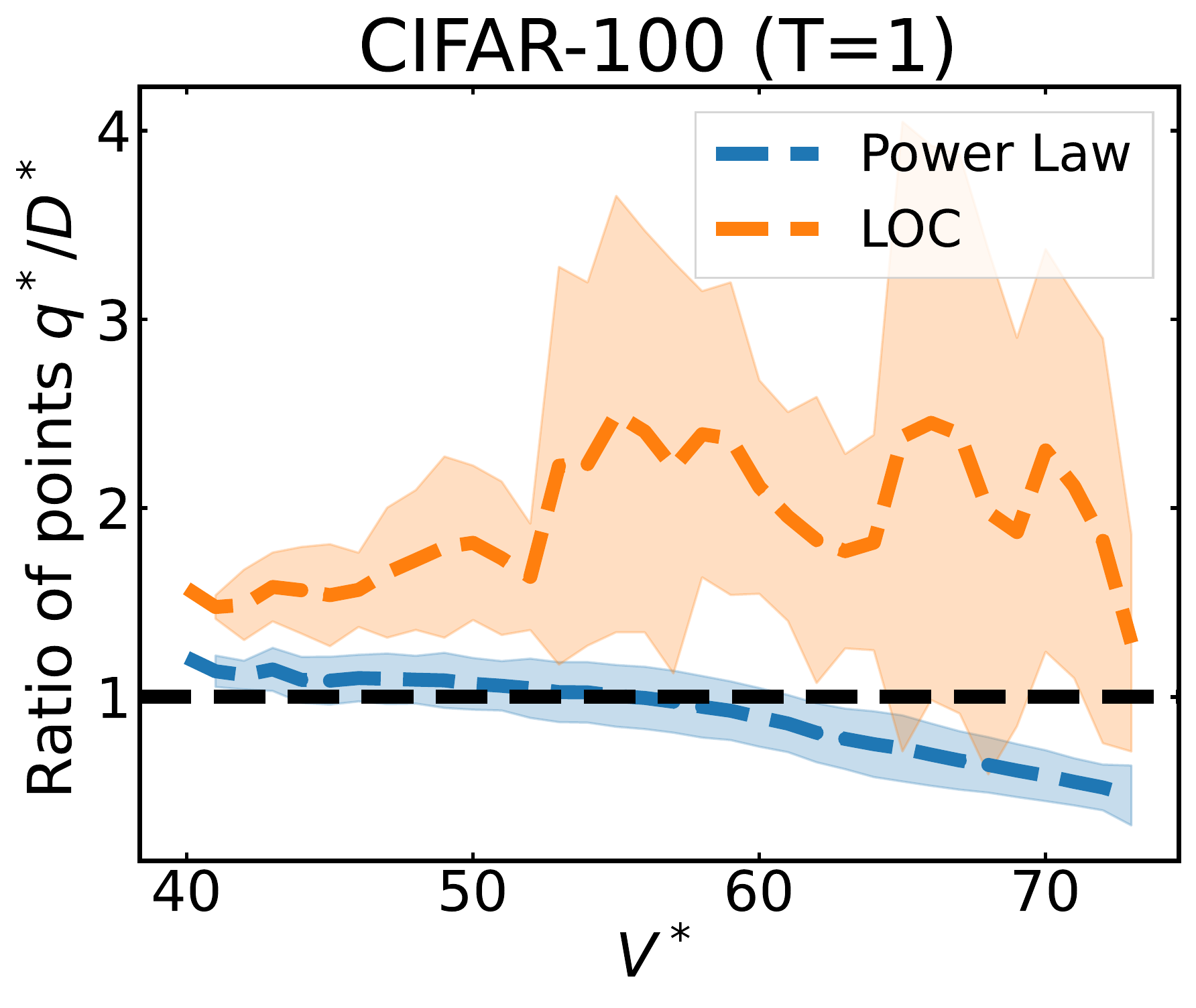}\end{minipage}
\begin{minipage}{0.16\linewidth}\includegraphics[width=1\textwidth]{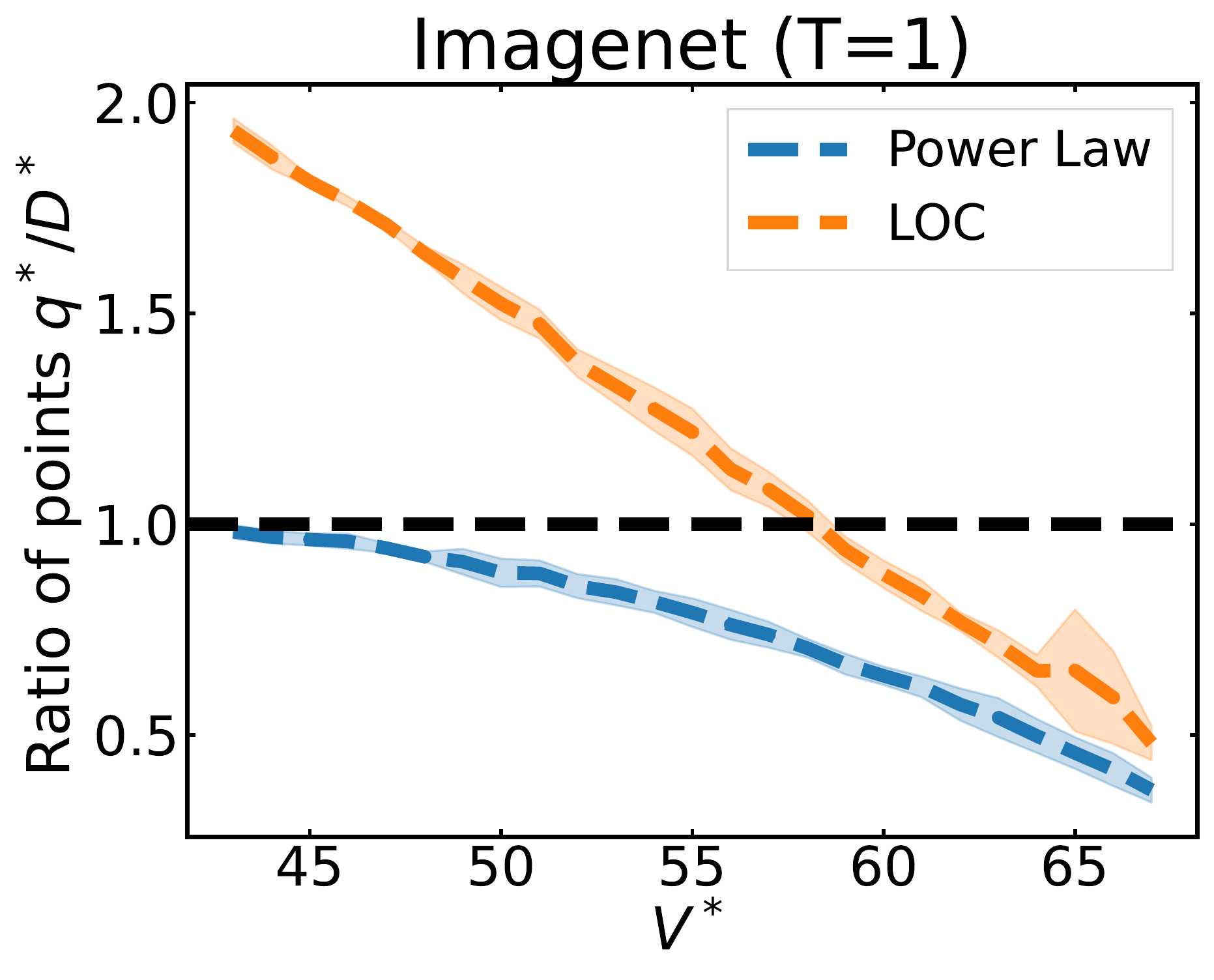}\end{minipage}
\begin{minipage}{0.16\linewidth}\includegraphics[width=1\textwidth]{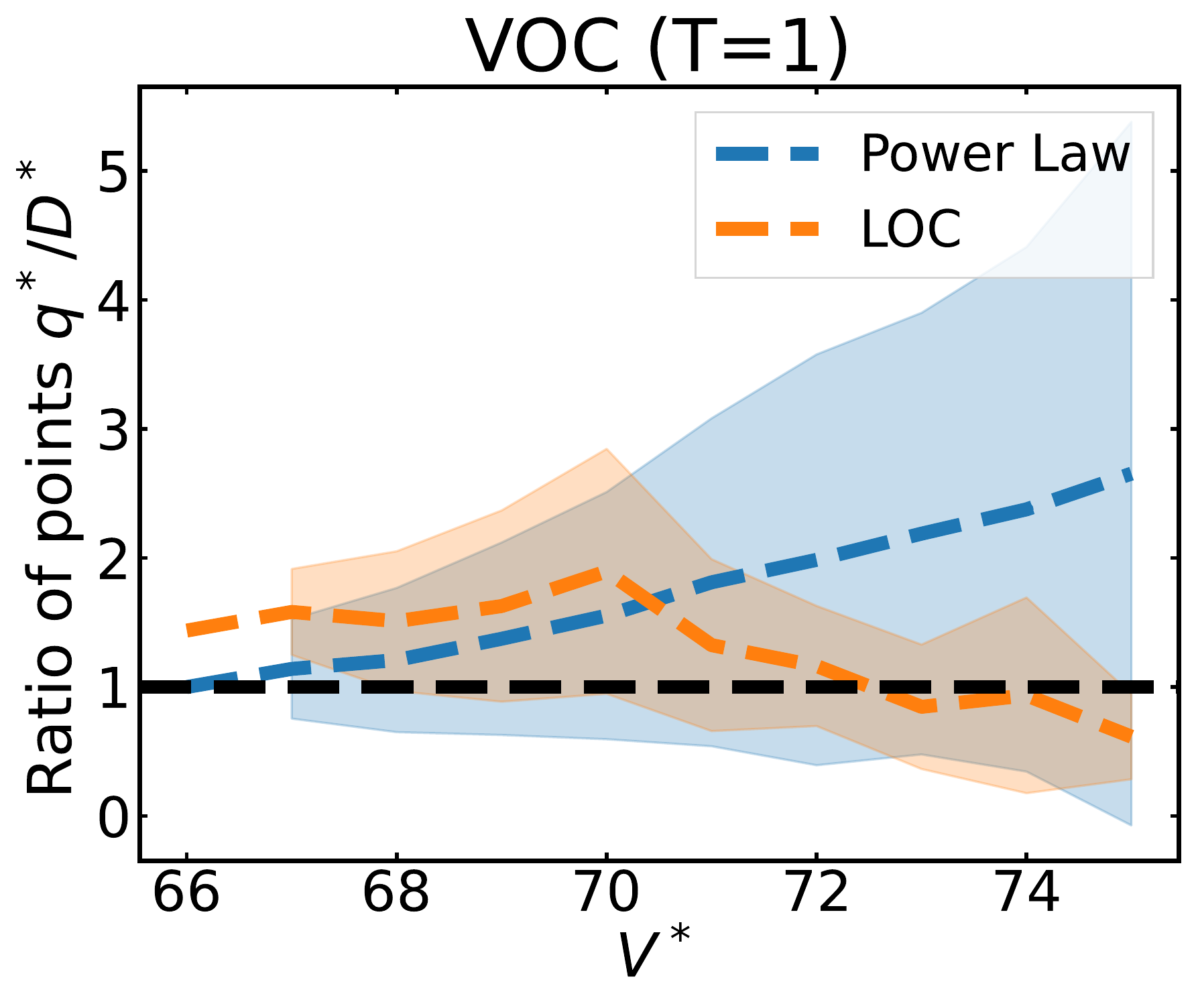}\end{minipage}
\begin{minipage}{0.16\linewidth}\includegraphics[width=1\textwidth]{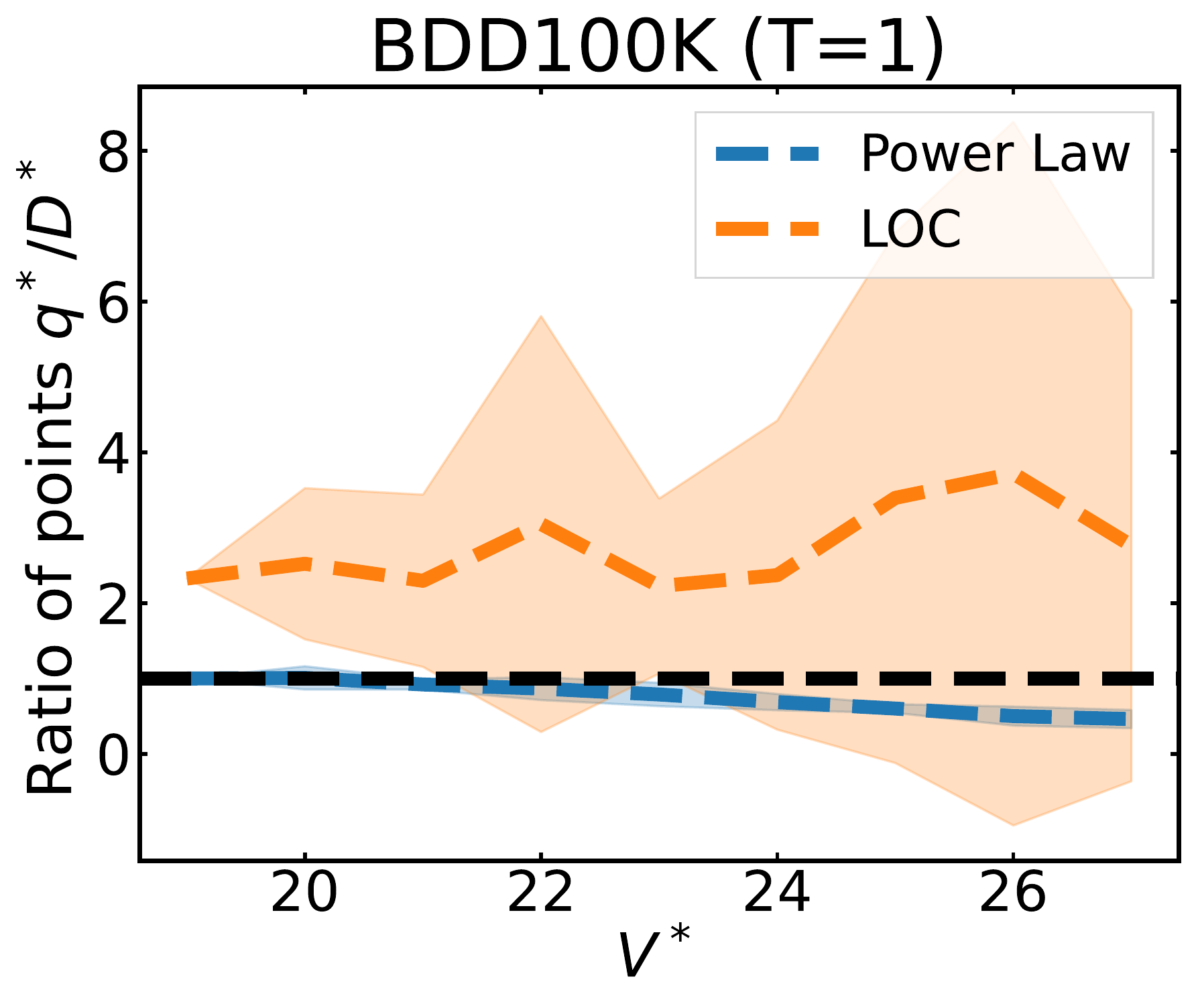}\end{minipage}
\begin{minipage}{0.16\linewidth}\includegraphics[width=1\textwidth]{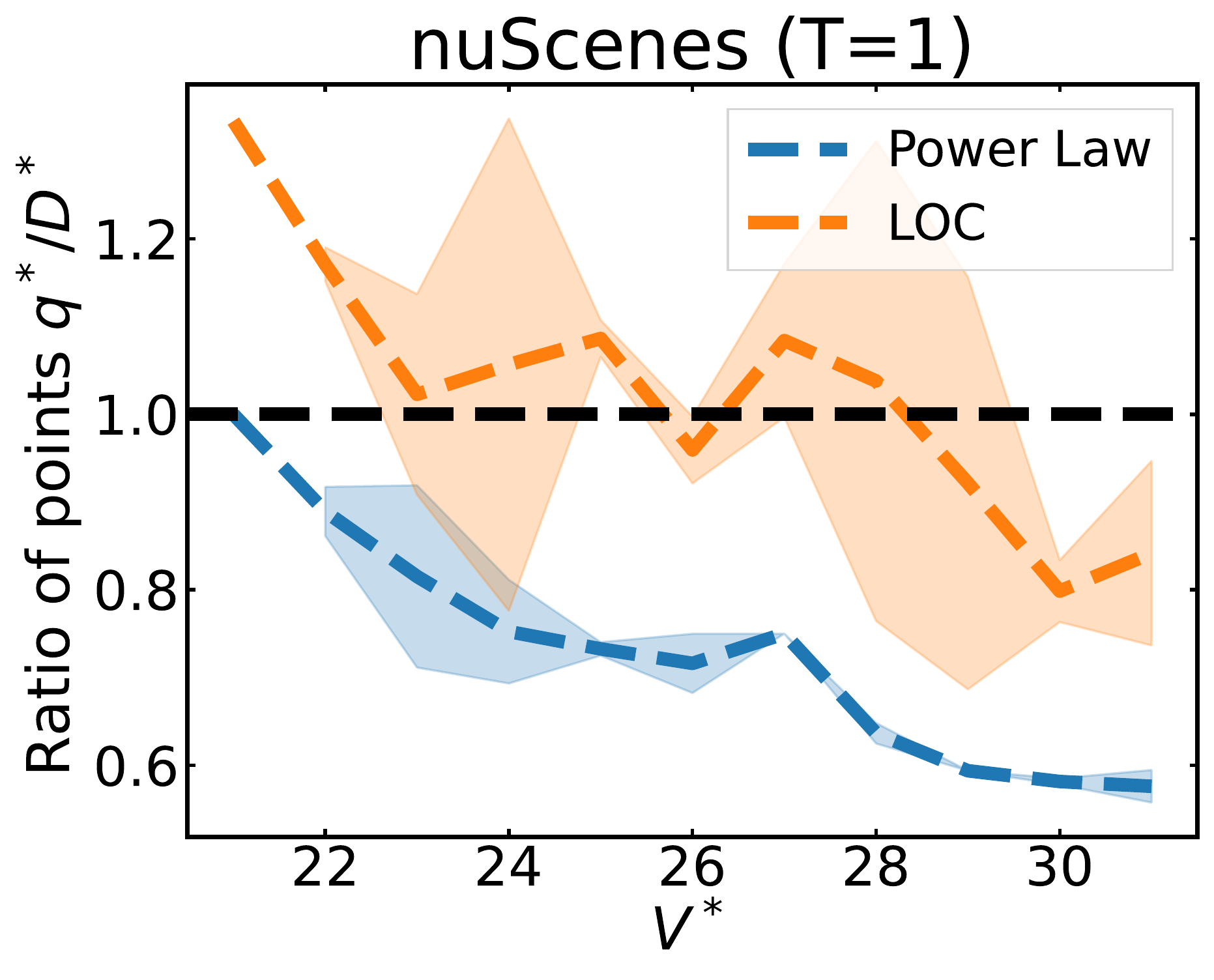}\end{minipage}
\begin{minipage}{0.16\linewidth}\includegraphics[width=1\textwidth]{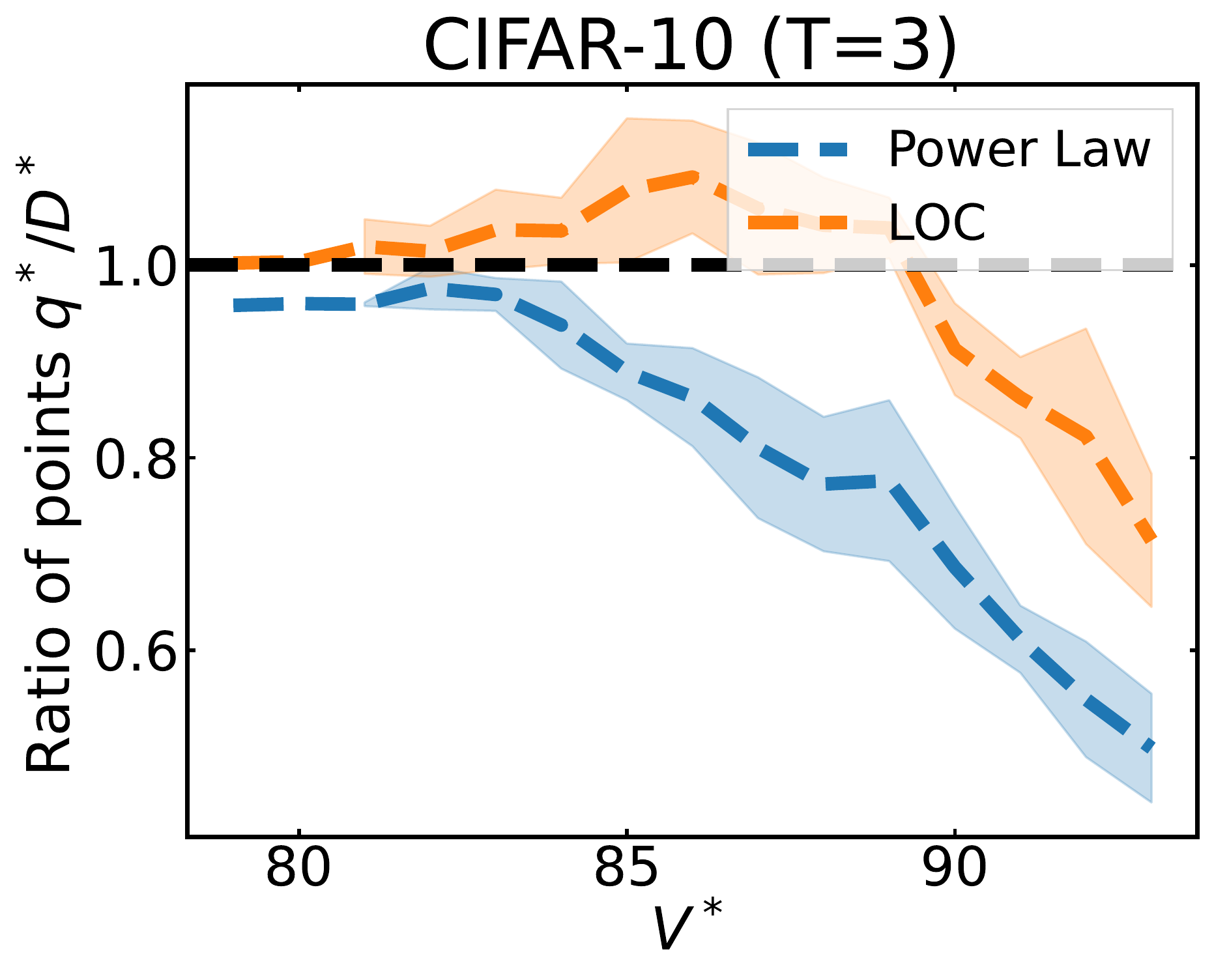}\end{minipage}
\begin{minipage}{0.16\linewidth}\includegraphics[width=1\textwidth]{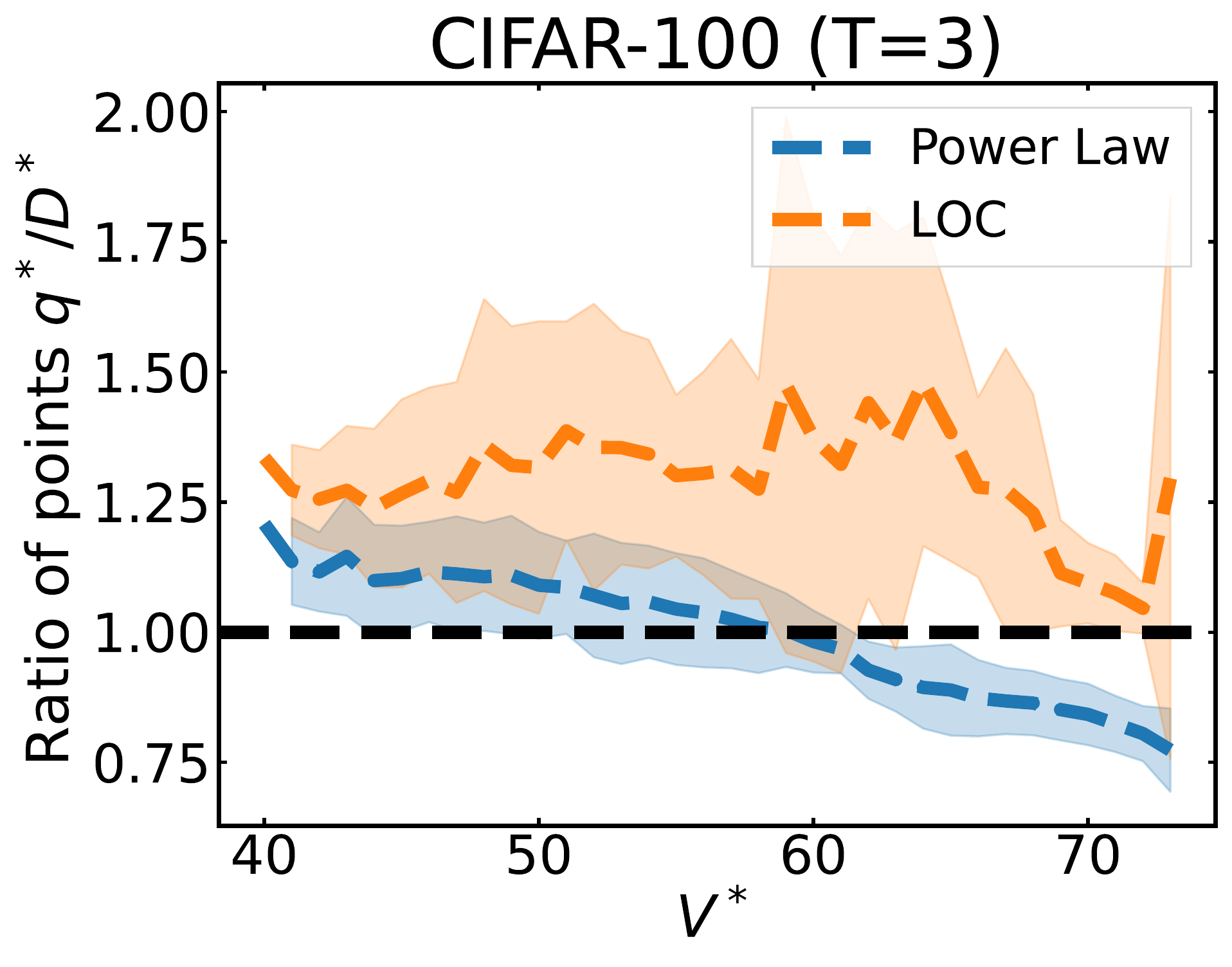}\end{minipage}
\begin{minipage}{0.16\linewidth}\includegraphics[width=1\textwidth]{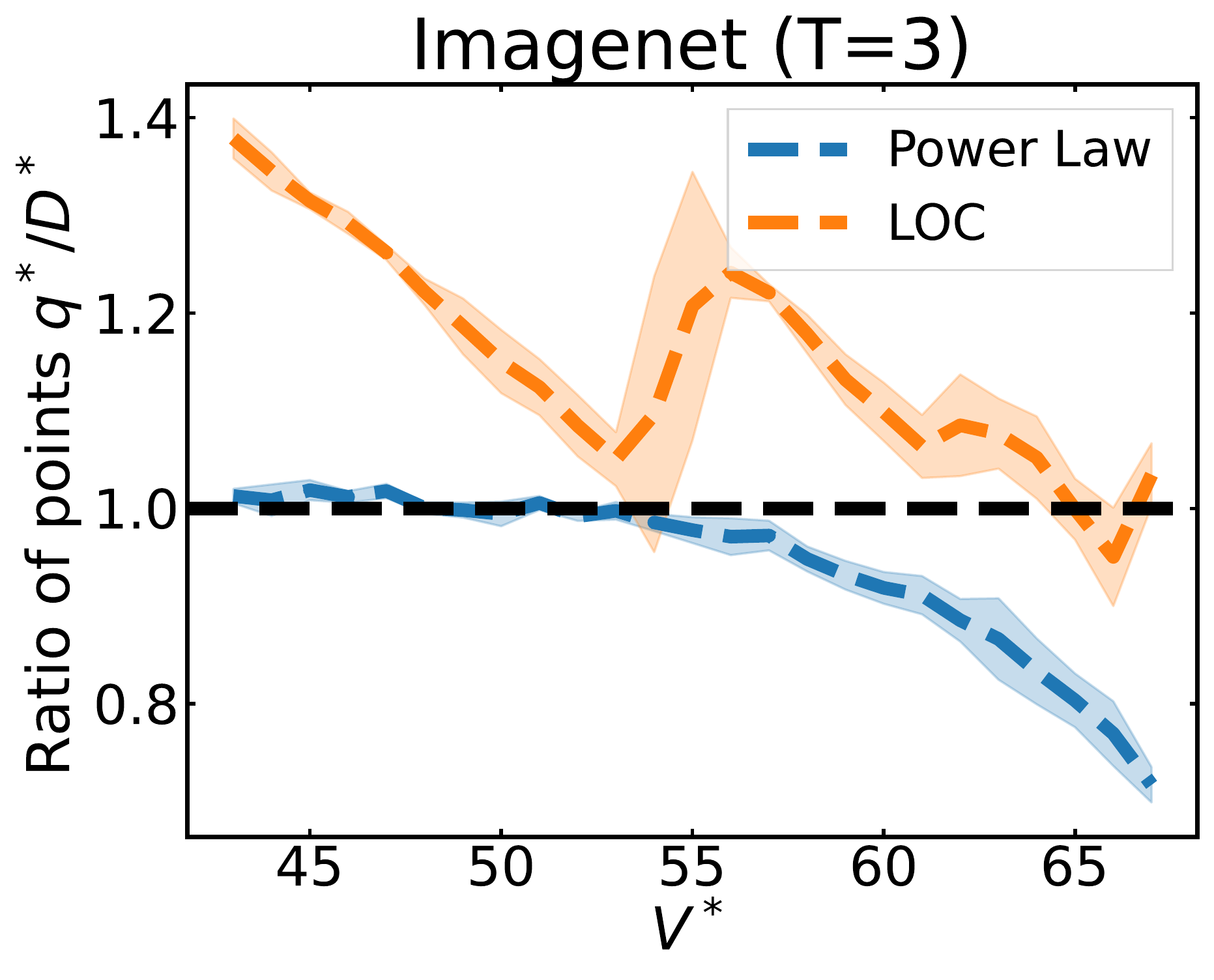}\end{minipage}
\begin{minipage}{0.16\linewidth}\includegraphics[width=1\textwidth]{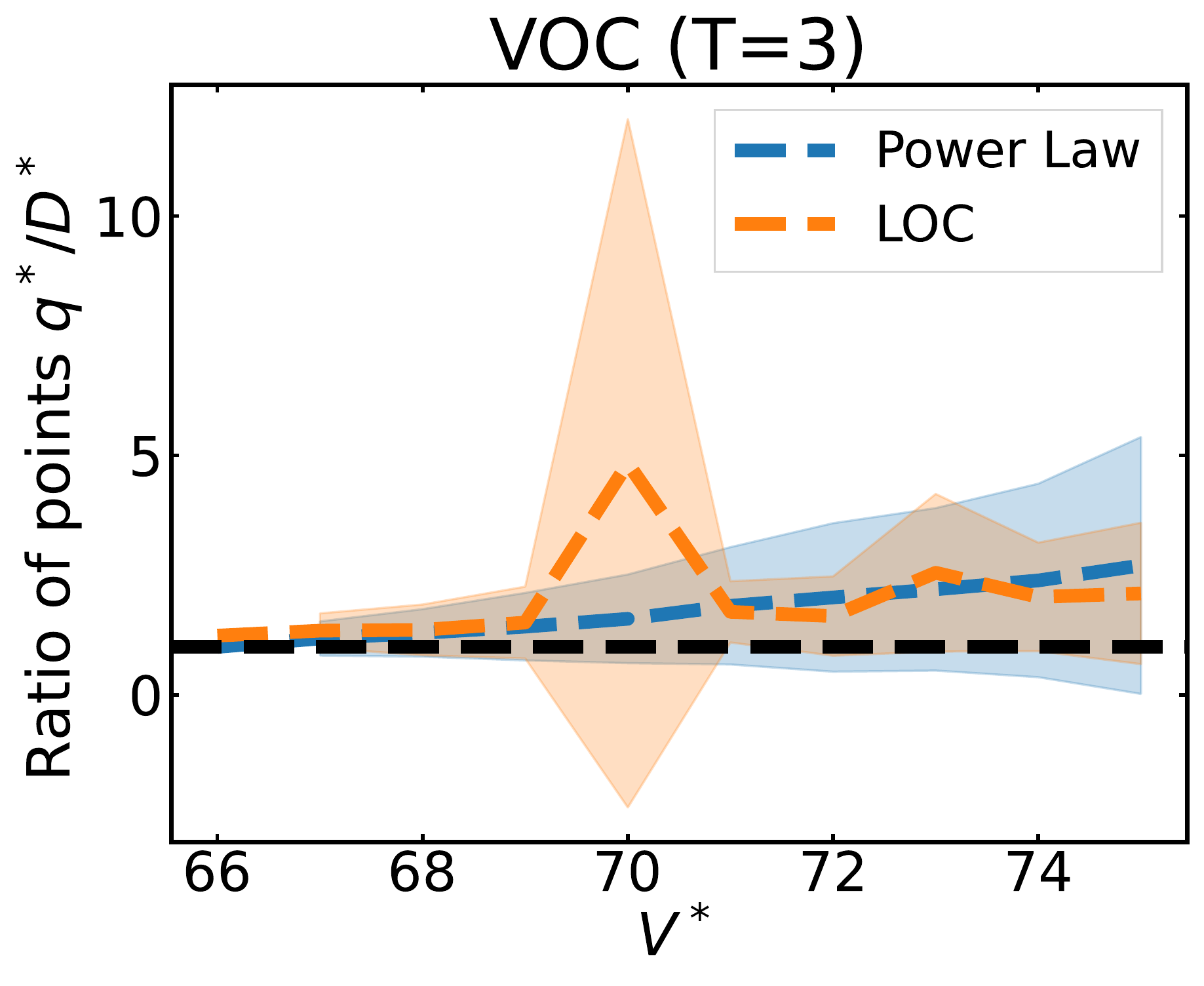}\end{minipage}
\begin{minipage}{0.16\linewidth}\includegraphics[width=1\textwidth]{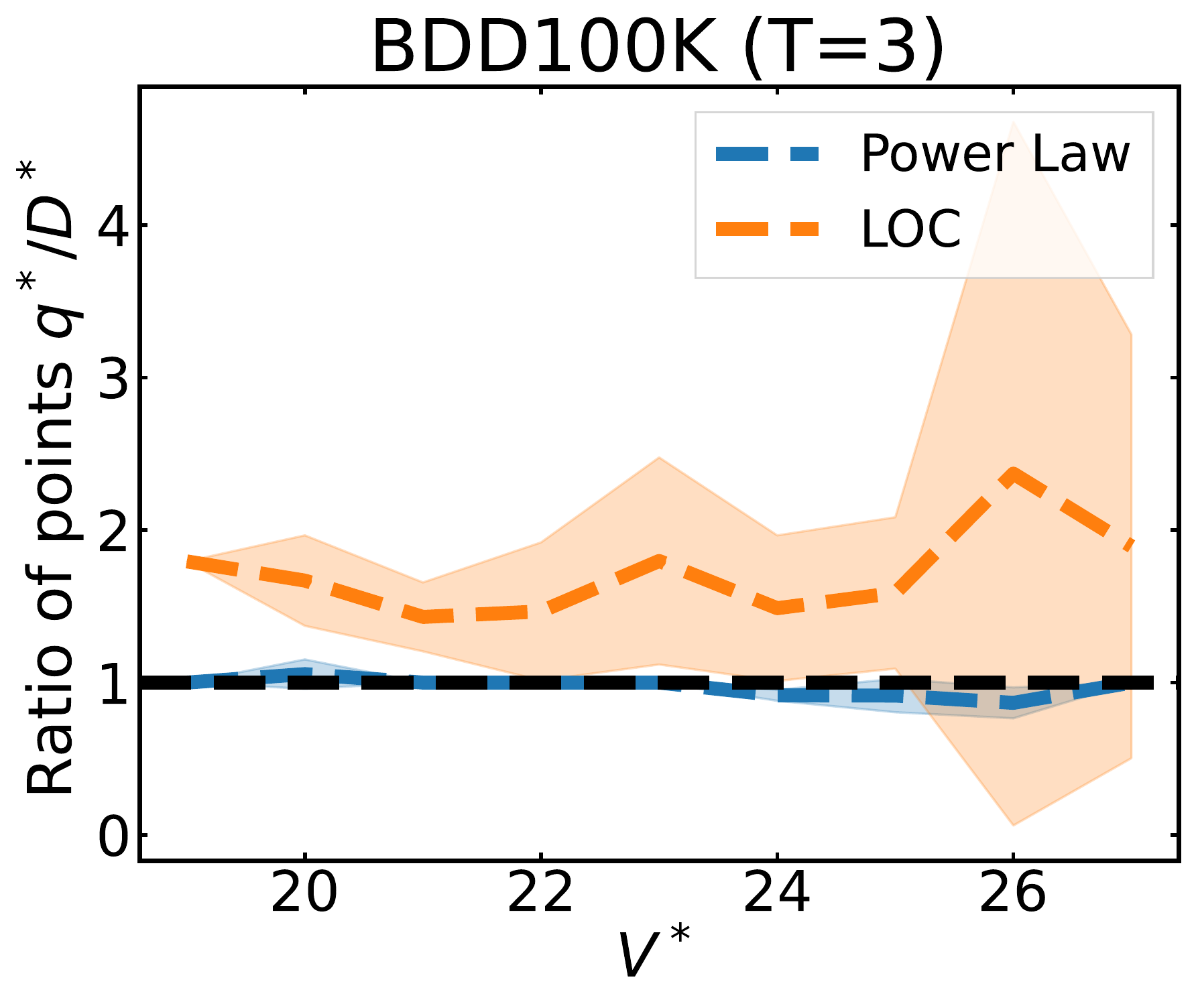}\end{minipage}
\begin{minipage}{0.16\linewidth}\includegraphics[width=1\textwidth]{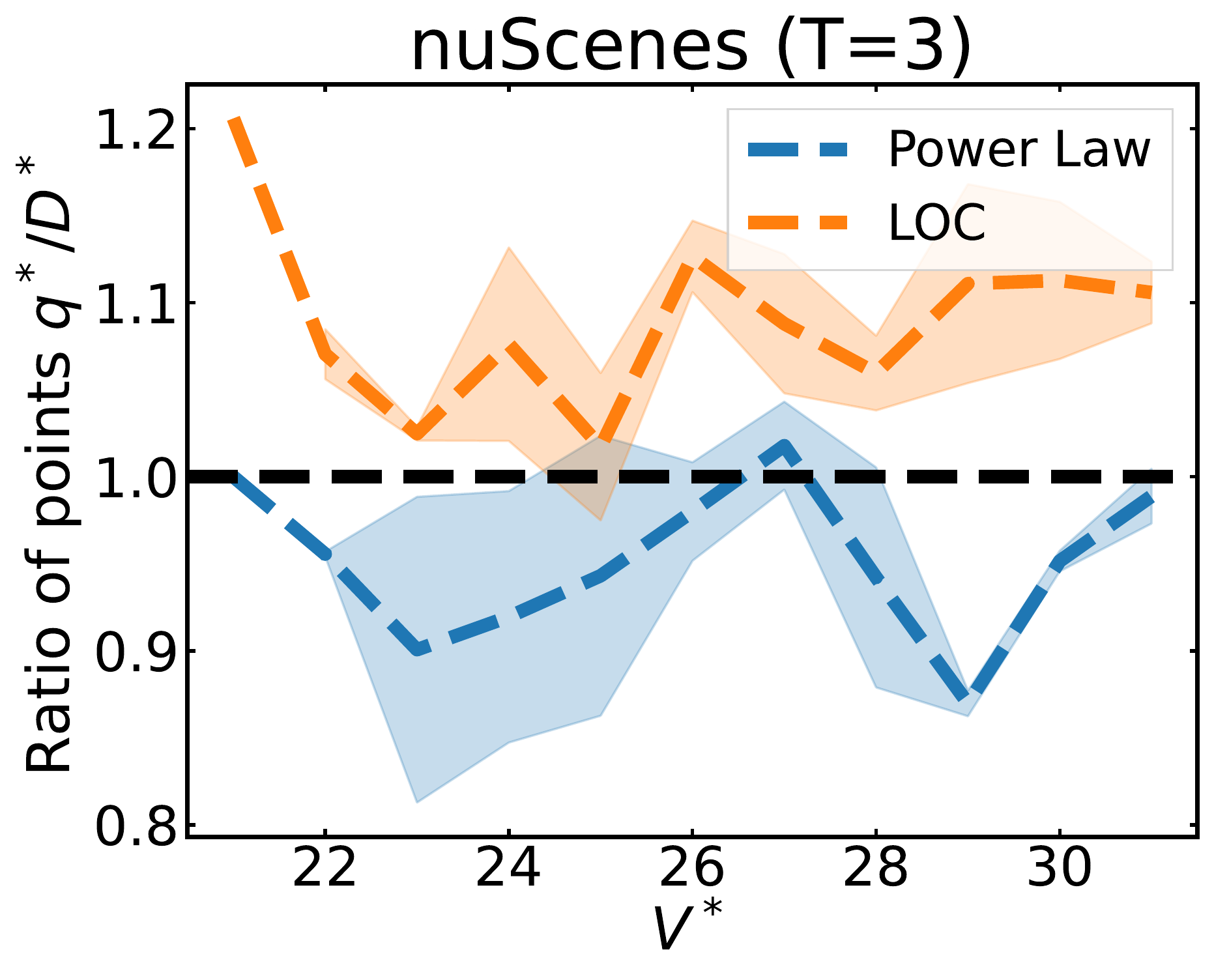}\end{minipage}
\begin{minipage}{0.16\linewidth}\includegraphics[width=1\textwidth]{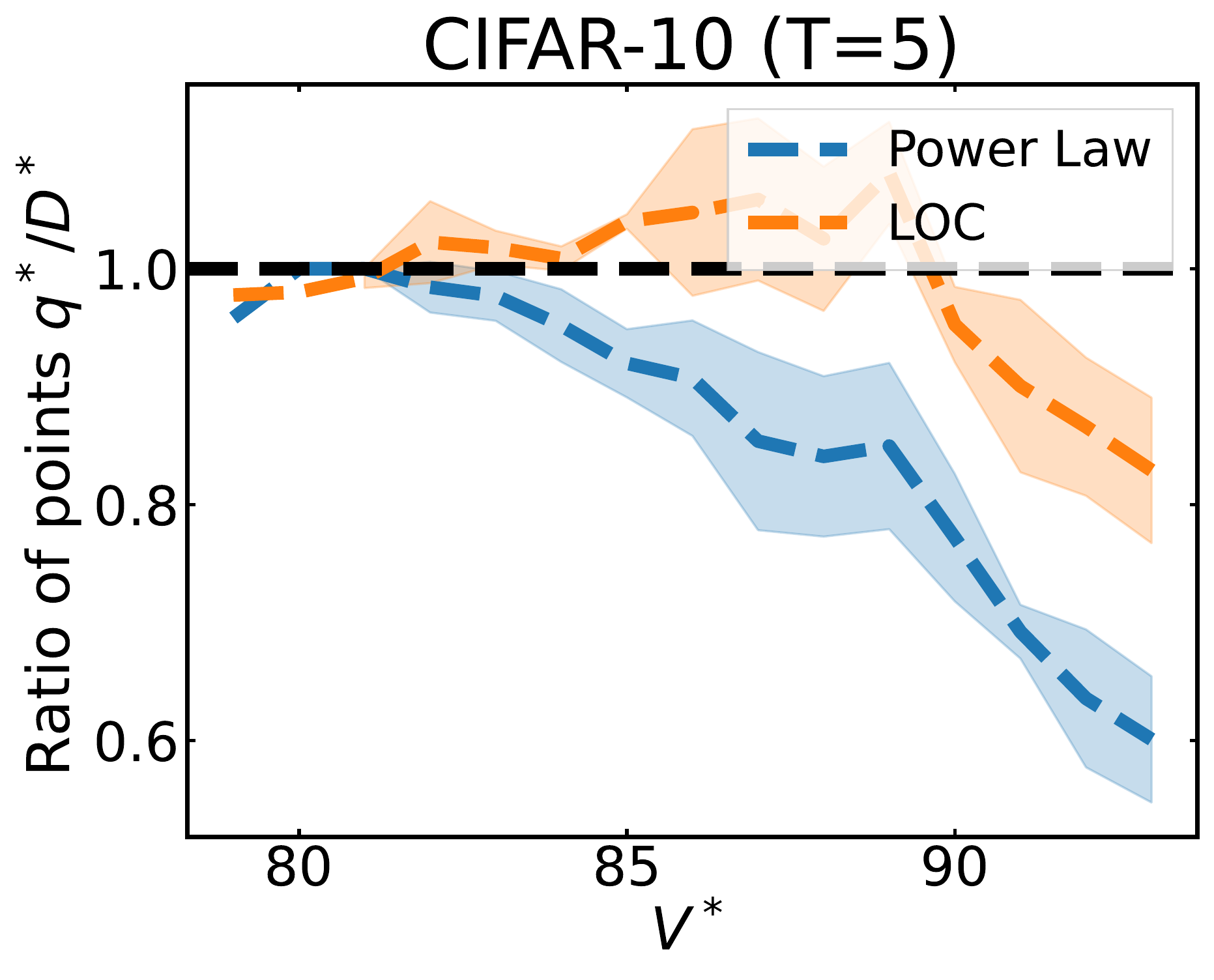}\end{minipage}
\begin{minipage}{0.16\linewidth}\includegraphics[width=1\textwidth]{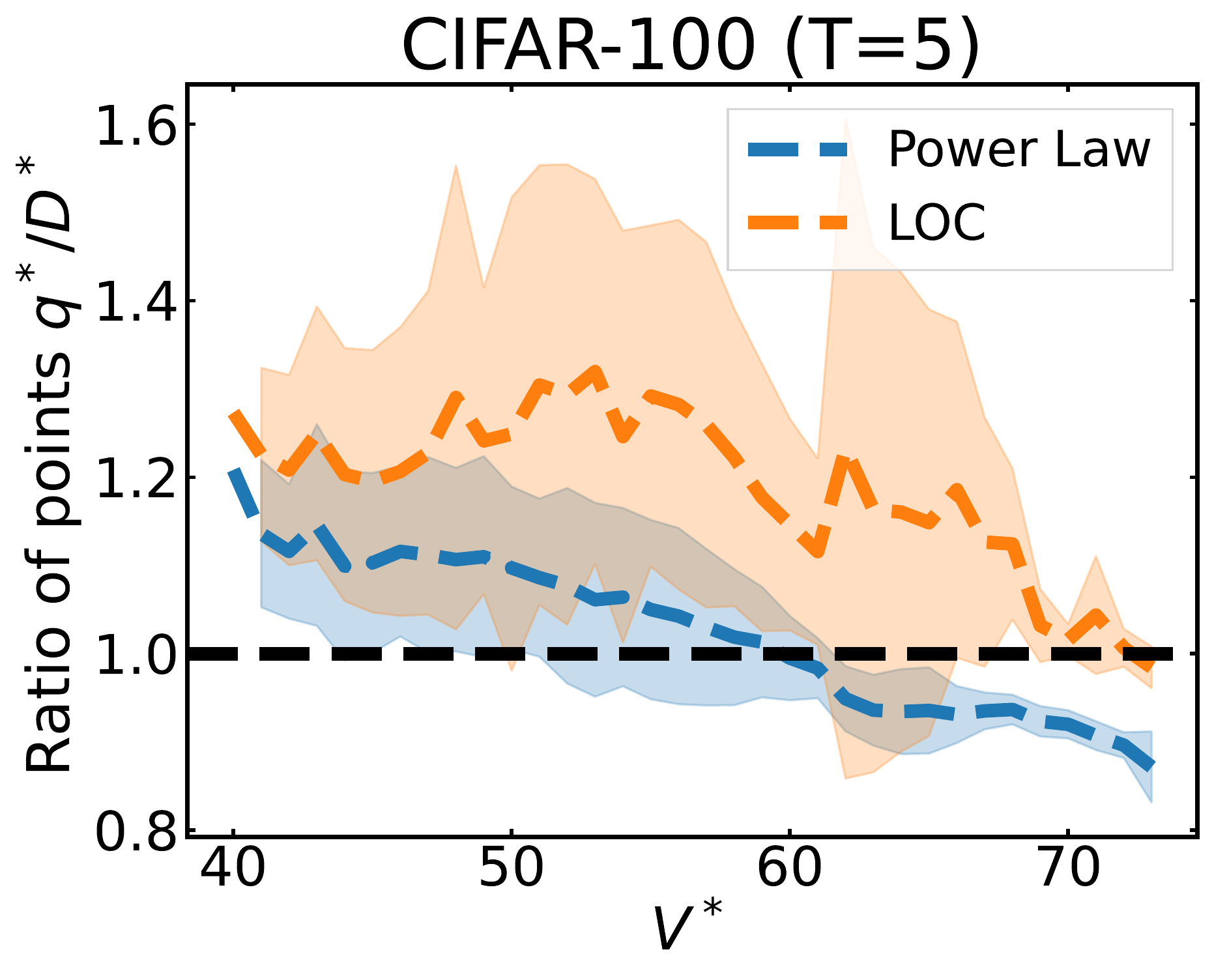}\end{minipage}
\begin{minipage}{0.16\linewidth}\includegraphics[width=1\textwidth]{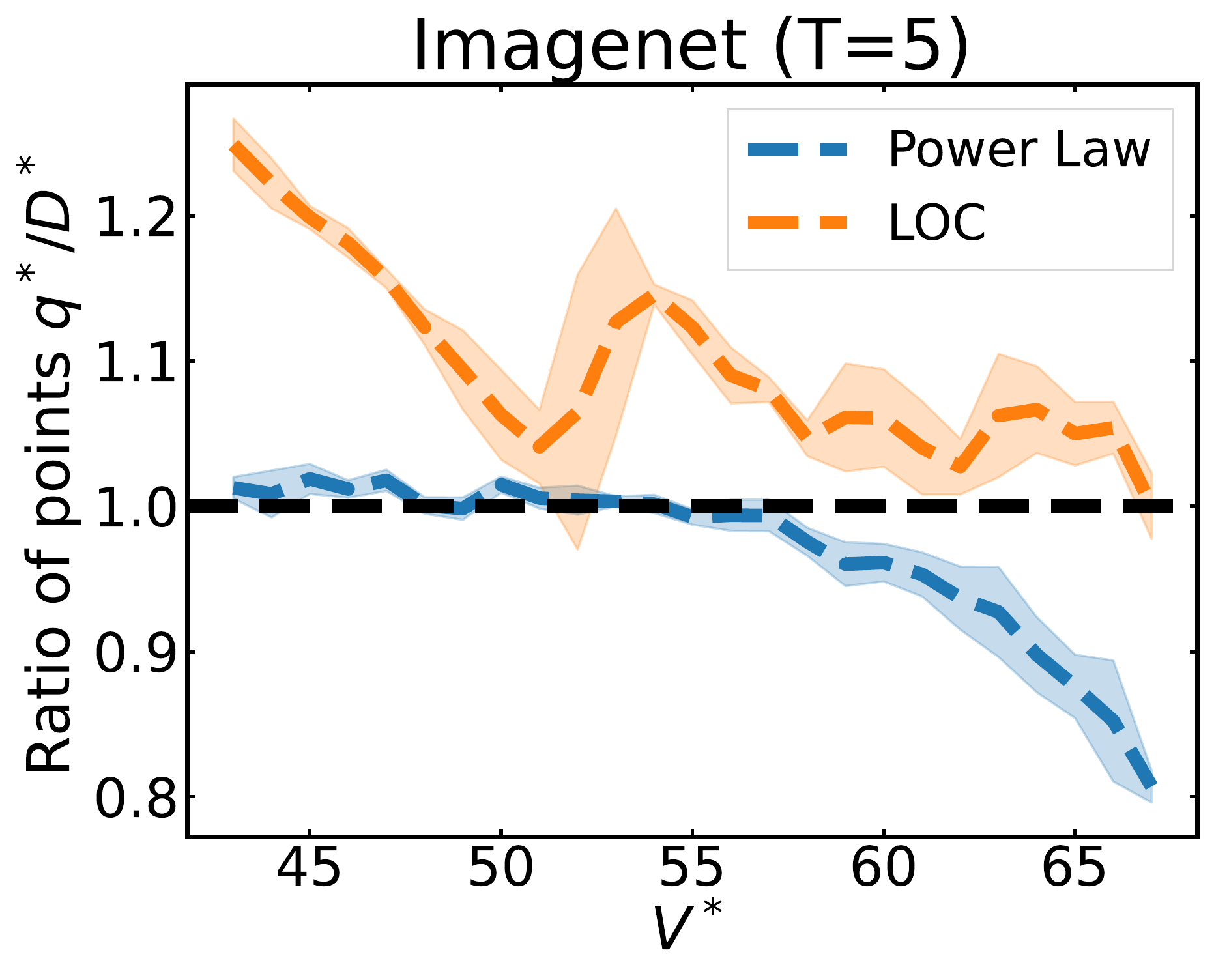}\end{minipage}
\begin{minipage}{0.16\linewidth}\includegraphics[width=1\textwidth]{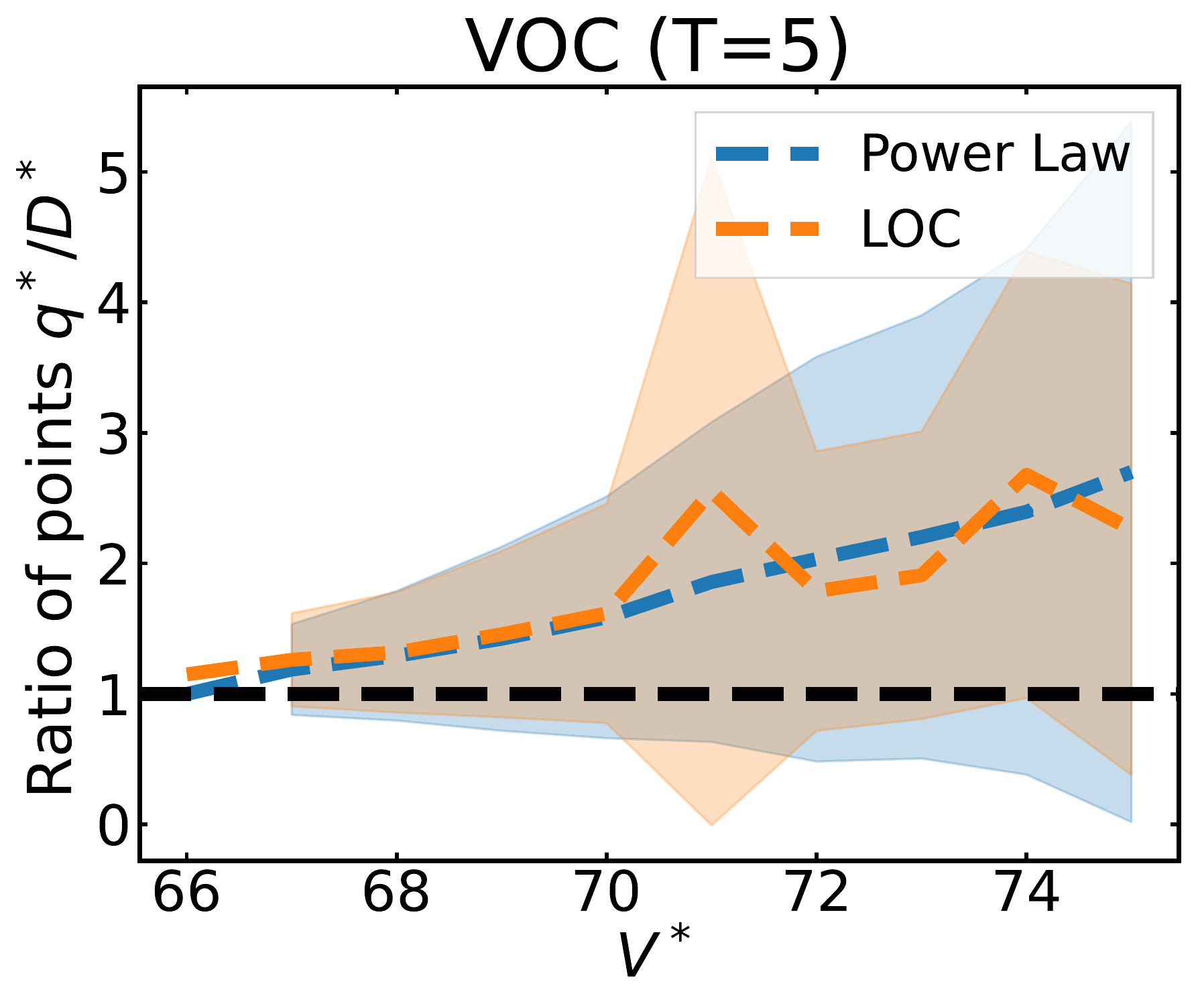}\end{minipage}
\begin{minipage}{0.16\linewidth}\includegraphics[width=1\textwidth]{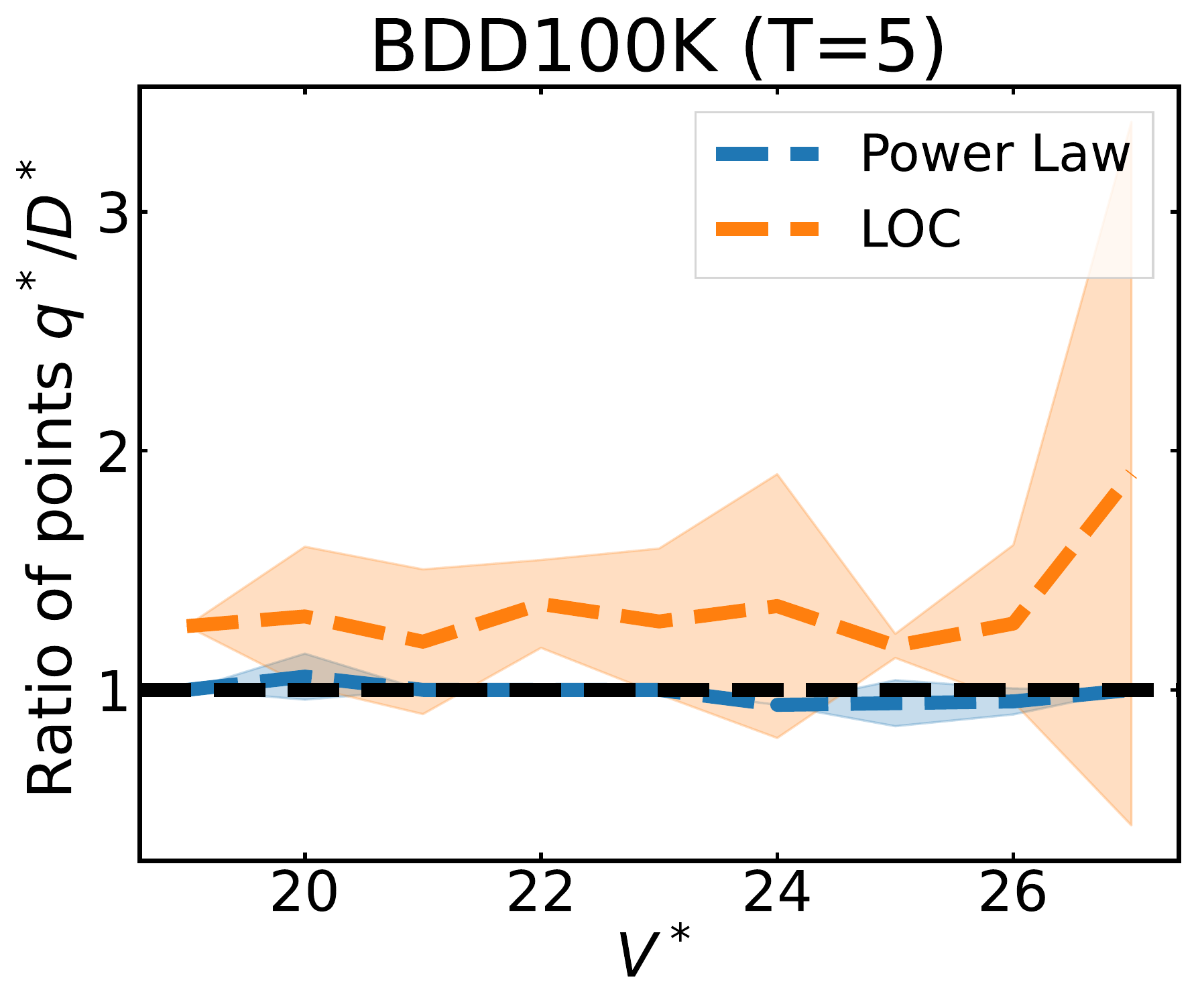}\end{minipage}
\begin{minipage}{0.16\linewidth}\includegraphics[width=1\textwidth]{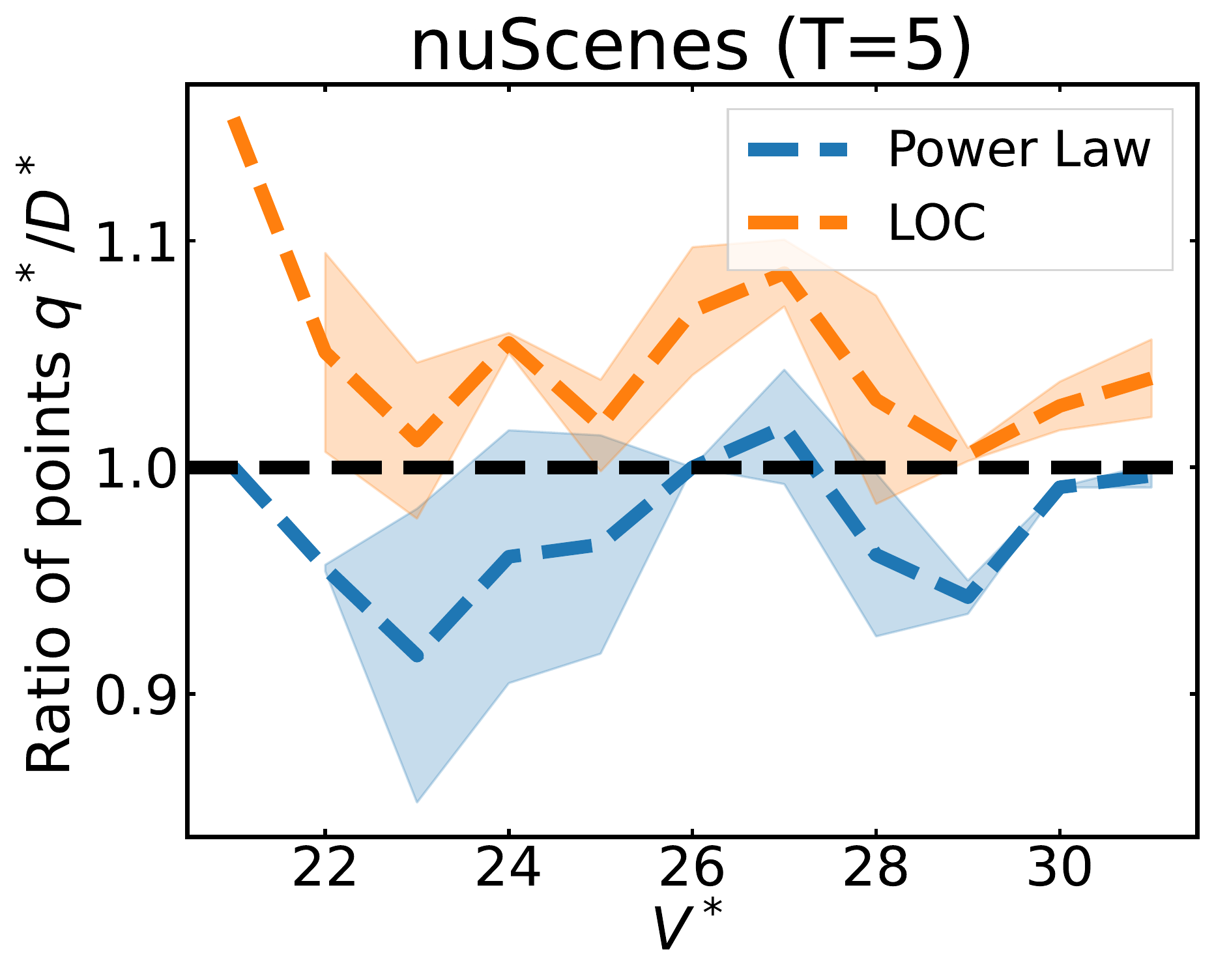}\end{minipage}
\vspace{-4mm}
\caption{\label{fig:head_to_head}
Mean $\pm$ standard deviation of 5 seeds of the ratio of data collected $q_T^* / D^*$ for different $V^*$. 
The rows correspond to $T = 1, 3, 5$ and the columns to different data sets. The black line corresponds to collecting exactly the minimum data requirement. 
\rebut{LOC almost always remains slightly above the black line, meaning we rarely fail to meet the target.}
}
\end{center}
\vspace{-5mm}
\end{figure*}

\begin{table}[!t]
\begin{minipage}{0.66\linewidth}
\resizebox{\linewidth}{!}{
\begin{tabular}{clccccc}
\toprule
  & Data set &  $T$ & \multicolumn{2}{c}{Power Law Regression} & \multicolumn{2}{c}{LOC} \\ \cmidrule(lr){4-5}\cmidrule(lr){6-7}
  &          &      & Failure rate & Cost ratio & Failure rate & Cost ratio \\
\midrule
\multirow{9}{*}{\rotatebox[origin=c]{90}{Class.}}    
 & \multirow{3}{*}{CIFAR-10}    &  1 &   $100\%$ &             $-$ &        $\fir{60\%}$ &    $0.19$ \\
 &                              &  3 &   $ 95\%$ &          $0.00$ &        $\fir{32\%}$ &    $0.05$ \\
 &                              &  5 &   $ 86\%$ &          $0.00$ &        $\fir{29\%}$ &    $0.03$ \\
  \cmidrule{2-7} 
& \multirow{3}{*}{CIFAR-100}    &  1 &   $  56\%$ &          $0.12$ &        $\fir{4\%}$ &    $0.99$ \\
&                               &  3 &   $  48\%$ &          $0.10$ &        $\fir{3\%}$ &    $0.31$ \\
&                               &  5 &   $  48\%$ &          $0.10$ &        $\fir{2\%}$ &    $0.19$ \\
  \cmidrule{2-7} 
 & \multirow{3}{*}{Imagenet}    &  1 &   $  99\%$ &          $0.00$ &        $\fir{37\%}$ &    $0.49$ \\
 &                              &  3 &   $  75\%$ &          $0.01$ &        $\fir{ 5\%}$ &    $0.16$ \\
 &                              &  5 &   $  56\%$ &          $0.01$ &        $\fir{ 2\%}$ &    $0.10$ \\
 \midrule
  \multirow{6}{*}{\rotatebox[origin=c]{90}{Seg.}}    
  & \multirow{3}{*}{BDD100K}    &  1 &   $  77\%$ &          $0.03$ &        $\fir{12\%}$ &    $2.03$ \\
  &                             &  3 &   $  31\%$ &          $0.00$ &        $\fir{ 0\%}$ &    $0.72$ \\
  &                             &  5 &   $  23\%$ &          $0.01$ &        $\fir{ 0\%}$ &    $0.35$ \\
  \cmidrule{2-7} 
 & \multirow{3}{*}{nuScenes}    &  1 &   $  95\%$ &          $0.00$ &        $\fir{52\%}$ &    $0.16$ \\
 &                              &  3 &   $  71\%$ &          $0.01$ &        $\fir{ 0\%}$ &    $0.09$ \\
 &                              &  5 &   $  62\%$ &          $0.00$ &        $\fir{ 0\%}$ &    $0.04$ \\
 \midrule
 \multirow{3}{*}{\rotatebox[origin=c]{90}{Det.}}    
      & \multirow{3}{*}{VOC}    &  1 &   $  36\%$ &          $1.24$ &        $\fir{25\%}$ &      $0.56$ \\
      &                         &  3 &   $   8\%$ &          $0.88$ &        $\fir{ 0\%}$ &      $1.10$ \\
      &                         &  5 &   $   6\%$ &          $0.86$ &        $\fir{ 0\%}$ &      $0.84$ \\
\bottomrule
\end{tabular}
}
\end{minipage}
\hfill
\begin{minipage}{0.31\linewidth}
    \caption{Average cost ratio $\bc^\tpose (\bq^*_T - \bq_0) / \bc^\tpose (\bD^* - \bq_0) - 1$ and failure rate measured over a range of $V^*$ for each $T$ and data set. 
    We fix $c = 1$ and $P=10^7$ ($P=10^6$ for VOC and $P=10^8$ for ImageNet).
    The best performing failure rate for each setting is bolded. The cost ratio is measured only for instances that achieve $V^*$. LOC consistently reduces the average failure rate, often down to $0\%$, while keeping the average cost ratio almost always below $1$ (i.e., spending at most $2\times$ the optimal amount).}
    \label{tab:one_d_summary}
\end{minipage}
\vspace{-2mm}
\end{table}

\begin{figure*}[!t]
\begin{center}
\begin{minipage}{0.3\linewidth}\includegraphics[width=1\textwidth]{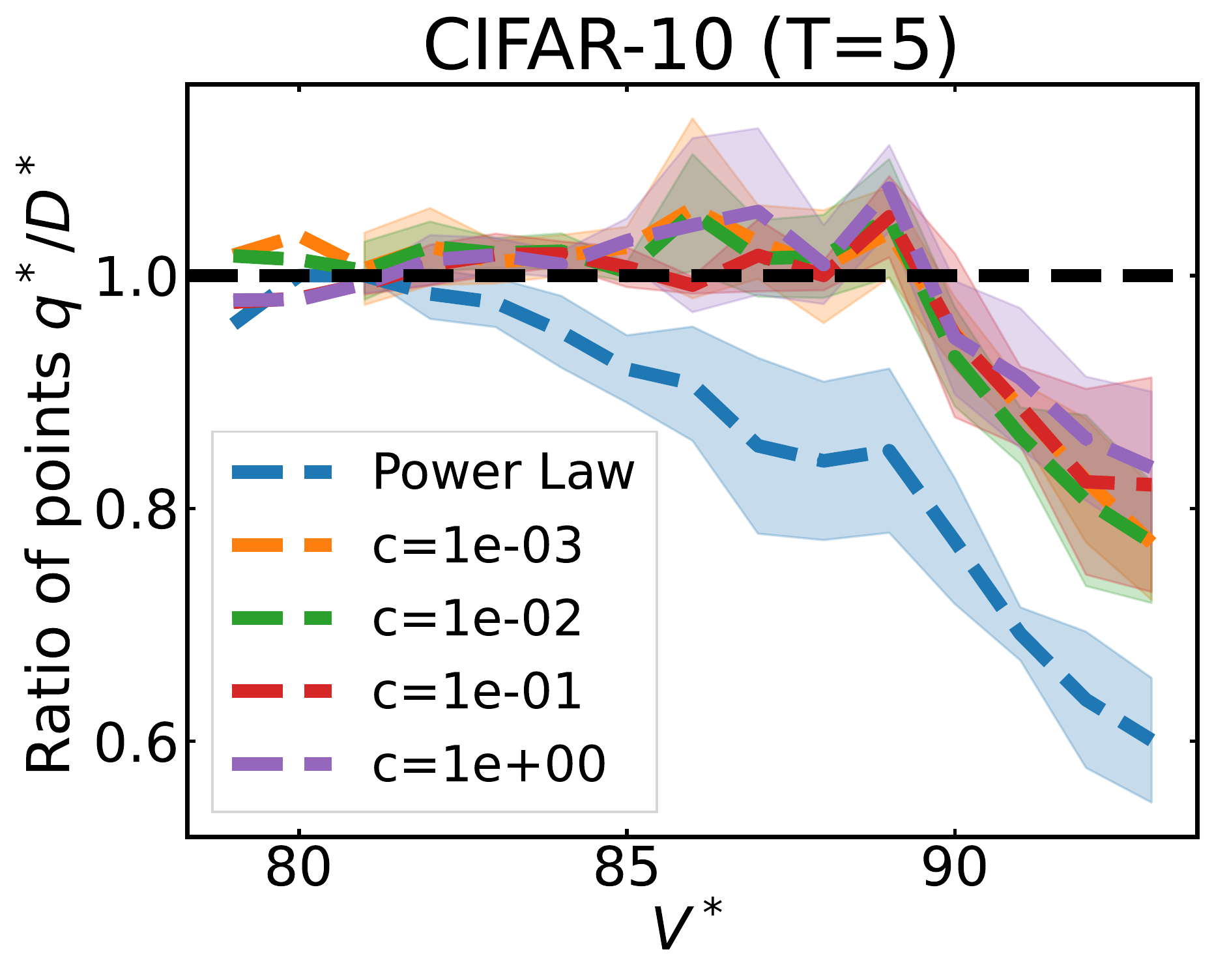}\end{minipage}
\begin{minipage}{0.3\linewidth}\includegraphics[width=1\textwidth]{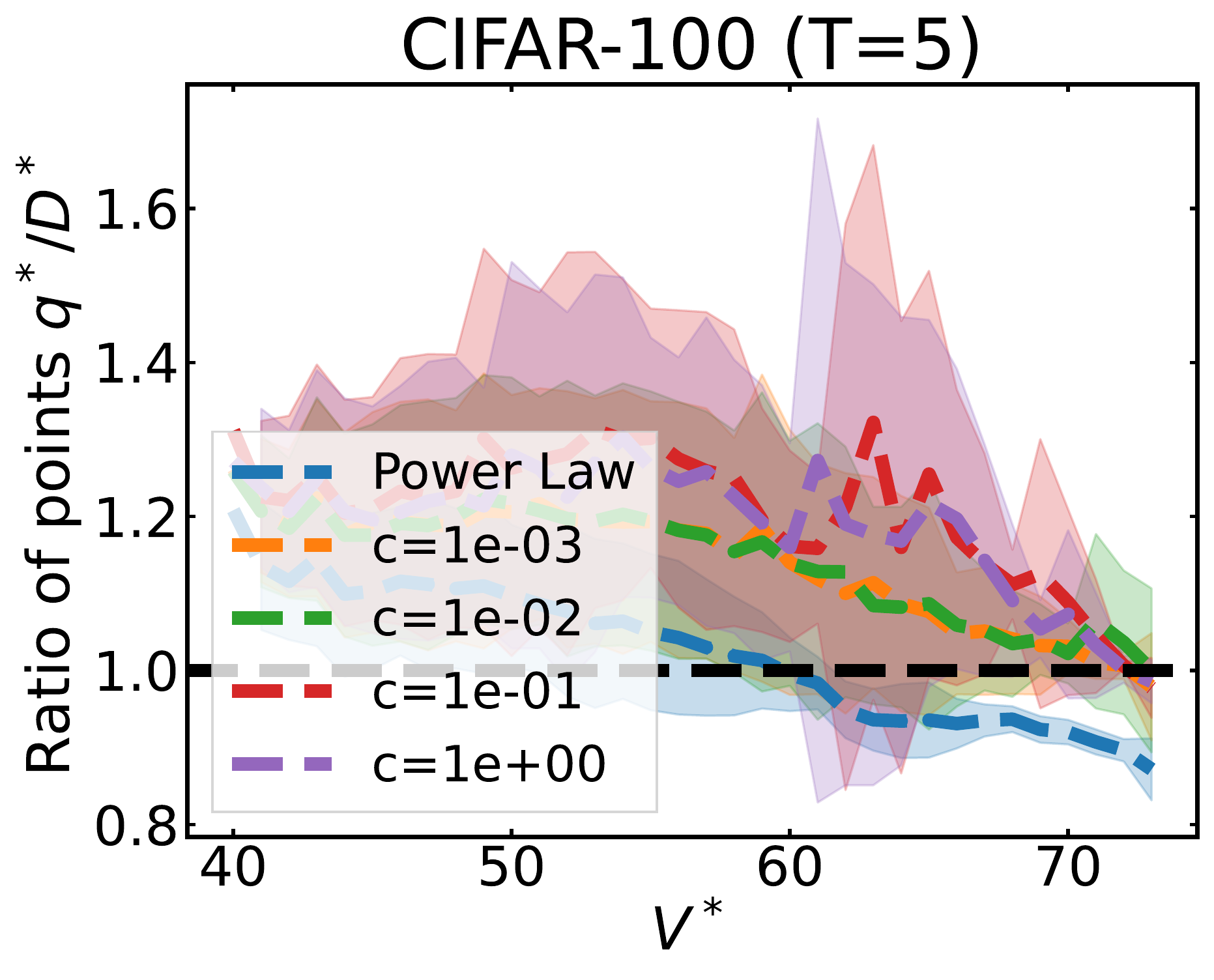}\end{minipage}
\begin{minipage}{0.3\linewidth}\includegraphics[width=1\textwidth]{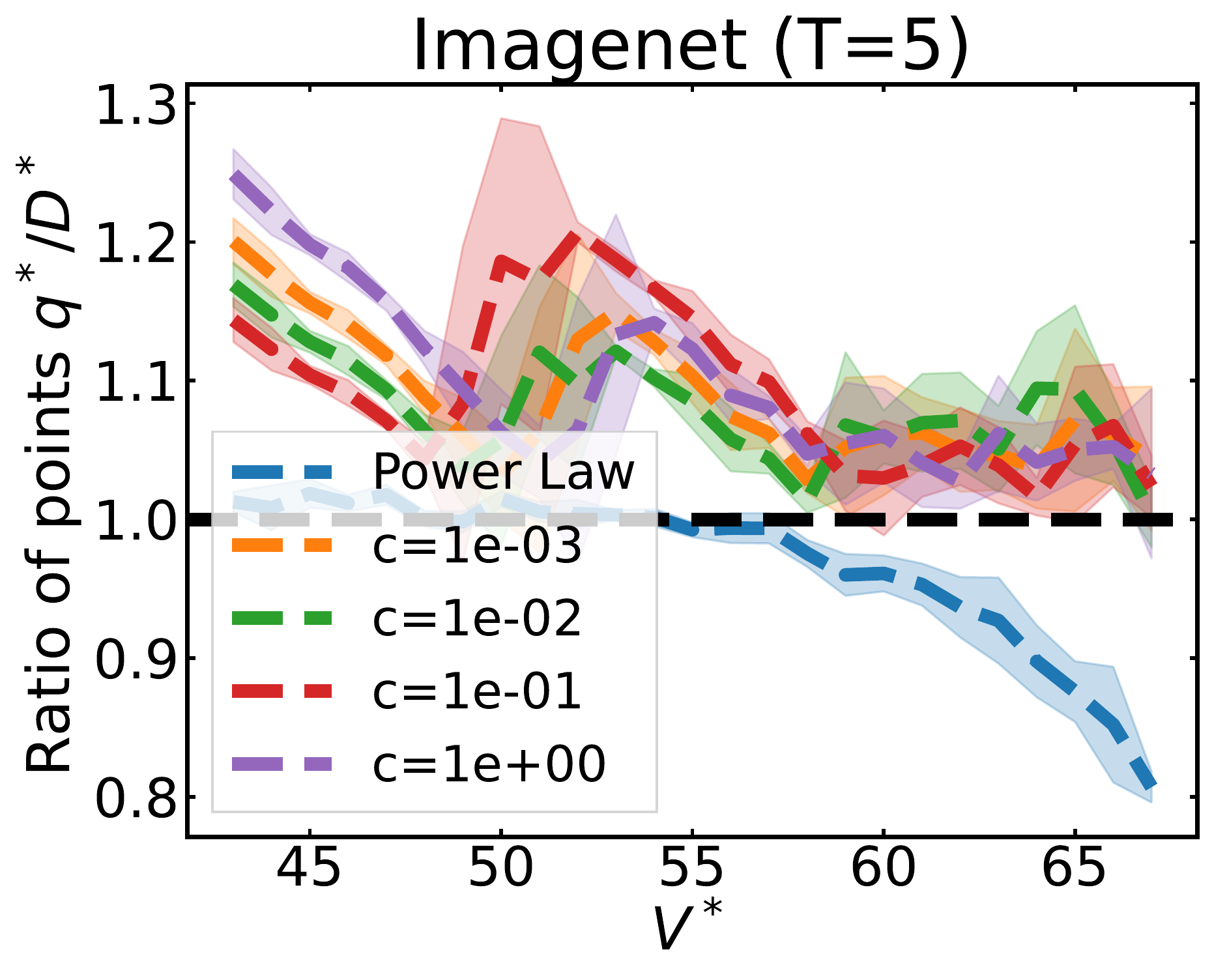}\end{minipage}
\vspace{-1em}
\begin{minipage}{0.3\linewidth}\includegraphics[width=1\textwidth]{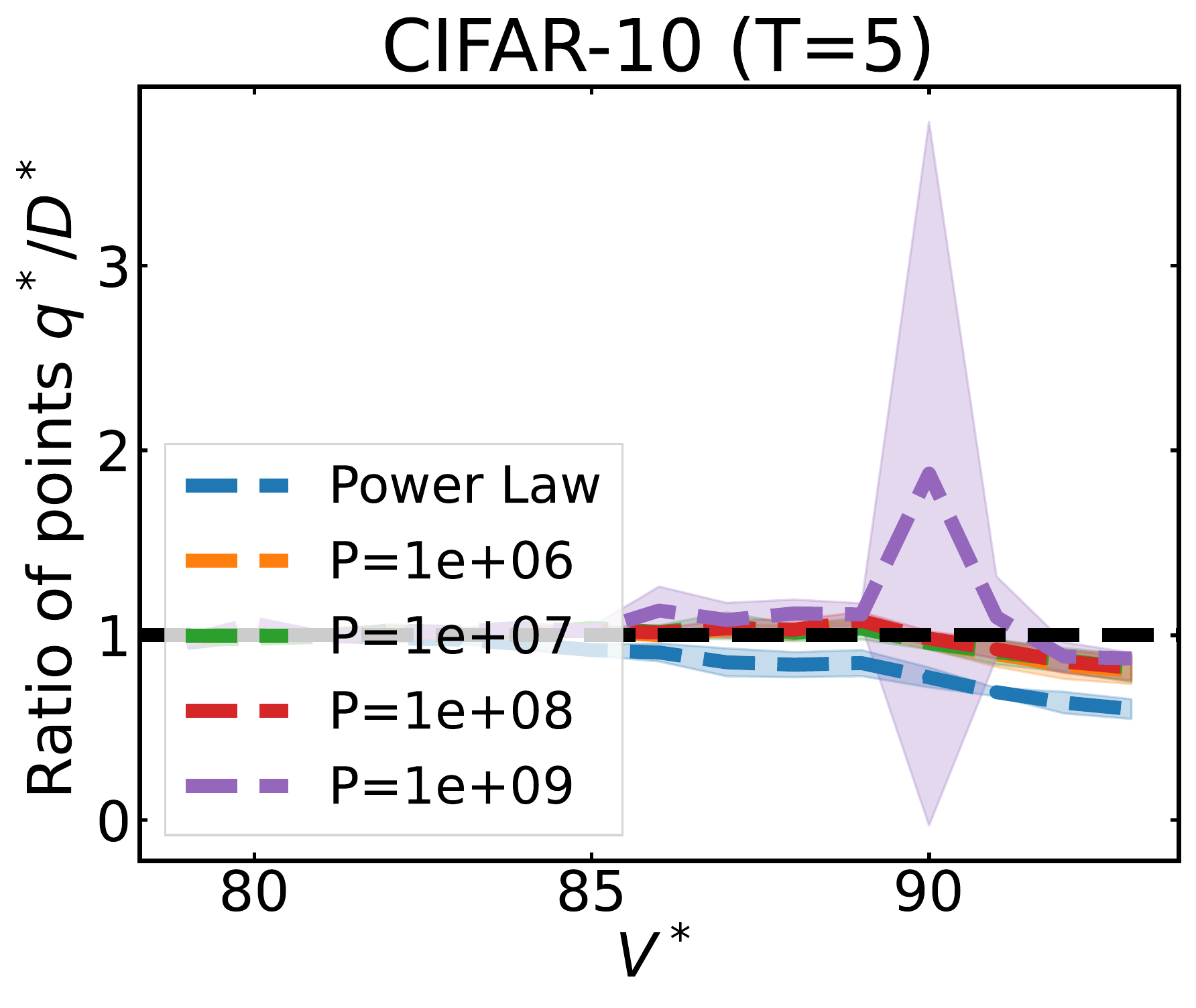}\end{minipage}
\begin{minipage}{0.3\linewidth}\includegraphics[width=1\textwidth]{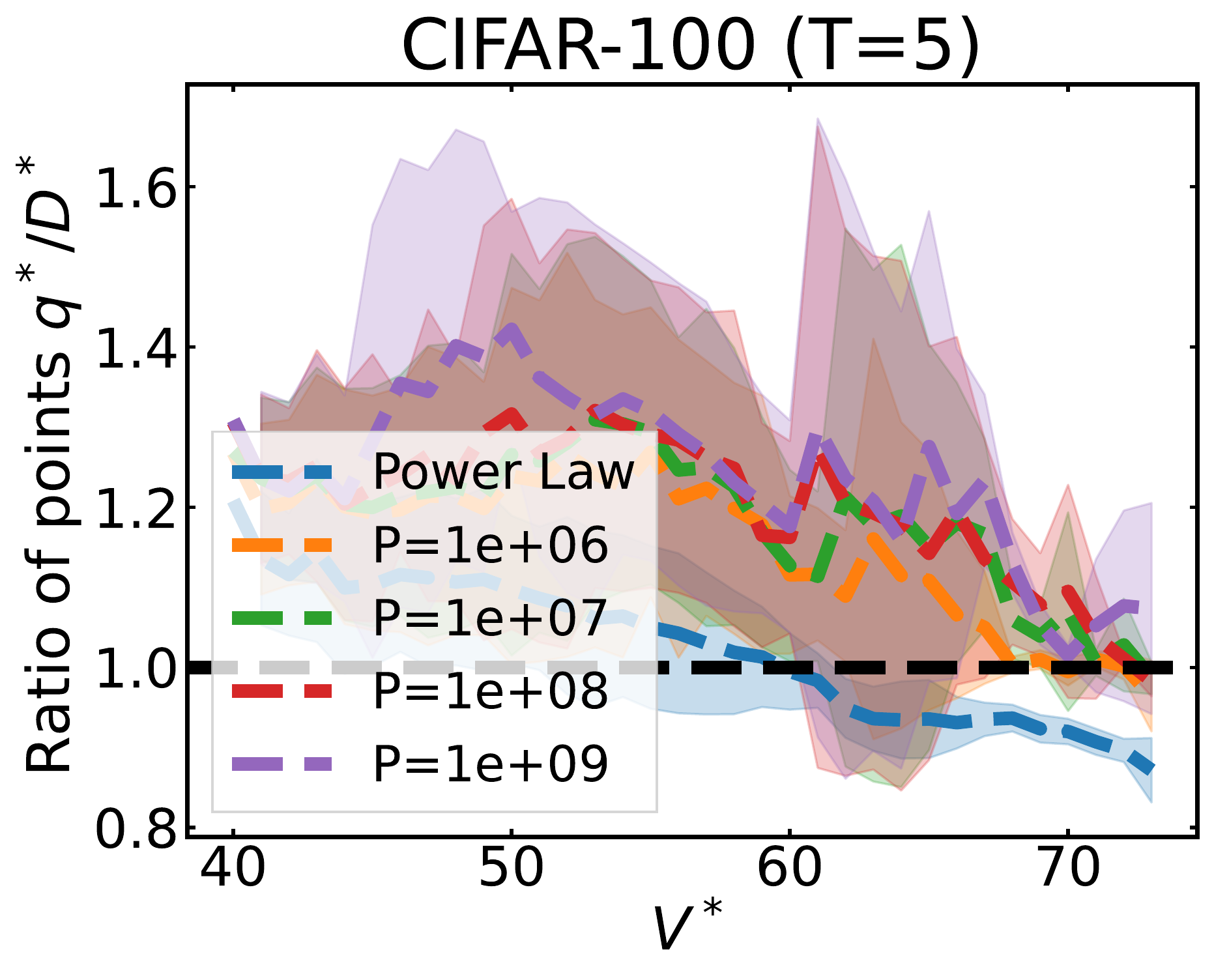}\end{minipage}
\begin{minipage}{0.3\linewidth}\includegraphics[width=1\textwidth]{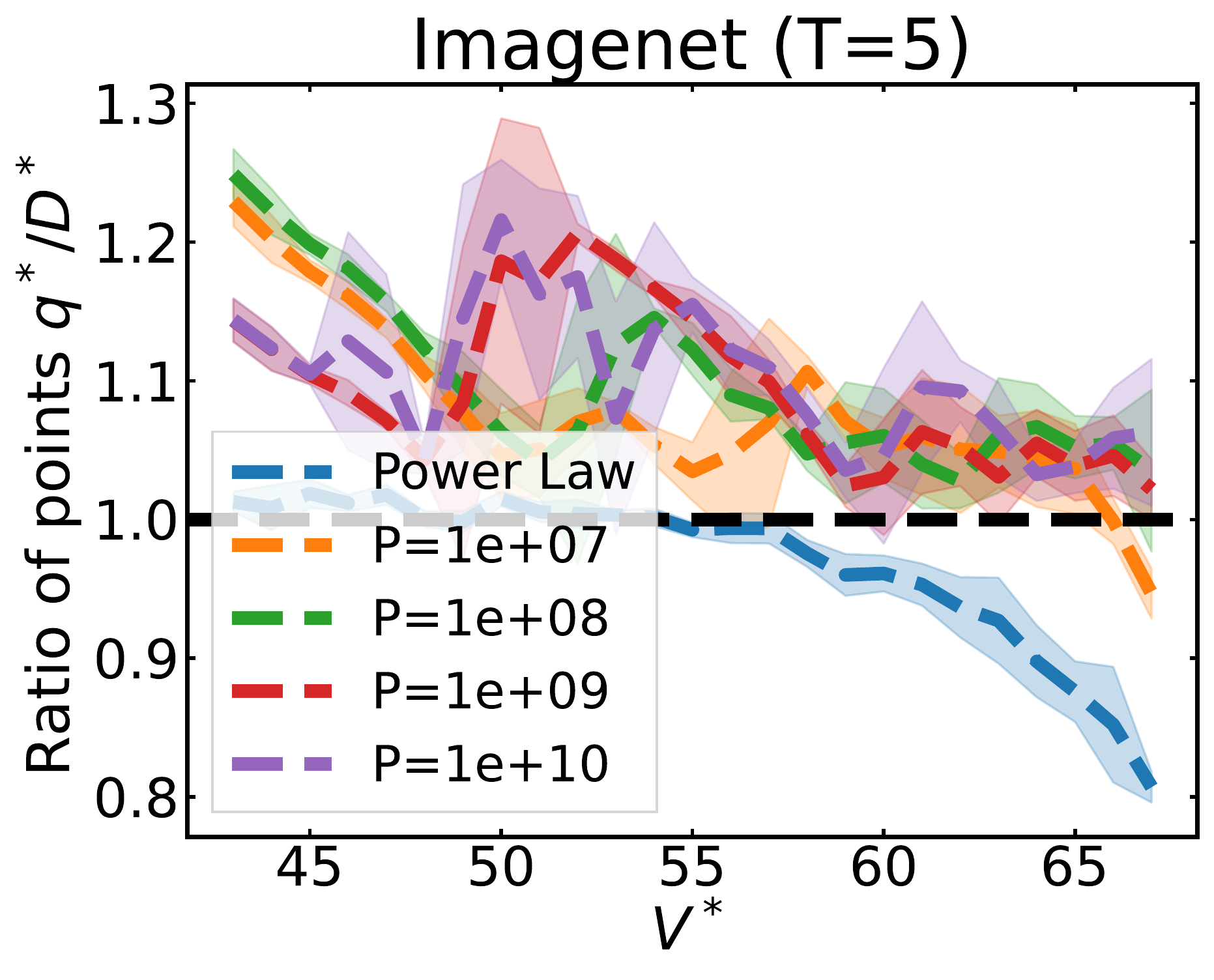}\end{minipage}
\vspace{-1em}
\begin{minipage}{0.3\linewidth}\includegraphics[width=1\textwidth]{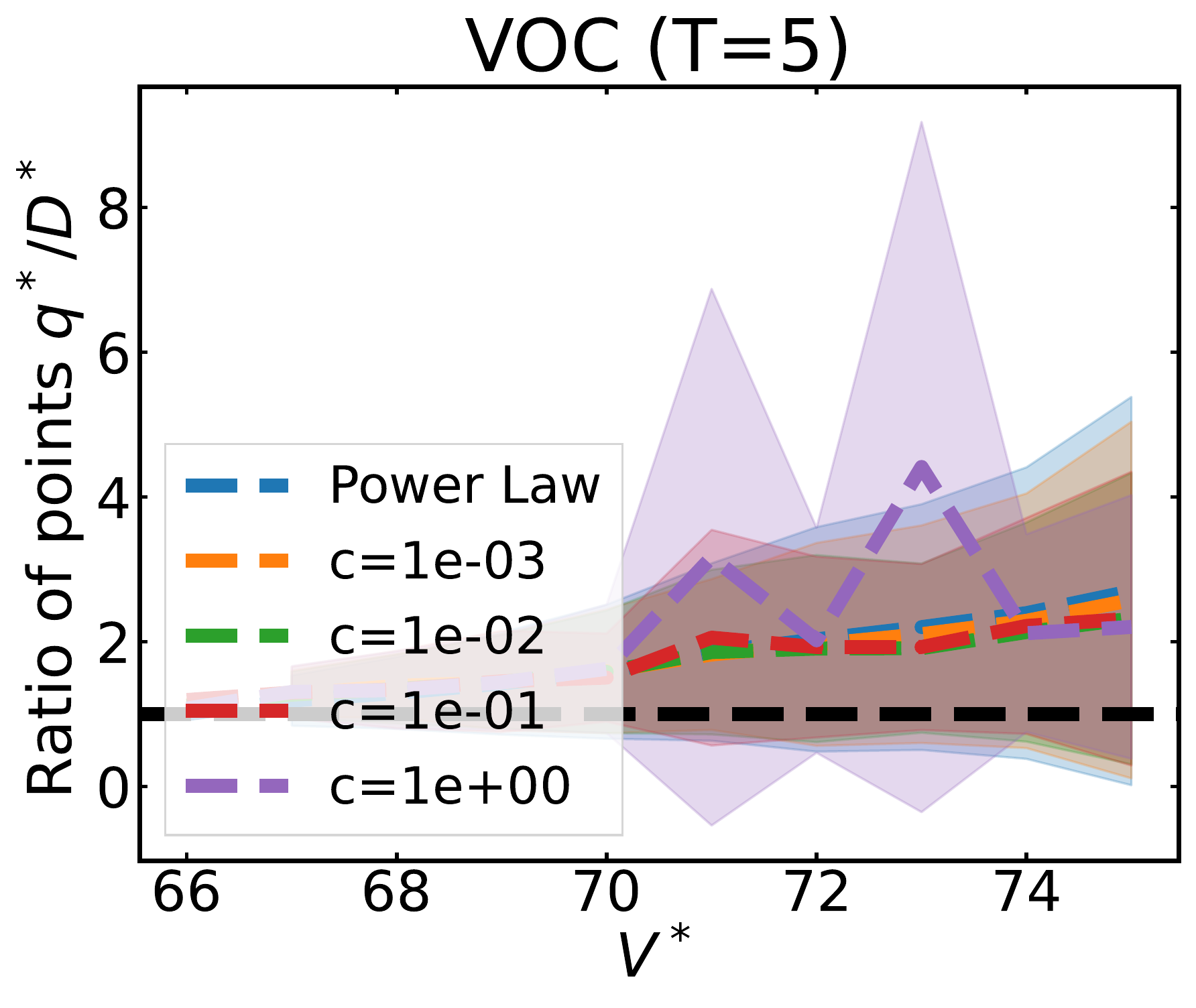}\end{minipage}
\begin{minipage}{0.3\linewidth}\includegraphics[width=1\textwidth]{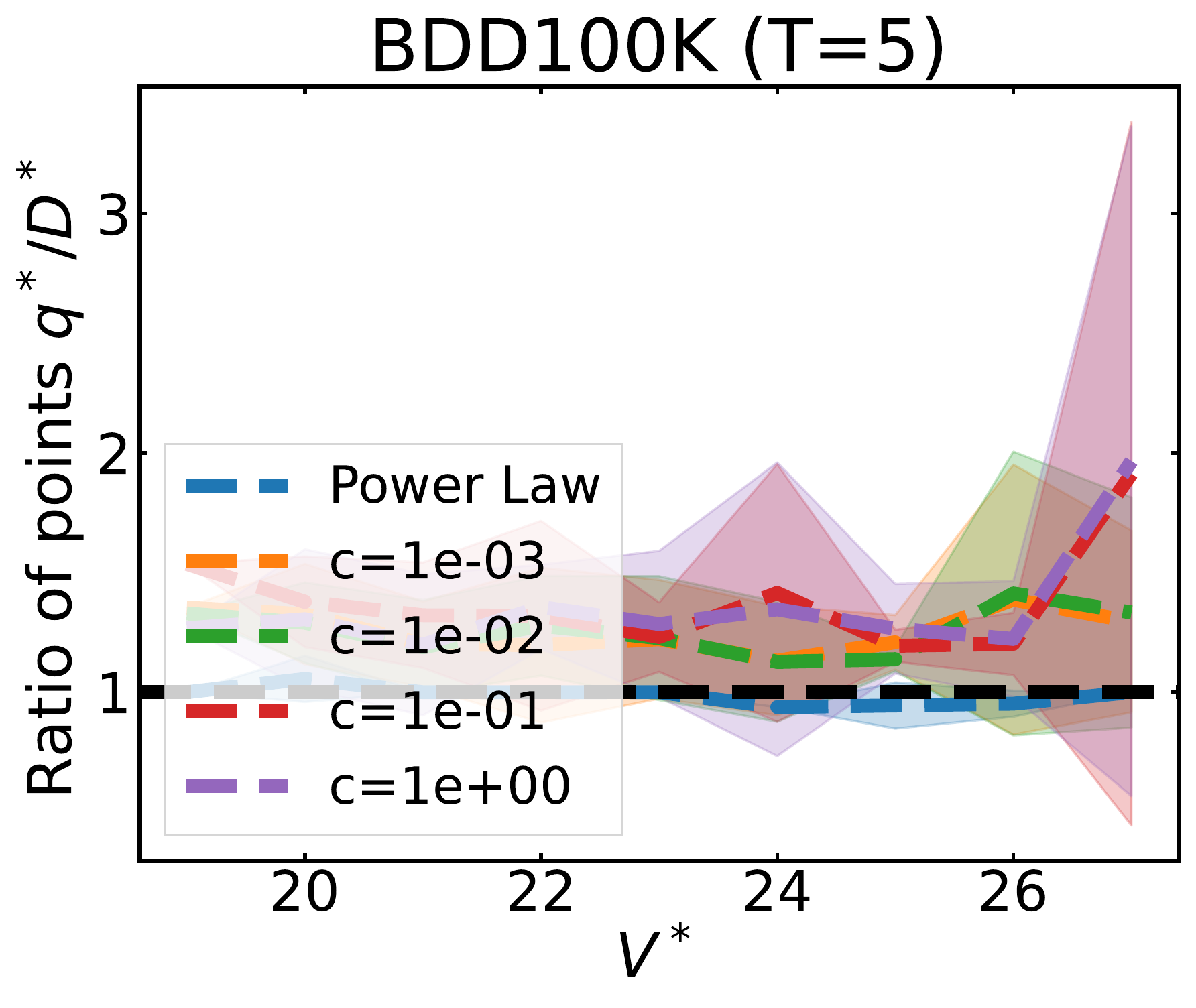}\end{minipage}
\begin{minipage}{0.3\linewidth}\includegraphics[width=1\textwidth]{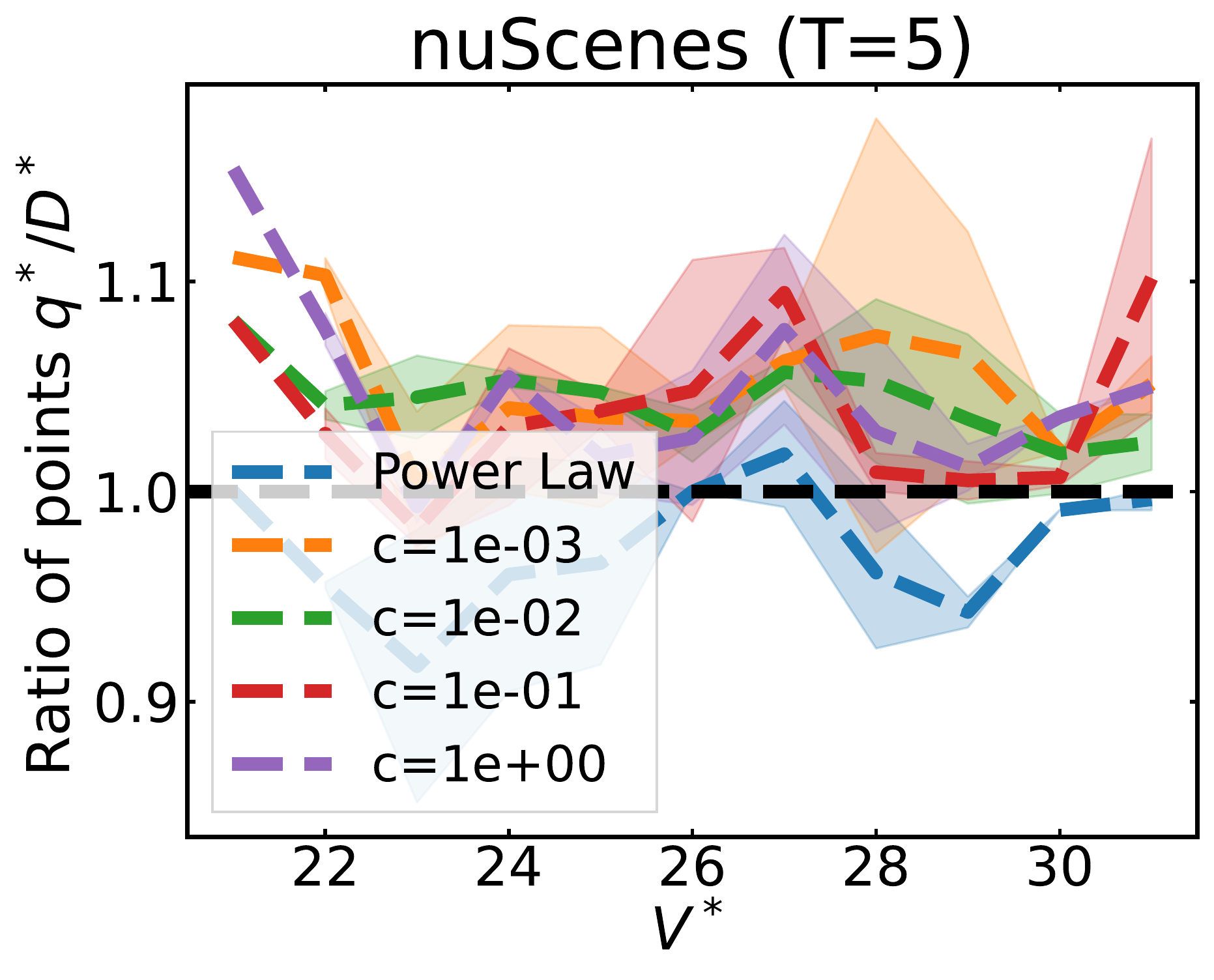}\end{minipage}
\vspace{-1em}
\begin{minipage}{0.3\linewidth}\includegraphics[width=1\textwidth]{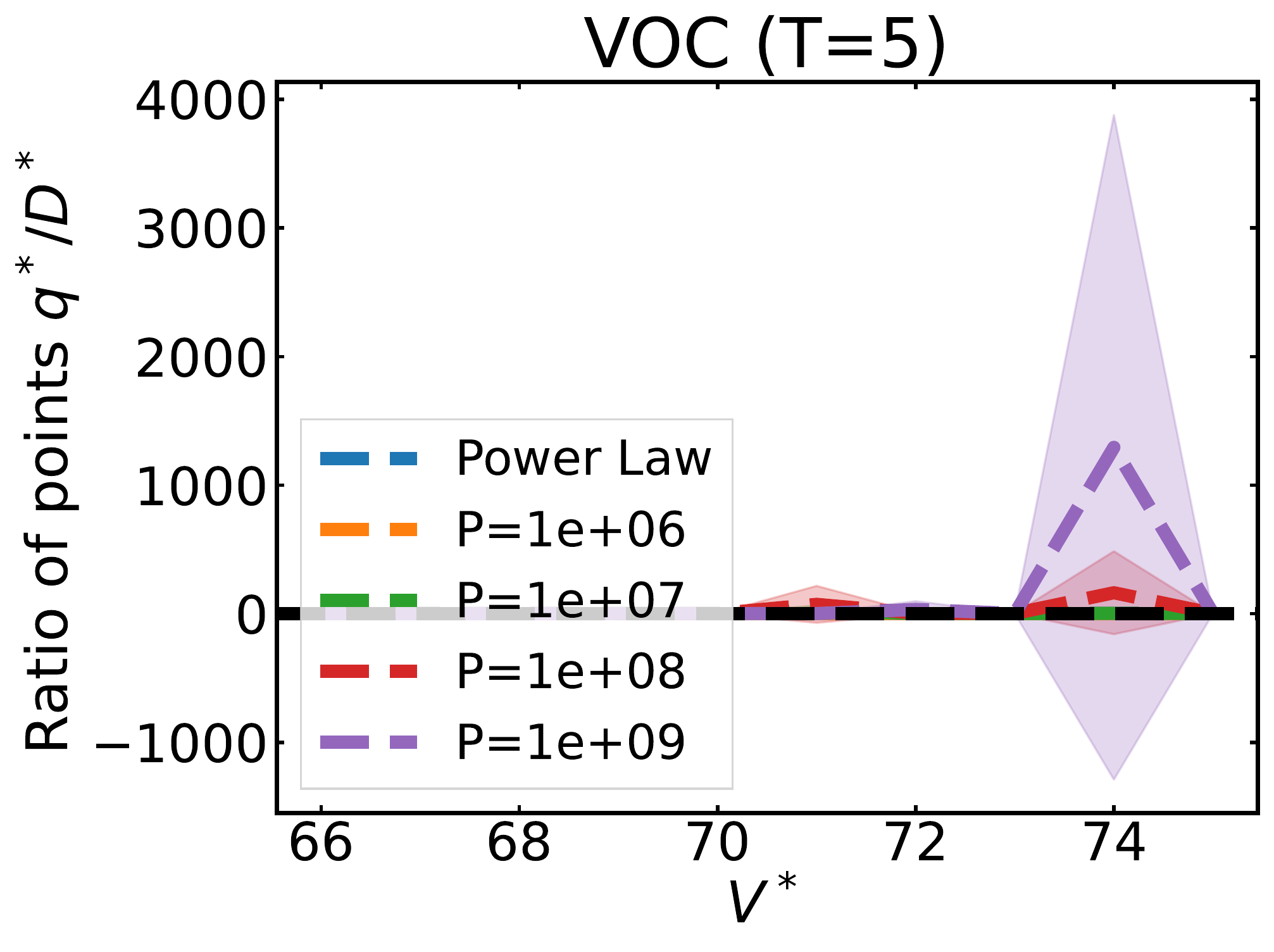}\end{minipage}
\begin{minipage}{0.3\linewidth}\includegraphics[width=1\textwidth]{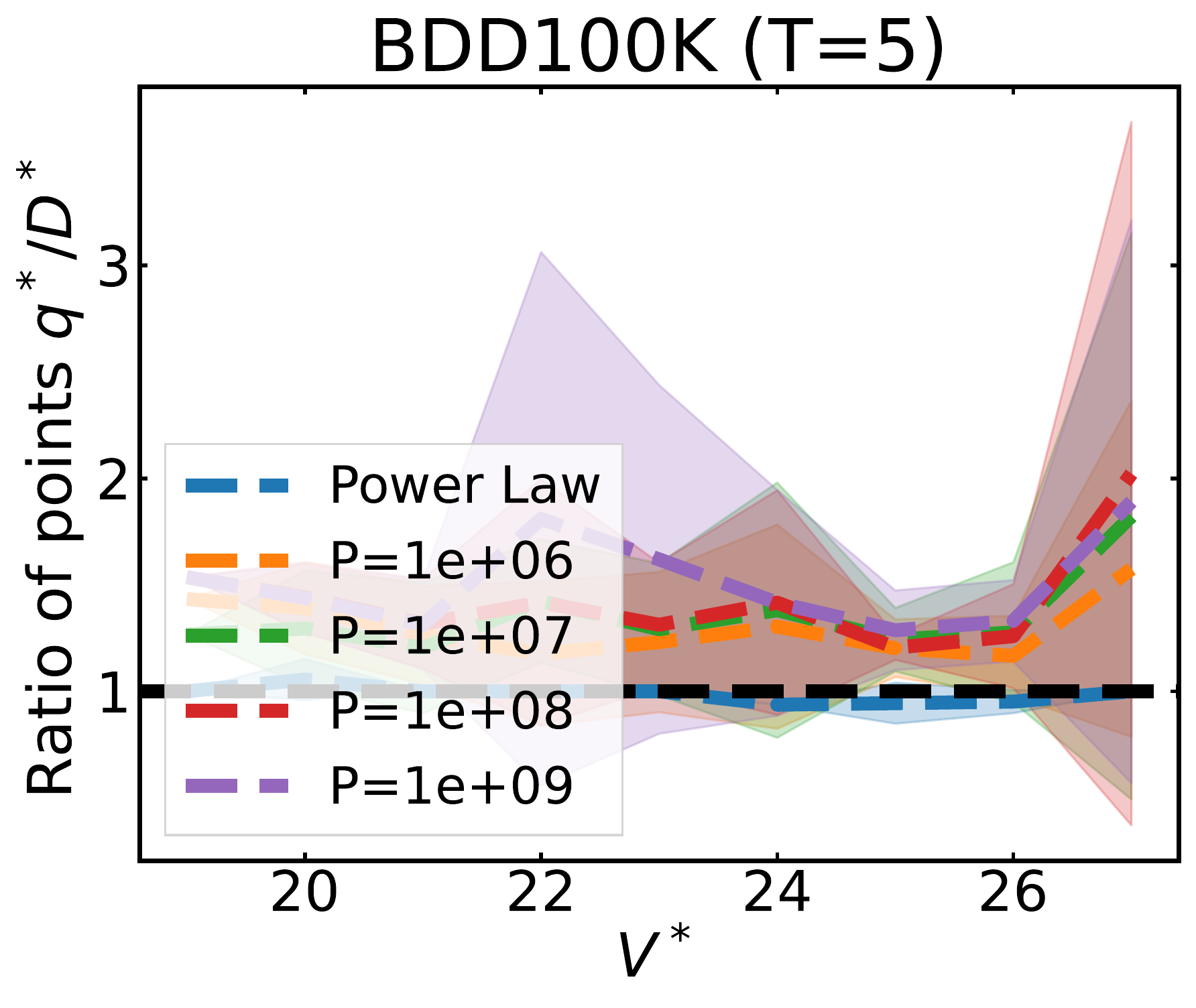}\end{minipage}
\begin{minipage}{0.3\linewidth}\includegraphics[width=1\textwidth]{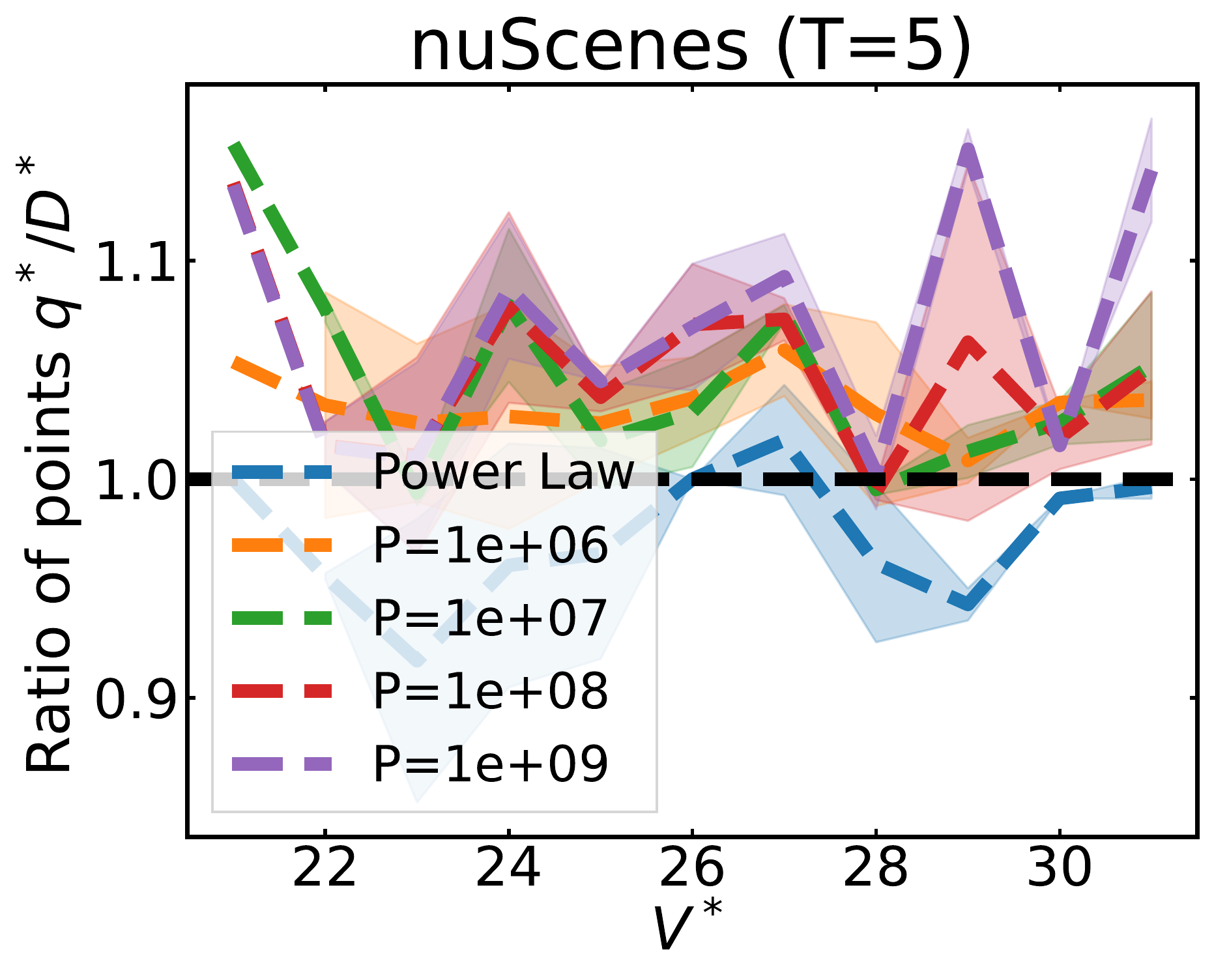}\end{minipage}
%
\caption{\label{fig:sweep_cost_and_penalty}
Mean $\pm$ standard deviation of 5 seeds of the ratio of data collected $q_T^* / D^*$ for different $V^*$ and fixed $T=5$.
\emph{Rows 1 \& 3:} We sweep the cost parameter from $0.001$ to $1$ and fix $P=10^7$. 
\emph{Rows 2 \& 4:} We sweep the penalty parameter from $10^6$ to $10^9$ and fix $c = 1$. 
The dashed black line corresponds to collecting exactly the minimum data requirement.
See Appendix~\ref{sec:app_experiment_results} for all $T$.
}
\end{center}
\vspace{-6mm}
\end{figure*}

\textbf{The Value of Optimization over Estimation when $K=1$.~}
Figure~\ref{fig:head_to_head} compares LOC versus the corresponding power law regression baseline when $c=1$ and $P=10^7$ ($P=10^6$ for VOC and $P=10^8$ for ImageNet). If a curve is below the black line, then it failed to collect enough data to meet the target.
LOC consistently remains above this black line for most settings.
In contrast, even with up to $T=5$ rounds, collecting data based only on regression estimates leads to failure.

Table~\ref{tab:one_d_summary} aggregates the failure rates and cost ratios for each setting. 
To summarize, LOC fails at less than $10\%$ of instances for $12/18$ settings, whereas regression fails over $30\%$ for $15/18$ settings.
In particular, regression nearly always under-collects data when given a single $T=1$ round. Here, LOC reduces the risk of under-collecting by $40\%$ to $90\%$ over the baseline.
While this leads to a marginal increase in costs, our cost ratios are consistently less than $0.5$ for $12/18$ settings, meaning that we spend at most $50\%$ more than the true minimum cost.

We remark that previously,~\citet{mahmood2022howmuch} observed that incorrect regression estimates necessitated real machine learning workflows to collect data over multiple rounds. 
Instead, with LOC, we can make significantly improved data collection decisions even with a single round.

\textbf{Robustness to Cost and Penalty Parameters (see Appendix~\ref{sec:app_experiment_robustness} for details).~}
Figure~\ref{fig:sweep_cost_and_penalty} evaluates the ratio of points collected for $T=5$ when the cost and the penalty of the optimization problem are varied. Our algorithm is robust to variations in these parameters, as LOC retains the same shape and scale for almost every parameter setting and data set. 
Further, LOC consistently remains above the horizontal $1$ line, showing that even after varying $c$ and $P$, we do not fail as frequently as the baseline.
Finally, validating Theorem~\ref{thm:t1_has_analytic_sol}, the penalty parameter $P$ provides natural control over the amount of data collected. As we increase $P$, the ratio of data collected increases consistently.

\begin{figure*}[!t]
\begin{center}
\begin{minipage}{0.24\linewidth}\includegraphics[width=1\textwidth]{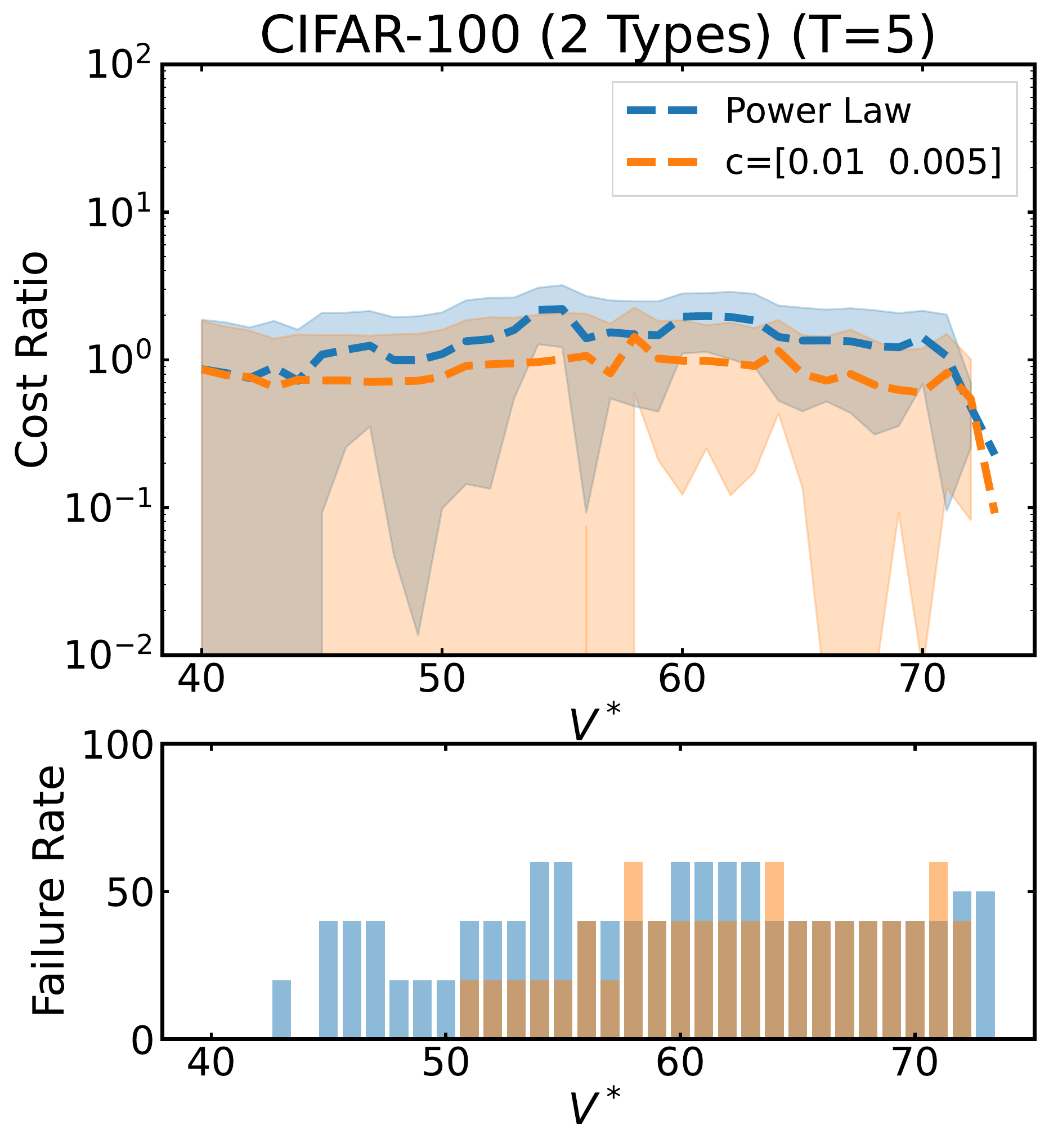} \end{minipage}
\begin{minipage}{0.24\linewidth}\includegraphics[width=1\textwidth]{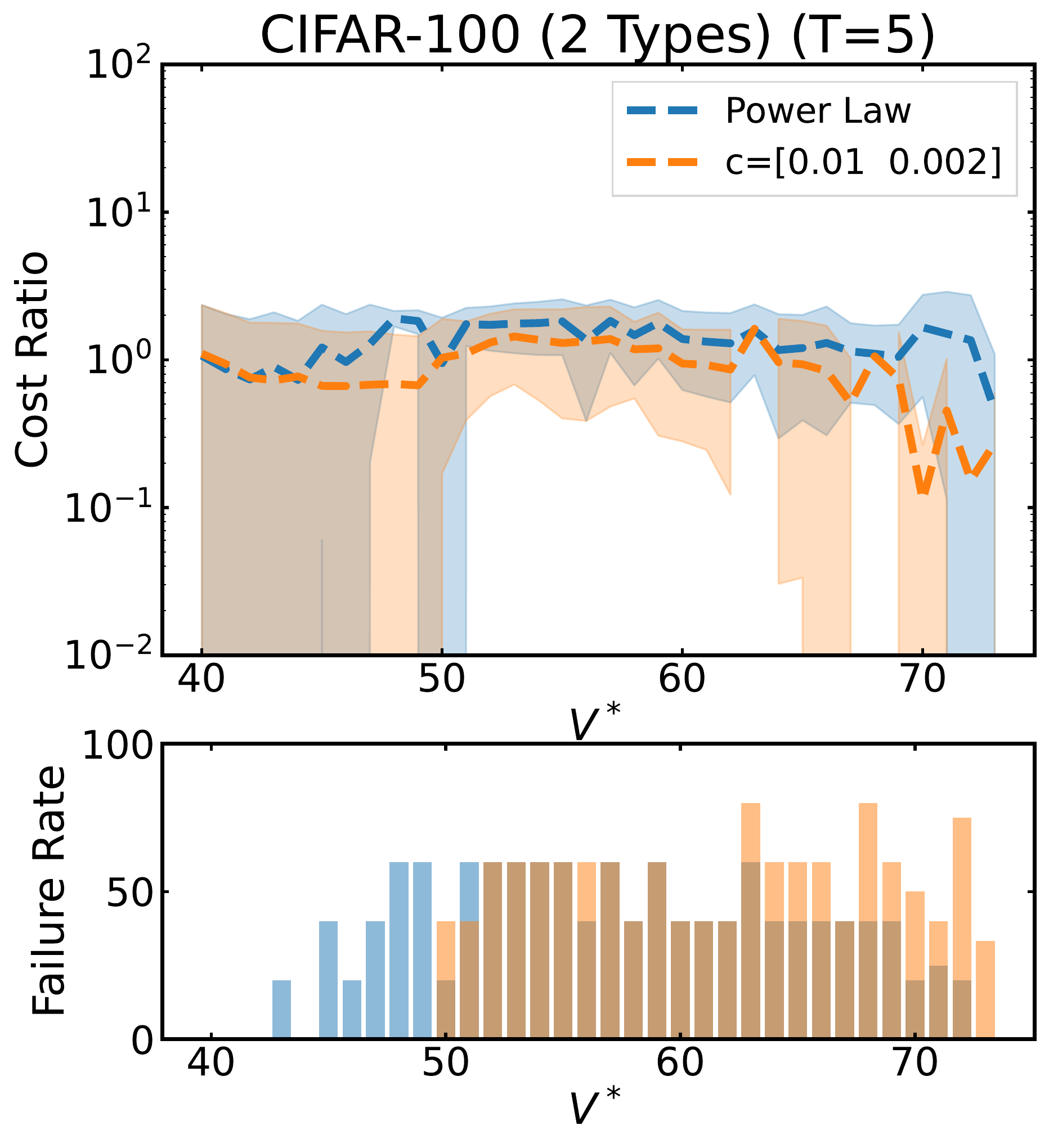} \end{minipage}
\begin{minipage}{0.24\linewidth}\includegraphics[width=1\textwidth]{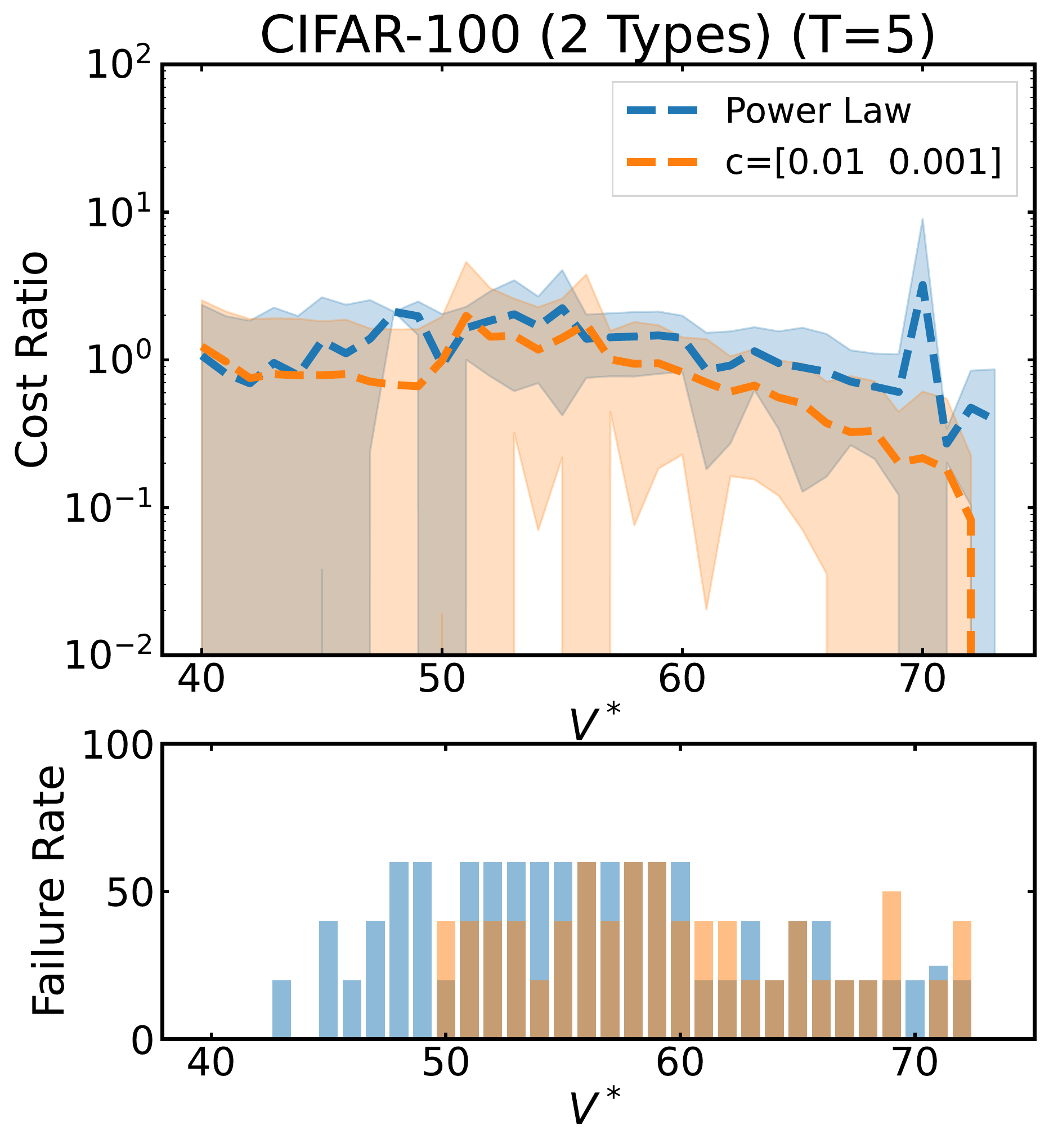} \end{minipage}
\begin{minipage}{0.24\linewidth}\includegraphics[width=1\textwidth]{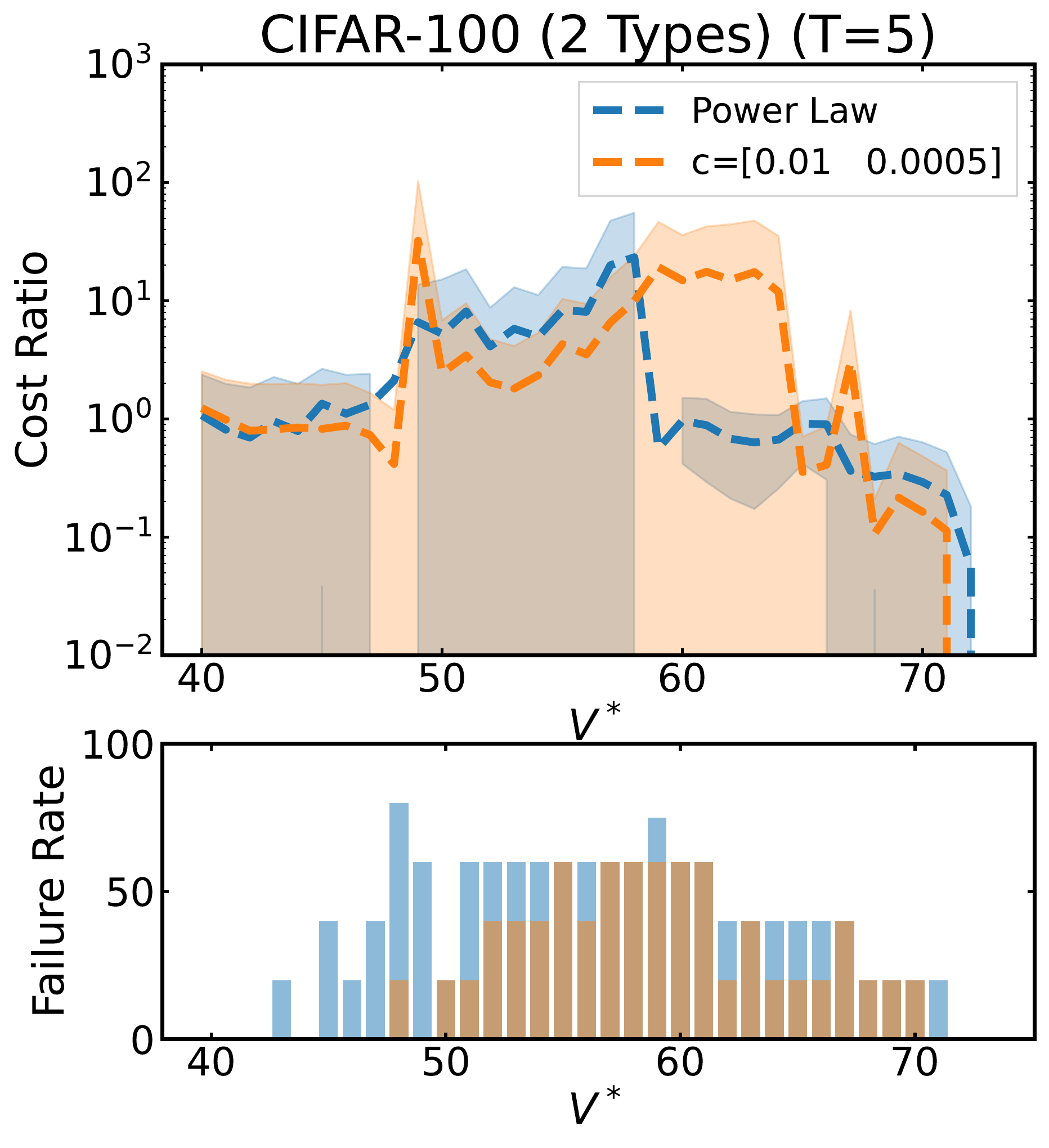} \end{minipage}
\begin{minipage}{0.24\linewidth}\includegraphics[width=1\textwidth]{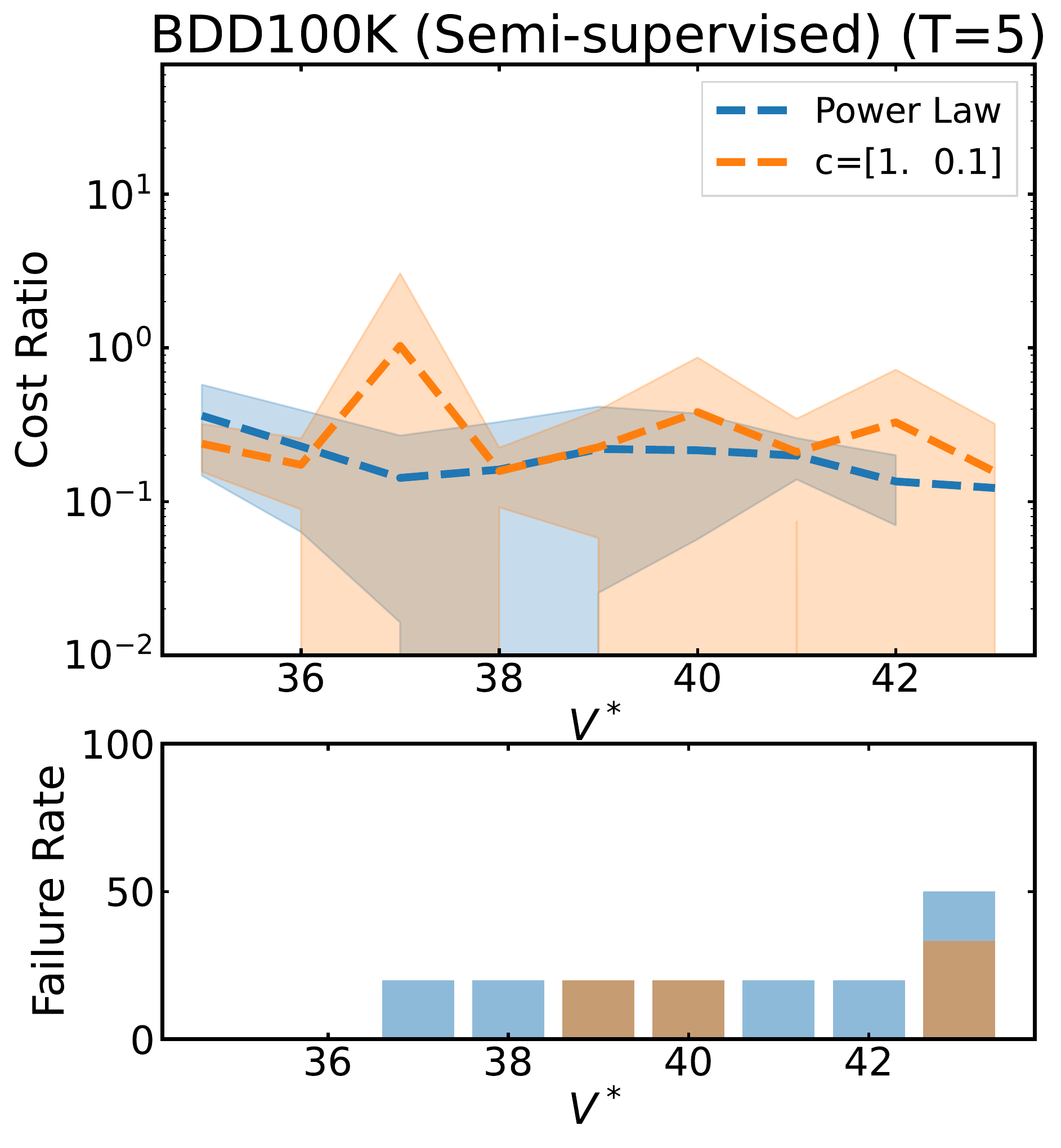} \end{minipage}
\begin{minipage}{0.24\linewidth}\includegraphics[width=1\textwidth]{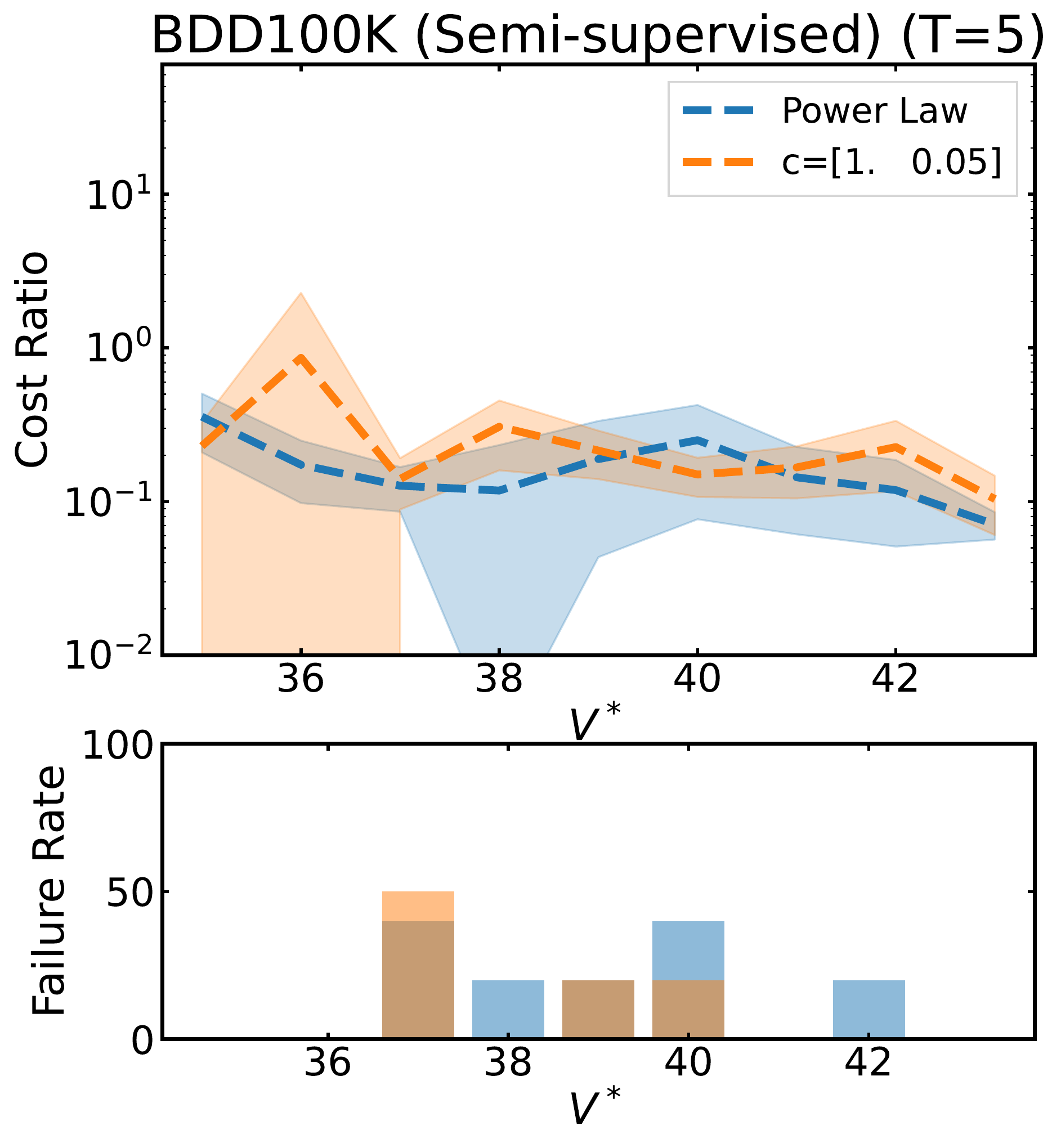} \end{minipage}
\begin{minipage}{0.24\linewidth}\includegraphics[width=1\textwidth]{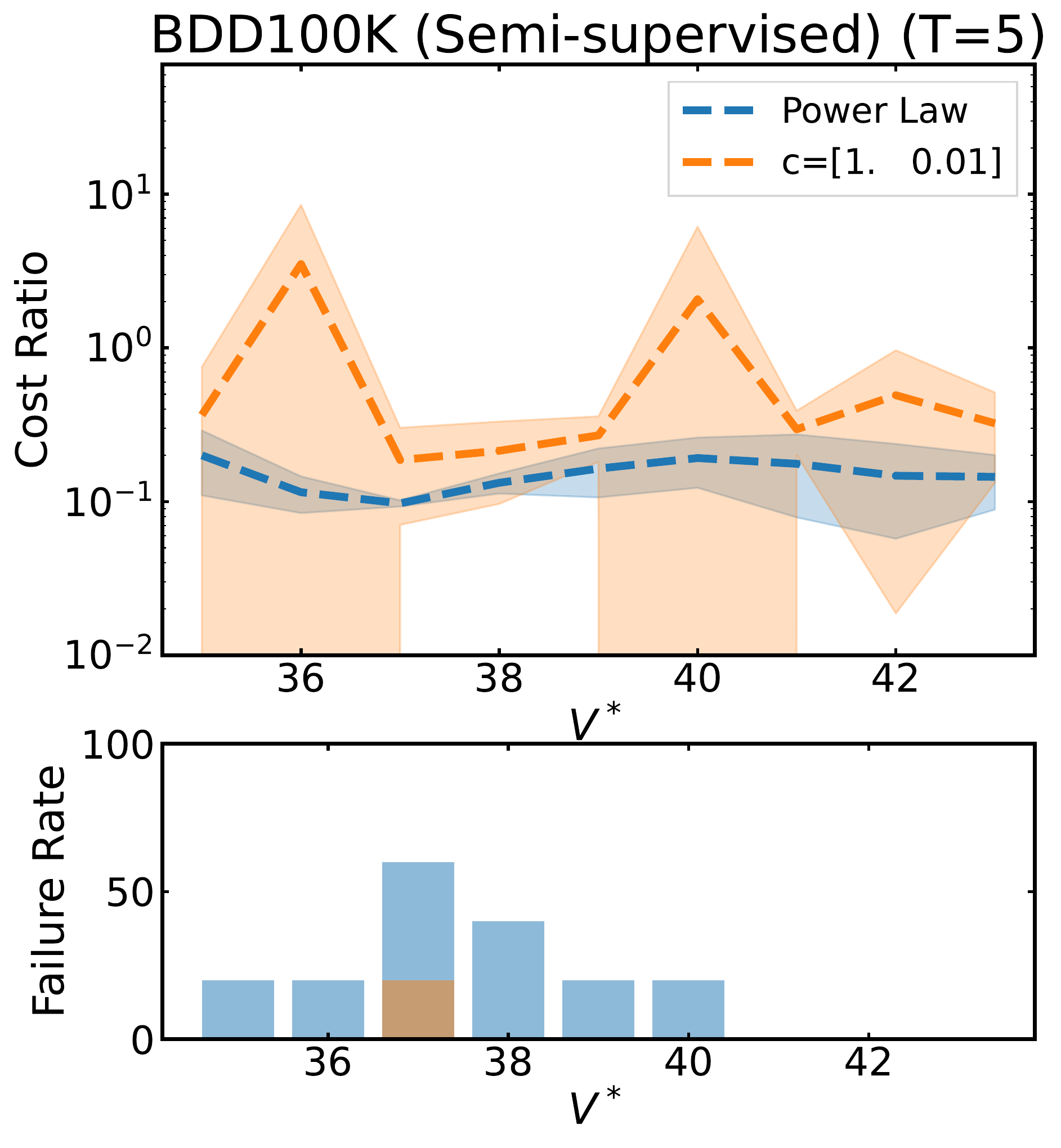} \end{minipage}
\begin{minipage}{0.24\linewidth}\includegraphics[width=1\textwidth]{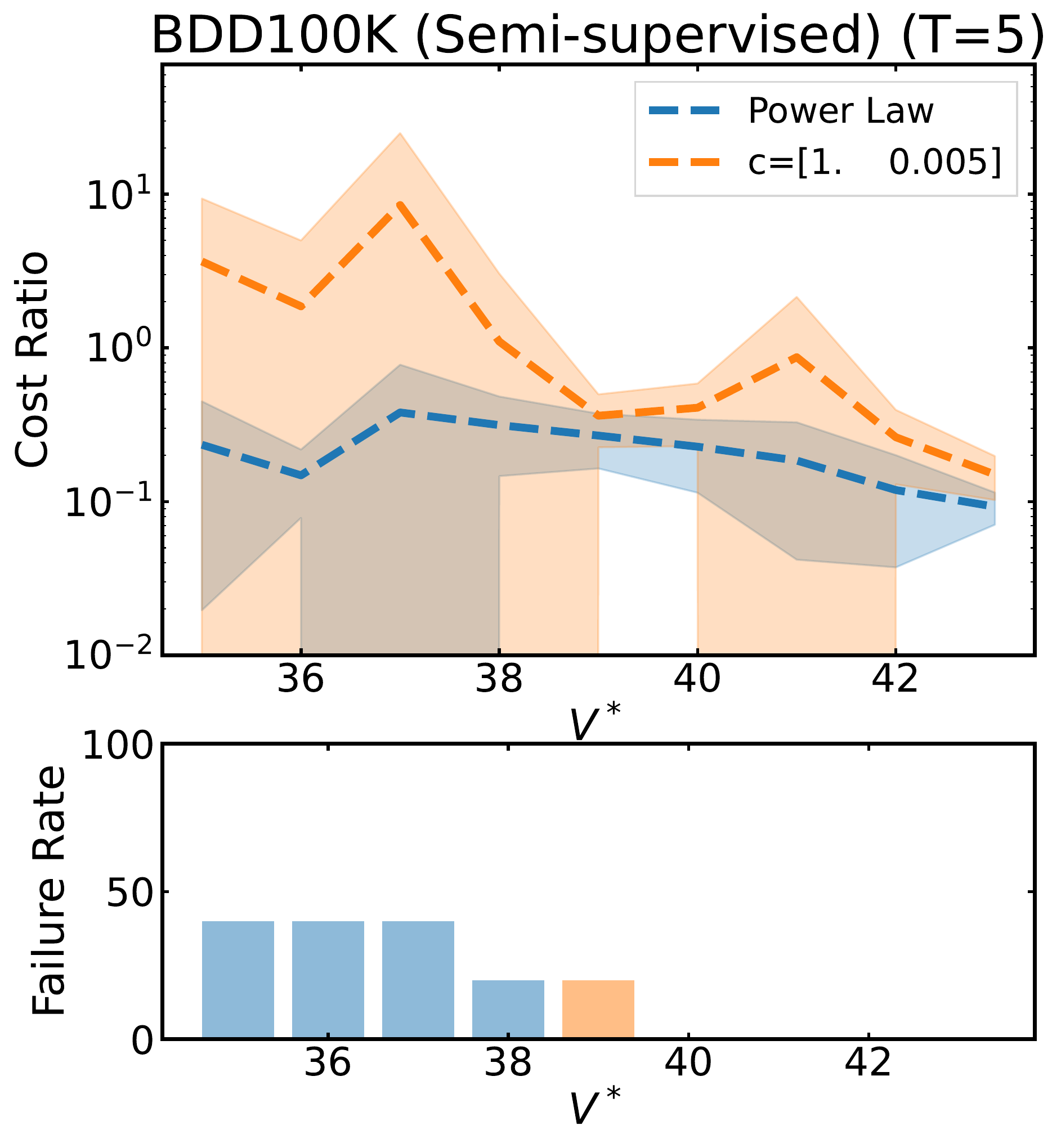} \end{minipage}
\vspace{-4mm}
\caption{\label{fig:two_d_head_to_head}
Mean $\pm$ standard deviation over 5 seeds of the cost ratio $\bc^\tpose (\bq^*_T - \bq_0) / \bc^\tpose (\bD^* - \bq_0) - 1$ and failure rate for different $V^*$, after removing $99$-th percentile outliers.
The columns correspond to scenarios where the first set $c^1$ costs increasingly more than the second $c^2$. 
See Appendix~\ref{sec:app_experiment_results} for all $T$.
}
\end{center}
\vspace{-5mm}
\end{figure*}

\textbf{The Value of Optimization over Estimation when $K=2$ (Appendix~\ref{sec:app_experiment_k_equals_two}).~}
Figure~\ref{fig:two_d_head_to_head} compares LOC versus regression at $T=5$ with different costs, showing that we maintain a similar cost ratio to the regression alternative, but with lower failure rates. 
Table~\ref{tab:two_d_summary} aggregates failure rates and cost ratios for all settings, showing LOC consistently achieves lower failure rates for nearly all settings of $T$. 
When $T=5$, LOC also achieves lower cost ratios versus regression on CIFAR-100, meaning that with multiple rounds of collection, we can ensure meeting performance requirements while paying nearly the optimal amount of data.
However, solving the optimization problem is generally more difficult as $K$ increases, and we sometimes over-collect data by large margins.
In practice, these outliers can be identified from common sense (e.g., if a policy suggests collecting more data than we can reasonably afford, then we would not use the policy suggestion). 
Consequently, we report these results after removing the $99$-th percentile outliers with respect to total cost for both methods. 
Nonetheless, this challenge remains when $T=1$, particularly for CIFAR-100.

\begin{table}[t]
    \centering
\begin{minipage}{0.68\linewidth}
\resizebox{\linewidth}{!}{
\begin{tabular}{lcccccc}
\toprule
   Data set & $T$ & Cost & \multicolumn{2}{c}{Power Law Regression} & \multicolumn{2}{c}{LOC} \\ \cmidrule(lr){4-5}\cmidrule(lr){6-7}
            &  &    & Failure rate & Cost ratio & Failure rate & Cost ratio \\
\midrule
 \multirow{12}{*}{\rotatebox[origin=c]{90}{CIFAR-100 (2 Types)}} 
    &  \multirow{4}{*}{1} & $(0.01, 0.0005)$&             $62\%$ &               0.89 &        $\fir{40\%}$ &             41.80 \\
    &                     & $(0.01, 0.001)$ &             $58\%$ &               1.19 &        $\fir{46\%}$ &             9.85 \\
    &                     & $(0.01, 0.002)$ &             $56\%$ &               1.55 &        $\fir{54\%}$ &             6.98 \\
    &                     & $(0.01, 0.005)$ &             $54\%$ &               1.65 &        $\fir{33\%}$ &             4.43 \\ \cmidrule{2-7}
    &  \multirow{4}{*}{3} & $(0.01, 0.0005)$&             $43\%$ &               3.47 &        $\fir{30\%}$ &             4.88 \\
    &                     & $(0.01, 0.001)$ &             $45\%$ &               1.22 &        $\fir{43\%}$ &             1.31 \\
    &                     & $(0.01, 0.002)$ &             $45\%$ &               1.47 &        $\fir{44\%}$ &             1.21 \\
    &                     & $(0.01, 0.005)$ &             $38\%$ &               1.31 &        $\fir{36\%}$ &             1.17 \\ \cmidrule{2-7}
    &  \multirow{4}{*}{5} & $(0.01, 0.0005)$&             $38\%$ &               3.31 &        $\fir{24\%}$ &             5.19 \\
    &                     & $(0.01, 0.001)$ &             $35\%$ &               1.22 &        $\fir{24\%}$ &             0.79 \\
    &                     & $(0.01, 0.002)$ &       $\fir{37\%}$ &               1.33 &        $     38\% $ &             0.90 \\
    &                     & $(0.01, 0.005)$ &             $36\%$ &               1.30 &        $\fir{24\%}$ &             0.82 \\
 \midrule
   \multirow{12}{*}{\rotatebox[origin=c]{90}{BDD100K (Semi-supervised)}} 
    &  \multirow{4}{*}{1} &  $(1, 0.005)$ &         $86\%$ &                 0.11 &        $\fir{44\%}$ &             7.02    \\
    &                     &   $(1, 0.01)$ &         $79\%$ &                 0.15 &        $\fir{30\%}$ &            13.47    \\
    &                     &   $(1, 0.05)$ &         $72\%$ &                 0.19 &        $\fir{49\%}$ &             1.02    \\
    &                     &    $(1, 0.1)$ &         $70\%$ &                 0.19 &        $\fir{65\%}$ &             0.40    \\ \cmidrule{2-7}
    &  \multirow{4}{*}{3} &  $(1, 0.005)$ &         $23\%$ &                 0.18 &        $\fir{ 7\%}$ &             1.20    \\
    &                     &   $(1, 0.01)$ &         $21\%$ &                 0.15 &        $\fir{ 7\%}$ &             2.57    \\
    &                     &   $(1, 0.05)$ &         $26\%$ &                 0.18 &        $\fir{23\%}$ &             0.50    \\
    &                     &    $(1, 0.1)$ &         $26\%$ &                 0.21 &        $\fir{30\%}$ &             0.15    \\ \cmidrule{2-7}
    &  \multirow{4}{*}{5} &  $(1, 0.005)$ &         $16\%$ &                 0.22 &        $\fir{ 2\%}$ &             1.91    \\
    &                     &   $(1, 0.01)$ &         $21\%$ &                 0.15 &        $\fir{ 2\%}$ &             0.86    \\
    &                     &   $(1, 0.05)$ &         $16\%$ &                 0.17 &        $\fir{ 9\%}$ &             0.27    \\
    &                     &    $(1, 0.1)$ &         $16\%$ &                 0.20 &        $\fir{ 7\%}$ &             0.32    \\
\bottomrule
\end{tabular}
}
\end{minipage}
\hfill
\begin{minipage}{0.295\linewidth}
    \caption{
        Average cost ratio $\bc^\tpose (\bq^*_T - \bq_0) / \bc^\tpose (\bD^* - \bq_0) - 1$ and failure rate  over different $V^*$ for each $T$ and $\bc$, after removing $99$-th percentile outliers. We fix $P=10^{13}$ for CIFAR-100 and $P = 10^{8}$ for BDD100K.
        The best performing failure rate for each setting is bolded. The cost ratio is measured over instances that achieve $V^*$. 
        LOC consistently reduces the average failure rate, and for $T > 1$, preserves the cost ratio. 
        Further, LOC is more robust to uneven costs than regression.}
    \label{tab:two_d_summary}
\end{minipage}
\end{table}

\section{Discussion}
\label{sec:conclusion}

We develop a rigorous framework for optimizing data collection workflows in machine learning applications, by introducing an optimal data collection problem that captures the uncertainty in estimating data requirements. 
We generalize this problem to more realistic settings where multiple data sources incur different collection costs. 
We validate our solution algorithm, LOC, on six data sets covering classification, segmentation, and detection tasks to show that we consistently meet pre-determined performance metrics regardless of costs and time horizons.

Our approach relies on estimating the CDF and PDF of the minimum data requirement, which is a challenging problem, especially with multiple data sources. 
Nonetheless, LOC can be deployed on top of future advances in estimating neural scaling laws.
Further, we allow practitioners to input problem-specific costs and penalties, but these quantities may not always be readily available.
We provide some theoretical insight into parameter selection and show that LOC is robust to these parameters.
Finally, our empirical analysis focuses on computer vision, but we expect our approach to be viable in other domains governed by scaling laws.

Improving data collection practices yields potentially positive and negative societal impacts. 
LOC reduces the collection of extraneous data, which can, in turn, reduce the environmental costs of training models.
On the other hand, equitable data collection should also be considered in real-world data collection practices that involve humans. 
We envision a potential future work to incorporate privacy and fairness constraints to prevent over- or under-sampling of protected groups.
Finally, our method is guided by a score function on a held-out validation set. Biases in this set may be exacerbated when optimizing data collection to meet target performance.

There is a folklore observation that over $80\%$ of industry machine learning projects fail to reach production, often due to insufficient, noisy, or inappropriate data~\citep{van_der_meulen_mccall_2018, venturebeat_2019}. 
Our experiments verify this by showing that na\"{i}vely estimating data requirements will often yield failures to meet target performances. 
We believe that robust data collection policies obtained via LOC can reduce failures while further guiding practitioners on how to manage both costs and time.

\begin{ack}
We thank Jonathan Lorraine, Daiqing Li, Jonah Philion, David Acuna, and Zhiding Yu, as well as the anonymous reviewers and meta-reviewers for valuable feedback on earlier versions of this paper.


\end{ack}



{
\small

\bibliographystyle{unsrtnat}
\bibliography{refs}



}

\section*{Checklist}

\begin{enumerate}

\item For all authors...
\begin{enumerate}
  \item Do the main claims made in the abstract and introduction accurately reflect the paper's contributions and scope?
    \answerYes{}
  \item Did you describe the limitations of your work?
    \answerYes{See the Conclusion.}
  \item Did you discuss any potential negative societal impacts of your work?
    \answerYes{See the Conclusion.}
  \item Have you read the ethics review guidelines and ensured that your paper conforms to them?
    \answerYes{}
\end{enumerate}

\item If you are including theoretical results...
\begin{enumerate}
  \item Did you state the full set of assumptions of all theoretical results?
    \answerYes{See Assumption~\ref{ass:cdf_pdf_exists} and the theorem statements.}
        \item Did you include complete proofs of all theoretical results?
    \answerYes{See Appendix~\ref{sec:app_math_properties_one_d}.}
\end{enumerate}

\item If you ran experiments...
\begin{enumerate}
  \item Did you include the code, data, and instructions needed to reproduce the main experimental results (either in the supplemental material or as a URL)?
    \answerNo{The code is proprietary. The data is publicly available. See Algorithm~\ref{alg:data_collection} and Appendix~\ref{sec:app_experiment_results} for implementation details.}
  \item Did you specify all the training details (e.g., data splits, hyperparameters, how they were chosen)?
    \answerYes{See Appendix~\ref{sec:app_experiments} and~\ref{sec:app_experiment_results} for details.}
        \item Did you report error bars (e.g., with respect to the random seed after running experiments multiple times)?
    \answerYes{All experiments were over five seeds. See the for standard deviation ranges.}
        \item Did you include the total amount of compute and the type of resources used (e.g., type of GPUs, internal cluster, or cloud provider)?
    \answerYes{See Appendix~\ref{sec:app_experiments}.}
\end{enumerate}

\item If you are using existing assets (e.g., code, data, models) or curating/releasing new assets...
\begin{enumerate}
  \item If your work uses existing assets, did you cite the creators?
    \answerYes{We cite all data sets and model architectures in Appendix~\ref{sec:app_experiments}.}
  \item Did you mention the license of the assets?
    \answerNA{All data sets and models are publicly available.}
  \item Did you include any new assets either in the supplemental material or as a URL?
    \answerNo{}
  \item Did you discuss whether and how consent was obtained from people whose data you're using/curating?
    \answerNA{}
  \item Did you discuss whether the data you are using/curating contains personally identifiable information or offensive content?
    \answerNA{}
\end{enumerate}

\item If you used crowdsourcing or conducted research with human subjects...
\begin{enumerate}
  \item Did you include the full text of instructions given to participants and screenshots, if applicable?
    \answerNA{}
  \item Did you describe any potential participant risks, with links to Institutional Review Board (IRB) approvals, if applicable?
    \answerNA{}
  \item Did you include the estimated hourly wage paid to participants and the total amount spent on participant compensation?
    \answerNA{}
\end{enumerate}

\end{enumerate}



\clearpage
\appendix

\section{Table of Notation}
\label{sec:app_notation}

\begin{table}[!h]
    \centering
    \def\arraystretch{1.5}
    \resizebox{\linewidth}{!}{
    \begin{tabular}{c l}
        \toprule
        $T \in \mathbb{N}$ & Total number of rounds of collection.\\
        $V^* \in \mathbb{R}$ & Target performance.\\
        $P \in \mathbb{N}$ & Penalty for failing to reach target performance.\\
        $c \in \mathbb{N}$ & Cost of per-item collection.\\
        $q_t \in \mathbb{N}, \: t \in \{0,\ldots,T\}$ & Number of data points in round $t$.\\
        $\set{D}_q$ & Data set of size $q$.\\
        $V(\set{D})$ & Valuation of model trained on data set $\set{D}$.\\
        $V_q := V(\set{D}_q)$ & Short-hand notation for model valuation.\\
        $D^* := \argmin_q \left\{q \;|\; V_q \geq V^* \right\}$ & The minimum data requirement.\\
        $F(q) = P(D^* < q)$ & The CDF of the minimum data requirement.\\
        $f(q) = dF(q)/dq$ & The PDF of the minimum data requirement.\\
        $L(q_1,\ldots,q_T;D^*)$ & The stochastic objective function for data collection.\\
        $d_t \geq 0$ & Amount of data to collect in round $t$.\\
        $\set{R} := \left\{ (rq_t / R, V(\dataset_{rq_t/R})) \right\}_{r=1}^R $ & The regression set representing the learning curve of a given model and data set.\\
        $K \in \mathbb{N}$ & Number of data sources in multi-source setting.\\
        $\mathbf{q}_t \in \mathbb{N}^K$ & Vector containing number of data points per-source in round $t$.\\
        $\mathbf{c} \in \mathbb{N}^K$ & Vector of per-source costs.\\
        
        \bottomrule
    \end{tabular}
    }
    \caption{Table of notation used throughout paper.}
    \label{tab:notation}
\end{table}
\section{Learn-Optimize-Collect}
\label{sec:app_opt}

Algorithm~\ref{alg:data_collection} summarizes the complete workflow of LOC within the data collection problem. 
Given a target score $V^*$, we collect data until we have met the target or until $T$ rounds have passed. 
In this section, we first expand our complete LOC algorithm. 
We then provide further details on the regression procedure used to estimate the data requirement distributions. 
Finally, we introduce a more practical reformulation of our optimization problem~\eqref{eq:expected_risk_prob}, which is what we later use in our numerical experiments.

\subsection{Details on the Learning and Optimizing Approach}

Our approach for determining how much data to collect consists of three steps. 
The first step is to collect performance statistics by measuring the training dynamics of the current data set $\dataset_{q_t}$. Here, we sample subsets of the current data set, train our model on the subset, and evaluate the score. We repeat this process for up to $R$ subsets, where $R$ denotes the size of our performance statistics regression data set $\set{R}$.

The second step is to model the data requirement distribution $D^*$. We perform this by creating bootstrap samples from $\set{R}$, and then fitting a regression model $\hat{v}(q;\btheta)$ that can estimate the model score as a function of data set size. We then invert this regression function by solving for the minimum $\hat{q}$ for which the model will predict that we have exceeded the target performance. We repeat this process to obtain $B$ bootstrap estimates of $D^*$. We can then fit any density estimation model to approximate a probability density and a cumulative density function. In this paper, we focus on Kernel Density Estimation (KDE) and Gaussian Mixture models since such models can be easily fit and provide functional forms of the PDF.

Finally in the third step, we solve our optimization problem~\eqref{eq:expected_risk_prob} via gradient descent. This problem yields the optimal data set sizes $q_1^*, \dots, q_T^*$ that we should have at the end of each round. Furthermore, if we are in the $t$-th round for $t > 1$, we freeze the values for $q_1, \dots, q_{t-1}$ to the data set sizes that we have observed in the previous rounds. 
Upon solving this problem, we then collect data until we have $q_t$ samples, and then re-train our model to evaluate our current state.

\begin{algorithm}[t!]
\footnotesize
\caption{Learn-Optimize-Collect (LOC)}\label{alg:data_collection}
\begin{algorithmic}[1]
\State \textbf{Input:}  Initial data set $\dataset_{q_0}$, Sampling distribution $p(z)$, Score function $V(\dataset)$, Target score $V^*$, Maximum rounds $T$, Cost $c$, Penalty $P$, Regression model $\hv(q;\btheta)$, Regression set size $R$, Density Estimation model $f(q)$, Number of bootstrap samples $B$. 
\State Initialize round $t = 0$, loss $L = 0$
\Repeat
    \Procedure{Collect Performance Statistics}{}
    \State Initialize $\set{R} = \emptyset$, $\dataset_{0} = \emptyset$
    \For{$r \in \{1, \dots, R\}$}
        \State Sub-sample $\dataset_{q_tr/R} \subset \dataset_{q_t}$ by augmenting to $\dataset_{q_t(r-1)/R}$
        \State Evaluate $V(\dataset_{q_tr/R})$ and update $\set{R} \gets \set{R} \cup \{ ( q_tr/R , V(\dataset_{q_tr/R}) \}$
    \EndFor
    \EndProcedure
    \Procedure{Estimate Data Requirement Distribution}{}
    \State Initialize $\set{Q} = \emptyset$
    \For{$b \in \{1, \dots, B\}$}
        \State Create bootstrap $\set{R}_b$ by sub-sampling with replacement from $\set{R}$
        \State Fit regression model
            $\btheta^* = \argmin_{\btheta} \sum_{(q, v) \in \set{R}_b} \left(v - \hat{v}(q; \btheta)\right)^2$
        \State Estimate data requirement
            $\hat{q}_b = \argmin_{q} \left\{ q \;|\; \hat{v}(q; \btheta^*) \geq V^* \right\}$
        %
        \State Update $\set{Q} \gets \set{Q} \cup \{ \hat{q}_b \}$
    \EndFor
    \State Fit Density Estimation model $f(q)$ using the empirical distribution $\set{Q}$ and let $F(q) = \int_{0}^q f(q)dq$
    \EndProcedure
    \Procedure{Collect Data}{}
        \State Solve problem~\eqref{eq:expected_risk_prob} using Density Estimation models $f(q)$ and $F(q)$ to obtain $q^*_1, \dots, q^*_T$.
        \State Sample $\hat\dataset \sim p(z)$ until $|\hat\dataset| = q_{t} - q_{t-1}$
        \State Update loss $L \gets L + c (q_{t} - q_{t-1}) $
        \State Update $\dataset_{q_{t}} = \dataset_{q_{t-1}} \cup \hat\dataset$
    \EndProcedure
    \State Evaluate $V(\dataset_{q_t})$ and update $t \gets t + 1$ 
\Until{$V(\dataset_{q_t}) \geq V^*$ or $t = T$}
\If{$V(\dataset_{q_t}) < V^*$}
    \State Update loss $L \gets L + P$
\EndIf
\State \textbf{Output:} Final collected data set $\dataset_{q_t}$, loss $L$
\end{algorithmic}
\end{algorithm}

\subsection{Regression Model for $D^*$}

To estimate the data requirement, we build a regression model of the learning curve and then invert this curve. 
The classical learning curve literature proposes using structured functions to regress on the learning curve.
For example with neural scaling laws, the most common function is a power law model $\hat{v}(q;\btheta) := \theta_0 q^{\theta_1} + \theta_2$ where $\btheta := \{ \theta_0, \theta_1, \theta_2 \}$~\citep{hestness2017deep, viering2021shape, sun2017revisiting, rosenfeld2019constructive, bahri2021explaining, bisla2021theoretical, zhai2021scaling, hoiem2021learning, mahmood2022howmuch}. 
As a result, the main experiments of our paper use this model.

We fit each regression function by minimizing a least squares problem using the Levenberg-Marquardt algorithm as implemented by Scipy~\cite{more1978levenberg, 2020SciPy-NMeth}. The parameters for each function are initialized to either $1$ or $0$ depending on if they are product or bias terms. To further help fit the data, we use weighted least squares where each subsequent point is weighted twice as much as the previous point. This ensures that our regression model is tuned to better fit the curve for larger $q$.

\subsection{Alternate Re-formulation of Problem~\eqref{eq:expected_risk_prob}}

Although problem~\eqref{eq:expected_risk_prob} has a differentiable objective function, it includes a set of lower bound constraints, meaning that a vanilla gradient descent algorithm may not immediately apply.
A naive solution algorithm may be to use projected gradient descent. 
Instead, we show that the problem can be reformulated to remove these constraints.

For each $t$, let $d_t = q_t - q_{t-1}$ denote the additional amount of data collected in each round. Note that we can recursively re-define $q_t = q_0 + \sum_{r=1}^t d_r$. Furthermore, the ordering inequality constraints can all be re-written to non-negativity constraints $d_t \geq 0$. We now re-write problem~\eqref{eq:expected_risk_prob} using these new variables:
\begin{align*}
    \min_{d_1, \cdots d_T} \quad & c \sum_{t=1}^T d_t \left( 1 - F\left(q_0 + \sum_{r=1}^{t-1} d_r \right) \right) + P \left( 1 - F\left(q_0 + \sum_{r=1}^T d_r\right) \right) \\ 
    \st\; \quad & d_1, \dots, d_T \geq 0.
\end{align*}
The objective in the above formulation remains differentiable. Moreover, the non-negativity constraints can be removed by re-writing $d_t \gets \mathrm{softplus}(d_t)$ for all $t$. As a result, we can solve this problem by using an off-the-shelf gradient descent algorithm. 
Finally we note that in our experiments, we implemented a projected gradient descent based on~\eqref{eq:expected_risk_prob} and the above gradient descent algorithm. We found that the above streamlined approach to generate slightly better final solutions.

\section{Details on the Main Theory}
\label{sec:app_math_properties_one_d}

In this section, we explore mathematical properties and challenges of the optimal data collection problem. In Section~\ref{sec:app_mdp_alternatives}, we discuss the drawbacks of alternative approaches to our modeling framework. Then in Section~\ref{sec:app_proofs}, we prove our main theorem demonstrating an analytic solution in the single-round problem.

\subsection{Markov Decision Process Alternatives}
\label{sec:app_mdp_alternatives}

The data collection problem requires sequential decision-making in terms of collecting additional data in each round. A natural modeling approach for such problems is via Markov Decision Processes (MDPs).
However, MDP techniques are challenging for this problem due to (i) the unobservability of the state, and (ii) an infinite state space. 
That is, until we have collected enough data to meet the target, we do not know how much data is the minimum, which can furthermore be arbitrarily large. 
Here, we sketch a potential alternative MDP framework and highlight the core challenge.

Our sequential decision-making problem
can be written as a Partially Observable Markov Decision Process (POMDP)~\citep{puterman2014markov, bertsekas2012dynamic}. Furthermore, because this state variable is constant throughout the collection problem, we can write it as an EK `Learning-and-Doing' model~\citep{easley1988controlling}. Such POMDPs are defined by the tuple $(\Theta, \set{A}, \set{S}, p, r_t)$, 
where the state space characterizes the data requirement $D^* \in \Theta := \field{R}_+$, the action space characterizes the additional data collected $d_t := (q_t - q_{t-1}) \in \set{A} := \field{R}_+$, and the observation set $\set{S} := \{0,1\}$ characterizes a binary variable $\Ind \{ V(\dataset_{q_t}) \geq V^* \} = \Ind \{ q_t \geq D^* \}$. 
Furthermore, $p( \cdot | D^*, d_t)$ is the observation transition probability and $r_t(\cdot)$ is the reward function where $r_t(q_t, q_{t-1}, D^*) := -c (q_t -  q_{t-1})$ for $t \leq T$ and $r_{T+1}(q_t, q_{t-1}, D^*) := -P \Ind \{ q_T < D^* \}$.

Because the state variable is unobserved, POMDPs are typically solved by using a belief distribution of the state variable to average the reward in the value function.
When both the state and the action space of a POMDP are finite,~\citet{smallwood1973optimal} show that this value function is piecewise-linear and can be solved by exact methods, albeit under a curse of dimensionality with respect to these spaces.
Unfortunately, for a general POMDP with an infinite state and action space, 
these methods do not apply and we most often resort to approximation techniques~\citep{zhao2021active}. 
Moreover in our case, approximations based on discretizing the state and action space fall prey to the curse of dimensionality.

Alternatively, we may naively consider applying reinforcement learning.
However, note that real-world data collection tasks do not contain the requisite sizes of learning data or generalizable simulation mechanisms that are staples in reinforcement learning techniques. 
These challenges, coupled with the goal of delivering practical managerial guidelines for data collection operations,
motivate us to explore easy-to-implement techniques for optimizing data collection.

\subsection{Proof of Theorem~\ref{thm:t1_has_analytic_sol}}
\label{sec:app_proofs}

Our main theorem states that the one-round problem has an analytic solution. 
However, the proof requires several auxiliary results.
For clarity, we first reproduce the theorem.

\noindent\textbf{Theorem 1 (Repeated).}
\emph{
    Consider the one-round problem
    \begin{align*}
        \min_{q_1} \; c(q_1 - q_0) + P(1 - F(q_1)) \quad\quad \st \; q_0 \leq q_1
    \end{align*}
    Assume $F(q)$ is strictly increasing and continuous. For any $\epsilon$ such that $F(q_0) < 1 - \epsilon$, let
    $P := c / f ( F^{-1}(1 - \epsilon) )$. The optimal solution to the corresponding problem~\eqref{eq:expected_risk_prob_one_round} is $q_1^* = F^{-1}(1 - \epsilon)$. Furthermore, the optimal solution satisfies $F(q_1^*) = 1 - \epsilon$.
}

The assumption of a strictly increasing and continuous cumulative density function is needed to ensure that the data requirement distribution has a well-defined quantile function $F^{-1}(p) := \inf_{q} \{ q \;|\; F(q) \geq p \}$, where the optimal solution for any $p \in (0, 1)$ is unique.

The proof for Theorem~\ref{thm:t1_has_analytic_sol} relies on equating the one-round optimization problem to the following constrained optimization problem:
\begin{align}\label{eq:constrained_expected_risk}
    \min_{q_1} \; c(q_1 - q_0) \qquad \st \; F(q_1) \geq 1 -\epsilon \;, \;\; q_0 \leq q_1
\end{align}
where $\epsilon > 0$ is a pre-determined parameter. This above problem~\eqref{eq:constrained_expected_risk} is solving for the least amount of additional data to collect such that with probability at least $1 - \epsilon$, we collect above the minimum data requirement.
We first characterize the properties of problem~\eqref{eq:constrained_expected_risk}.
\begin{lem}\label{lem:constrained_expected_risk_is_convex}
    Problem~\eqref{eq:constrained_expected_risk} is a convex optimization problem.
\end{lem}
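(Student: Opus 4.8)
The plan is to show that problem~\eqref{eq:constrained_expected_risk} is convex by verifying that its objective is a linear (hence convex) function of $q_1$ and that its feasible region is a convex set. First I would note that the objective $c(q_1 - q_0)$ is affine in the single decision variable $q_1$, which is trivially convex, so the only real content is the feasible region.

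Next I would analyze the feasible set $\{q_1 : F(q_1) \geq 1 - \epsilon,\; q_1 \geq q_0\}$. The constraint $q_1 \geq q_0$ cuts out a half-line, which is convex. For the constraint $F(q_1) \geq 1 - \epsilon$, I would use the fact from Assumption~\ref{ass:cdf_pdf_exists} together with the theorem's hypothesis that $F$ is continuous and strictly increasing: any cumulative distribution function is non-decreasing, so the superlevel set $\{q_1 : F(q_1) \geq 1 - \epsilon\}$ is of the form $[a, \infty)$ (or $(a,\infty)$, but by continuity it is closed) for $a := F^{-1}(1-\epsilon)$, and a half-line is convex. The feasible region is therefore the intersection of two convex half-lines, namely $[\max\{q_0, F^{-1}(1-\epsilon)\}, \infty)$, which is convex. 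Combining an affine objective with a convex feasible region gives a convex optimization problem.

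The main (minor) obstacle is being careful about the direction of the inequality: because $F$ is non-decreasing, $\{F(q_1) \ge t\}$ is an \emph{upper} half-line, so convexity works out; had the constraint been $F(q_1) \le t$ the set would still be convex (a lower half-line), but stating it cleanly requires invoking monotonicity of the CDF explicitly rather than, say, appealing to convexity of $F$ itself (which need not hold). I would also remark that no quasiconvexity argument on $F$ is needed — the superlevel-set structure follows purely from monotonicity — and that strict monotonicity and continuity, while not strictly required for convexity alone, ensure the feasible region has the clean closed-half-line form used later when we identify the optimal solution with the quantile $F^{-1}(1-\epsilon)$.
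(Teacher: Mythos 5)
Your proposal is correct and follows essentially the same route as the paper: the objective is affine, the constraint $q_1 \geq q_0$ is trivially convex, and the superlevel set $\{q_1 : F(q_1) \geq 1-\epsilon\}$ is convex by monotonicity of the CDF. The paper verifies this last point by checking convex combinations directly, while you identify the set as a closed half-line, but these are the same monotonicity argument stated two ways.
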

\begin{proof}
    We only need to prove that the set $\{ q_1 \;|\; F(q_1) \geq 1 - \epsilon \}$ is a convex set, since the objective and remaining constraint are convex. 
    Since $F(q)$ is a monotonically non-decreasing function in $q$, for any $\theta \in [0, 1]$ and $q < \hat{q}$ that satisfy the CDF constraint, we have
    \begin{align*}
        F(\theta q + (1-\theta) \hat{q}) \geq F(q) \geq 1 - \epsilon.
    \end{align*}
    Because the convex combination of any two points is in the set, the set must be convex.
\end{proof}
\begin{lem}\label{lem:constrained_expected_risk_has_opt}
    The optimal solution to problem~\eqref{eq:constrained_expected_risk} is
    \begin{align*}
        q^* = 
        \begin{cases}
            F^{-1}(1 - \epsilon), & \text{ if } F(q) < 1 - \epsilon \\
            q_0 & \text{ otherwise}
        \end{cases}
    \end{align*}    
\end{lem}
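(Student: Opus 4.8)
\textbf{Proof proposal for Lemma~\ref{lem:constrained_expected_risk_has_opt}.}

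The plan is to split into the two cases appearing in the statement and in each case use Lemma~\ref{lem:constrained_expected_risk_is_convex} (convexity) together with the assumed strict monotonicity and continuity of $F$. First consider the case $F(q_0) < 1-\epsilon$. Here the point $q_0$ is infeasible for the constraint $F(q_1) \geq 1-\epsilon$, so the optimal $q_1^*$ must satisfy $q_1^* > q_0$ and, since the objective $c(q_1 - q_0)$ is strictly increasing in $q_1$, the minimizer is the smallest feasible point. Because $F$ is continuous and strictly increasing, $F^{-1}$ is well-defined and $q \mapsto F(q)$ attains the value $1-\epsilon$ exactly at $q = F^{-1}(1-\epsilon)$; moreover $F(q) \geq 1-\epsilon \iff q \geq F^{-1}(1-\epsilon)$. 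Hence the feasible set is the ray $[\max\{q_0, F^{-1}(1-\epsilon)\}, \infty) = [F^{-1}(1-\epsilon), \infty)$ (using $F^{-1}(1-\epsilon) > q_0$ in this case), and the minimizer of the increasing objective over this ray is its left endpoint $q_1^* = F^{-1}(1-\epsilon)$. Uniqueness follows since the objective is strictly increasing.

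For the remaining case $F(q_0) \geq 1-\epsilon$, the point $q_1 = q_0$ already satisfies both constraints, so it is feasible; and since $c(q_1 - q_0) \geq 0$ for all feasible $q_1 \geq q_0$ with equality iff $q_1 = q_0$, the point $q_0$ is optimal, giving $q^* = q_0$. (Note the case label in the statement is written as ``$F(q) < 1-\epsilon$''; I would read this as $F(q_0) < 1-\epsilon$, i.e. evaluating the CDF at the current data set size, and I would make that explicit.) Combining the two cases yields exactly the stated piecewise formula.

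The argument is essentially routine once convexity is in hand, so I do not expect a serious obstacle; the one point that needs a little care is justifying that $F(q)\geq 1-\epsilon$ is equivalent to $q \geq F^{-1}(1-\epsilon)$ — this uses both continuity (so that the value $1-\epsilon$ is actually attained, ruling off a jump) and strict monotonicity (so that $F^{-1}$ is single-valued and the sublevel description is a genuine ray with that exact endpoint). I would state this equivalence as a one-line observation before the case split. A secondary bookkeeping point is to make sure the intersection of the two constraint sets $\{q_1 : F(q_1) \geq 1-\epsilon\}$ and $\{q_1 : q_1 \geq q_0\}$ is handled correctly in the first case, i.e. that $F^{-1}(1-\epsilon) > q_0$ there so the binding constraint is the CDF one; this is immediate from $F(q_0) < 1-\epsilon$ and monotonicity.
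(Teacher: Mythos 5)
Your proposal is correct and follows essentially the same argument as the paper: a case split on whether $F(q_0) \geq 1-\epsilon$, with $q_0$ optimal when it is already feasible, and the smallest feasible point $F^{-1}(1-\epsilon)$ optimal otherwise. Your added care in justifying that $F(q_1)\geq 1-\epsilon$ is equivalent to $q_1 \geq F^{-1}(1-\epsilon)$ via continuity and strict monotonicity, and your reading of the case condition as $F(q_0) < 1-\epsilon$ (a typo in the statement), are both sound refinements of the paper's terser version.
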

\begin{proof}
    First consider the case where $F(q_0) \geq 1 - \epsilon$. Then, $q_0$ is a feasible solution to problem~\eqref{eq:constrained_expected_risk}. Furthermore due to the second constraint, any $q < q_0$ is infeasible. Since $q_0$ minimizes the objective, it is optimal.
    
    Next consider the case where $F(q_0) < 1 - \epsilon$. Then, let $q_1 = F^{-1}(1 - \epsilon) = \inf \{ q \;|\; F(q) \geq 1 - \epsilon \}$ be the smallest solution that satisfies the CDF constraint. By the monotonicity of $F(q)$, it follows that $q_1 > q_0$. Therefore, this solution minimizes the objective.
\end{proof}

We are now ready to prove Theorem~\ref{thm:t1_has_analytic_sol}.

\begin{proof}[Proof of Theorem~\ref{thm:t1_has_analytic_sol}.]
    We prove this result by first developing two different characterizations of the optimal solution set of problem \eqref{eq:expected_risk_prob_one_round} and then applying their equivalence.
    
    First note that problem \eqref{eq:expected_risk_prob_one_round} is an optimization problem with one variable $q_1$ over a constrained domain $[q_0, \infty)$. Furthermore, the objective function is continuous everywhere, meaning that there are two possible scenarios:
    \begin{itemize}
        \item $q_1^*$ satisfies $f(q_1) = c/P$. 
        
        \item $q_1^* := q_0$ and the optimal value is $P(1 - F(q_0))$.
    \end{itemize}
    The first scenario is obtained by taking the derivative of the objective function and setting it to $0$.
    The second scenario comes from the boundary condition. Moreover, note that the objective is unbounded as $q_1 \rightarrow +\infty$ meaning the above scenario is the only boundary condition that we need to consider.

    Next, note that problem \eqref{eq:expected_risk_prob_one_round} is equivalent to the following optimization problem
    \begin{align} \label{eq:expected_risk_prob_one_round_recast}
    \begin{split}
        \min_{q_1, \epsilon} \quad & c (q_1 - q_0) + P \epsilon \\
        \st \quad   & \epsilon \geq 1 - F(q_1) \\
                    & q_1 \geq q_0
    \end{split}
    \end{align}

    \begin{align*}
        \min_\epsilon c(F^{-1}(1- \epsilon) - q_0) + P\epsilon \qquad \st \epsilon \geq 1 - F(q_0)
    \end{align*}
    
    For any fixed $\epsilon$, problem \eqref{eq:expected_risk_prob_one_round_recast} is equivalent to problem \eqref{eq:constrained_expected_risk}, and therefore by Lemma \ref{lem:constrained_expected_risk_is_convex}, the problem is convex optimization problem. 
    
    We can optimize problem \eqref{eq:expected_risk_prob_one_round_recast} by breaking into two cases. 
    First, for any fixed $\epsilon \geq 1 - F(q_0)$, setting $q_1^*(\epsilon) = 0$ attains a feasible solution and mnimizes the objective to $P\epsilon$. 
    
    Second, for any fixed $\epsilon \leq 1 - F(q_0)$,  Lemma \ref{lem:constrained_expected_risk_has_opt} states that $q_1^*(\epsilon) = F^{-1}(1 - \epsilon)$ is a corresponding optimal solution and the objective function reduces to
    \begin{align*}
        c\left( F^{-1}( 1 - \epsilon ) - q_0 \right) + P \epsilon.
    \end{align*}
    Moreover, from the original formulation \eqref{eq:expected_risk_prob_one_round}, we can substitute $q_1^*(\epsilon)$ and obtain $f(F^{-1}(1 - \epsilon) = c / P$.

    Finally, problem \eqref{eq:expected_risk_prob_one_round_recast} is optimized via the second case if and only if there exists a feasible $\epsilon \leq 1 - F(q_0)$ that satisfies
    \begin{align*}
        c\left( F^{-1}( 1 - \epsilon ) - q_0 \right) + P \epsilon \leq P (1 - F(q_0)).
    \end{align*}
    We can rewrite this condition as follows. Let $q_1 > q_0$ and assume that \eqref{eq:expected_risk_prob_one_round_required_condition} holds. Then, 
    \begin{align*}
        & c (q_1 - q_0) \leq P F(q_1) - P F(q_0) \\
        \Rightarrow\quad & c (q_1 - q_0) - P F(q_1) \leq  - P F(q_0) \\
        \Rightarrow\quad & c (q_1 - q_0) + P ( 1 -  F(q_1)) \leq P (1 - F(q_0)).
    \end{align*}
    Let $\epsilon = 1 - F(q_1)$. Since $F(q_1) \geq F(q_0)$, we have $\epsilon \leq 1 - F(q_0)$, meaning that there is a feasible $q_1 > q_0$ to problem \eqref{eq:expected_risk_prob_one_round_recast} with lower objective function value than $q_0$. 
    Thus, assumption \eqref{eq:expected_risk_prob_one_round_required_condition} guarantees that problem \eqref{eq:expected_risk_prob_one_round_recast} has an optimal solution $q_1^* = F^{-1}(1 - \epsilon^*)$ where $\epsilon^*$ must satisfy $f(F^{-1}(1 - \epsilon^*) = c / P$. 
    Conversely, if \eqref{eq:expected_risk_prob_one_round_required_condition} is not satisfiable for any $q_1 > q_0$, then we can use the same steps to show that $q_1^* = q_0$ is an optimal solution to the problem.
\end{proof}

\section{Optimal Data Collection with Multiple Sources}
\label{sec:app_multivariate}

The multi-variate data collection problem considers multiple sources delivering different types of data required to train a model. 
Consider $K$ data sets with $q^1, \dots, q^K$ points in each, respectively. 
Rather than collecting up to $q_t$ data points in each round, we optimize a vector $\bq_t \in \field{R}^K_+$ where each element $q^k_t$ refers to how much data we need from the $k$-th source.
Furthermore, the minimum data requirement is now a vector $\bD^*$.

This problem can be solved using the same general approach outlined in Algorithm~\ref{alg:data_collection}, but with two changes.
First, we require a multi-variate version of the PDF and CDF of $\bD^*$. This necessitates new neural scaling law regression models for dealing with multiple data sources.
Second, we modify the optimization problem from Appendix~\ref{sec:app_opt} to accommodate decision vectors.
We highlight the above two steps in this section.

\subsection{A General Multi-variate Neural Scaling Law}

In order to construct a PDF and CDF in the multi-variate setting, we follow the same general steps as in Algorithm~\ref{alg:data_collection}.
We first collect a data set of performance statistics $\set{R} := \{ (\bq_r, V(\dataset^1_{q^1_r}, \dataset^2_{q^2_r}, \cdots, \dataset^K_{q^K_r} ) \}_{r=1}^R$ as before.
We then use bootstrap resamples of this data set to fit parameters $\btheta^*$ to a regression model $\hat{v}(q^1, \dots, q^K; \btheta)$ and then solve for 
\begin{align*}
    \hat{\bq} := \argmin_{\bq} \{ \bc^\tpose \bq \;|\; \hat{v}(q^1, \dots, q^K; \btheta^*) \geq V^* \}.
\end{align*}
Finally, we fit a density estimation model over our data set of $\hat{\bq}$.

The key challenge to this approach however is in designing a multi-variate regression function.
To the best of our knowledge, the neural scaling law literature has not explored general power law models that can accommodate $K$ different types of data for arbitrary tasks.

We propose an easy-to-implement baseline regression model by adding the contributions of each data set being used.  Then, our additive regression model is
\begin{align*}
    \hat{v}(q^1, \dots, q^K; \btheta) := \sum_{k=1}^K \hat{v}_k(q^k; \btheta_k)
\end{align*}
where $\hat{v}_k(q_k; \btheta_k)$ can be any single-variate regression model for estimating score. For instance, consider $K=2$ data types with power law regression models for each data type. Our multi-variate regression model becomes
\begin{align*}
    \hat{v}(q^1, q^2; \btheta) = \theta_{1,0} (q^1)^{\theta_{1, 1}} + \theta_{2,0} (q^2)^{\theta_{2, 1}} + \theta_3.
\end{align*}
We may also incorporate other base models in the same way, such as the logarithmic or arctan functions introduced in~\citet{mahmood2022howmuch}.

The benefit of this additive regression model is that we can easily fit it via least squares minimization using a regression data set $\set{R} := \{ ( q^1_r, q^2_r, V(\dataset^1_{q^1_r}, \dataset^2_{q^2_r}) \}_{r=1}^R$. 
Furthermore, additive models are simple and offer interpretable explanations on the contributions of each data type to performance by assuming that each data set has an independent effect. 
Finally, additive models are common in many other tasks, such as when estimating the valuation of specific data points~\citep{ghorbani2019data}.

We remark that some recent research has explored neural scaling laws for specific tasks with multiple data types.
For instance,~
\citet{mikami2021scaling} explore a $K=2$ power law for transfer learning from synthetic to real domains, where they use a multiplicative component that captures an interaction between real and synthetic data sets. 
Because scaling laws in multi-variate settings remain an open area of study, if there exist specific structural regression functions for a given application with different types of data, then such functions should be used in place of the additive model. 
Moreover, our downstream optimization model operates independently of the regression model, as long as the regression model can be re-trained with bootstrap samples in order to facilitate density estimation.

\subsection{The Optimization Problem with Multiple Decisions}

Just as in the single-variate case, problem~\eqref{eq:expected_risk_multivariate} has a differential objective function but a series of lower bound constraints. We can use the same approach highlighted in Appendix~\ref{sec:app_opt} to reformulate this problem and remove the constraints. We summarize this reformulation below.

For each $t$, let $\bd_t = \bq_t - \bq_{t-1}$ be the additional data collected in each round. Then, we recursively re-define $\bq_t = \bq_0 + \sum_{r=1}^t \bd_r$ and re-write the problem to
\begin{align*}
    \min_{\bd_1, \cdots \bd_T} \quad & \bc^\tpose \sum_{t=1}^T \bd_t \left( 1 - F\left(\bq_0 + \sum_{r=1}^{t-1} \bd_r \right) \right) + P \left( 1 - F\left(\bq_0 + \sum_{r=1}^T \bd_r\right) \right) \\ 
    \st\; \quad & \bd_1, \dots, \bd_T \geq 0.
\end{align*}
The above problem can now be solved using off-the-shelf gradient descent.

\section{Simulation Experiment Setup}
\label{sec:app_experiments}

The most intuitive approach of validating our data collection problem is by repeatedly sampling from a data set, training a model, and solving the optimization problem. However, performing a large set of such experiments over many data sets becomes computationally intractable. 
Instead, we follow the approach introduced in~\citet{mahmood2022howmuch}, which proposes a simulation model of the data collection problem.
This section summarizes the simulation setup.

The simulation replicates the steps in Algorithm~\ref{alg:data_collection} except with one key difference. In the simulation, we replace the score function $V(\dataset)$ with a \emph{ground truth} function $v\gt(q)$ that serves as an oracle which reports the expected score of the model trained with $q$ data points. 
Thus, rather than having to collect data and train a model in each round, we evaluate $v\gt(q_t)$ and treat this as the current model score. 
The optimization and regression models do not have access to $v\gt(q)$.

\subsection{A Piecewise-Linear Ground Truth Approximation}

In order to build a ground truth function, we first use the sub-sampling procedure in Algorithm~\ref{alg:data_collection} to collect performance statistics over subsets of the entire training data set. 
Using these observed statistics, we then build a piecewise-linear model of the ground truth. 
Below, we first highlight how to construct a piecewise-linear model when given a set of data set sizes and their corresponding scores. In the next subsection, we will detail the exact data collection process.

\noindent\textbf{The Single-variate ($K=1$) Case.}
\citet{mahmood2022howmuch} develop a ground truth function as follows. Let $q_0 \leq q_1 \leq q_2 \leq \cdots$ be a series of data set sizes and let $\dataset_{q_0} \subset \dataset_{q_1} \subset \dataset_{q_2} \subset \cdots$ be their corresponding sets. Then, consider the following piecewise-linear function:
\begin{align*}
    v\gt(q) := \begin{cases} \displaystyle
        \frac{V(\dataset_{q_0})}{q_0} n , & q \leq q_0 \\  \displaystyle
        \frac{V(\dataset_{q_t}) - V(\dataset_{q_{t-1}})}{q_t - q_{t-1}} \left( q - q_t \right) + V(\dataset_{q_{t-1}}), & q_{t-1} \leq q \leq q_t
    \end{cases}
\end{align*}
This function is concave and monotonically increasing, which follows the general trend of real learning curves~\cite{hestness2017deep}. Furthermore~\citet{mahmood2022howmuch} show that given sufficient resolution, i.e., enough data subsets, this piecewise linear function is an accurate approximation of the true learning curve $V(\dataset)$.

\noindent\textbf{The Multi-variate $(K=2)$ Case.}
In the previous $K=1$ case, the ground truth was formed by taking linear approximations between different subset sizes. When $K > 1$, we have multiple subsets that are used to evaluate the score $V(\dataset^1, \dots, \dataset^K)$.

We focus specifically on $K=2$ in our numerical experiments and propose a generalization of the previous piecewise-linear function. Here, rather than building lines on the intervals between subsequent sets, we build planes on triangular intervals. Specifically, let $q^1_0 \leq q^1_1 \leq q^1_2 \leq \cdots$ and  $q^2_0 \leq q^2_1 \leq q^2_2 \leq \cdots$ be two series of data set sizes, and consider the grid 
\begin{align*}\begin{array}{cccc}
    (q^1_0, q^2_0)  & (q^1_1, q^2_0)   & (q^1_2, q^2_0)     & \cdots \\
    (q^1_0, q^2_1)  & (q^1_1, q^2_1)   & (q^1_2, q^2_1)     & \cdots \\
    (q^1_0, q^2_2)  & (q^1_1, q^2_2)   & (q^1_2, q^2_2)     & \cdots \\
    \vdots          & \vdots           & \vdots             & \ddots \\
    \end{array}
\end{align*}
For each tuple $(q^1_s, q^2_t)$ in the above grid, let $V(\dataset^1_{q^1_s}, \dataset^2_{q^2_t})$ be the score of a model trained on two data sets of the corresponding respective sizes.

For each index $(s, t)$ we fit linear models on the corresponding lower right and upper left triangles. First, let $(\underline{\alpha}(s, t), \underline{\beta}(s, t), \underline{\gamma}(s, t))$ be the parameters of the plane defined by the lower triangle $\{ (q^1_{s-1}, q^2_t), (q^1_{s}, q^2_{t-1}), (q^1_{s}, q^2_t) \}$, i.e., the unique solution to the following linear system:
\begin{align*}
    \begin{pmatrix}
        q^1_{s-1}   & q^2_t      & 1 \\
        q^1_{s}     & q^2_{t-1}  & 1 \\
        q^1_{s}     & q^2_{t}    & 1 \\
    \end{pmatrix}
    \begin{pmatrix}
        \underline{\alpha}(s, t) \\
        \underline{\beta}(s, t) \\
        \underline{\gamma}(s, t)
    \end{pmatrix}
    = 
    \begin{pmatrix}
        V(\dataset^1_{q^1_{s-1}}, \dataset^2_{q^2_t}) \\
        V(\dataset^1_{q^1_{s}}, \dataset^2_{q^2_{t-1}}) \\
        V(\dataset^1_{q^1_{s}}, \dataset^2_{q^2_t}) \\
    \end{pmatrix}
\end{align*}
Thus, for any data set sizes $(q^1, q^2)$ in this triangle, we evaluate the ground truth by the linear model $\underline{\alpha}(s, t) q^1 + \underline{\beta}(s, t) q^2 + \underline{\gamma}(s, t)$. 
Similarly, let $(\overline{\alpha}(s, t), \overline{\beta}(s, t), \overline{\gamma}(s, t))$ be the parameters of the plane defined by the upper triangle $\{ (q^1_{s}, q^2_t), (q^1_{s+1}, q^2_{t}), (q^1_{s}, q^2_{t+1}) \}$, i.e., the unique solution to the following linear system:
\begin{align*}
    \begin{pmatrix}
        q^1_{s}   & q^2_t        & 1 \\
        q^1_{s+1} & q^2_{t}      & 1 \\
        q^1_{s}   & q^2_{t+1}    & 1 \\
    \end{pmatrix}
    \begin{pmatrix}
        \overline{\alpha}(s, t) \\
        \overline{\beta}(s, t) \\
        \overline{\gamma}(s, t)
    \end{pmatrix}
    = 
    \begin{pmatrix}
        V(\dataset^1_{q^1_{s}}, \dataset^2_{q^2_t}) \\
        V(\dataset^1_{q^1_{s+1}}, \dataset^2_{q^2_{t}}) \\
        V(\dataset^1_{q^1_{s}}, \dataset^2_{q^2_{t+1}}) \\
    \end{pmatrix}
\end{align*}
Similarly for any $(q^1, q^2)$ in this triangle, the ground truth is obtained by the linear model $\overline{\alpha}(s, t) q^1 + \overline{\beta}(s, t) q^2 + \overline{\gamma}(s, t)$.

Finally, we define our ground truth function $v\gt(q^1, q^2)$. For any $q^1 \geq q^1_0, q^2 \geq q^2_0$, this function first identifies the interval $[q^1_s, q^1_{s+1}] \times [q^2_t, q^2_{t+1}]$ in which the point lies. Then, the function assigns a score based on whether the point lies in the upper left or the lower right triangle in this interval. We write this function as
\begin{align*}
    v\gt(q^1, q^2) := & \begin{cases} \displaystyle
        \overline{\alpha}(s, t) q^1 + \overline{\beta}(s, t) q^2 + \overline{\gamma}(s, t) \\ \displaystyle
            \qquad\qquad\qquad\qquad\qquad \text{ if } \norm{(q^1, q^2) - (q^1_s, q^2_t)} \leq \norm{(q^1, q^2) - (q^1_{s+1}, q^2_{t+1})}, \\  \displaystyle
        \underline{\alpha}(s+1, t+1) q^1 + \underline{\beta}(s+1, t+1) q^2 + \underline{\gamma}(s+1, t+1) \\
            \qquad\qquad\qquad\qquad\qquad \text{ otherwise},
    \end{cases}
    \\
    & \text{for } q^1_s \leq q^1 \leq q^1_{s+1} \;,\; q^2_t \leq q^2 \leq q^2_{t+1}.
\end{align*}

\begin{table}[t!]
    \centering
    \footnotesize
\resizebox{\linewidth}{!}{
\begin{tabular}{lll cc}
\toprule
             Data set & Task & Score & \multicolumn{2}{c}{Full data set size} \\
\midrule
    CIFAR-10~\cite{krizhevsky2009learning}  & Classification & Accuracy & \multicolumn{2}{c}{$50,000$}     \\
    CIFAR-100~\cite{krizhevsky2009learning} & Classification & Accuracy & \multicolumn{2}{c}{$50,000$}      \\
    ImageNet~\cite{deng2009imagenet} & Classification & Accuracy & \multicolumn{2}{c}{$1,281,167$}   \\ 
    BDD100K~\cite{yu2020bdd100k} & Semantic Segmentation & Mean IoU & \multicolumn{2}{c}{$7,000$}          \\
    nuScenes~\cite{nuscenes2019} & BEV Segmentation & Mean IoU & \multicolumn{2}{c}{$28,130$}    \\ 
    VOC~\cite{pascal-voc-2007, pascal-voc-2012} & 2-D Object Detection & Mean AP & \multicolumn{2}{c}{$16,551$}       \\ 
\midrule
    CIFAR-100~\cite{krizhevsky2009learning} & Classification & Accuracy & $25,000$ (Classes 0-49) & $25,000$ (Classes 50-99) \\
    BDD100K~\cite{yu2020bdd100k} &  Semantic Segmentation & Mean IoU & $7,000$ (Labeled) & $70,000$ (Unlabeled) \\
\bottomrule
\end{tabular}
}
    \caption{Data sets, tasks, and score functions considered.}
    \label{tab:tasks}
    \vspace{-2em}
\end{table}

\rebut{
\noindent\textbf{For $K > 2$.}
The piecewise linear approximations grow increasingly complex as the dimension $K$ increases.
Furthermore, the number of subsets of data set sizes required to create a piecewise linear approximation increases exponentially with $K$. 
Specifically for $k \in \{1,\dots, K\}$, let $M_k$ denote the number of subsets (i.e., $|\{q_0^k, q_1^k, \dots, q_{M_k}^k \}|$) of a data set that we consider when creating subsets. For each combination of $K$ subsets, we must then train a model and evaluate it's performance to record $V(\dataset^1, \dots, \dataset^K)$. Thus, we must subsample and train our model for $O(\prod_{k}M_k)$ combinations. 
This can quickly become computationally prohibitive. 
}

\subsection{Data Collection}
\label{sec:app_data_collection}

We now summarize the data collection and training process used to create the above piecewise-linear functions for each data set and task.
All models were implemented using PyTorch and trained on machines with up to eight NVIDIA V100 GPU cards. 
Table~\ref{tab:tasks} details each task and data set size.

\noindent\textbf{Image Classification Tasks.} For all experiments with CIFAR-10 and CIFAR-100, we use a ResNet18~\cite{he2016deep} following the same procedure as in~\citet{coleman2019selection}. For ImageNet, we use a ResNet34~\cite{he2016deep} using the procedure in~\citet{coleman2019selection}. 
All models are trained with cross entropy loss using SGD with momentum. We evaluate all models on Top-1 Accuracy.

For all experiments, we set the initial data set at $q_0 = 10\%$ of the data. In data collection, we create five subsets containing $2\%, 4\%, \cdots, 10\%$ of the training data, five subsets containing $12\%, 14\%, \cdots, 20\%$ of the training data, and eight subsets containing $30\%, 40\%, \cdots, 100\%$ of the data. Note that we use higher granularity in the early stage as this is where the dynamics of the learning curve vary the most. With more data, the learning curve eventually has a nearly zero slope.
For each subset, we train our respective model and evaluate performance.

\noindent\textbf{VOC.}
We use the Single-Shot Detector 300 (SSD300)~\cite{liu2016ssd} based on a VGG16 backbone~\cite{simonyan2014very}, following the same procedure as in~\citet{elezi2021towards}. All models are trained using SGD with momentum. We evaluate all models on mean AP.

For all experiments, we set the initial data set at $q_0 = 10\%$ of the data. 
In data collection, we sample twenty subsets at $5\%$ intervals, i.e., $5\%, 10\%, 15\%, \cdots, 100\%$ of the training data.

\noindent\textbf{BDD100K.}
We use Deeplabv3 \cite{chen2017rethinking} with ResNet50 backbone. We use random initialization for the backbone. We use the original data set split from \citet{yu2020bdd100k} with $7,000$ and $1,000$ data points in the train and validation sets respectively. The evaluation metrics is mean Intersection over Union (IoU).
We follow the same protocol used in the Image classification tasks to create our subsets of data.

\noindent\textbf{nuScenes.}
We use the ``Lift Splat'' architecture~\cite{liftsplat}, which is used for BEV segmentation from driving scenes, following the steps from the original paper to train this model. We evaluate on mean IoU. Our data collection procedure follows the same steps and percentages of the data set as used for BDD100K and the Image classification tasks.

\noindent\textbf{CIFAR-100 (2 Types).}
We partition this data set into two subsets $\dataset^1$ and $\dataset^2$ of $25,000$ images each containing the first 50 and last 50 classes, respectively. We then train a ResNet18~\citep{he2016deep} using different fractions of the two subsets. We follow the same training procedure as in the single-variate case except with one difference. Since some of the data sets will naturally be imbalanced (e.g., if we train with half of the first subset and all of the second subset), we employ a class-balanced cross entropy loss using the inverse frequency of samples per class.

For each $\dataset^k$ subsets, respectively, we follow the same subsampling procedure used in the single-variate case. That is, we let $q^1_0 = 10\%$ of the first data subset and $q^2_0 =10\%$ of the second data subset. For each subset, we create $10$ subsampled sets at intervals of $2\%, 4\%, 6\%, \cdots, 20\%$ of the respective data subset. We then create eight further subsampled sets at $30\%, 40\%, \cdots, 100\%$ of the respective data subset.
Finally, we train our model and evaluate the score on every combination of the subsampled subsets of $\dataset^1 \times \dataset^2$.

\noindent\textbf{BDD100K (Semi-supervised).}
For this task, we consider semi-supervised segmentation via pseudo-labeling the unlabeled data set in BDD100K. The data is partitioned into two subsets $\dataset^1$ and $\dataset^2$ containing $7,000$ labeled and $70,000$ unlabeled scenes. As before, we use the Deeplabv3~\citep{chen2017rethinking} architecture with a ResNet50 backbone. Here however, we:
\begin{enumerate}
    \item First train with a labeled subset of $\dataset^1$ via supervised learning.
    \item Pseudo-label an unlabeled subset of $\dataset^2$ using the trained model.
    \item Re-train the model with the labeled subset and the pseudo-labeled subset.
\end{enumerate}
We follow the same procedure as in the single-variate case for both training steps, except we weigh the unlabeled data by $0.2$ to reduce its contribution to the loss.

Training via semi-supervised learning on BDD100K requires long compute times, so we reduce the number of subsets used in this experiment. 
For the labeled set $\dataset^1$, we create subsets with $5\%, 10\%, 15\%, 20\%, 40\%, 60\%, 80\%, 100\%$ of the data.
For the unlabeled set $\dataset^2$, we create subsets with $0\%, 10\%, 25\%, 50\%, 100\%$ of the data.
Note that we have five settings of unlabeled data since we include the case of training with no unlabeled data as well.

\begin{table}[!t]
    \centering
    \resizebox{\linewidth}{!}{
    \begin{tabular}{p{0.35\linewidth} p{0.65\linewidth}}
        \toprule
        Parameter & Setting \\
        \midrule
        Optimizer   & GD with Momentum ($\beta = 0.9$), Adam ($\beta_0, \beta_1 = 0.9, 0.999$) \\
        Learning rate   & $0.005, \dots, 500$ \\
        Number of bootstrap samples $B$ & $500$ \\
        Number of regression subsets $R$ & See Appendix~\ref{sec:app_data_collection} \\
        Density Estimation Model    & KDE for $K=1$, GMM for $K=2$ \\
        \multirow{2}{*}{KDE Bandwidth}  & $20000, \dots, 20000000$ for ImageNet \\
                                        & $200, \dots, 4000$ for all others \\
        GMM number of clusters          & $4, \dots, 10$ \\   
        \bottomrule
    \end{tabular}
    }
    \caption{Summary of hyperparameters used in our experiments.}
    \label{tab:hyperparameters}
\end{table}

\subsection{LOC Implementation}

For all experiments, we initialize with $10$\% of the training data set. We consider $T=1,3,5$ rounds and sweep a range of $V^*$. We provide a summary of parameters in Table~\ref{tab:hyperparameters}.

For the experiments with $K=1$, we model the data requirement PDF $f(q)$ in each round of the problem as follows. We first draw $B=500$ bootstrap resamples of the current training statistics $\set{R}$, where $\set{R} = \{ (rq_0/R, V(\dataset_{rq_0/R})) \}_{r=1}^R \cup \{ (q_s, V(\dataset_{q_s})) \}_{s=1}^t$ contains all of the measured statistics up to the initial data set (e.g., for CIFAR-10, this includes performance with $2\%, 4\%, \cdots, 10\%$ of the data), and the previous collected data. The latter is obtained by calling our piecewise-linear ground truth approximation. 
For each bootstrap resample, we fit a power regression model $\hat{v}(q;\btheta) = \theta_0 q^{\theta_1} + \theta_2$ and solve for the estimated minimum data requirement. We then use our set of estimates to fit a Kernel Density Estimation (KDE) model after gridsearching for the best bandwidth parameter.

For the experiments with $K=2$, we use the same above procedure but fit Gaussian Mixture Models (GMM) due to their having an easily computable CDF via the Gaussian $\mathrm{erf}(\cdot)$, rather than numerically integrating the PDF.
We grid-search over the number of mixture components for the GMM model.

We optimize over problems~\eqref{eq:expected_risk_prob} and~\eqref{eq:expected_risk_multivariate} using gradient descent techniques. 
Depending on the current state and data set, different hyperparameters perform better.
As a result, we perform extensive hyperparameter tuning every time we need to solve the optimization problem. Here, we sweep over gradient descent with momentum and Adam with learning rates between $0.005$ to $500$.

We initialize each problem with $\bq_t$ equal to the baseline regression solution and $\bq_{t + s} = \bq_t / (s + 1)$ for all $1 \leq s \leq T - t$. That is, we set the initial value for future collection amounts to be fractions of the initial value of the immediate amount of data to collect. We identified this initialization by manually inspecting the solutions found by LOC, consequently it improves the conditioning of the loss landscape relative to other random initialization schemes.

\section{Additional Numerical Results}
\label{sec:app_experiment_results}

This section contains expanded results of our numerical experiments and further ablations. Our key results include:

$\bullet$ In Appendix~\ref{sec:app_estimating_data_requirement_distribution}, we evaluate the effectiveness of estimating $F(q)$ by plotting the estimated learning curves as well as the empirical histograms used to model the data requirement distribution.

$\bullet$  In Appendix~\ref{sec:app_experiment_robustness}, we explore the sensitivity of our optimization algorithm to variations in the cost and penalty parameters. In all except one instance, LOC consistently maintains a low total cost and failure rate.
    
$\bullet$  In Appendix~\ref{sec:app_experiment_k_equals_two}, we explore the multi-variate LOC (i.e., $K=2$) for problems where we have a small number of $T=1, 3$ rounds. The baseline fails for almost all instances of $T=1$, whereas LOC maintains a low failure rate.
    
$\bullet$  In Appendix~\ref{sec:app_experiment_alternate_regression_functions}, we consider variants of LOC where we use different regression functions to estimate the data requirement distribution. Our optimization framework can be deployed on top of any regression function to reduce the failure rate.

\rebut{

\subsection{Estimating the Data Requirement Distribution $F(q)$}
\label{sec:app_estimating_data_requirement_distribution}

To estimate $F(q)$, we first create an ensemble of estimated learning curves, which we then invert to obtain an empirical distribution of estimated values for $D^*$.
Figure~\ref{fig:app_learning_curves} plots our bootstrap resampled estimated learning curves versus the ground truth performance for the first round of data collection when we have access to an initial $\dataset_{q_0}$ containing $10\%$ of the full data set. 
As noted in~\citet{mahmood2022howmuch}, the mean estimated learning curve diverges from the ground truth. 
However, by bootstrap resampling an ensemble of learning curves, we can cover the ground truth with some probability.

Figure~\ref{fig:app_histograms} plots the empirical histograms of estimated $D^*$ as well as the estimated $F(q)$ obtained via KDE on CIFAR-10 with three different values for $V^*$.
Although the mode of the estimated distribution is far from the ground truth $D^*$, the estimated distribution assigns some probability to the ground truth region. 
LOC optimizes over this estimated $F(q)$, which allows us to conservatively collect data and reduce the chances of failure.
}

\begin{figure*}[!t]
\begin{center}
\begin{minipage}{0.16\linewidth}\includegraphics[width=1\textwidth]{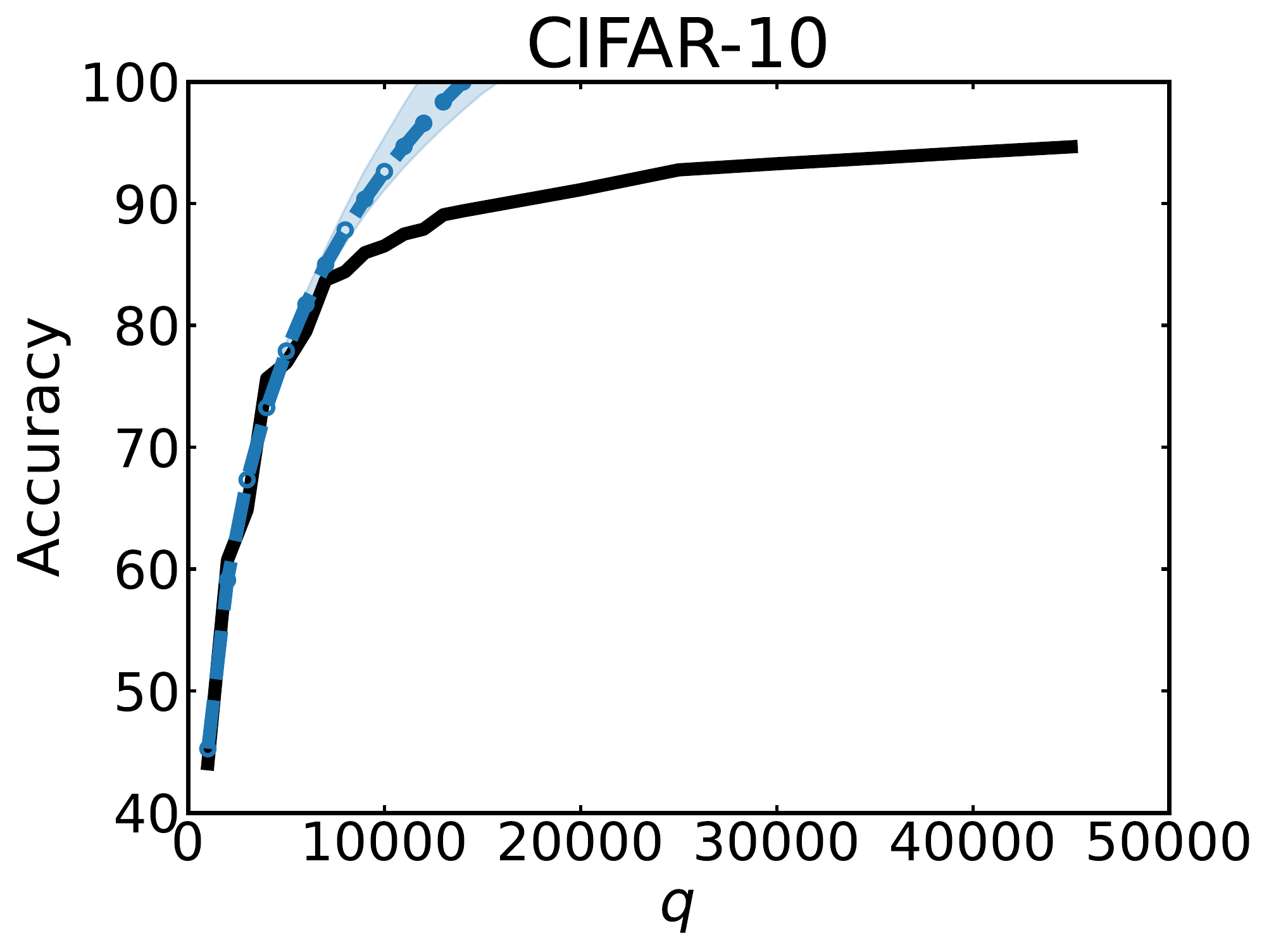}\end{minipage}
\begin{minipage}{0.16\linewidth}\includegraphics[width=1\textwidth]{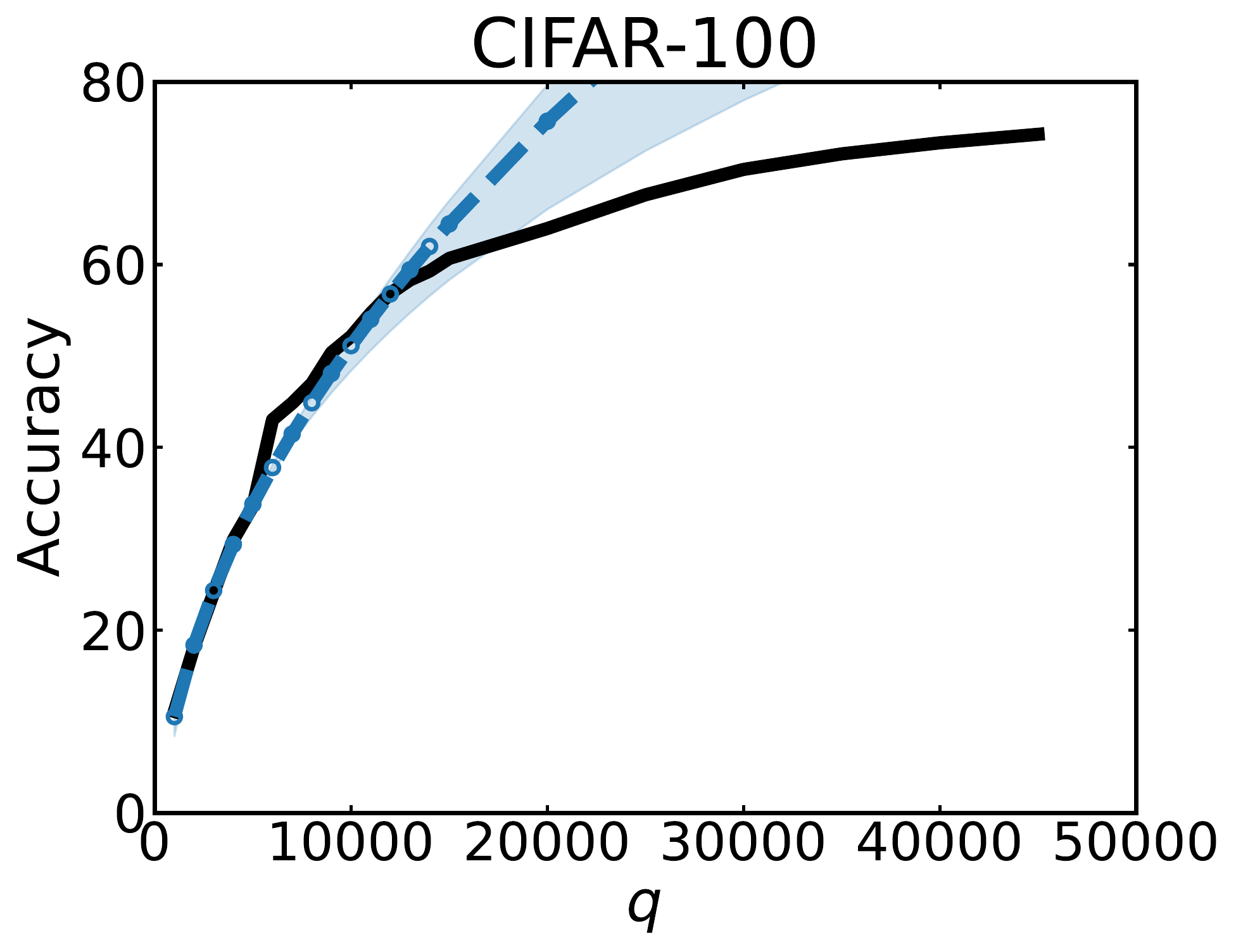}\end{minipage}
\begin{minipage}{0.16\linewidth}\includegraphics[width=1\textwidth]{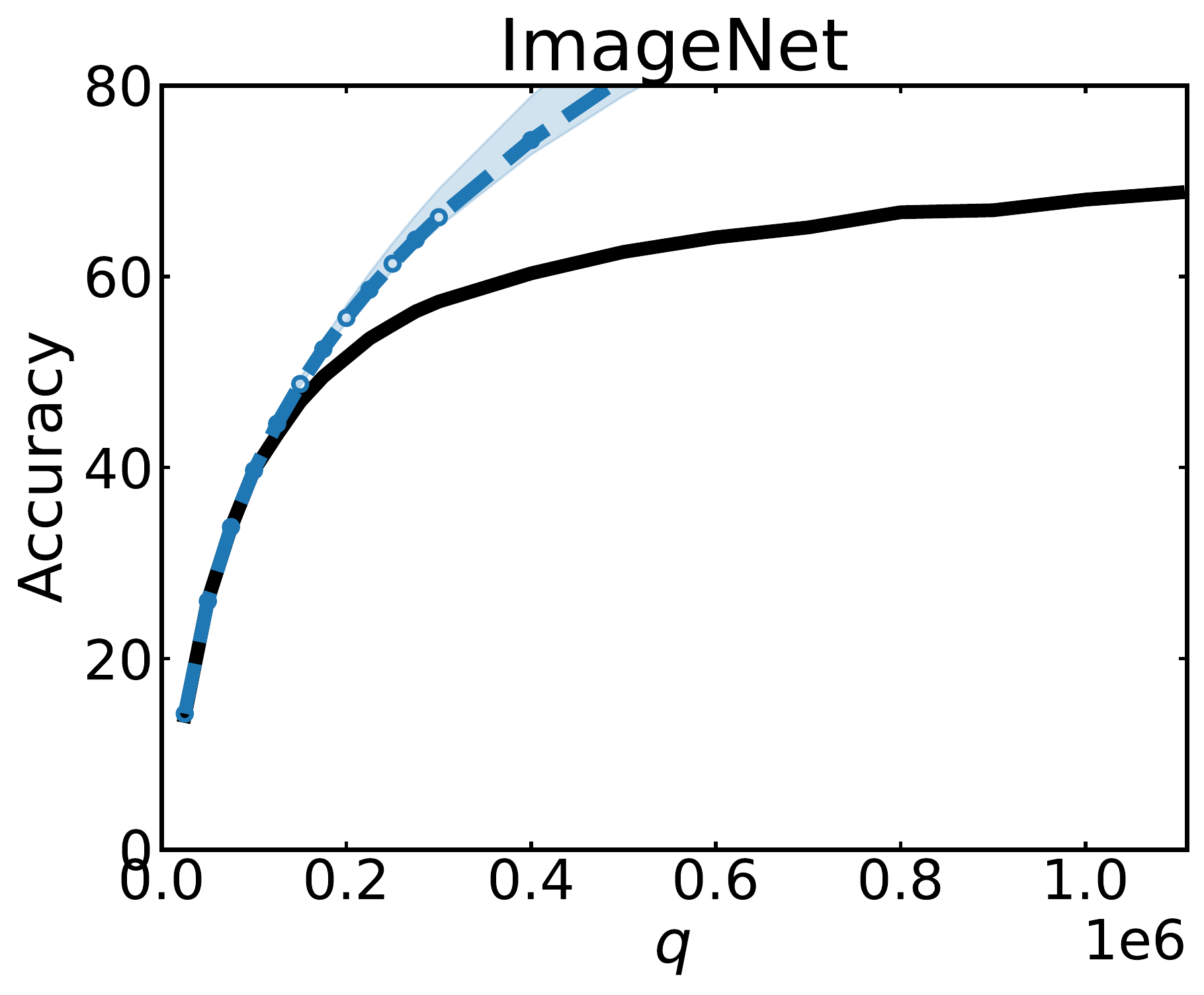}\end{minipage}
\begin{minipage}{0.16\linewidth}\includegraphics[width=1\textwidth]{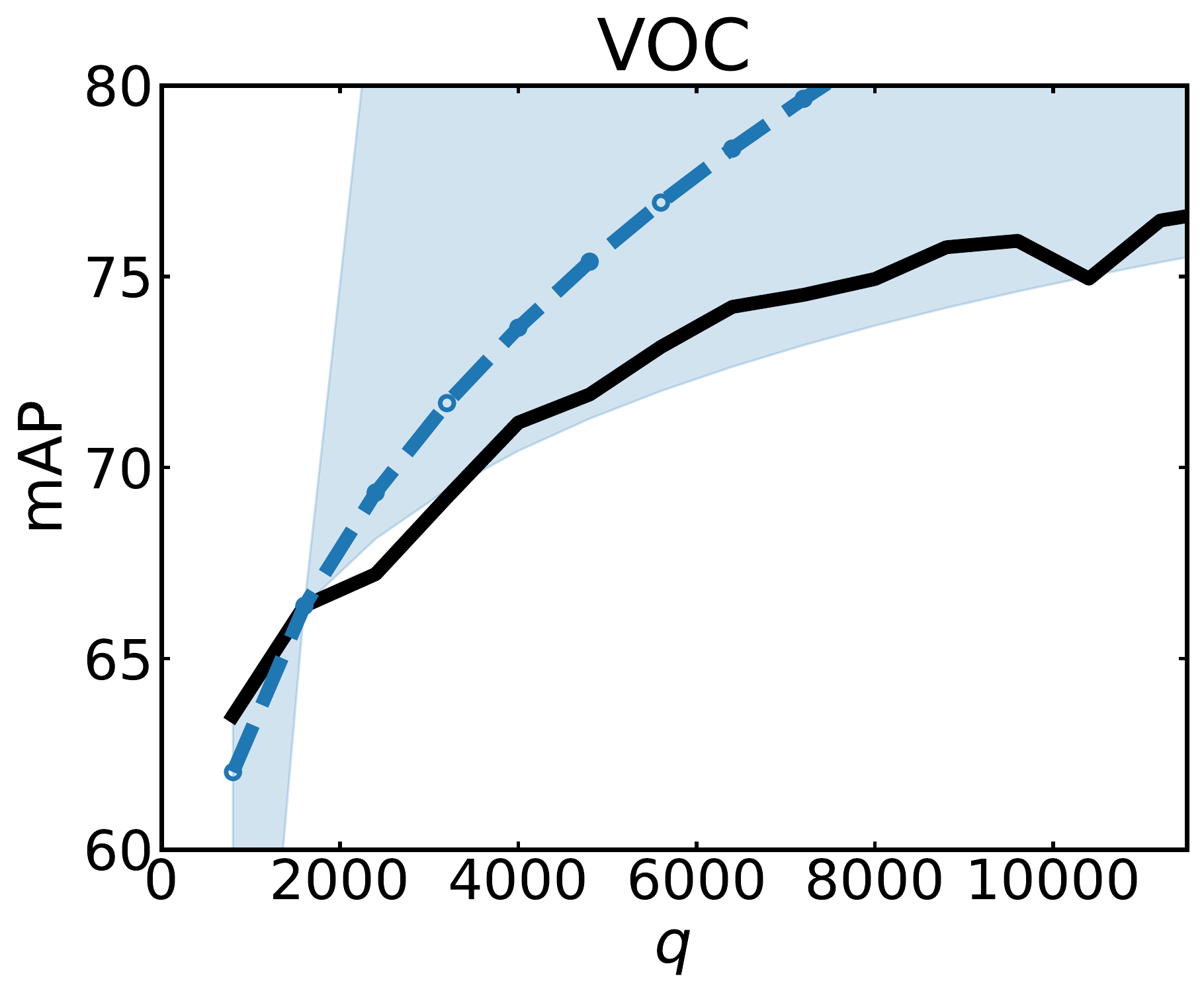}\end{minipage}
\begin{minipage}{0.16\linewidth}\includegraphics[width=1\textwidth]{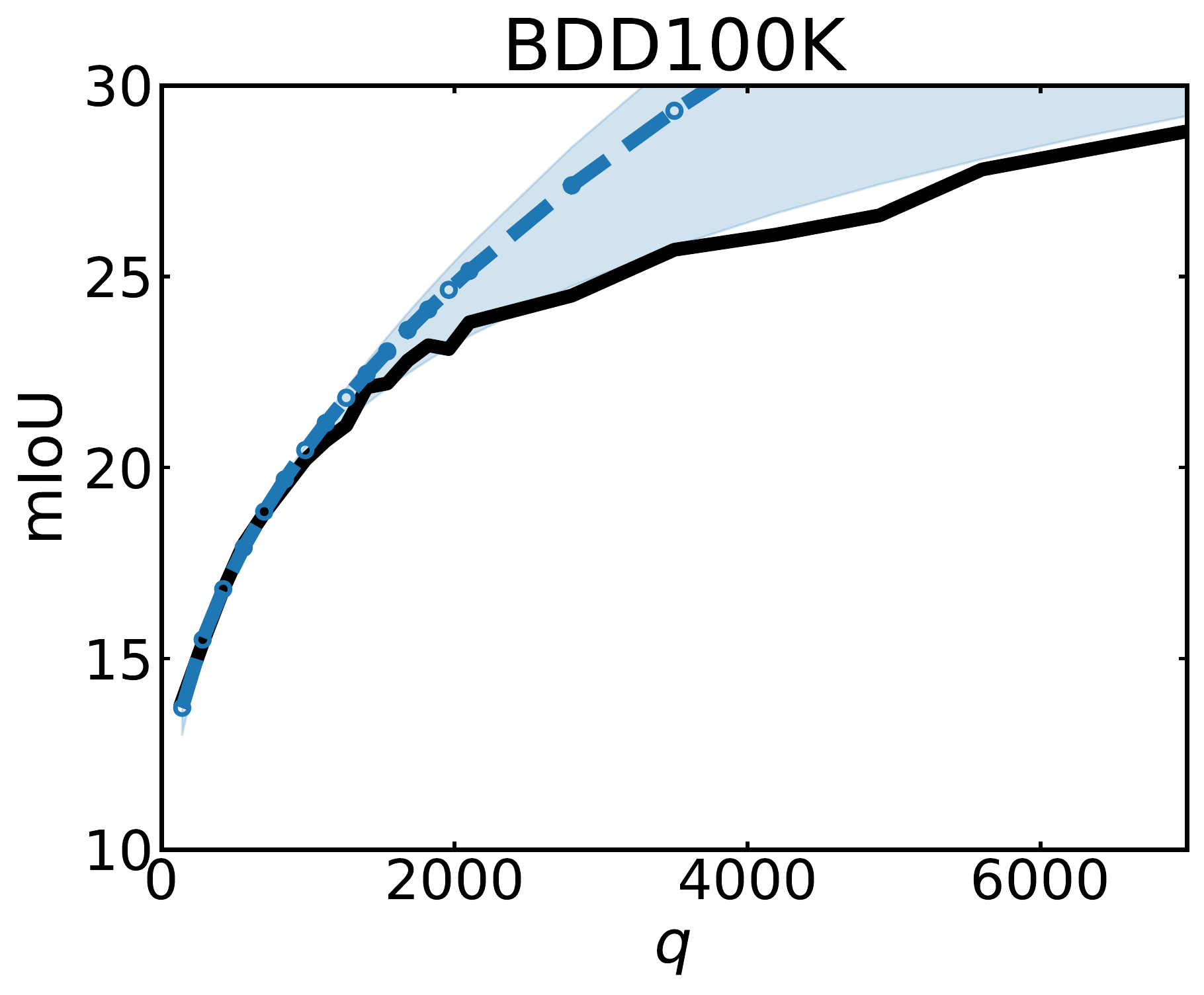}\end{minipage}
\begin{minipage}{0.16\linewidth}\includegraphics[width=1\textwidth]{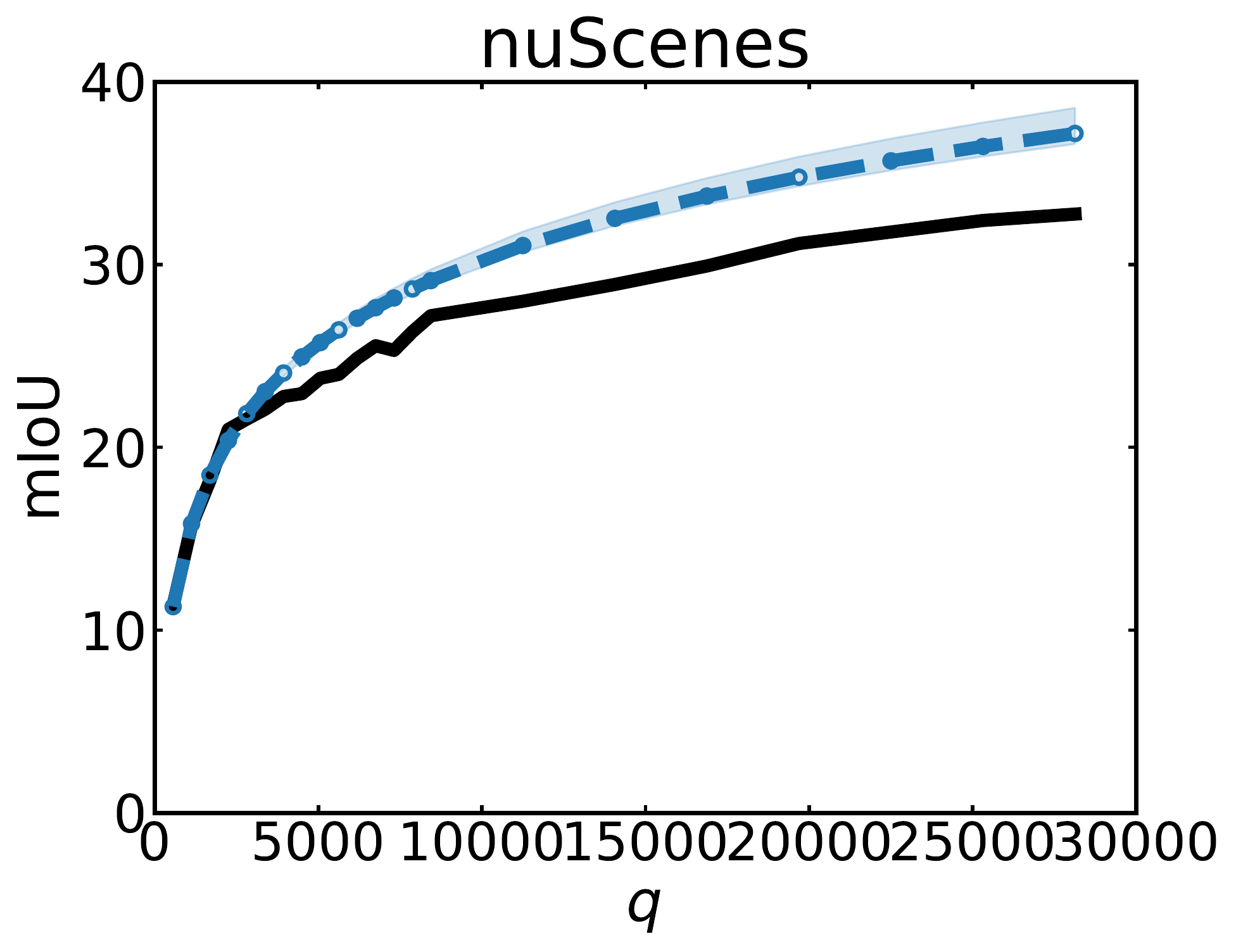}\end{minipage}
\vspace{-3mm}
\caption{\label{fig:app_learning_curves}
\rebut{
For a fixed seed, ground truth learning curves (black) and the estimated power law learning curves (blue) obtained via bootstrapping and ensembling. The shaded region represents the $95$ percentile of the ensemble and the dashed blue line represents the mean of the regression functions. The mean is consistently higher than the unknown ground truth, whereas the shaded region can at times cover it.
}
}
\end{center}
\vspace{-4mm}
\end{figure*}

\begin{figure*}[!t]
\begin{center}
\begin{minipage}{0.30\linewidth}\includegraphics[width=1\textwidth]{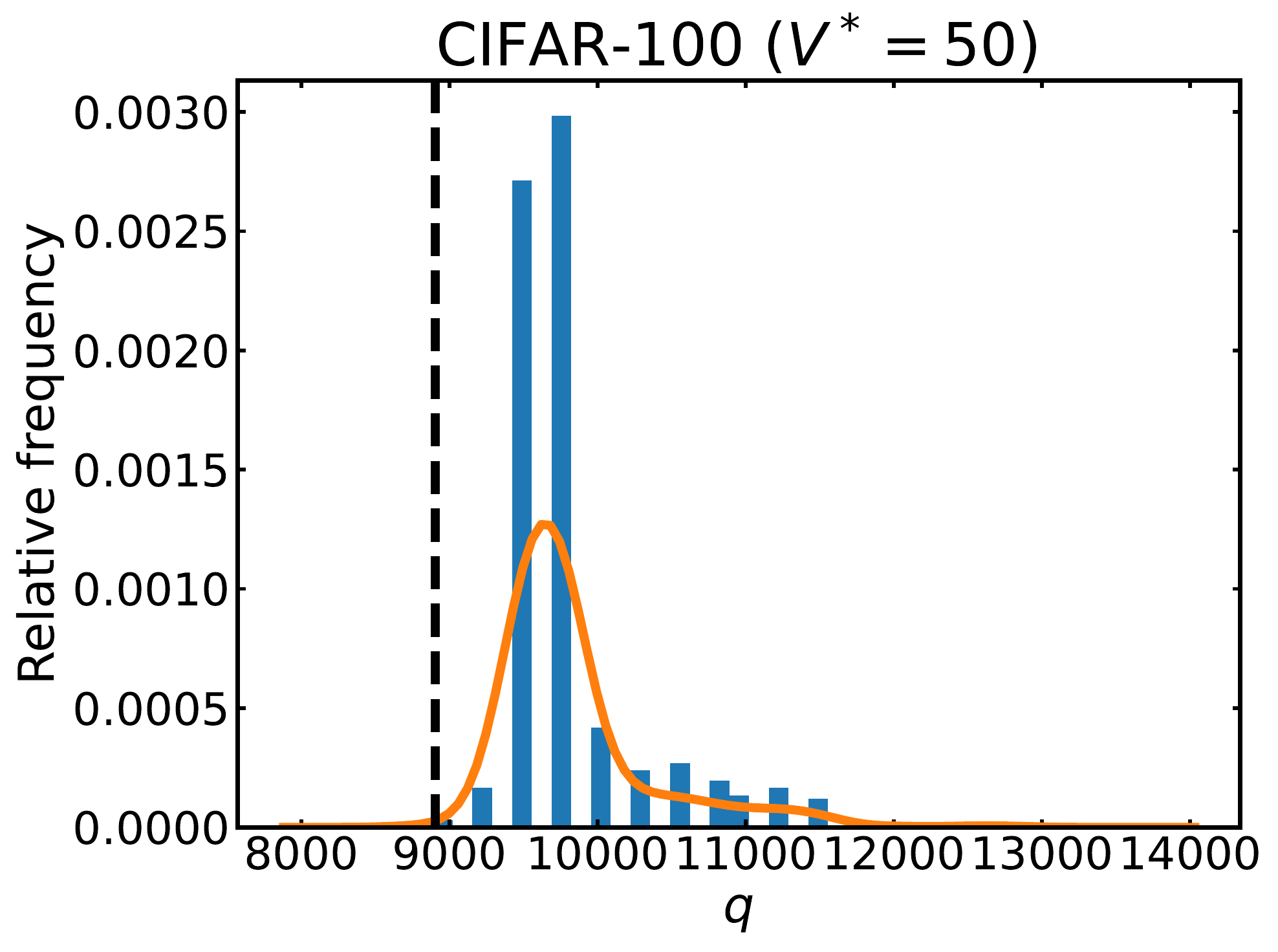}\end{minipage}
\begin{minipage}{0.30\linewidth}\includegraphics[width=1\textwidth]{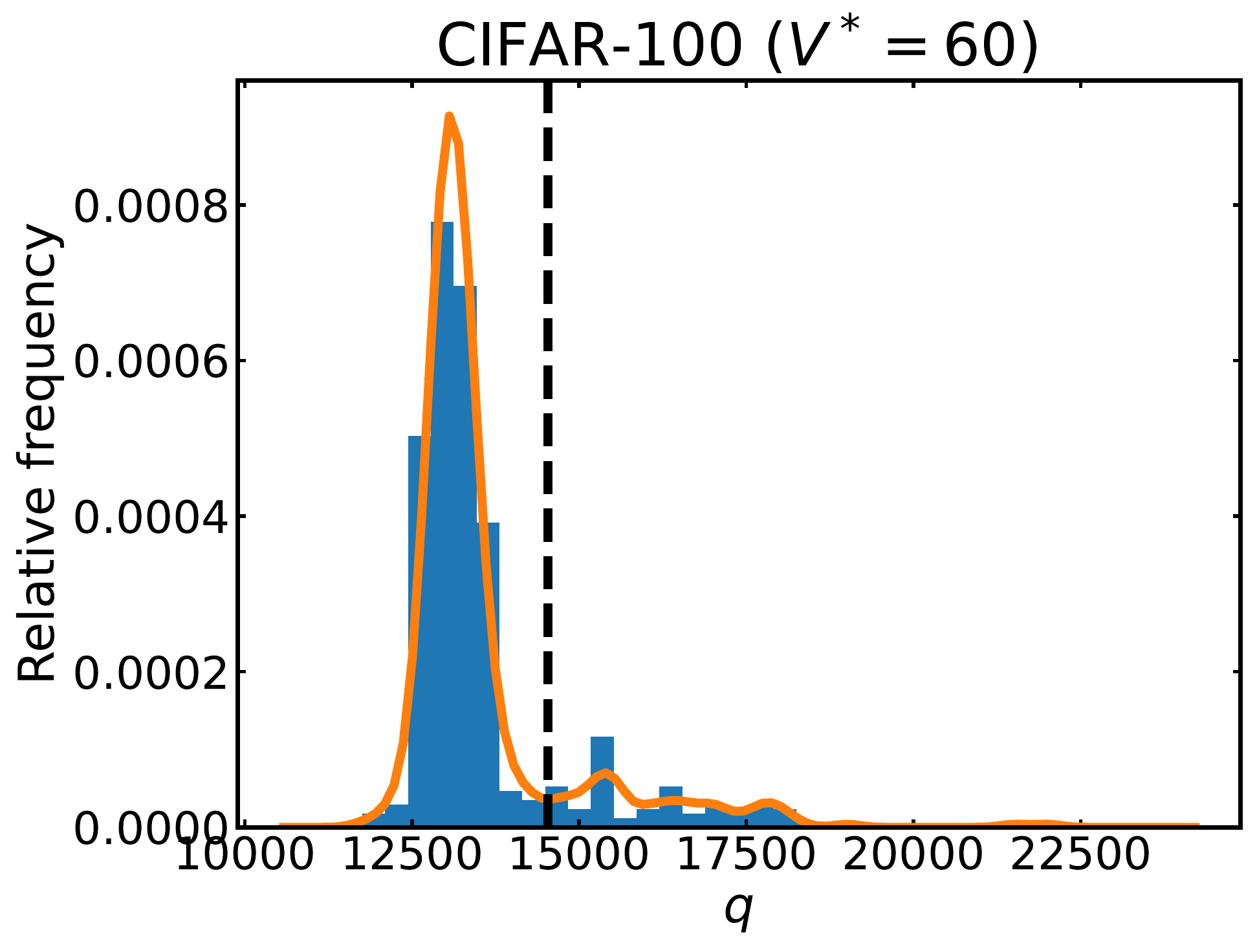}\end{minipage}
\begin{minipage}{0.30\linewidth}\includegraphics[width=1\textwidth]{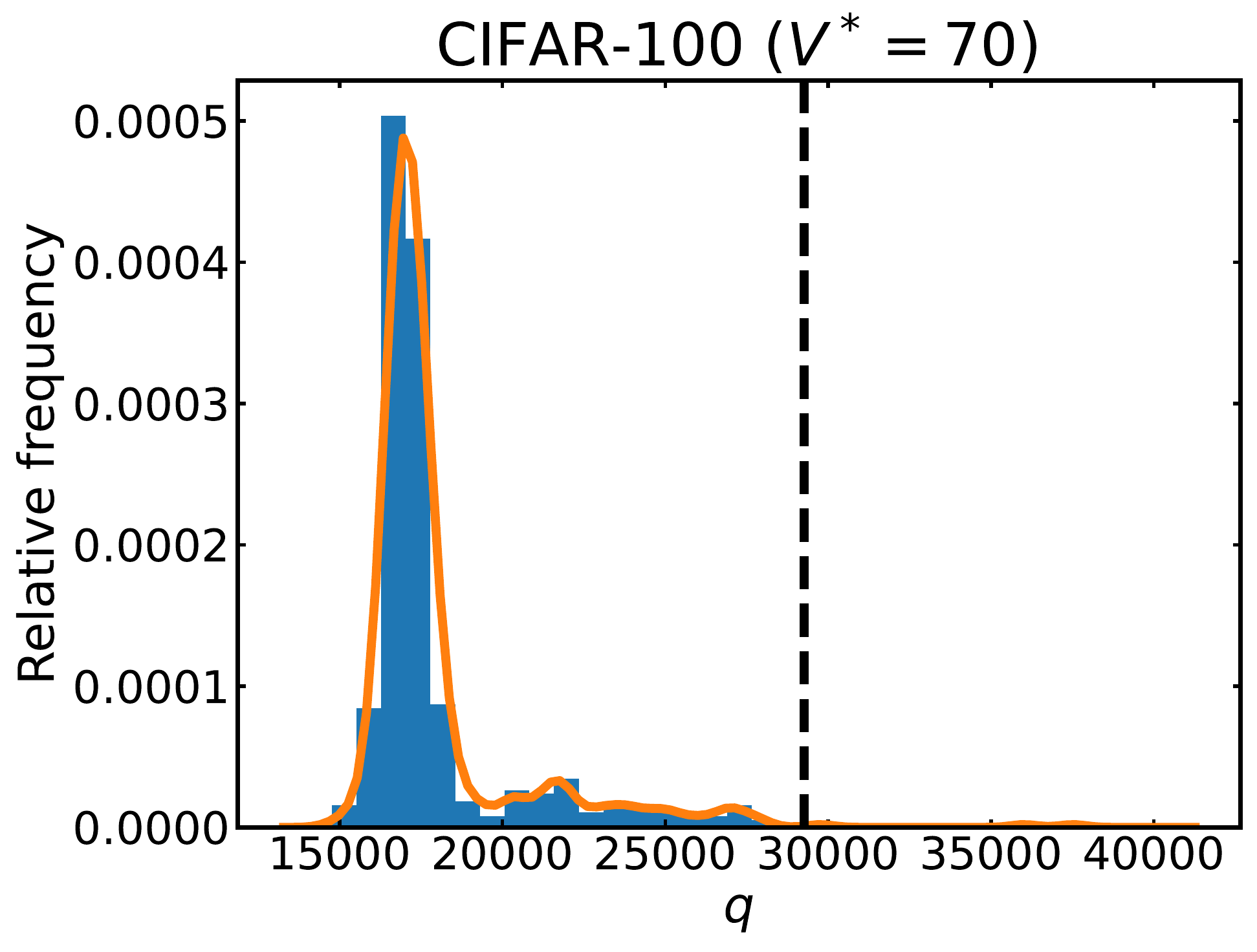}\end{minipage}
\vspace{-3mm}
\caption{\label{fig:app_histograms}
\rebut{
For a fixed seed, the histogram of estimates of $D^*$ from different bootstrapped models (blue bars), the estimated $F(q)$ (orange curve), and the ground truth $D^*$ (black dashed line). 
Each plot corresponds to a different $V^*$ for CIFAR-100 (see Figure~\ref{fig:app_learning_curves} for the learning curve). 
With higher targets, regression (i.e., collecting the mean of the distribution) will lead to larger under-estimations.
}
}
\end{center}
\vspace{-4mm}
\end{figure*}

\subsection{Robustness to the Cost and Penalty Parameters}
\label{sec:app_experiment_robustness}

\begin{figure*}[!t]
\begin{center}
\begin{minipage}{0.3\linewidth}\includegraphics[width=1\textwidth]{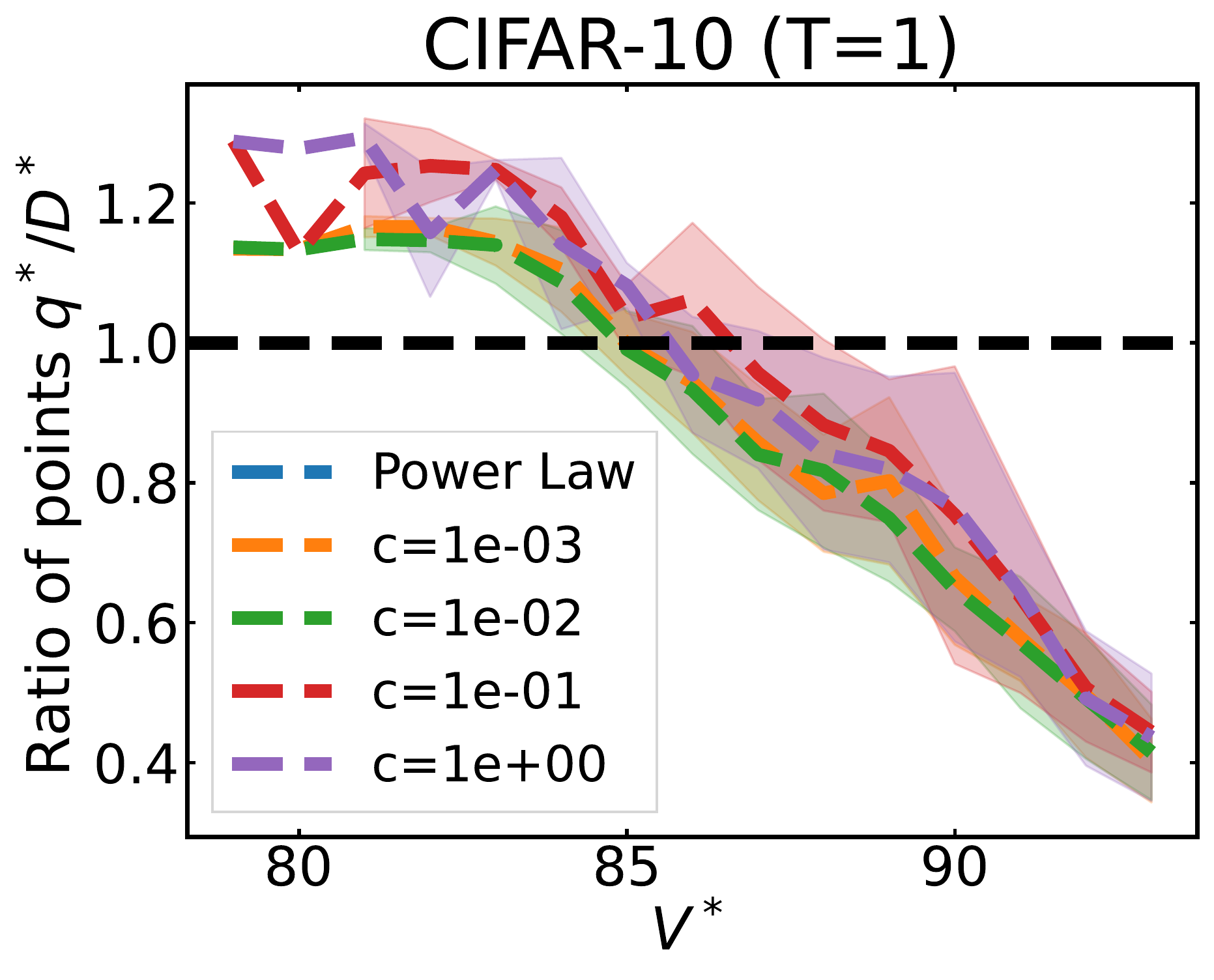}\end{minipage}
\begin{minipage}{0.3\linewidth}\includegraphics[width=1\textwidth]{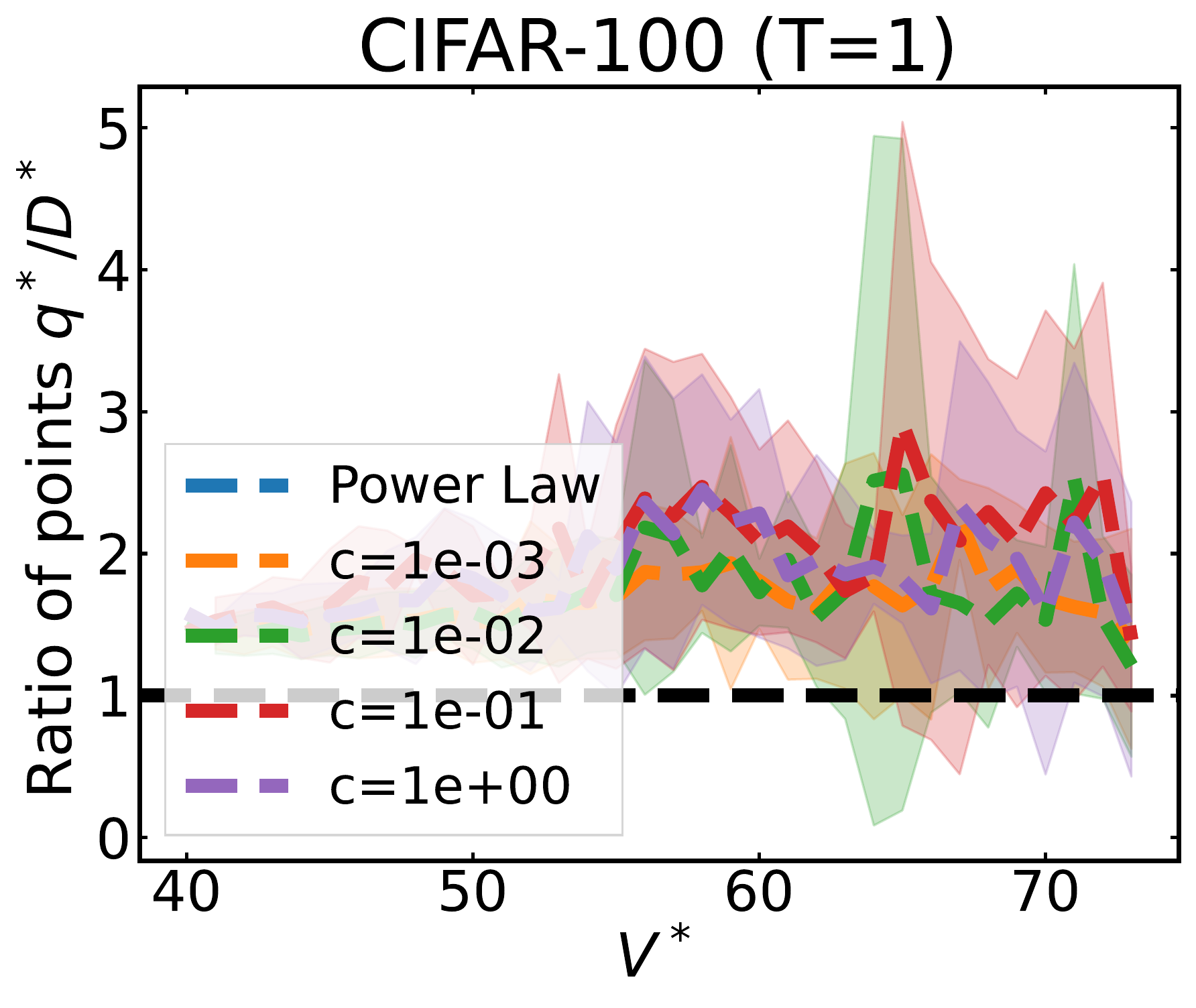}\end{minipage}
\begin{minipage}{0.3\linewidth}\includegraphics[width=1\textwidth]{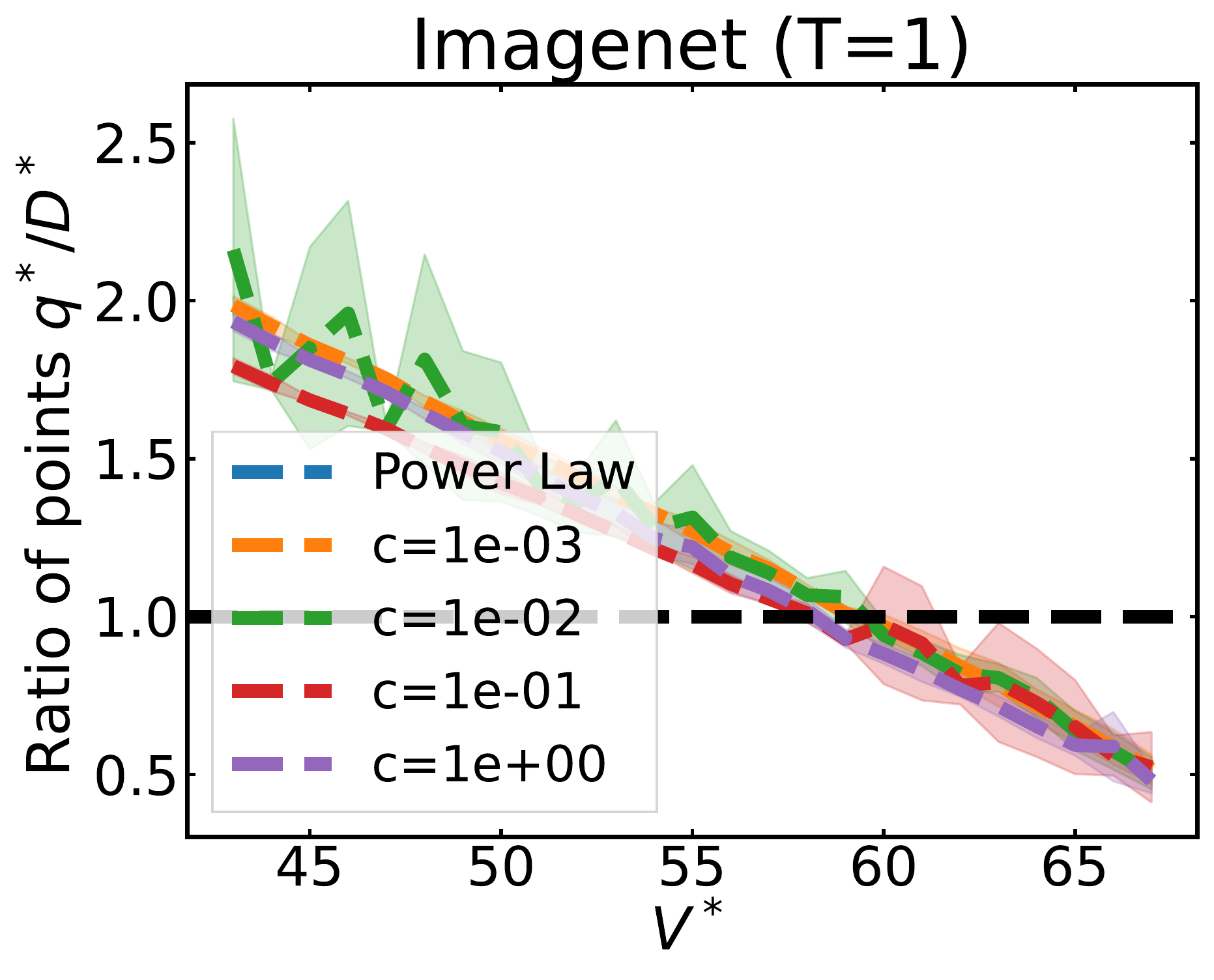}\end{minipage}
\begin{minipage}{0.3\linewidth}\includegraphics[width=1\textwidth]{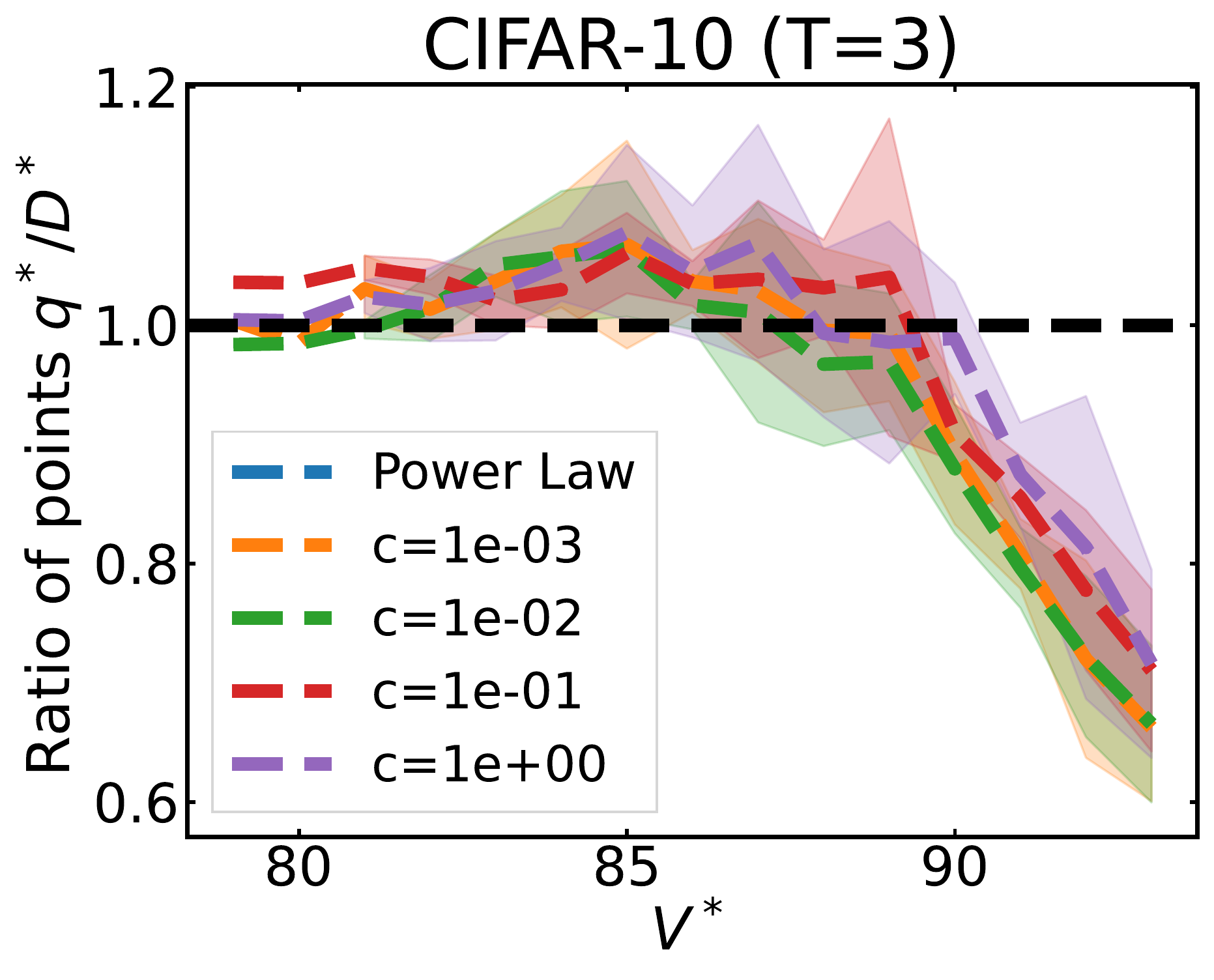}\end{minipage}
\begin{minipage}{0.3\linewidth}\includegraphics[width=1\textwidth]{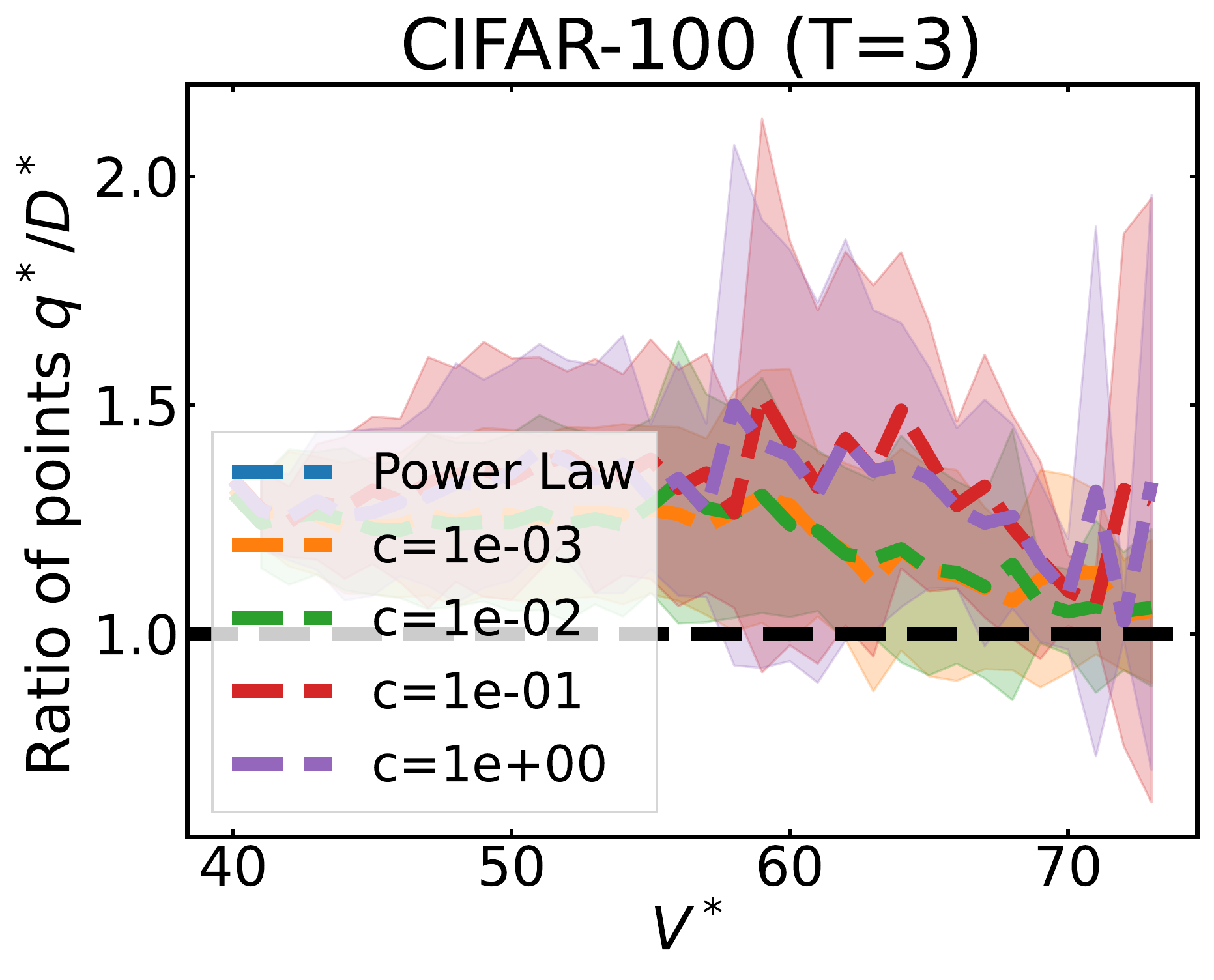}\end{minipage}
\begin{minipage}{0.3\linewidth}\includegraphics[width=1\textwidth]{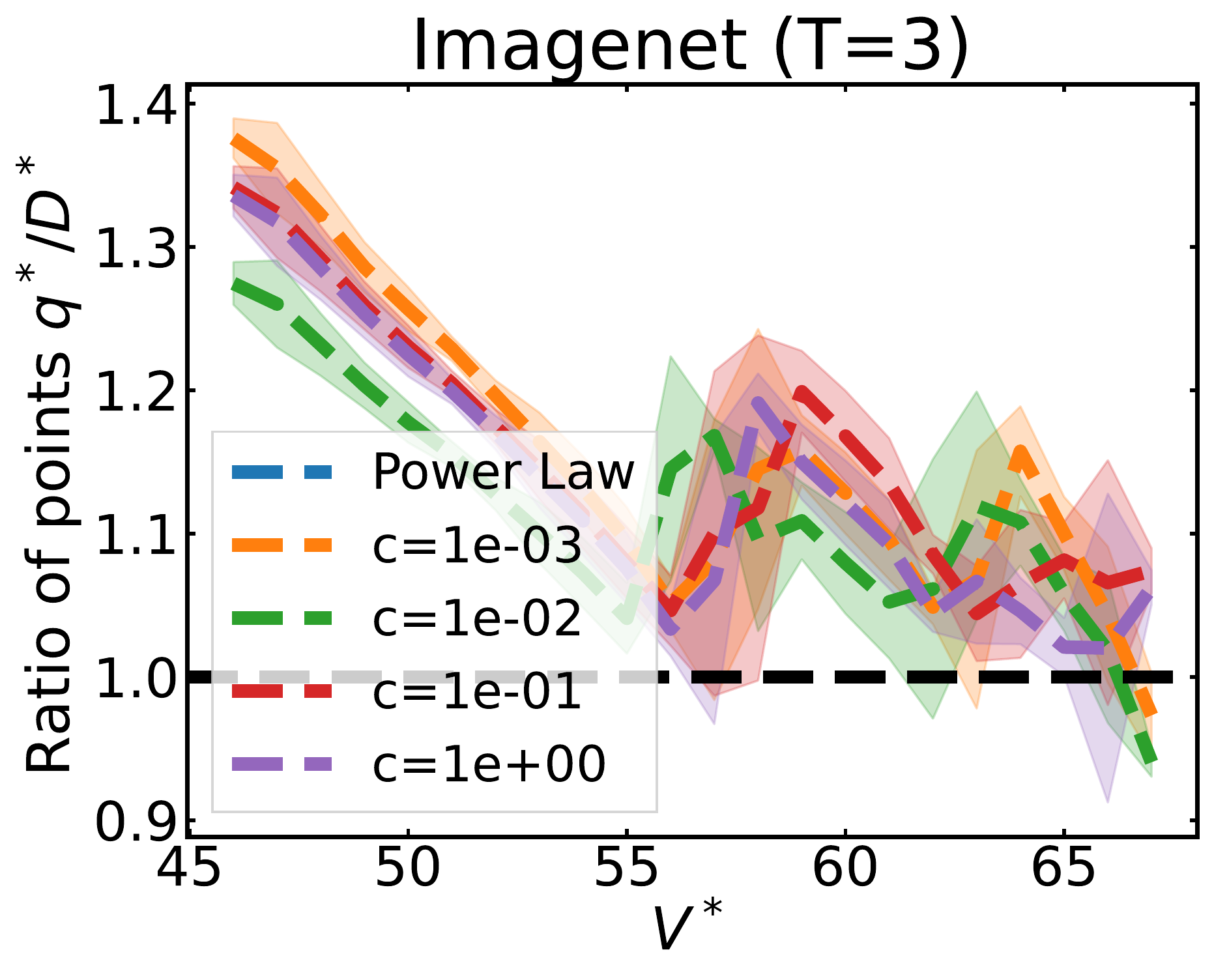}\end{minipage}
\begin{minipage}{0.3\linewidth}\includegraphics[width=1\textwidth]{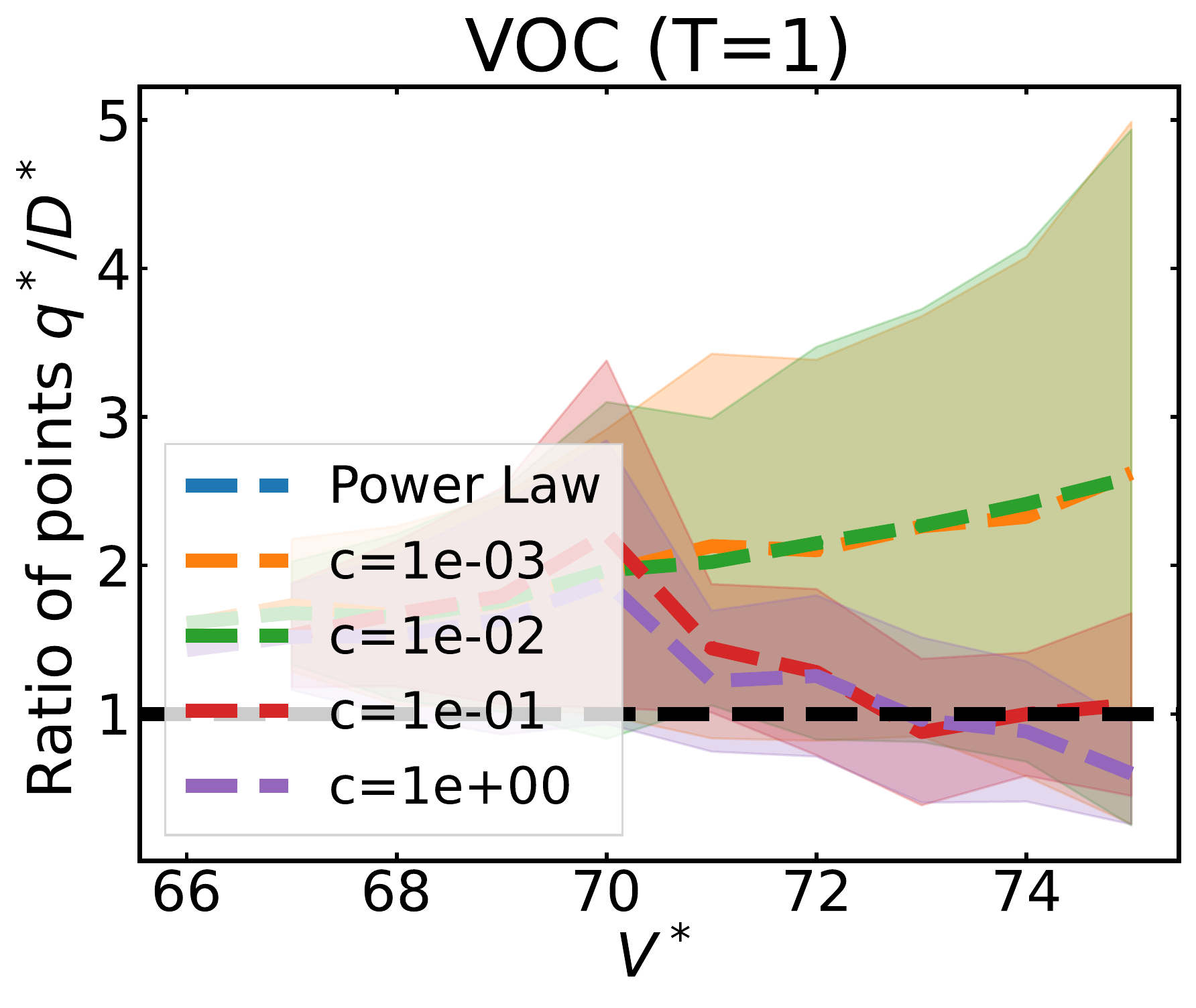}\end{minipage}
\begin{minipage}{0.3\linewidth}\includegraphics[width=1\textwidth]{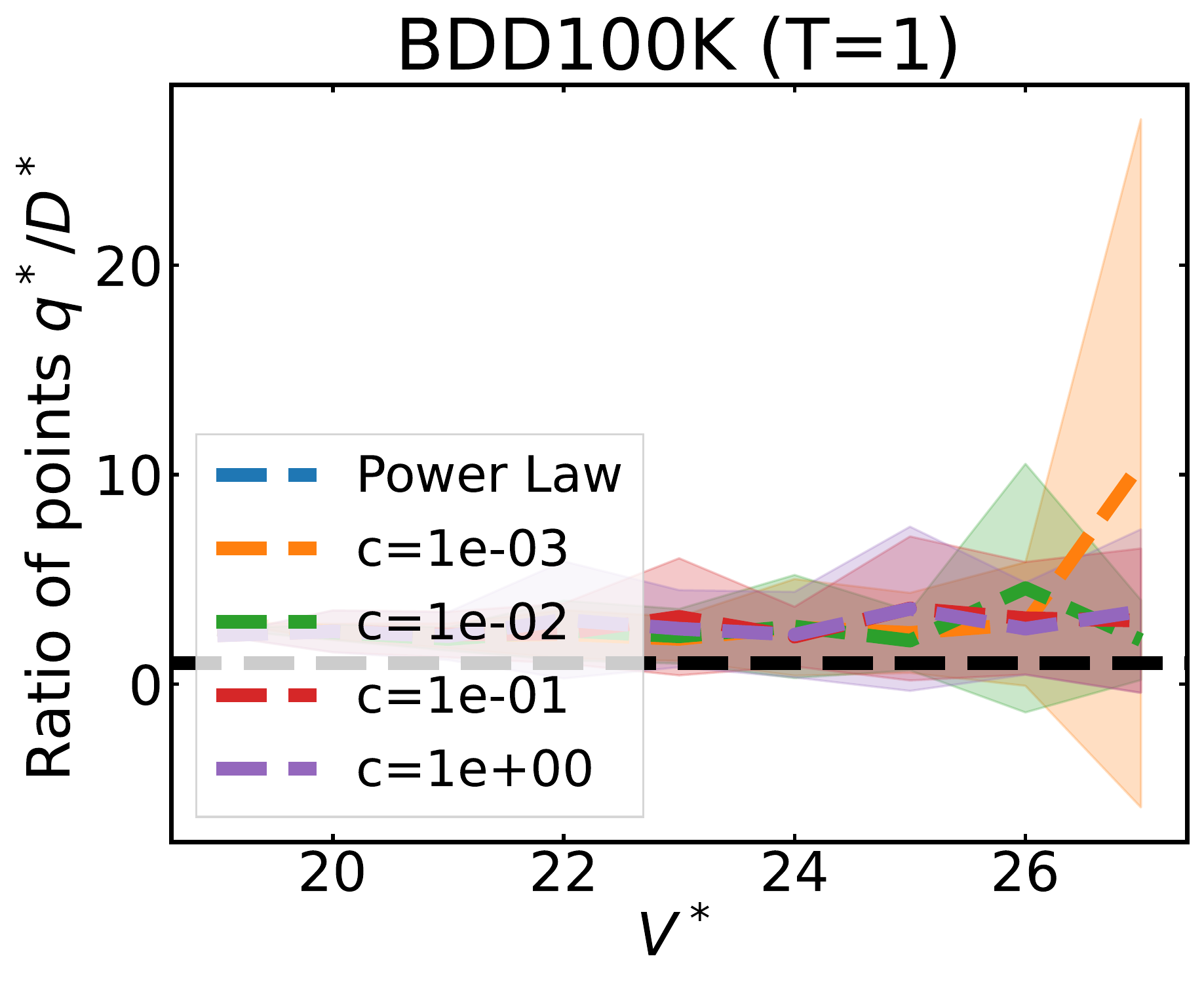}\end{minipage}
\begin{minipage}{0.3\linewidth}\includegraphics[width=1\textwidth]{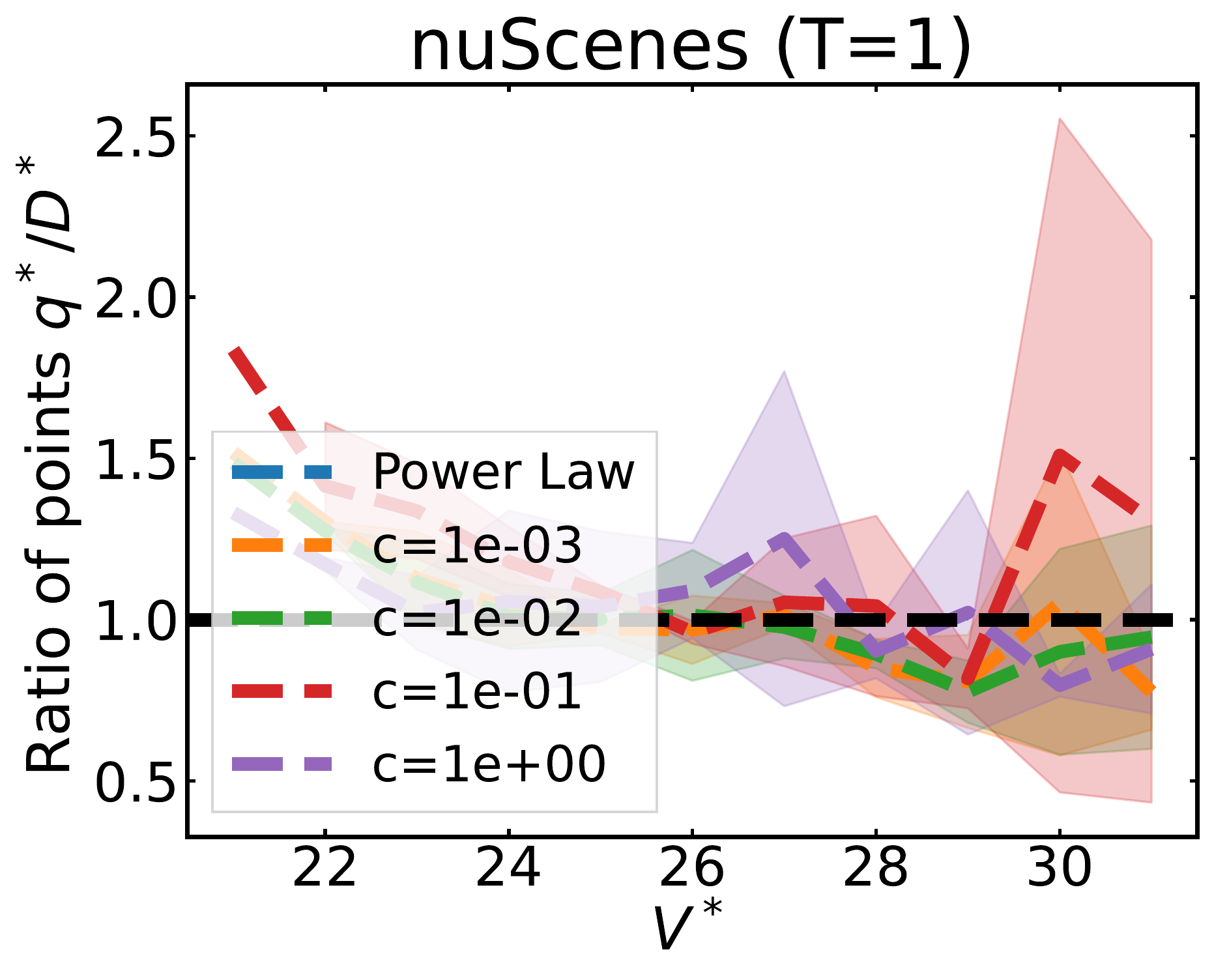}\end{minipage}
\begin{minipage}{0.3\linewidth}\includegraphics[width=1\textwidth]{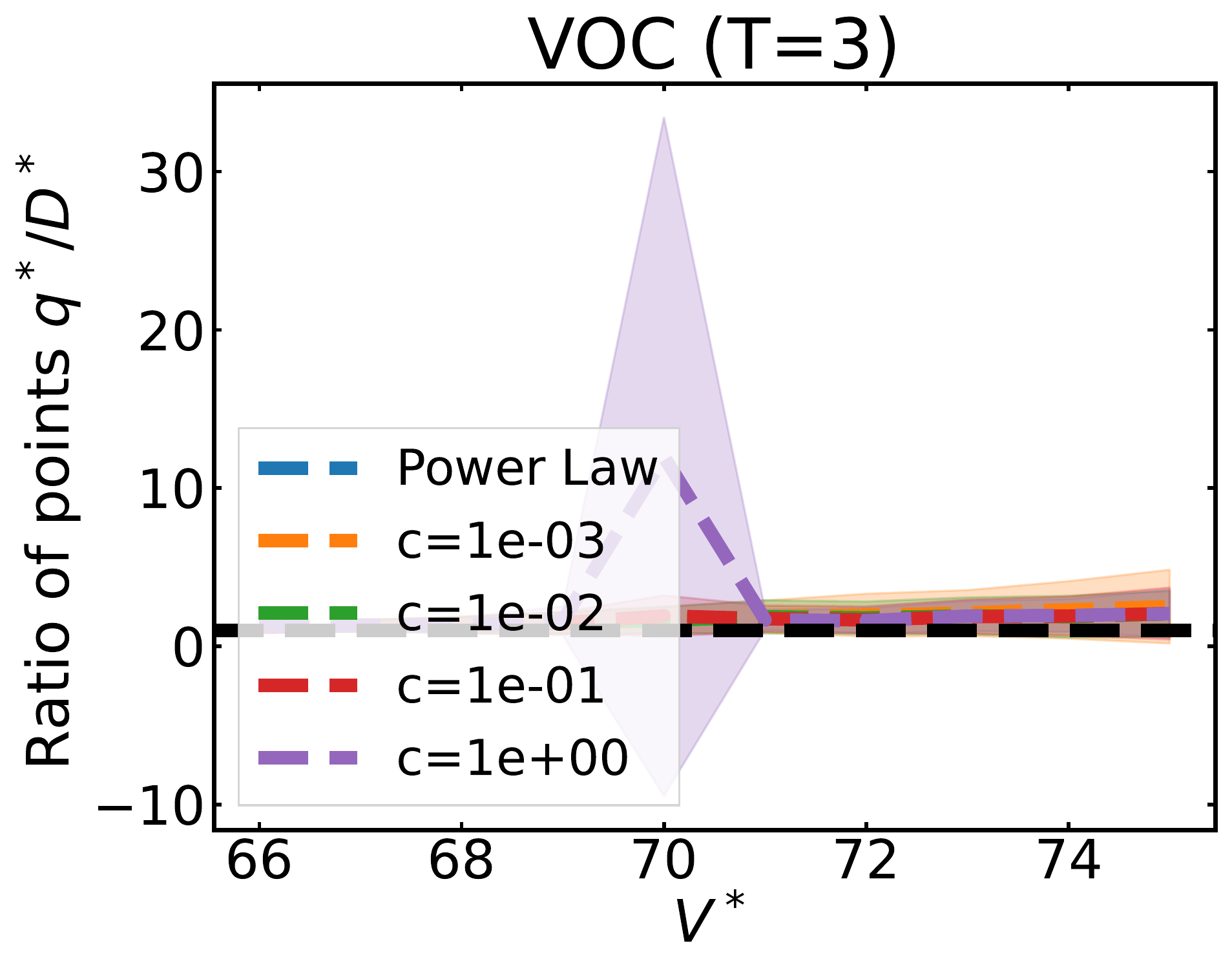}\end{minipage}
\begin{minipage}{0.3\linewidth}\includegraphics[width=1\textwidth]{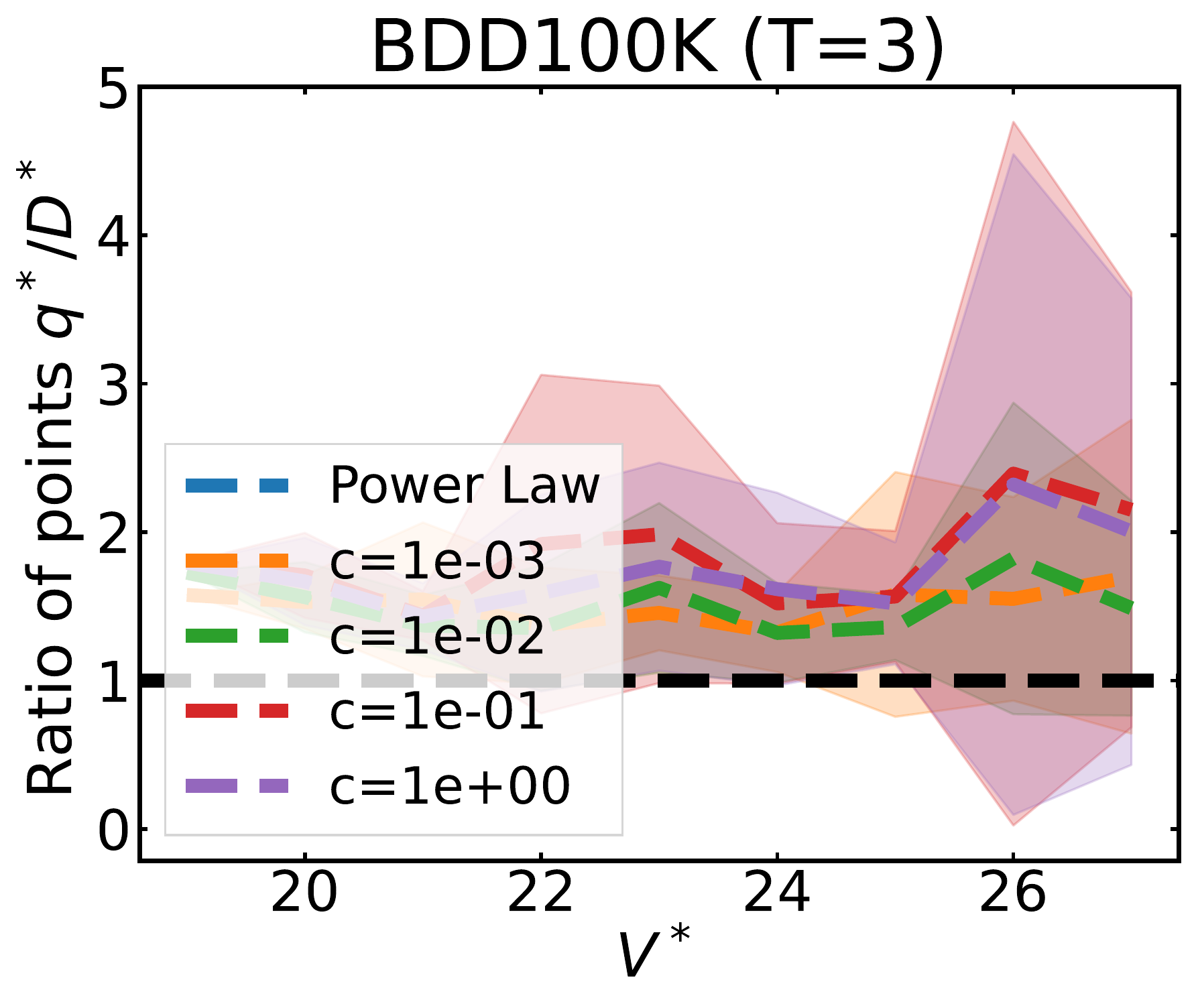}\end{minipage}
\begin{minipage}{0.3\linewidth}\includegraphics[width=1\textwidth]{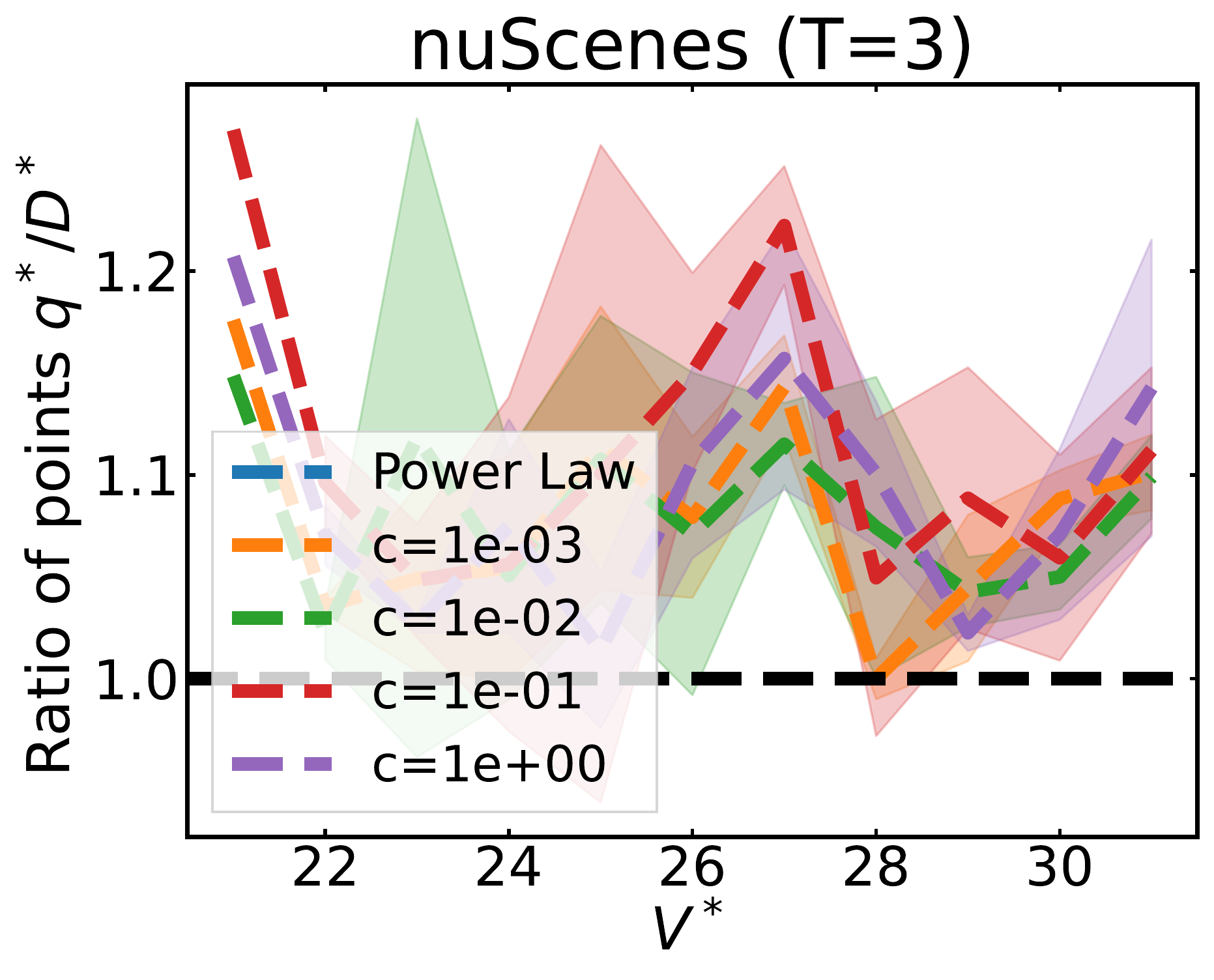}\end{minipage}
\vspace{-3mm}
\caption{\label{fig:app_sweep_cost}
Mean $\pm$ standard deviation of the ratio of data collected $q_T^* / D^*$ for different $V^*$ when we sweep the cost parameter from $0.001$ to $1$ and fix $P=10^7$. 
We show $T=1, 3$ and refer to the main paper for $T=5$.
The dashed black line corresponds to collecting exactly the minimum data requirement.
}
\end{center}
\vspace{-4mm}
\end{figure*}

\begin{figure*}[!t]
\begin{center}
\begin{minipage}{0.3\linewidth}\includegraphics[width=1\textwidth]{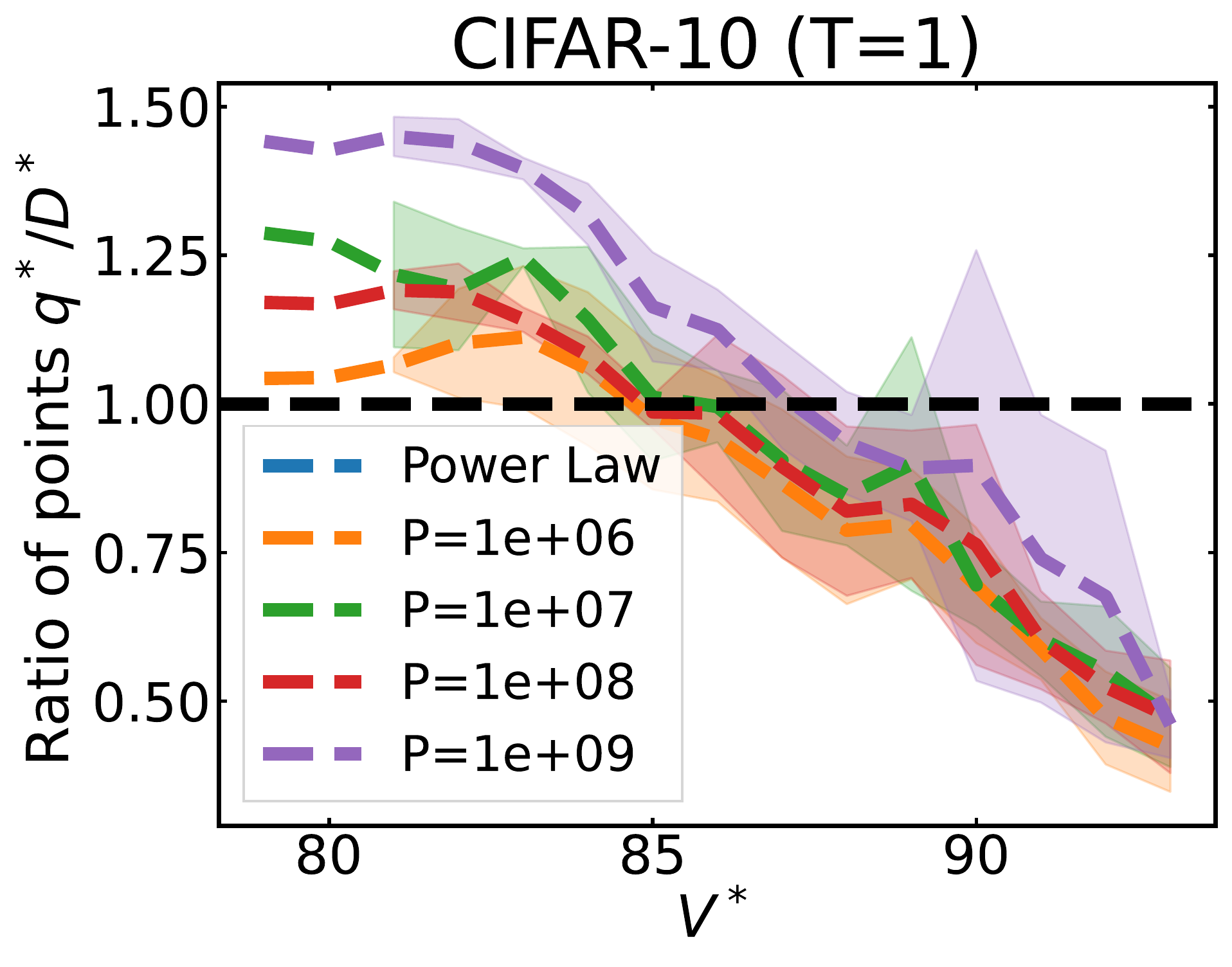}\end{minipage}
\begin{minipage}{0.3\linewidth}\includegraphics[width=1\textwidth]{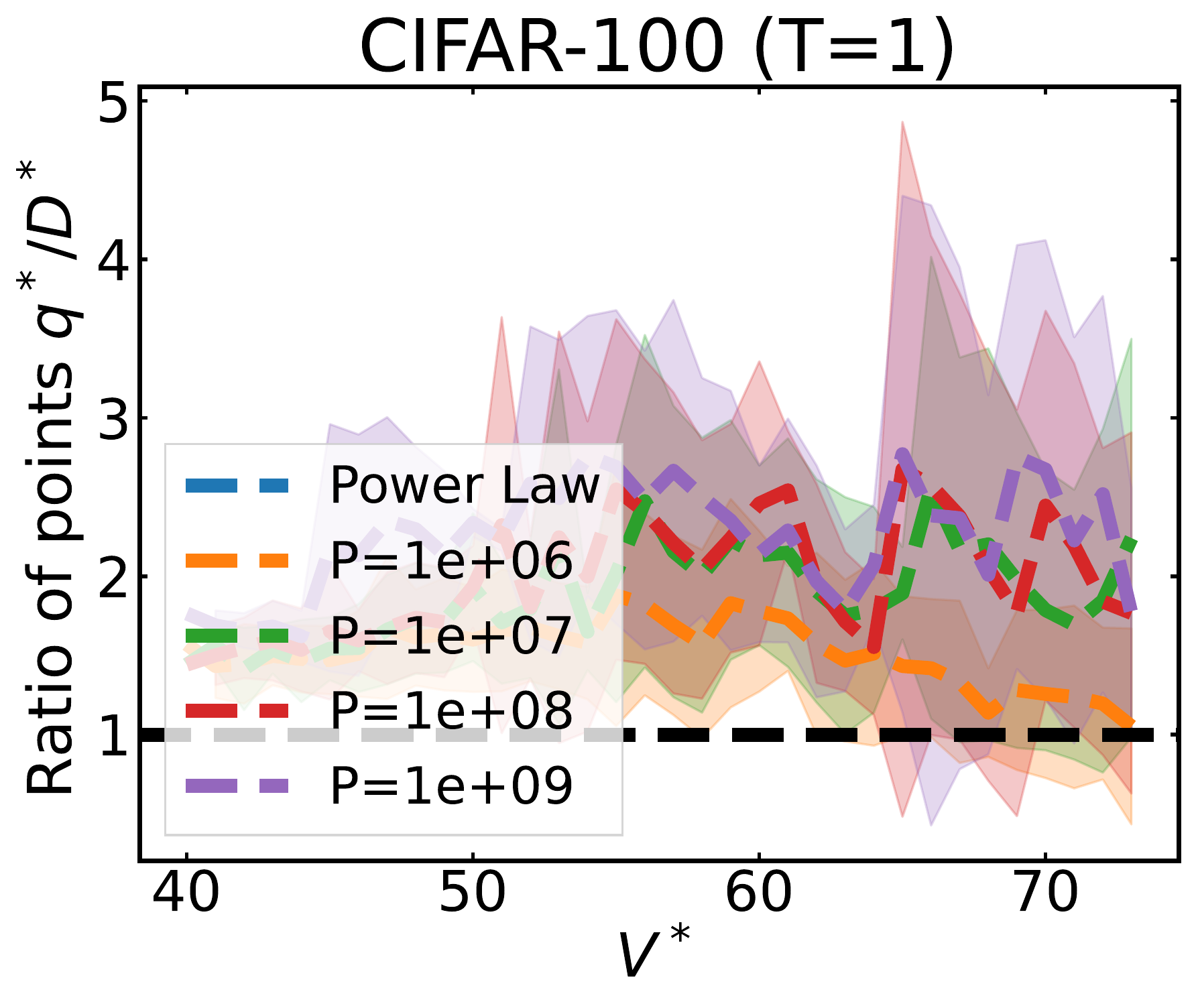}\end{minipage}
\begin{minipage}{0.3\linewidth}\includegraphics[width=1\textwidth]{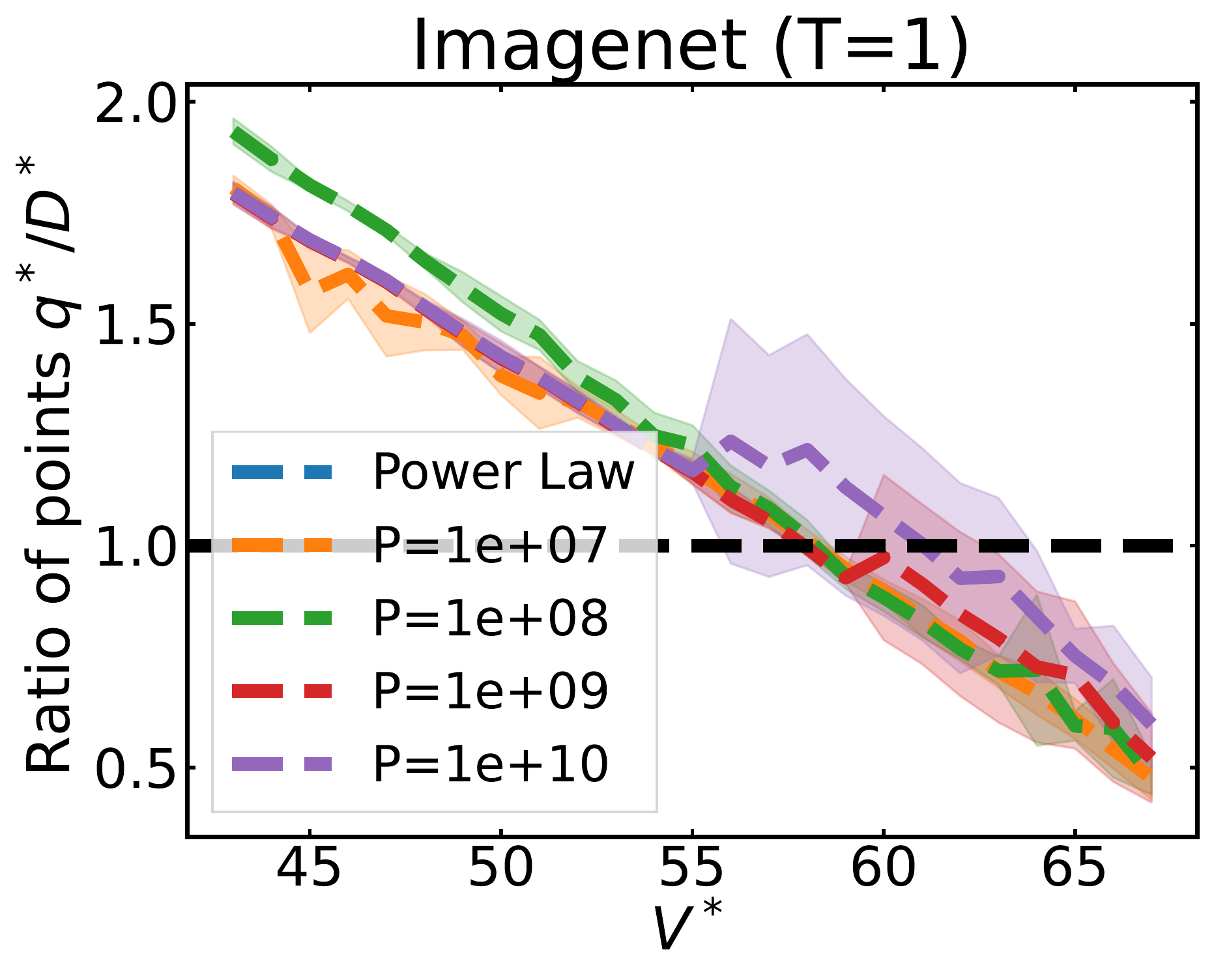}\end{minipage}
\begin{minipage}{0.3\linewidth}\includegraphics[width=1\textwidth]{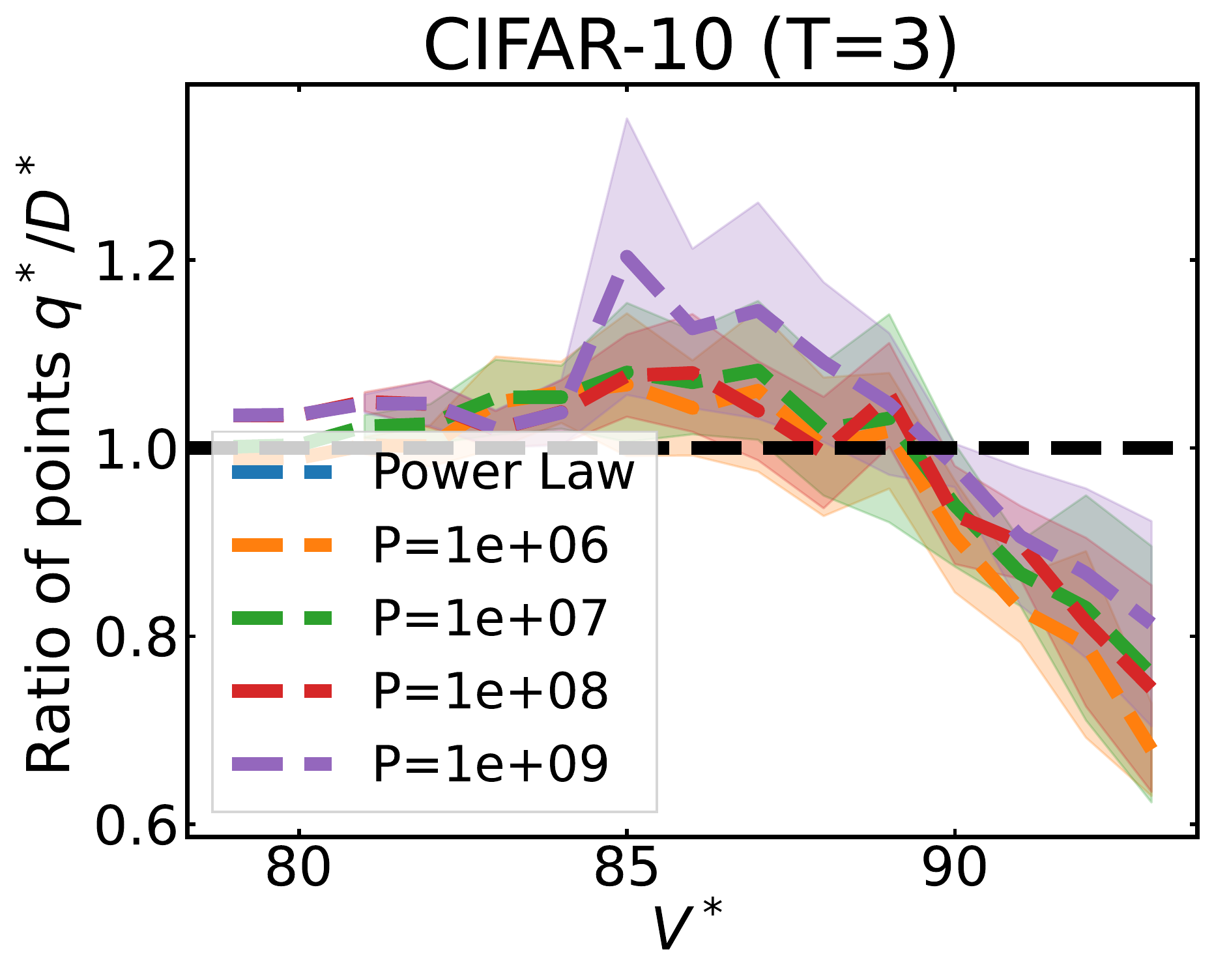}\end{minipage}
\begin{minipage}{0.3\linewidth}\includegraphics[width=1\textwidth]{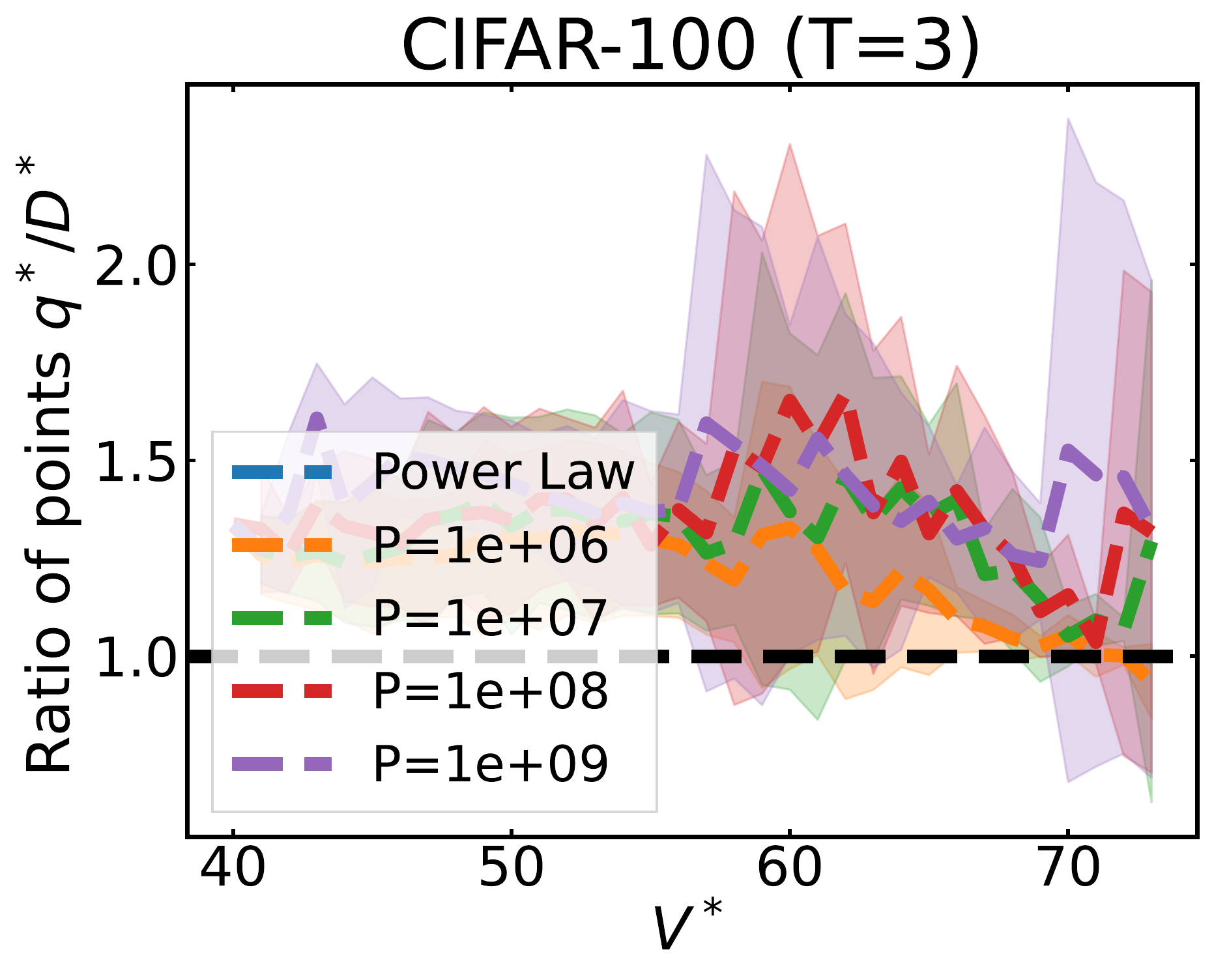}\end{minipage}
\begin{minipage}{0.3\linewidth}\includegraphics[width=1\textwidth]{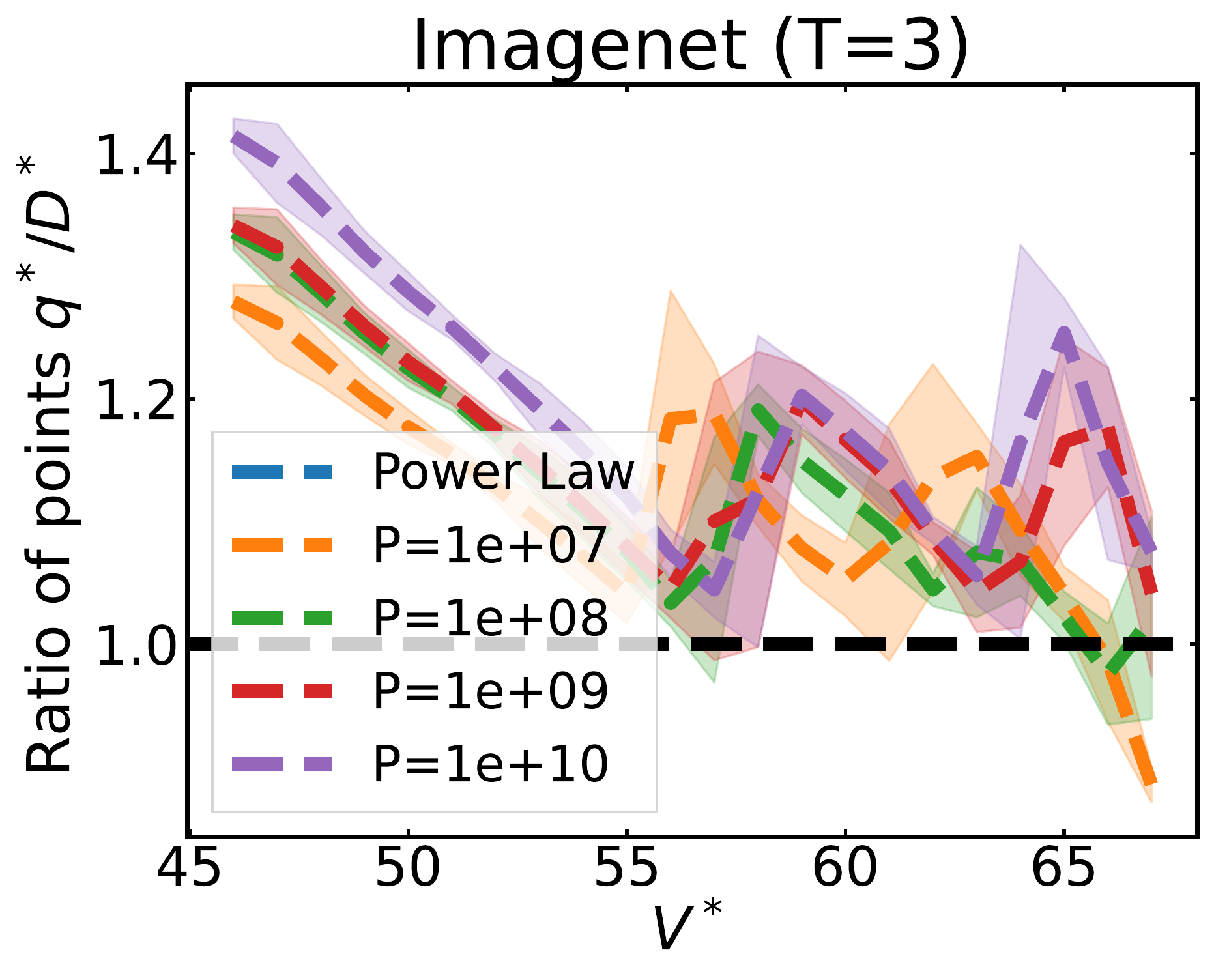}\end{minipage}
\begin{minipage}{0.3\linewidth}\includegraphics[width=1\textwidth]{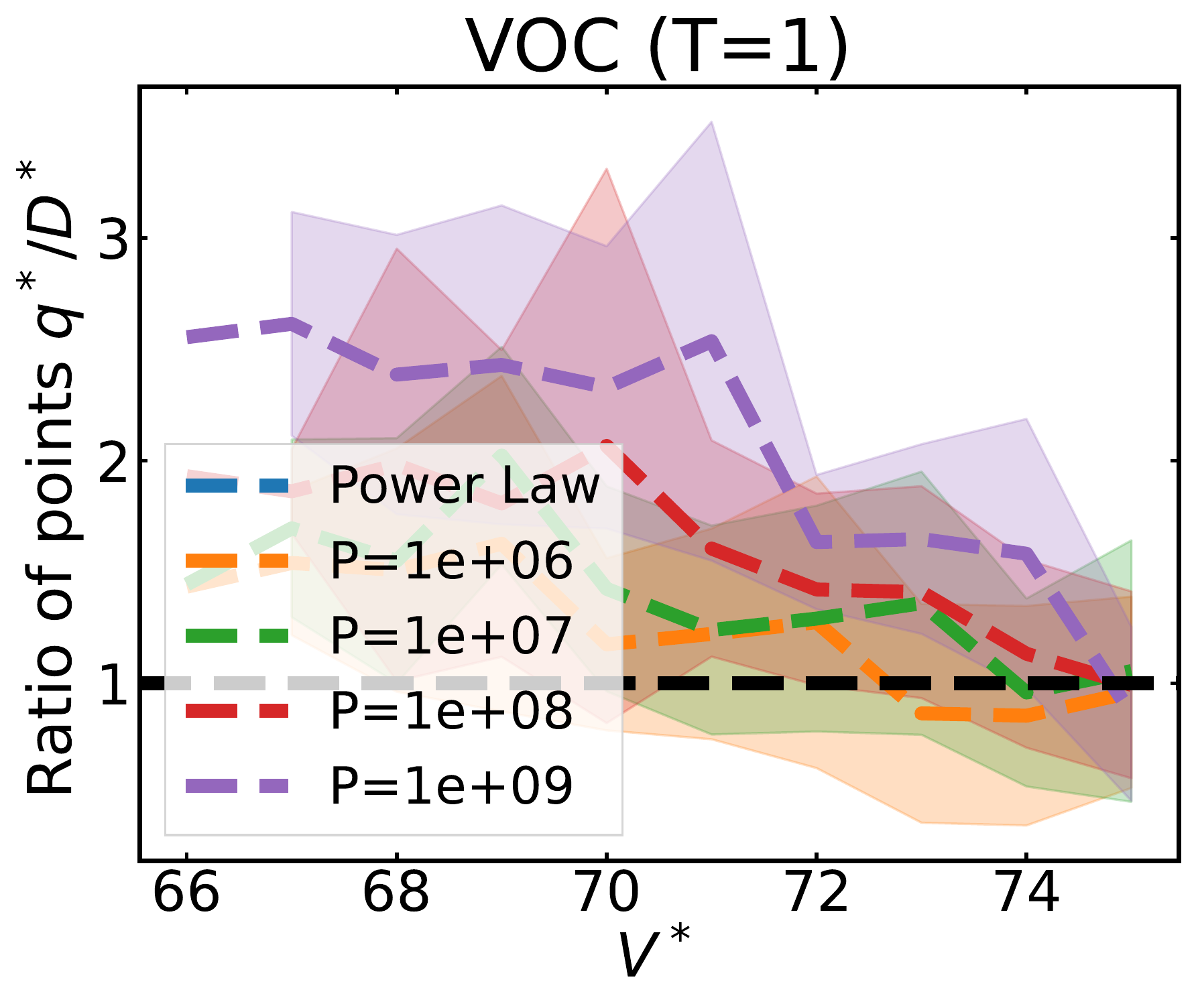}\end{minipage}
\begin{minipage}{0.3\linewidth}\includegraphics[width=1\textwidth]{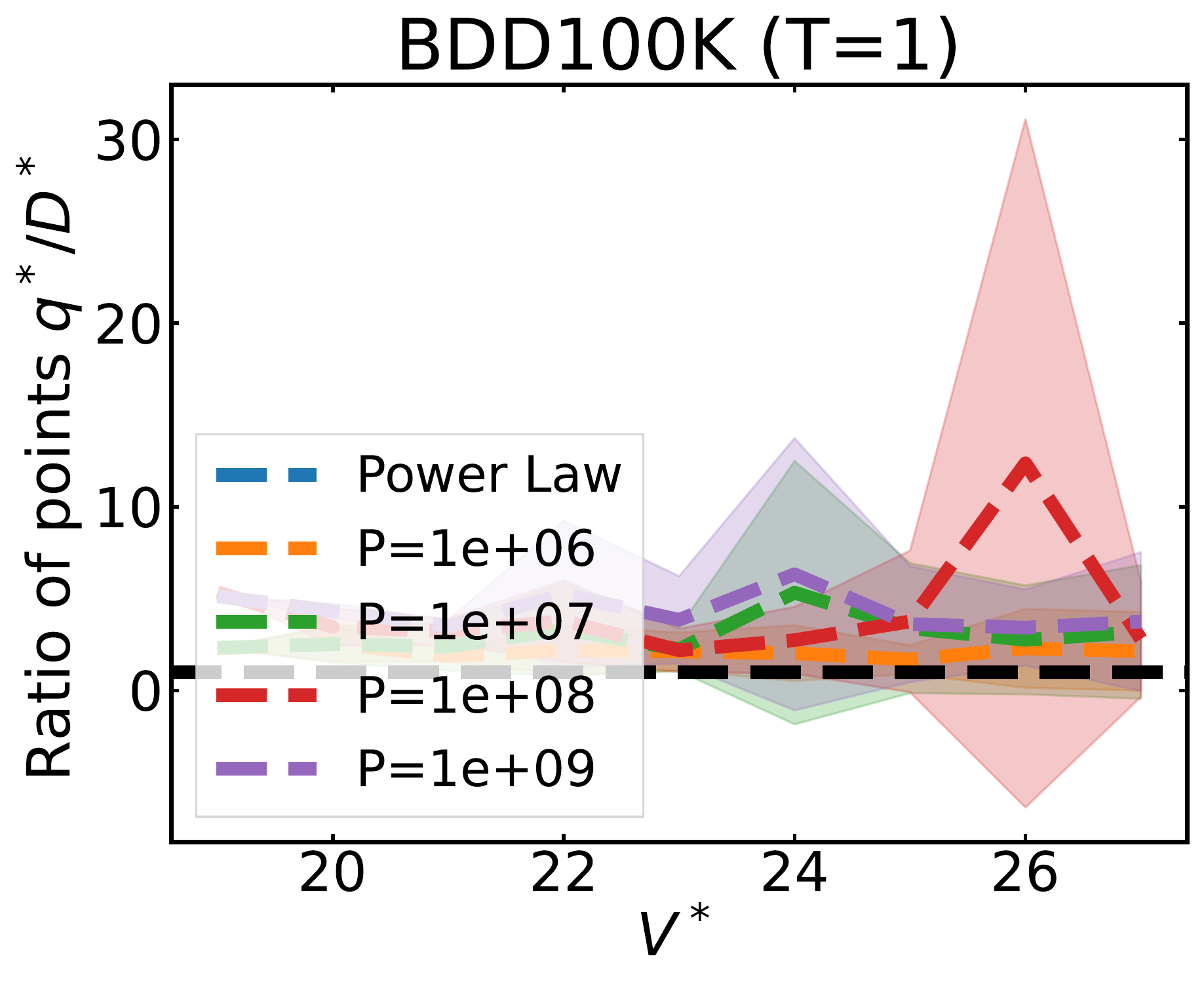}\end{minipage}
\begin{minipage}{0.3\linewidth}\includegraphics[width=1\textwidth]{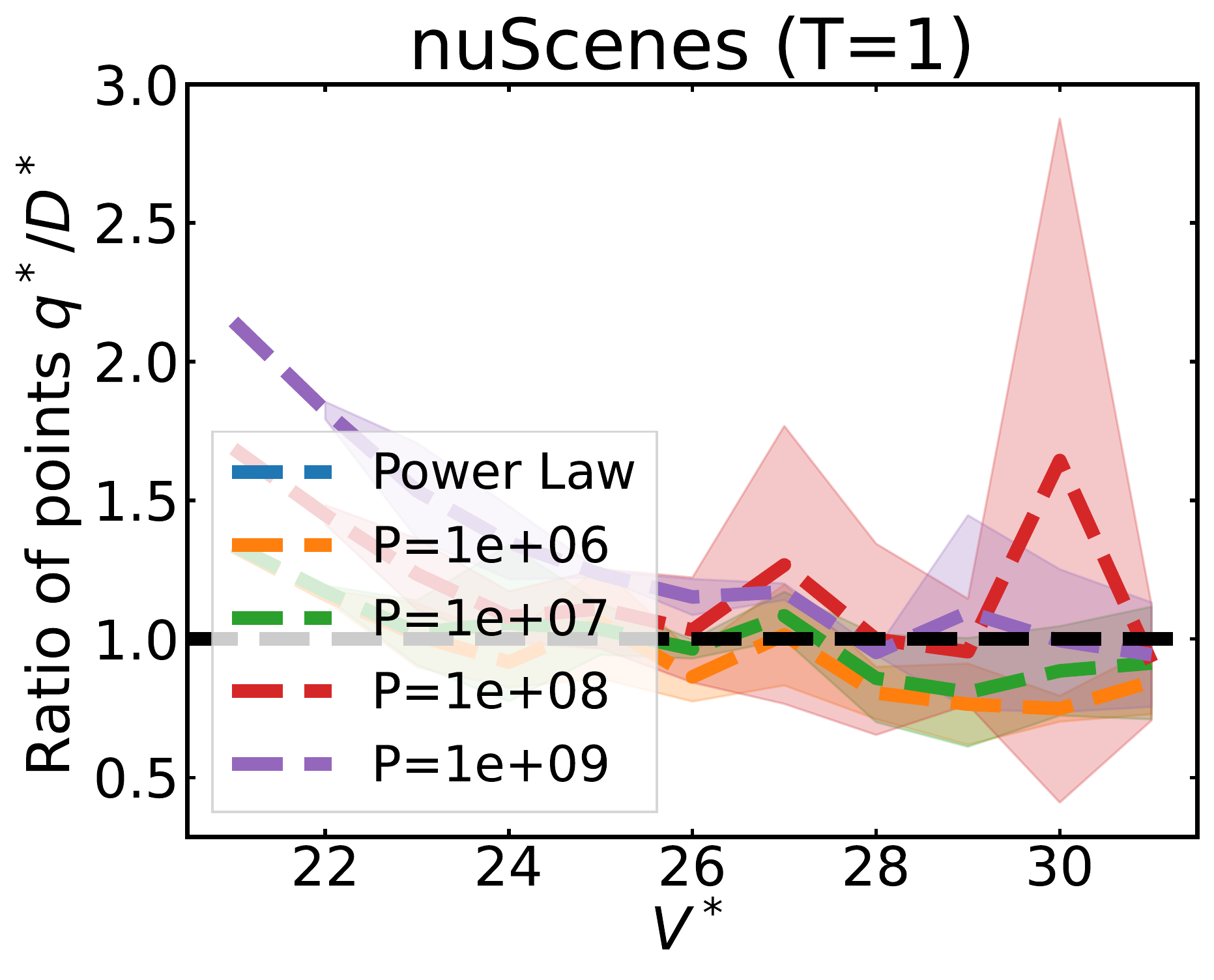}\end{minipage}
\begin{minipage}{0.3\linewidth}\includegraphics[width=1\textwidth]{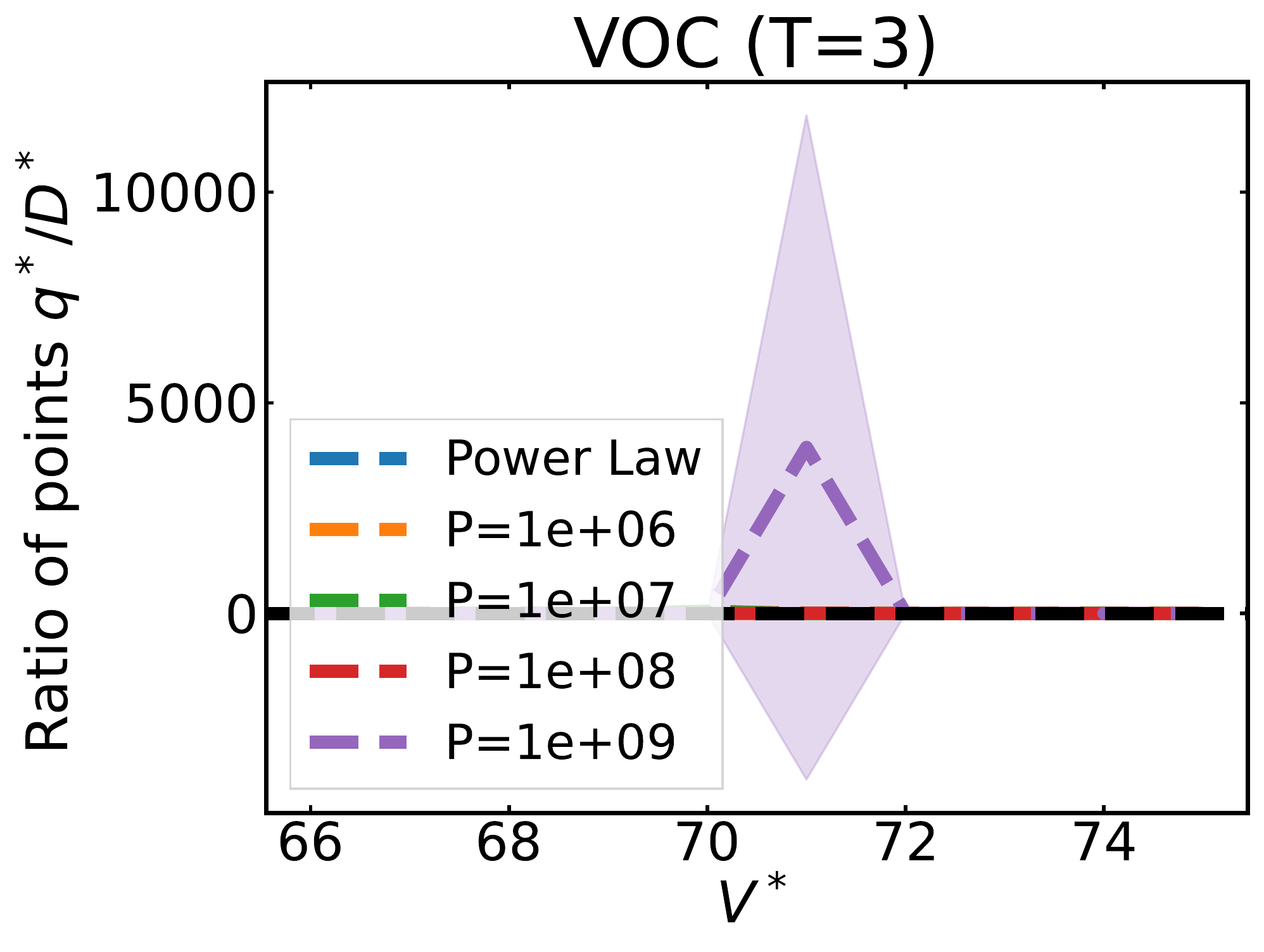}\end{minipage}
\begin{minipage}{0.3\linewidth}\includegraphics[width=1\textwidth]{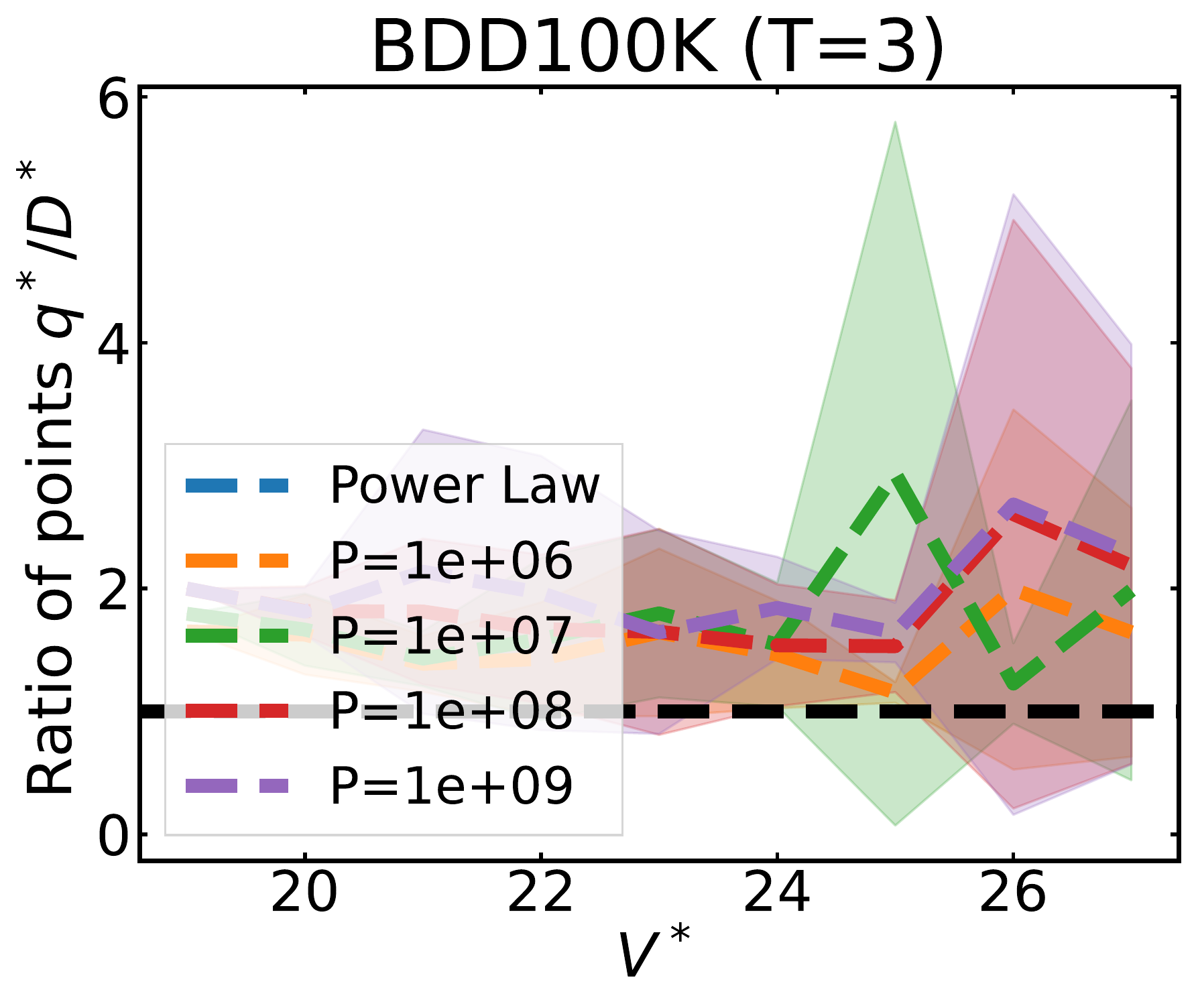}\end{minipage}
\begin{minipage}{0.3\linewidth}\includegraphics[width=1\textwidth]{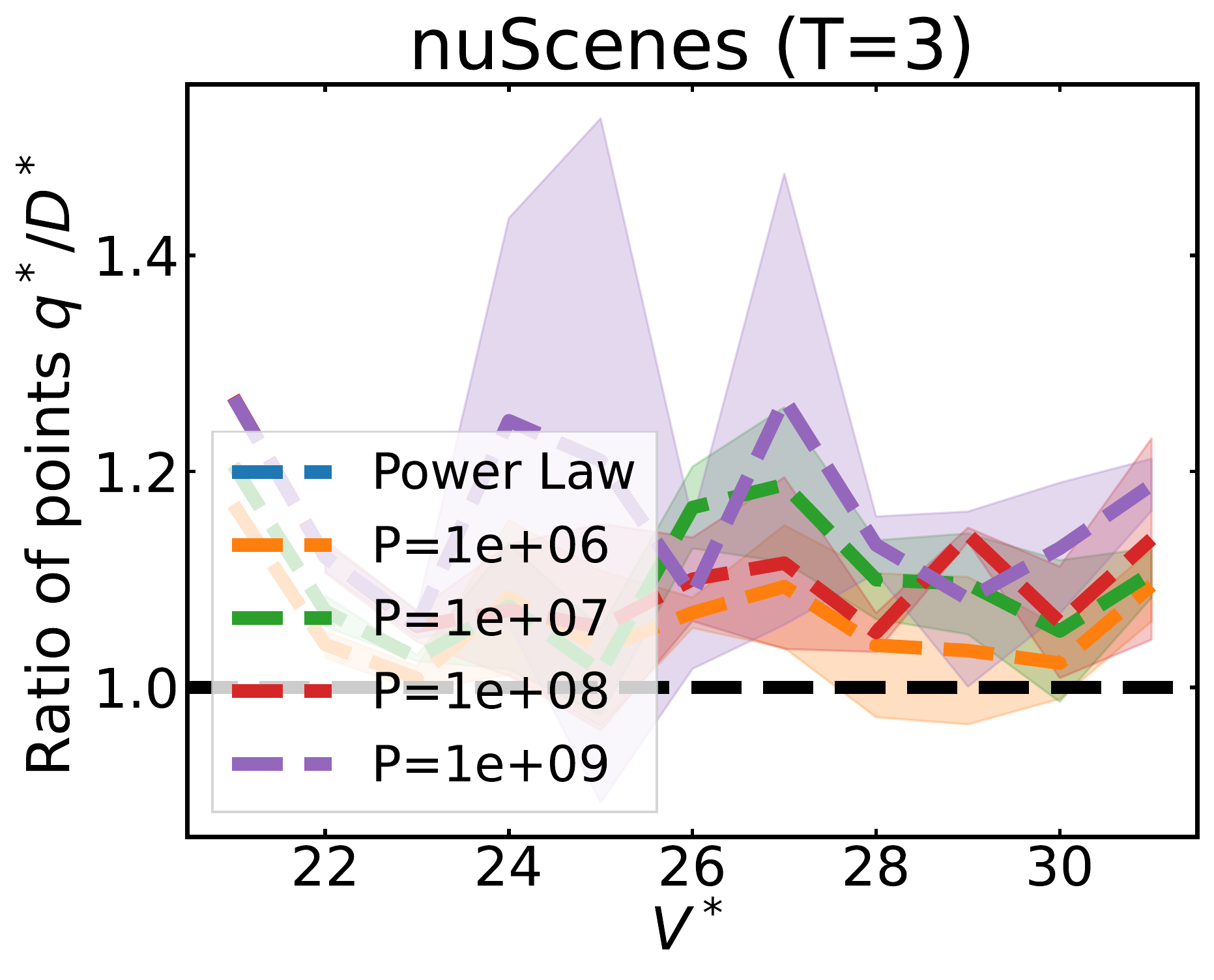}\end{minipage}
\vspace{-3mm}
\caption{\label{fig:app_sweep_penalty}
Mean $\pm$ std of the ratio of data collected $q_T^* / D^*$ for different $V^*$ when we sweep the penalty parameter from $10^6$ to $10^9$ and fix $c=1$. 
We show $T=1, 3$ and refer to the main paper for $T=5$.
The dashed black line corresponds to collecting exactly the minimum data requirement.
}
\end{center}
\vspace{-4mm}
\end{figure*}

Figure~\ref{fig:app_sweep_cost} expands the cost parameter sweep from Figure~\ref{fig:sweep_cost_and_penalty} (Top row) to the settings of $T=1, 3$. For nearly all settings, LOC remains stable to variations in the cost parameter. 
Nonetheless, careful parameter selection becomes important as $T$ decreases.
This is due to the fact that for low costs, the total amount of data collected increases as $T$ decreases (e.g., $c = 0.001$ for BDD100K).
Furthermore, Figure~\ref{fig:app_sweep_penalty} expands the penalty parameter sweep from Figure~\ref{fig:sweep_cost_and_penalty} (Bottom row). 
Here, we observe similar properties to the cost parameter sweep.

Although LOC is relatively stable on all other data sets, our results demonstrate some extreme results for VOC, potentially due to noise in the simulation. For example in Figure~\ref{fig:app_sweep_penalty}, setting $P=10^9$, $V^*=71$, and $T=3$ led to collecting $10,000$ times the minimum data requirement. 
Such a situation is unrealistic in a production-level implementation, since in a real implementation, we could impose further constraints onto problem~\eqref{eq:expected_risk_prob}, such as upper bounds on the total amount of data permissible.

\begin{figure*}[!t]
\begin{center}
\begin{minipage}{0.24\linewidth}\includegraphics[width=1\textwidth]{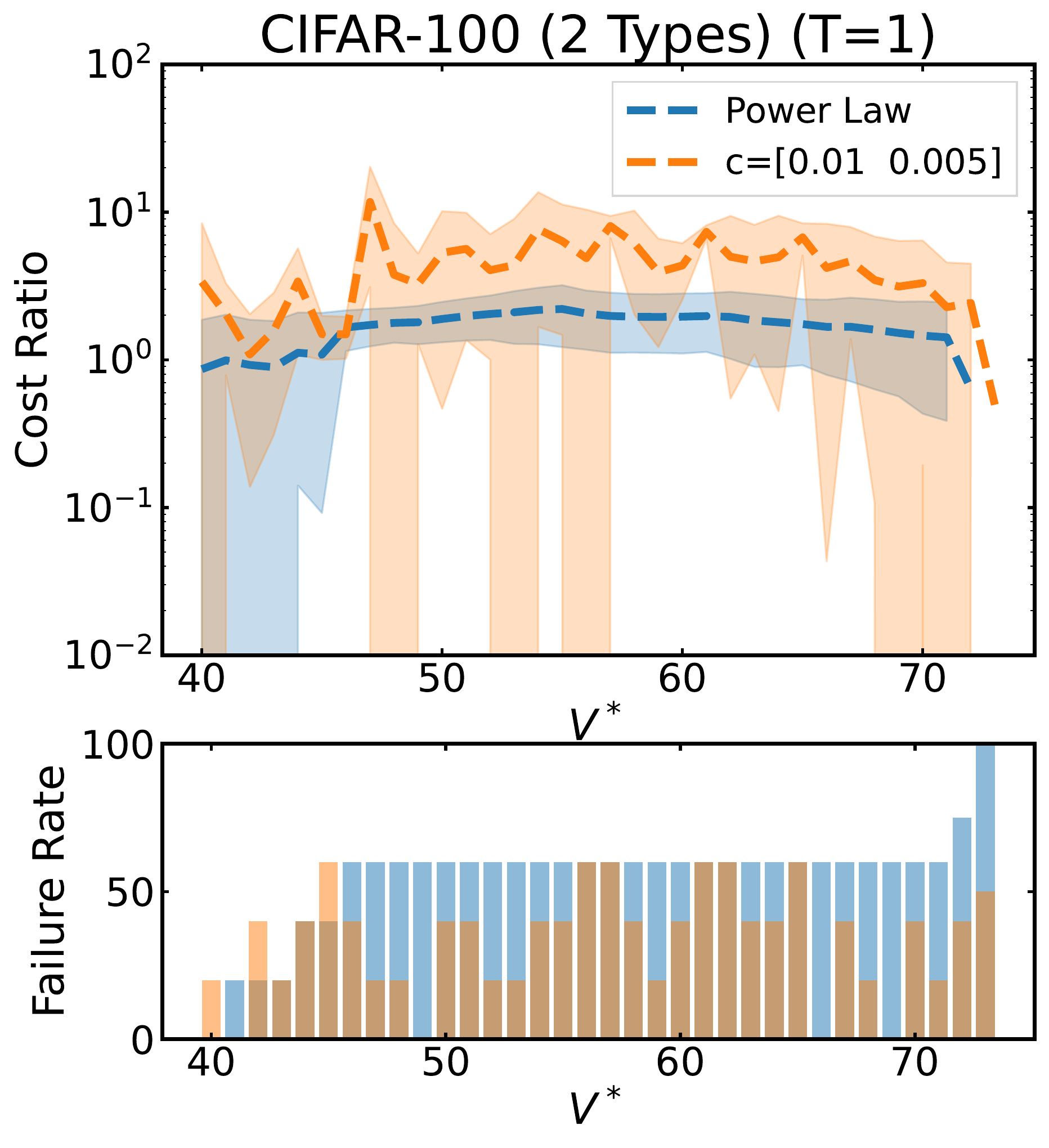} \end{minipage}
\begin{minipage}{0.24\linewidth}\includegraphics[width=1\textwidth]{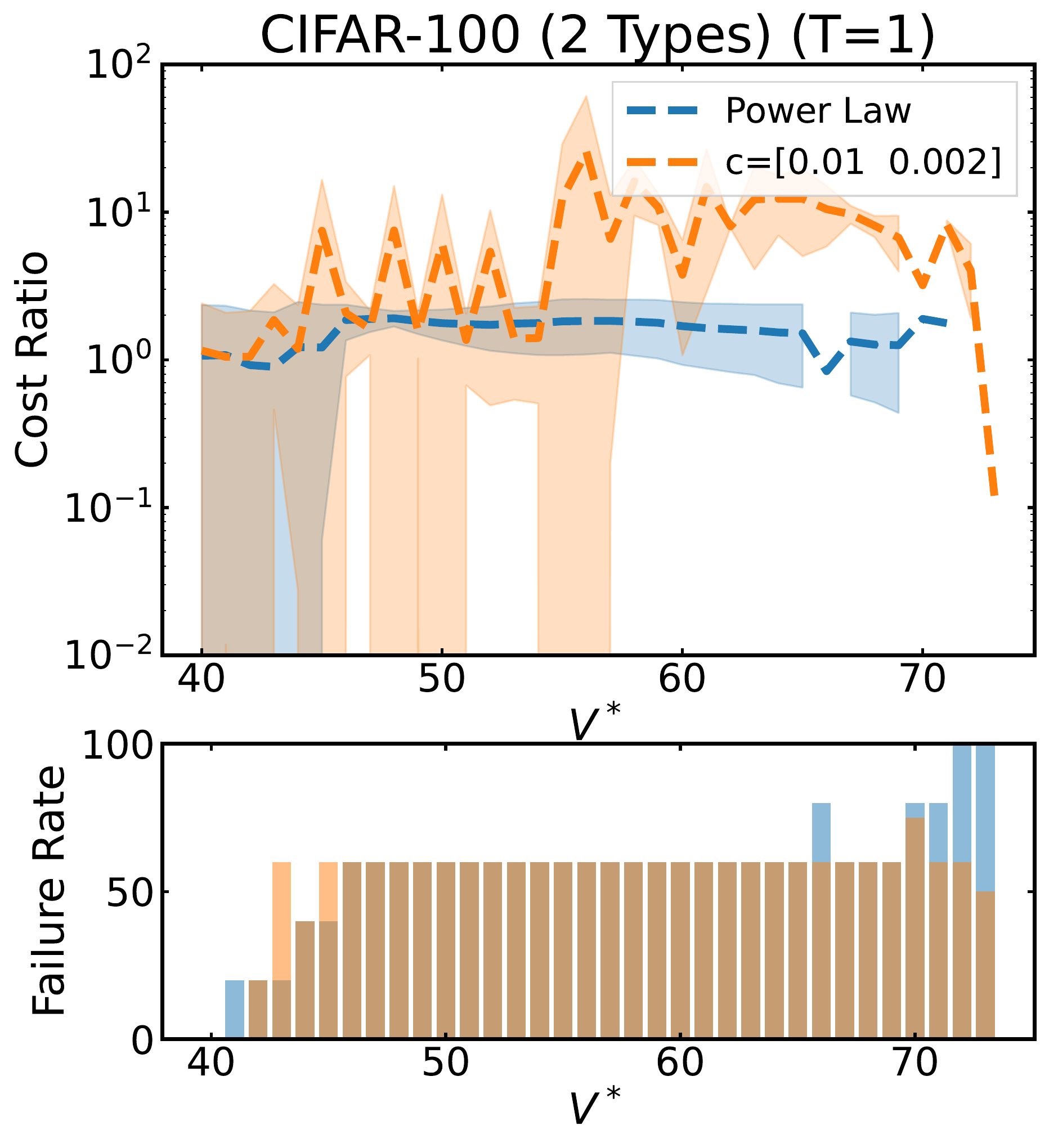} \end{minipage}
\begin{minipage}{0.24\linewidth}\includegraphics[width=1\textwidth]{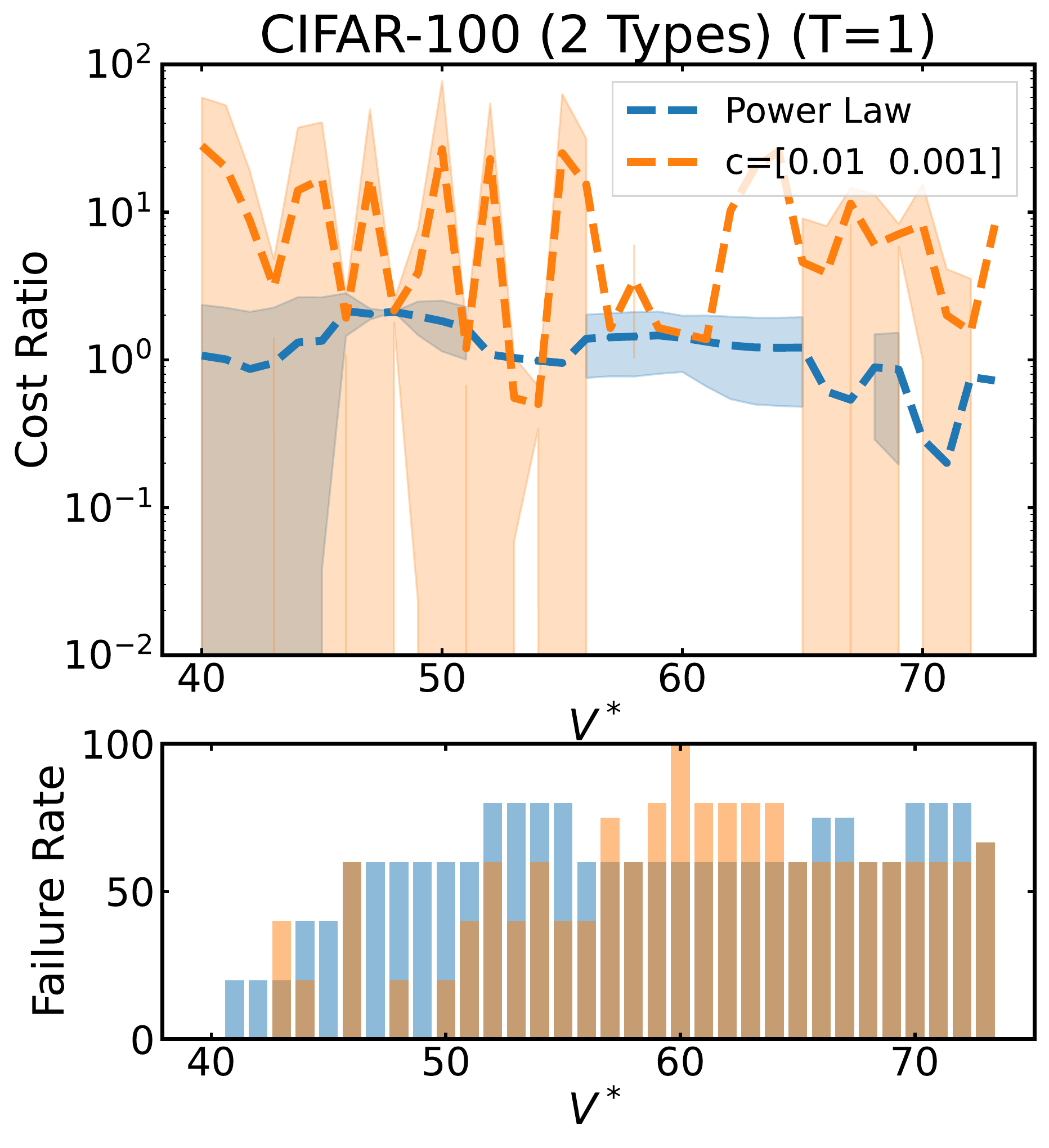} \end{minipage}
\begin{minipage}{0.24\linewidth}\includegraphics[width=1\textwidth]{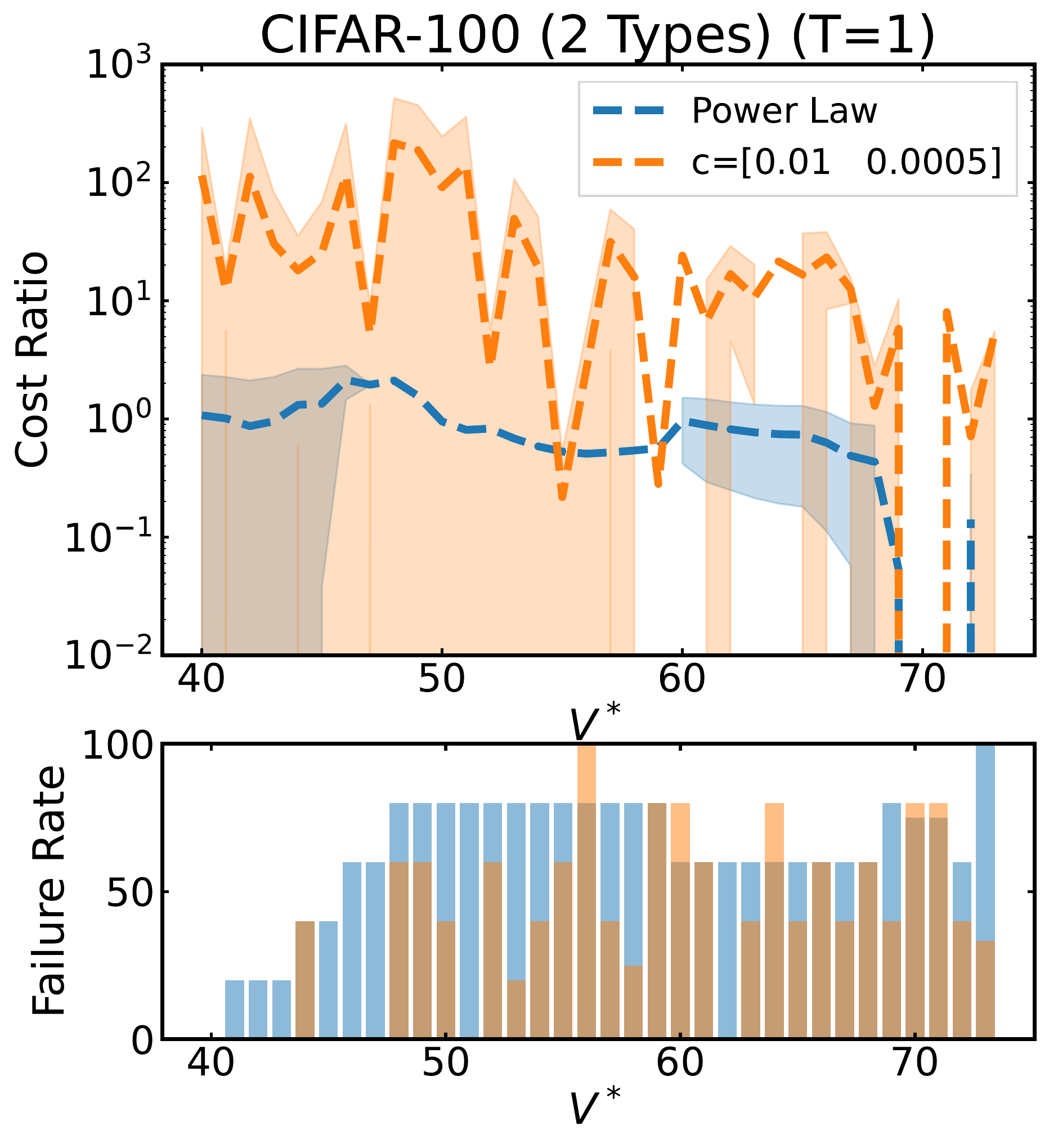} \end{minipage}
\begin{minipage}{0.24\linewidth}\includegraphics[width=1\textwidth]{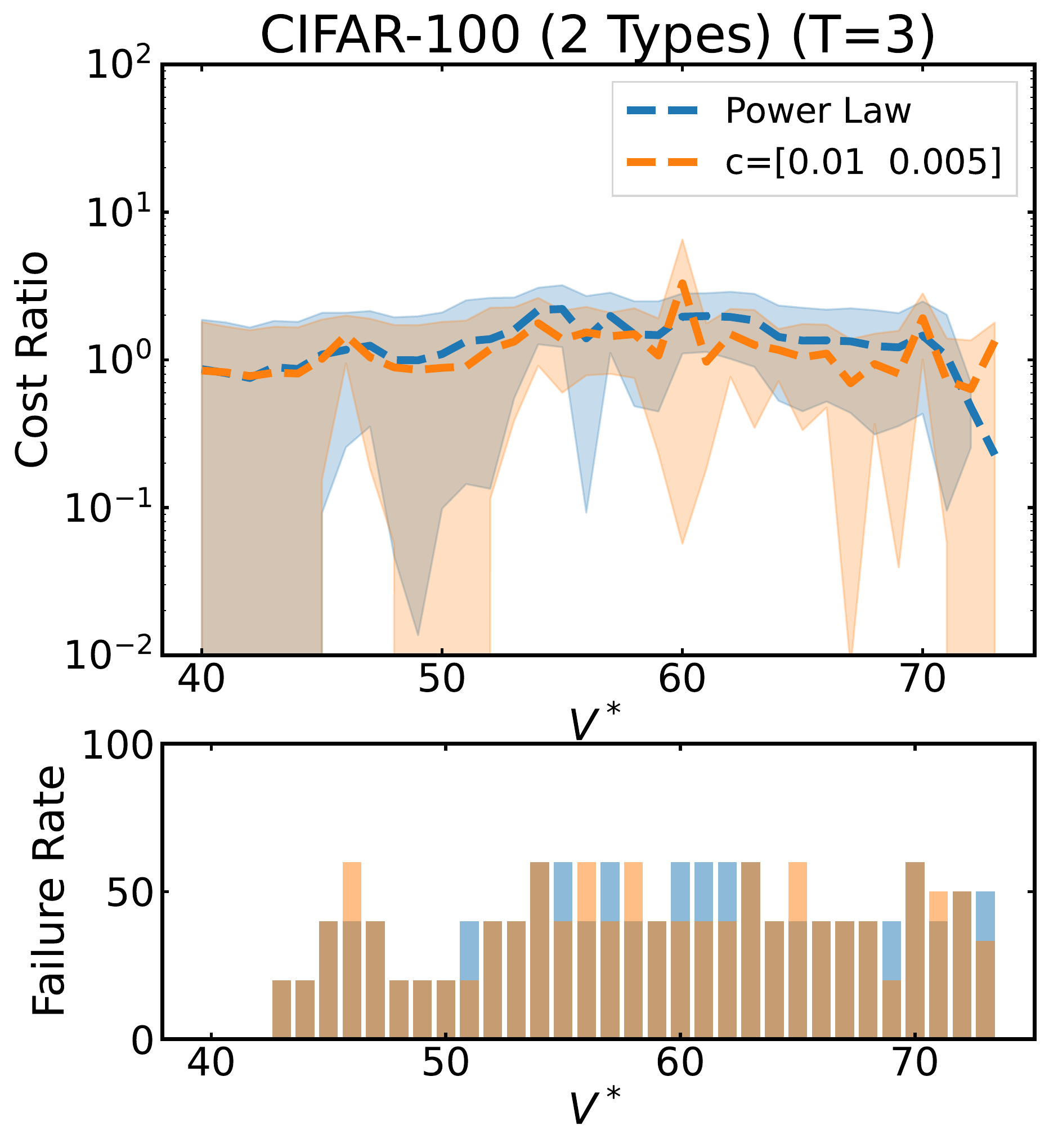} \end{minipage}
\begin{minipage}{0.24\linewidth}\includegraphics[width=1\textwidth]{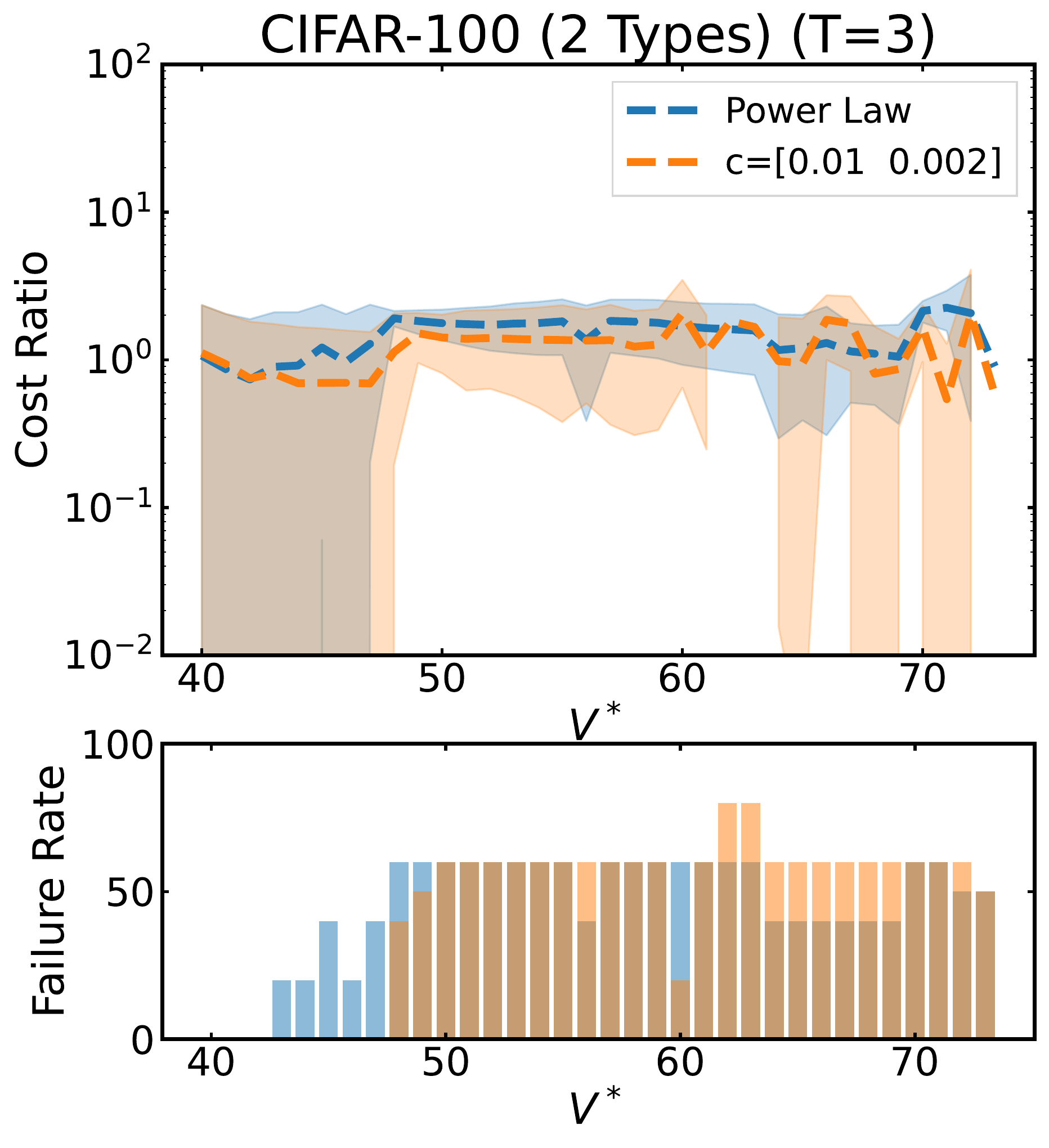} \end{minipage}
\begin{minipage}{0.24\linewidth}\includegraphics[width=1\textwidth]{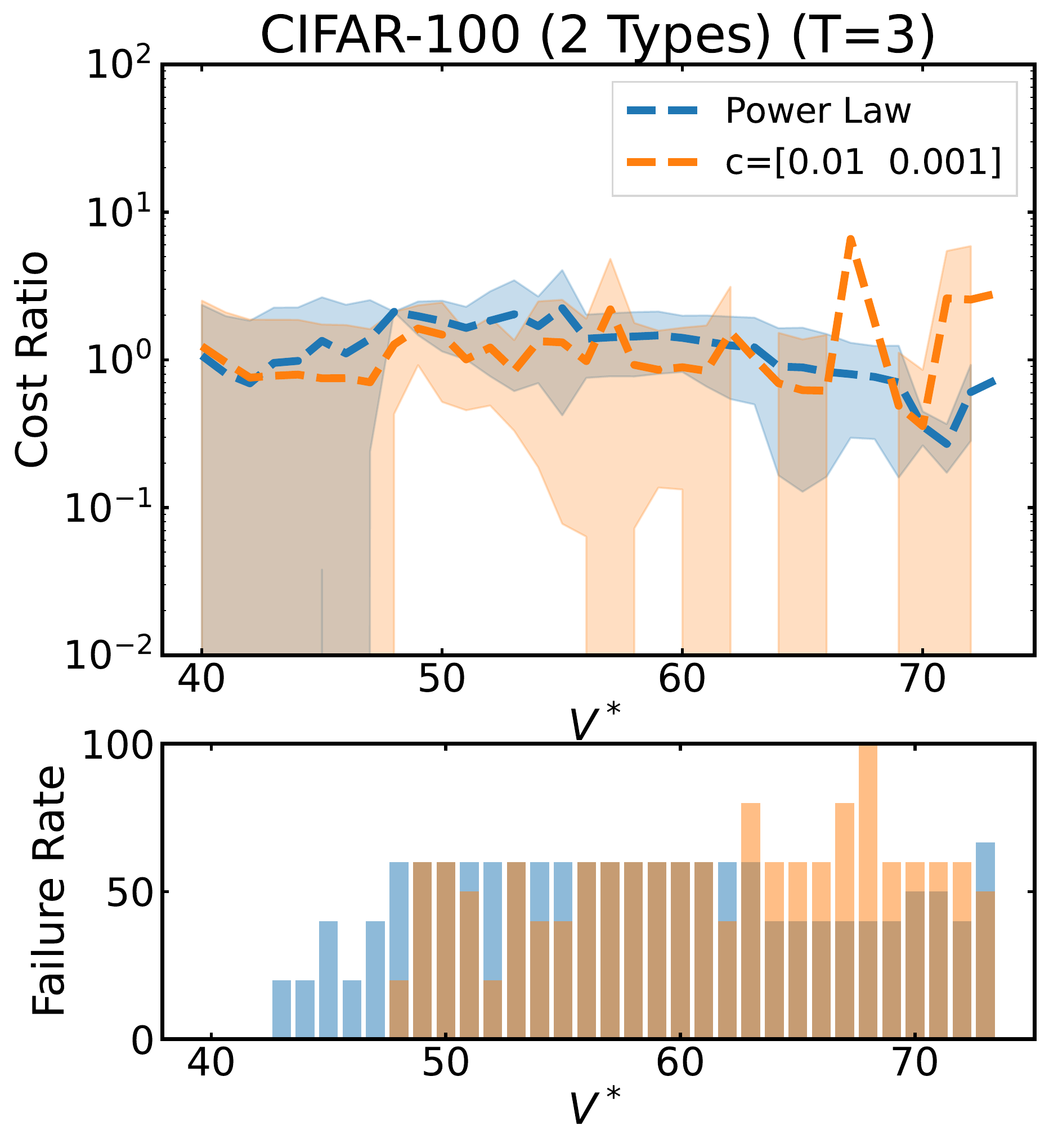} \end{minipage}
\begin{minipage}{0.24\linewidth}\includegraphics[width=1\textwidth]{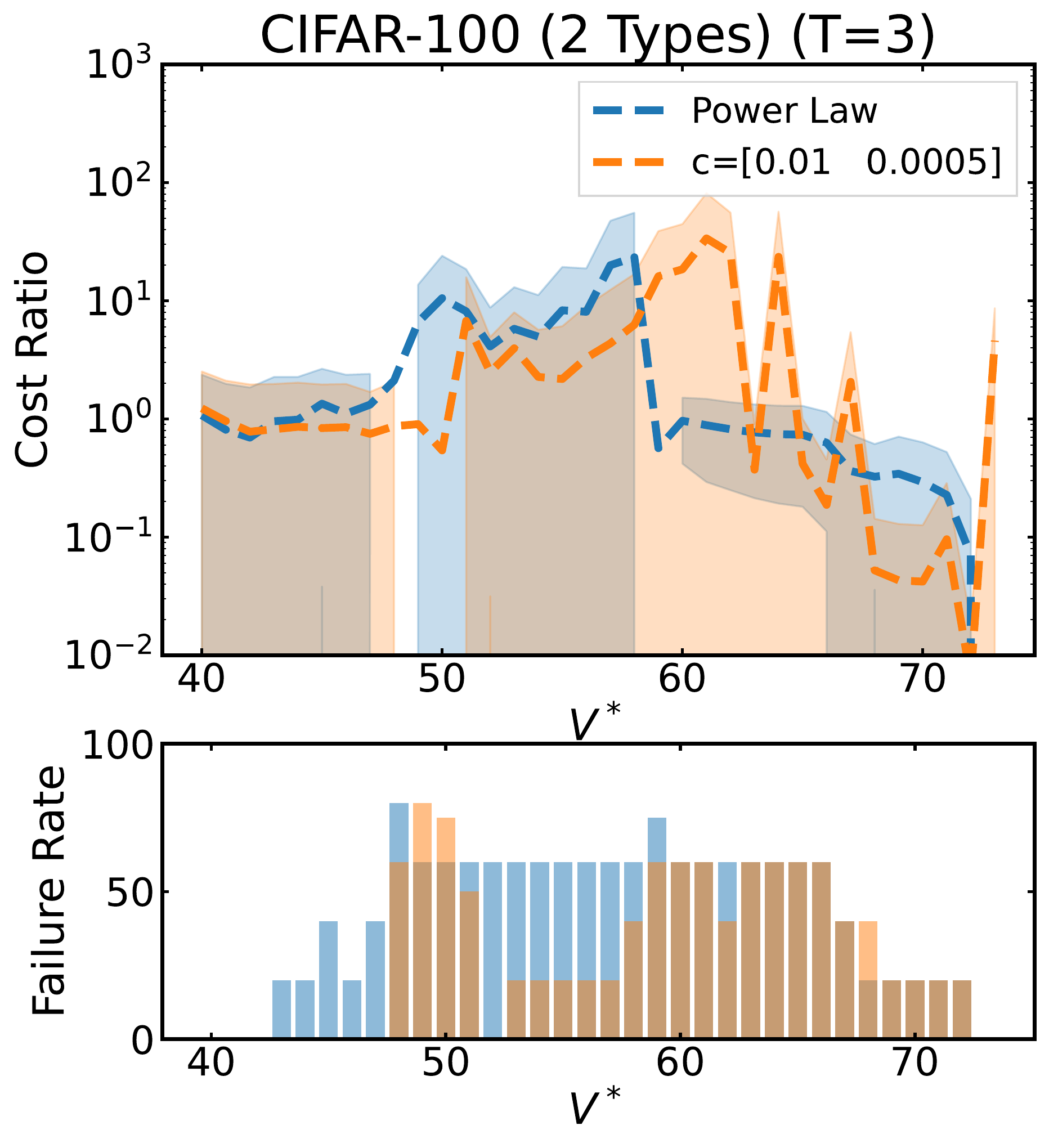} \end{minipage}
\vspace{-4mm}
\caption{\label{fig:app_head_to_head_two_d_cifar}
For experiments on CIFAR-100 with two data types, mean $\pm$ standard deviation over 5 seeds of the cost ratio $\bc^\tpose (\bq^*_T - \bq_0) / \bc^\tpose (\bD^* - \bq_0) - 1$ and failure rate for different $V$ after removing $99$-th percentile outliers. 
We fix $c_0=1$ and $P=10^{13}$.
The rows correspond to $T=1, 3$ (see the main paper for $T=5$) and the columns correspond to $c_1 = c_0/2, c_0/5, c_0/10, c_0/20$. 
}
\end{center}
\vspace{-4mm}
\end{figure*}

\begin{figure*}[!t]
\begin{center}
\begin{minipage}{0.24\linewidth}\includegraphics[width=1\textwidth]{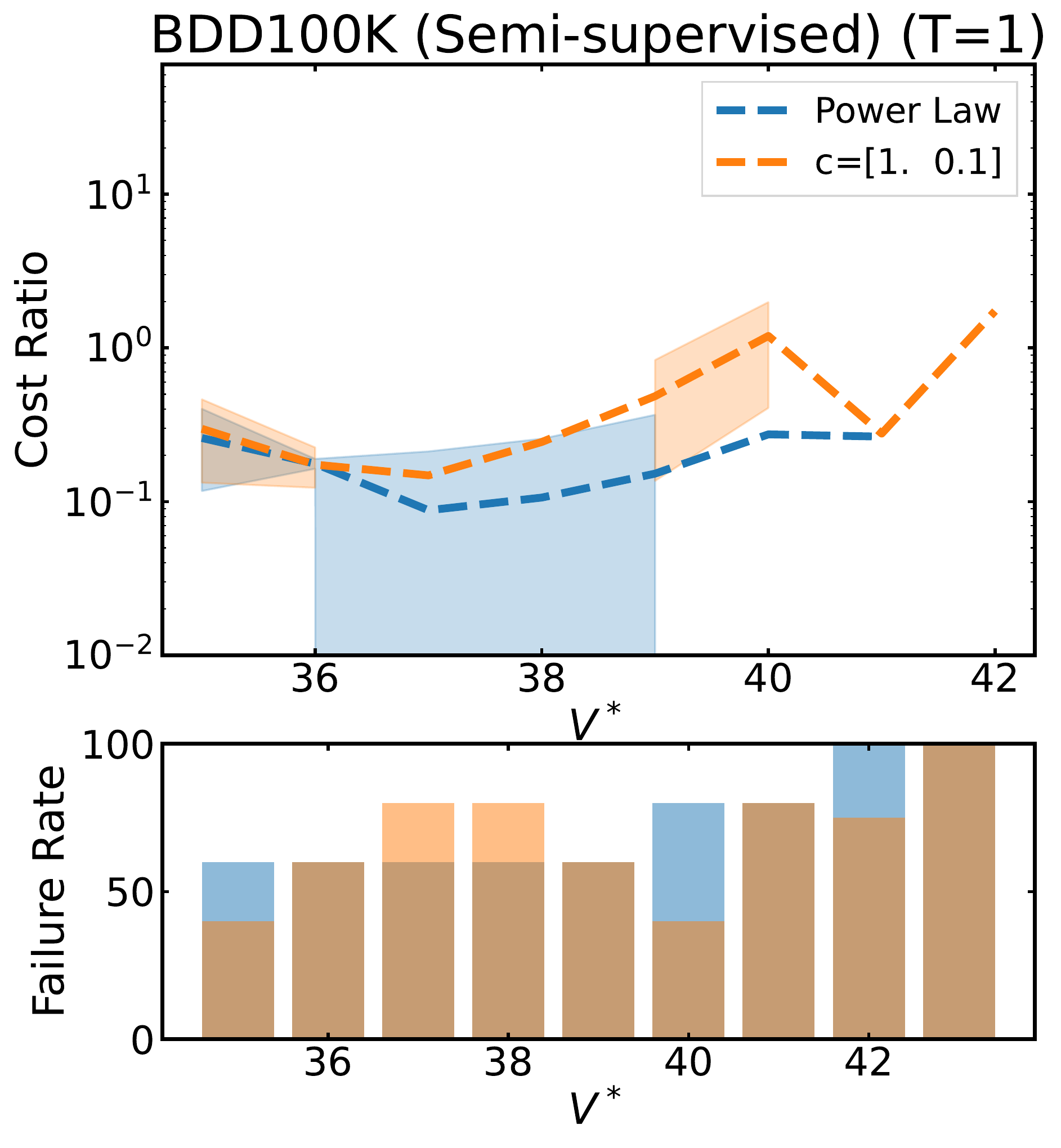} \end{minipage}
\begin{minipage}{0.24\linewidth}\includegraphics[width=1\textwidth]{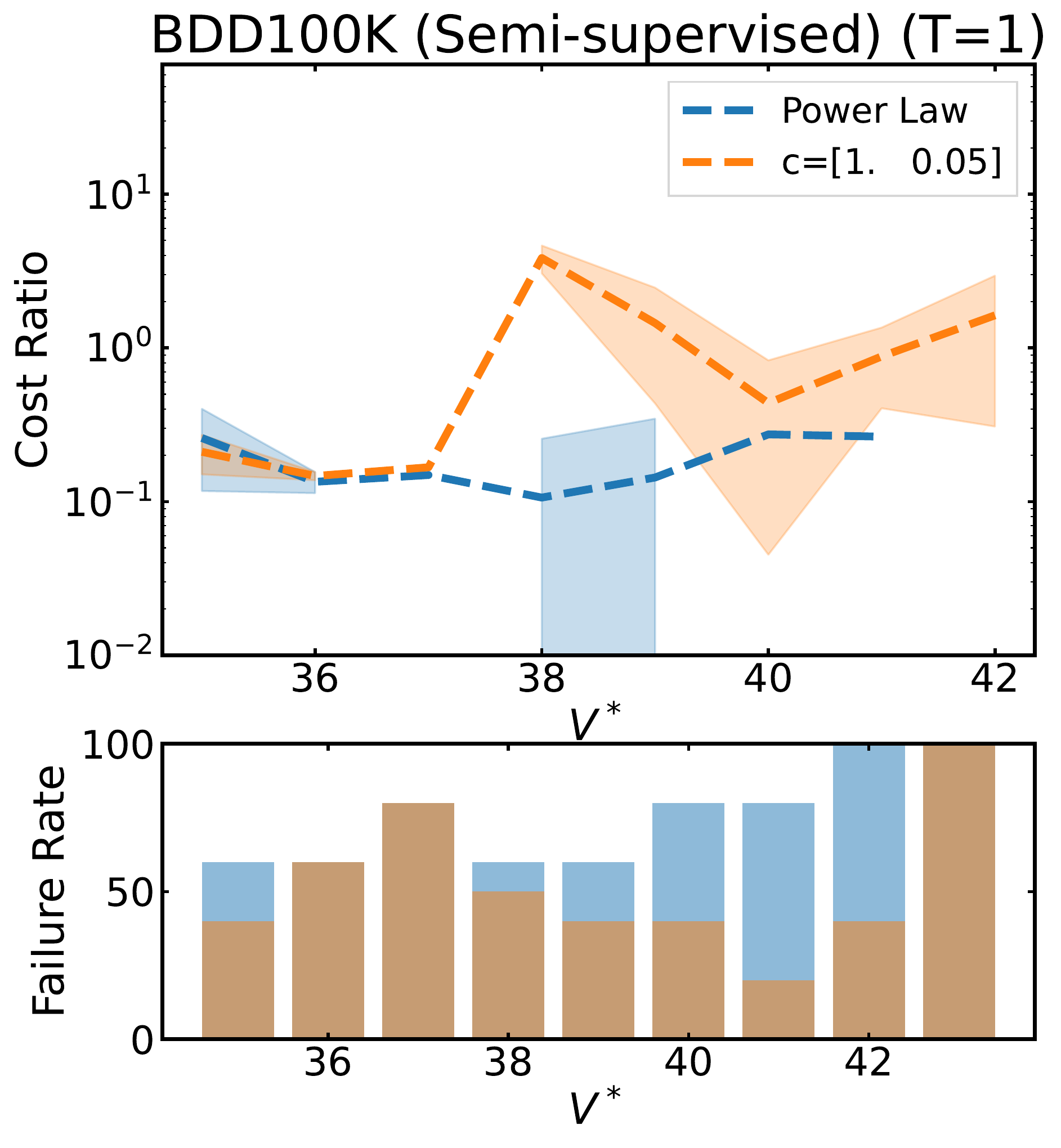} \end{minipage}
\begin{minipage}{0.24\linewidth}\includegraphics[width=1\textwidth]{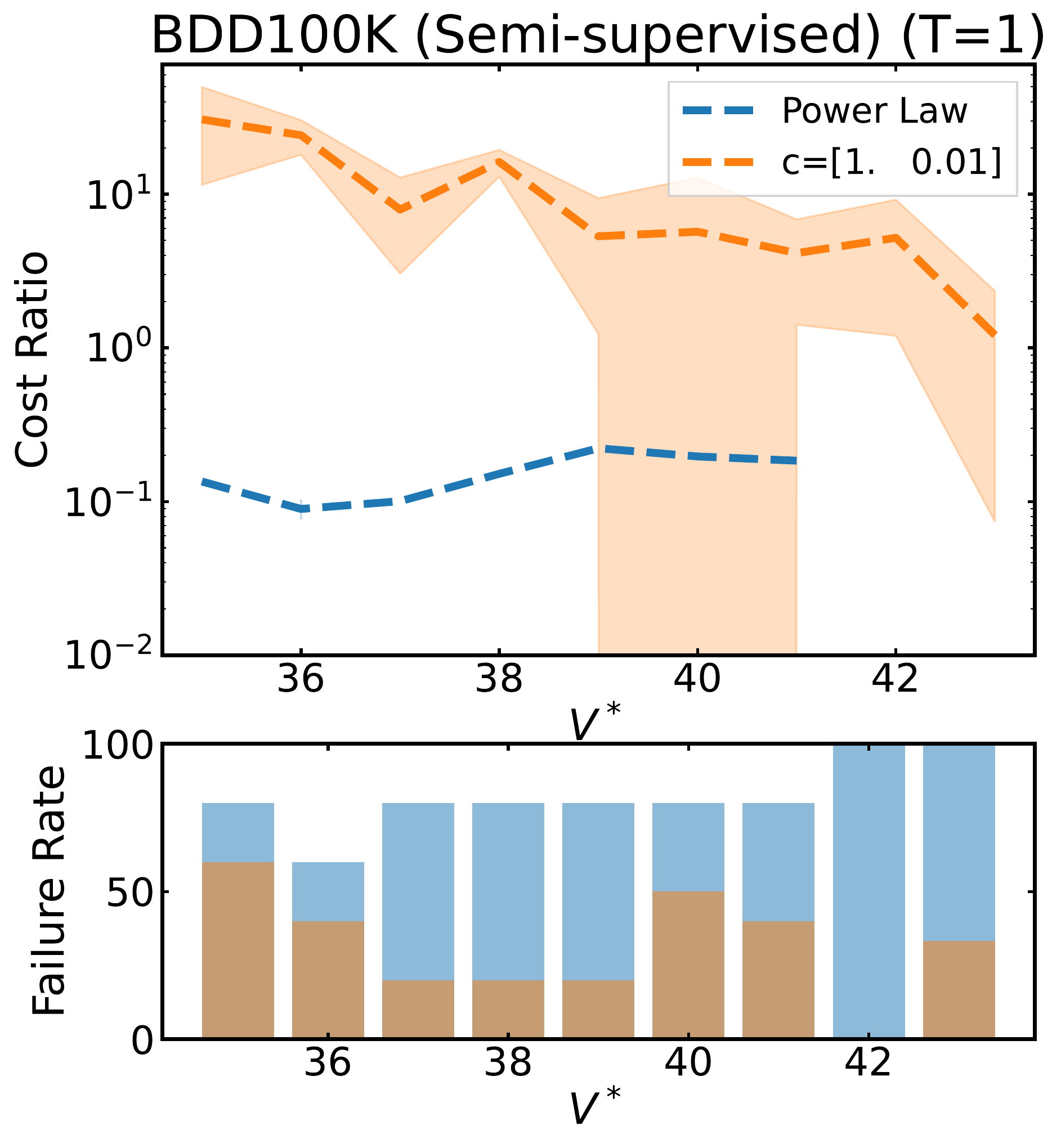} \end{minipage}
\begin{minipage}{0.24\linewidth}\includegraphics[width=1\textwidth]{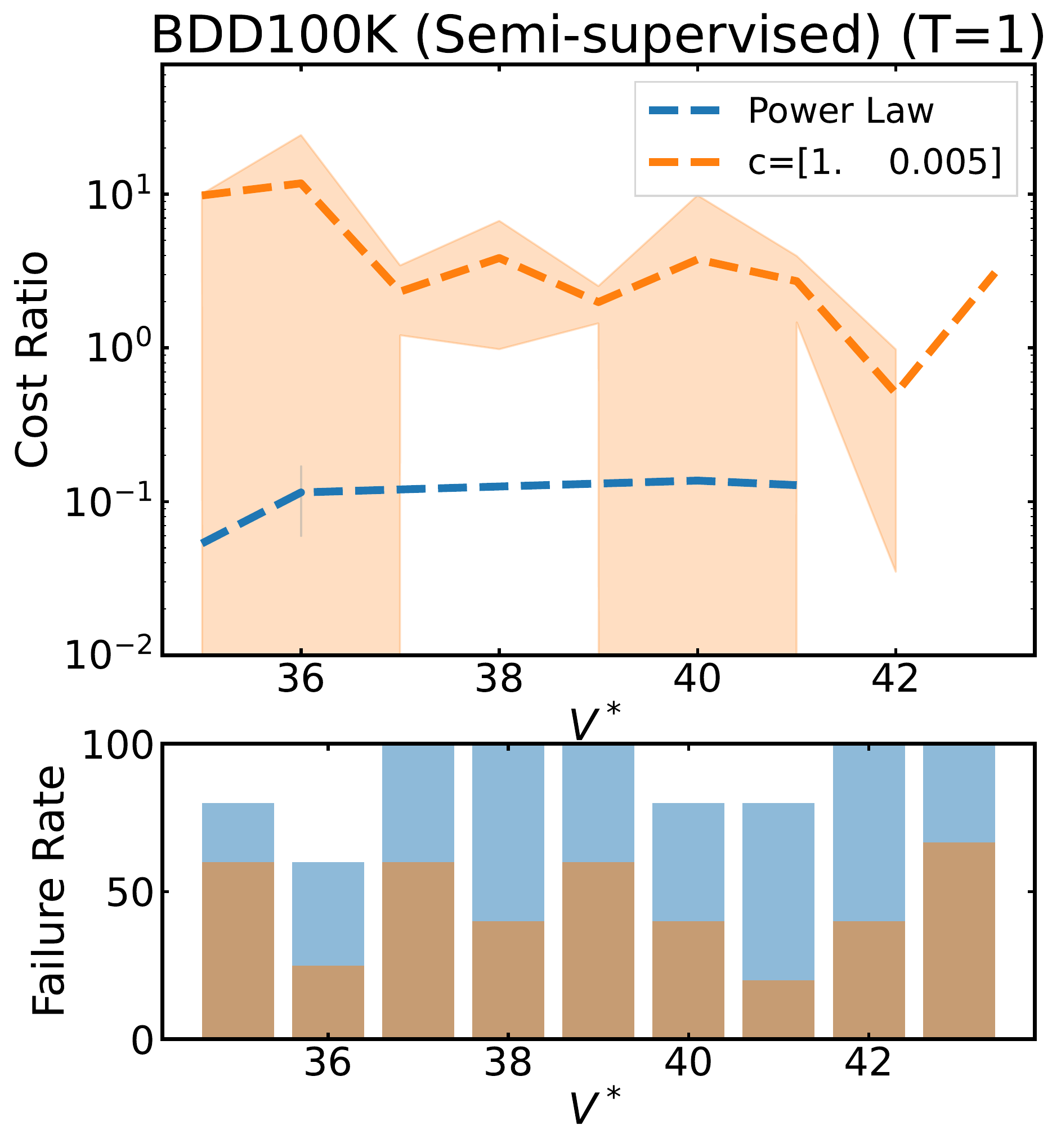} \end{minipage}
\begin{minipage}{0.24\linewidth}\includegraphics[width=1\textwidth]{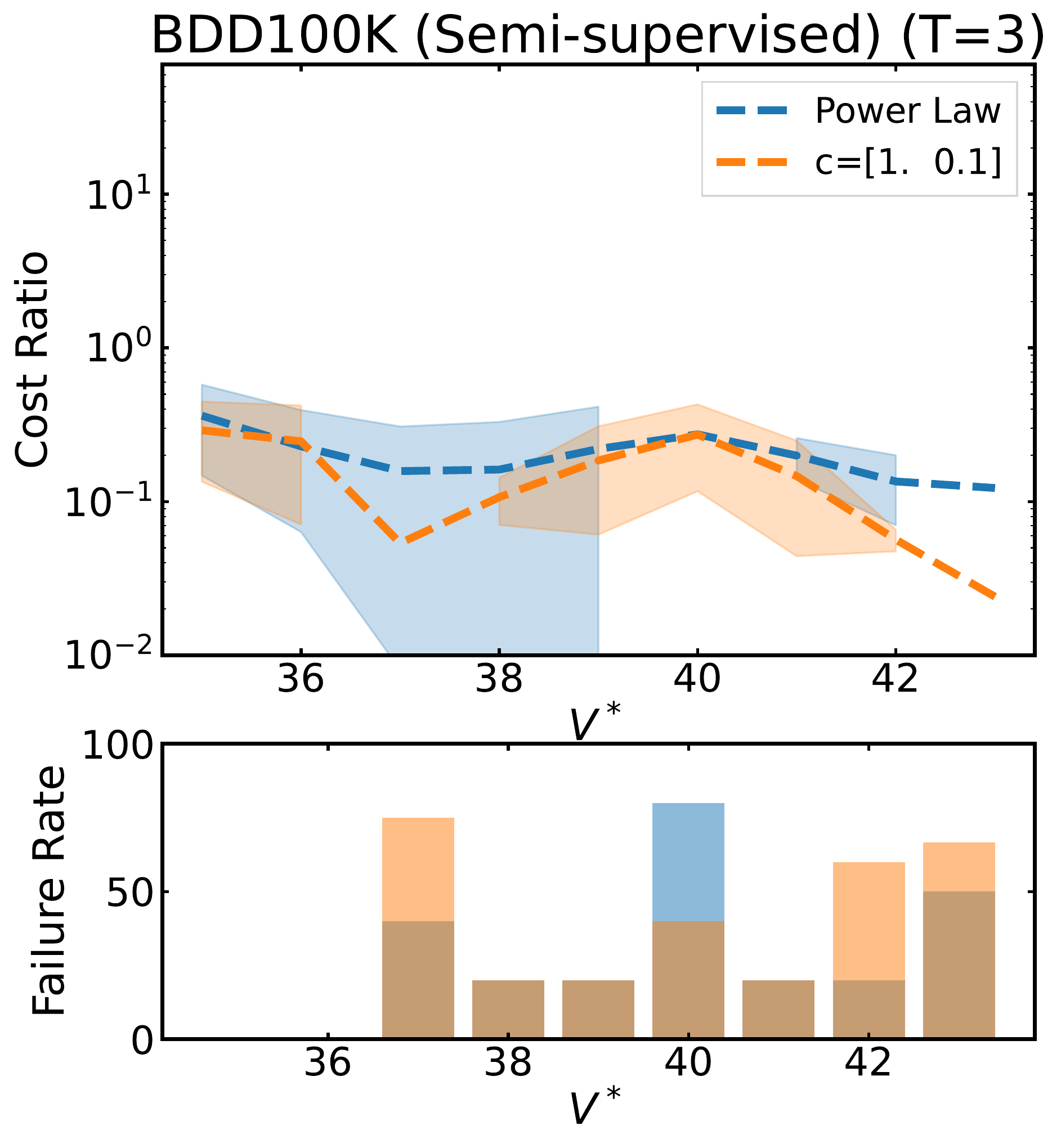} \end{minipage}
\begin{minipage}{0.24\linewidth}\includegraphics[width=1\textwidth]{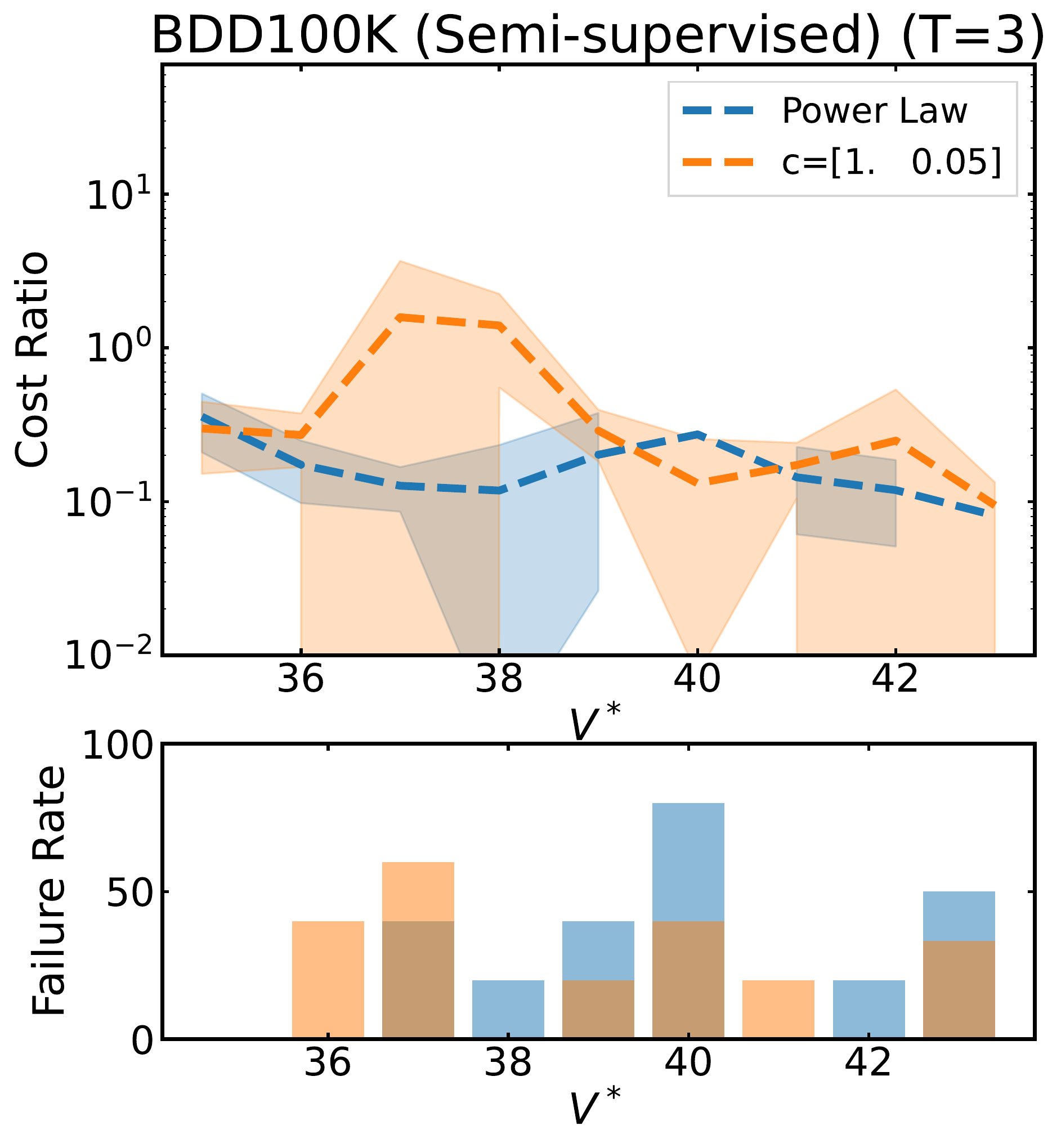} \end{minipage}
\begin{minipage}{0.24\linewidth}\includegraphics[width=1\textwidth]{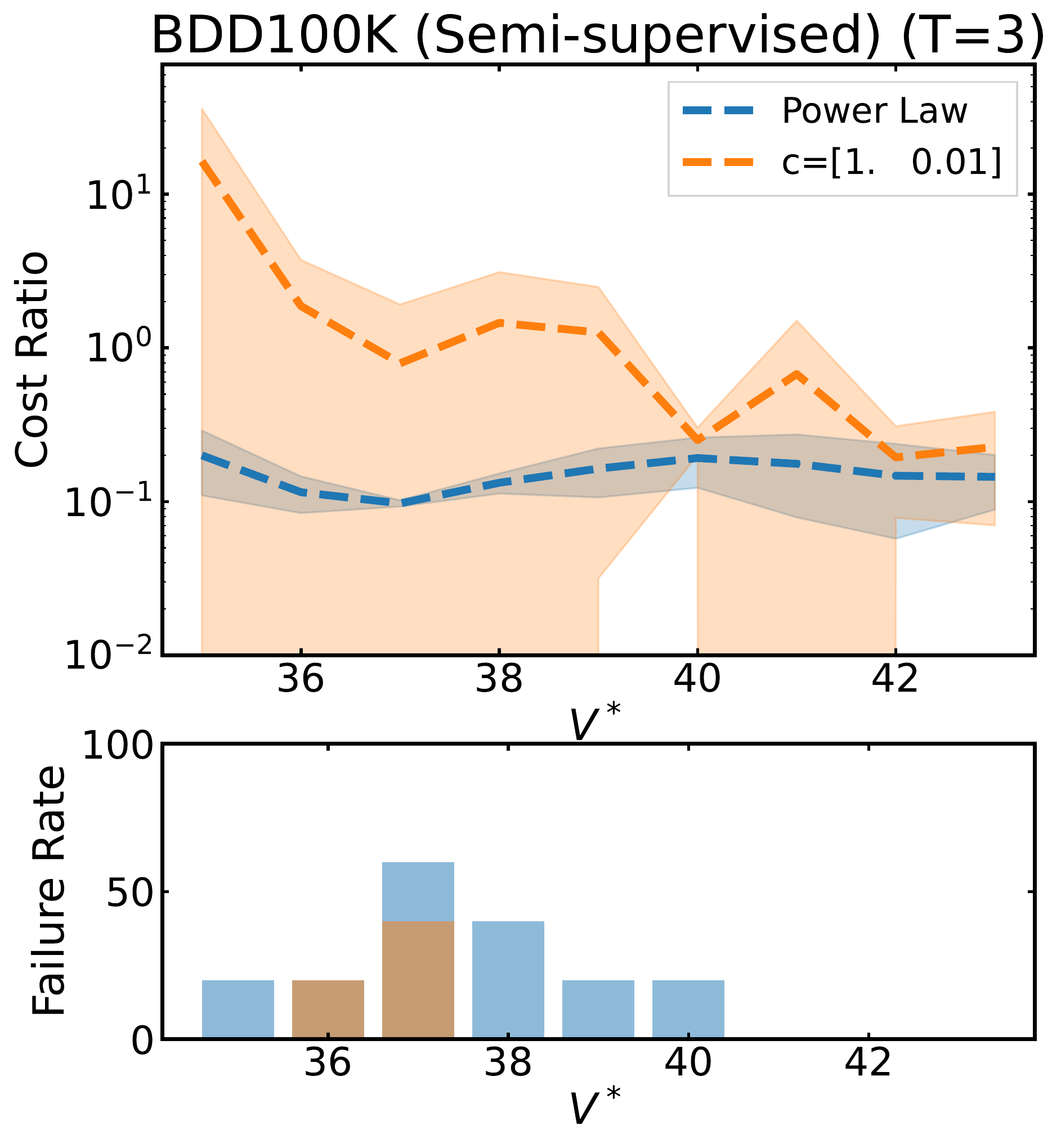} \end{minipage}
\begin{minipage}{0.24\linewidth}\includegraphics[width=1\textwidth]{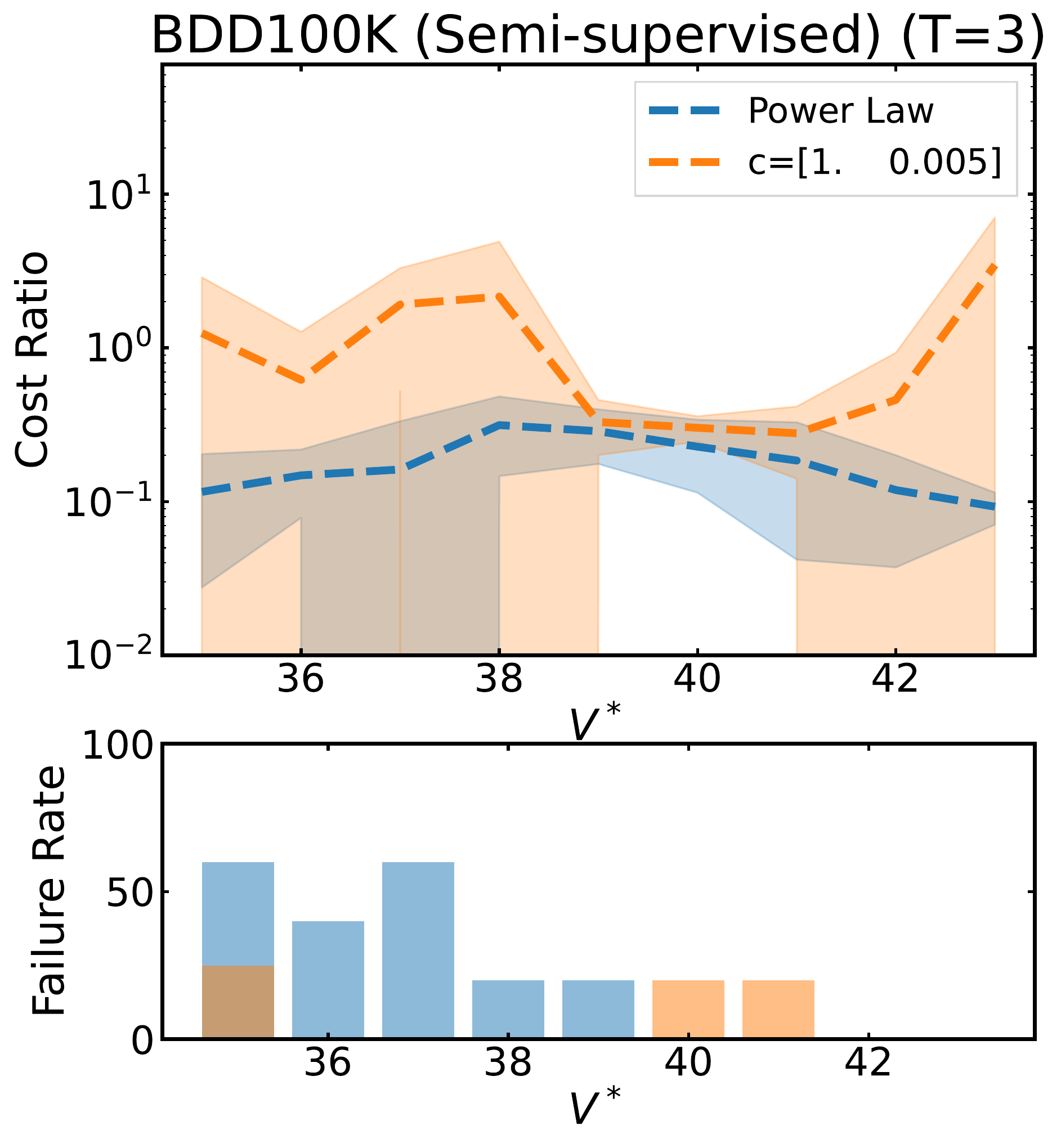} \end{minipage}
\vspace{-4mm}
\caption{\label{fig:app_head_to_head_two_d_bdd}
For experiments on BDD100K with two data types, mean $\pm$ standard deviation over 5 seeds of the cost ratio $\bc^\tpose (\bq^*_T - \bq_0) / \bc^\tpose (\bD^* - \bq_0) - 1$ and failure rate for different $V$ after removing $99$-th percentile outliers. 
We fix $c_0=1$ and $P=10^{13}$.
The rows correspond to $T=1, 3$ (see the main paper for $T=5$) and the columns correspond to $c_1 = c_0/2, c_0/5, c_0/10, c_0/20$. 
}
\end{center}
\vspace{-4mm}
\end{figure*}

\subsection{The Value of Optimization over Estimation when $K = 2$}
\label{sec:app_experiment_k_equals_two}

Figure~\ref{fig:app_head_to_head_two_d_cifar} and Figure~\ref{fig:app_head_to_head_two_d_bdd} expand Figure~\ref{fig:two_d_head_to_head} to $T=1, 3$ rounds. The results validate the summary observations from Table~\ref{tab:two_d_summary} in that the baseline has considerably higher failure rates versus LOC. In particular for BDD100K at $T=1$, the baseline fails consistently for four out of five random seeds. 
On the other hand, recall that LOC admits a higher cost ratio compared to the baseline when $T=1$. We can observe now that this high cost ratio is due to the method incurring high cost for a few target $V^*$ values. 
This behavior is similar to the observation above on VOC with high penalties at $T=3$.

\begin{table}[!t]
    \centering
\resizebox{\linewidth}{!}{
\begin{tabular}{lccccc}
\toprule
  Regression Function   &  $T$ & \multicolumn{2}{c}{Regression} & \multicolumn{2}{c}{LOC} \\ \cmidrule(lr){3-4}\cmidrule(lr){5-6}
           &      & Failure rate & Cost ratio & Failure rate & Cost ratio \\
\midrule
\multirow{3}{*}{Logarithmic $\hat{v}(q;\btheta) = \theta_0 \log (q + \theta_1) + \theta_2$}
  &  1 &                $43\%$ &                 $0.19$ &              $\fir{2\%}$ &               $1.17$ \\
  &  3 &                $37\%$ &                 $0.17$ &              $\fir{2\%}$ &               $0.54$ \\
  &  5 &                $34\%$ &                 $0.16$ &              $\fir{1\%}$ &               $0.39$ \\
\midrule
\multirow{3}{*}{Arctan $\hat{v}(q;\btheta) = \frac{200}{\pi} \arctan (\theta_0 \frac{\pi}{2} q + \theta_1) + \theta_2$}
  &  1 &                $23\%$ &                 $3.31$ &               $\fir{0\%}$ &               $5.56$ \\
  &  3 &                $15\%$ &                 $3.01$ &               $\fir{0\%}$ &               $3.92$ \\
  &  5 &                $12\%$ &                 $2.90$ &               $\fir{0\%}$ &               $3.60$ \\
 \midrule
\multirow{3}{*}{Algebraic Root $\hat{v}(q;\btheta) = \frac{100 q}{1 + | \theta_0 q |^{\theta_1})^{1/\theta_1}} +  \theta_2$}
  & \rebut{ 1 } &               \rebut{ $52\%$ } &              \rebut{   $0.11$ } &            \rebut{  $\fir{23\%}$ } &               \rebut{ $0.81$ }  \\
  & \rebut{ 3 } &               \rebut{ $44\%$ } &              \rebut{    $0.1$ } &            \rebut{   $\fir{2\%}$ } &               \rebut{ $0.87$ } \\
  & \rebut{ 5 } &               \rebut{ $44\%$ } &              \rebut{    $0.1$ } &            \rebut{   $\fir{2\%}$ } &               \rebut{ $0.54$ } \\
\bottomrule
\end{tabular}
}
%
    \caption{For experiments on CIFAR-100, average cost ratio $\bc^\tpose (\bq^*_T - \bq_0) / \bc^\tpose (\bD^* - \bq_0) - 1$ and failure rate measured over a range of $V^*$ and $T$. 
    We fix $c = 1$ and $P=10^7$.
    The best performing failure rate for each setting is bolded. The cost ratio is measured only for instances that achieve $V^*$. LOC consistently reduces the average failure rate, almost consistently down to $0\%$.}
    \label{tab:one_d_other_methods}
\vspace{-5mm}
\end{table}

\begin{table}[!t]
\begin{minipage}{0.69\linewidth}
\rebut{
\resizebox{\linewidth}{!}{
\begin{tabular}{clccccc}
\toprule
  & Data set &  $T$ & \multicolumn{2}{c}{Regression With Correction~\cite{mahmood2022howmuch}} & \multicolumn{2}{c}{LOC} \\ \cmidrule(lr){4-5}\cmidrule(lr){6-7}
  &          &      & Failure rate & Cost ratio & Failure rate & Cost ratio \\
\midrule
\multirow{6}{*}{\rotatebox[origin=c]{90}{Class.}}    
& \multirow{3}{*}{CIFAR-100}    &  1 &   $      14\%$ &   $\seco{0.94}$ &        $\fir{4\%}$ &    $0.99$ \\
&                               &  3 &   $\fir{ 1\%}$ &   $\seco{0.23}$ &        $     3\% $ &    $0.31$ \\
&                               &  5 &   $\fir{ 0\%}$ &   $\seco{0.17}$ &        $     2\% $ &    $0.19$ \\
  \cmidrule{2-7} 
 & \multirow{3}{*}{Imagenet}    &  1 &   $\fir{ 7\%}$ &          $1.03$ &        $     37\% $ &    $\seco{0.49}$ \\
 &                              &  3 &   $\fir{ 0\%}$ &          $0.21$ &        $      5\% $ &    $\seco{0.16}$ \\
 &                              &  5 &   $\fir{ 0\%}$ &          $0.14$ &        $      2\% $ &    $\seco{0.10}$ \\
 \midrule
  \multirow{6}{*}{\rotatebox[origin=c]{90}{Seg.}}    
  & \multirow{3}{*}{BDD100K}    &  1 &   $\fir{ 4\%}$ &          $4.03$ &        $     12\% $ &    $\seco{2.03}$ \\
  &                             &  3 &   $\fir{ 0\%}$ &          $1.02$ &        $\fir{ 0\%}$ &    $\seco{0.72}$ \\
  &                             &  5 &   $\fir{ 0\%}$ &          $0.62$ &        $\fir{ 0\%}$ &    $\seco{0.35}$ \\
  \cmidrule{2-7} 
 & \multirow{3}{*}{nuScenes}    &  1 &   $\fir{ 0\%}$ &          $27.2$ &        $     52\% $ &    $\seco{0.16}$ \\
 &                              &  3 &   $\fir{ 0\%}$ &          $0.75$ &        $\fir{ 0\%}$ &    $\seco{0.09}$ \\
 &                              &  5 &   $\fir{ 0\%}$ &          $0.30$ &        $\fir{ 0\%}$ &    $\seco{0.04}$ \\
 \midrule
 \multirow{3}{*}{\rotatebox[origin=c]{90}{Det.}}    
      & \multirow{3}{*}{VOC}    &  1 &   $\fir{ 0\%}$ &          $44.6$ &        $     25\% $ &    $\seco{0.56}$ \\
      &                         &  3 &   $\fir{ 0\%}$ &          $7.02$ &        $\fir{ 0\%}$ &    $\seco{1.10}$ \\
      &                         &  5 &   $\fir{ 0\%}$ &          $3.98$ &        $\fir{ 0\%}$ &    $\seco{0.84}$ \\
\bottomrule
\end{tabular}
}
}
\end{minipage}
\hfill
\begin{minipage}{0.28\linewidth}
    \caption{
    \rebut{
    Comparison against the correction factor-based Power Law Regression of~\citet{mahmood2022howmuch} 
    using the same setup as in Table~\ref{tab:one_d_summary}.
    The best performing cost ratio is underlined and the best performing failure rate for each setting is bolded. 
    Although the baseline is designed specifically to achieve low failure rates, LOC often can achieve competitive failure rates while reducing the cost ratios by an order of magnitude.
    }
    }
    \label{tab:one_d_with_tolerance}
\end{minipage}
\vspace{-2mm}
\end{table}

\subsection{LOC with Alternative Regression Functions}
\label{sec:app_experiment_alternate_regression_functions}

\citet{mahmood2022howmuch} show that we can use other regression functions instead of the power law to estimate the data requirement. Moreover, some functions tend to consistently over- or under-estimate the requirement. 
LOC can be deployed on top of any such regression function, since the regression function is only used to generate bootstrap samples.

Table~\ref{tab:one_d_other_methods} highlights experiments on CIFAR-100 with \rebut{three} alternative regression functions that were used by~\citet{mahmood2022howmuch}.
For both functions, we observe the same trends seen in Table~\ref{tab:one_d_summary}. That is, LOC reduces the failure rate down to approximately zero, at a marginal relative increase in cost. 

\rebut{

Noting that Power Law Regression often leads to failure,~\citet{mahmood2022howmuch} also propose a correction factor heuristic wherein they learn a parameter $\tau$ such that if the data collection problem requires a target performance $V^*$, we should instead aim to collect enough data to meet $V^* + \tau$. 
In order to learn this correction factor, we require a pre-existing data set upon which we can simulate a data collection policy.
\citet{mahmood2022howmuch} suggest setting $\tau$ such that we can achieve the data requirement $V^*$ for any $V^*$ on the pre-existing data set, and then fixing this parameter for new data sets.

Table~\ref{tab:one_d_with_tolerance} compares LOC (i.e., repeating Table~\ref{tab:one_d_summary}) with the Correction factor-based Power Law regression baseline of~\citet{mahmood2022howmuch}. Following the original paper, we tune $\tau$ using CIFAR-10 and apply it on all other data sets. 
The correction factor is designed to minimize the failure rate and thus, achieves nearly $0\%$ failure rate for all settings, but often at high cost ratios. 
On the other hand, LOC achieves generally low failure rates and low cost ratios. Specifically, for $T=3, 5$, we are competitive with the baseline on failure rates for most tasks while obtaining up to an order of magnitude decrease in costs. For $T=1$, we typically admit higher failure rates; however for the segmentation and detection tasks, we obtain up multiple orders of magnitude lower costs.
Finally, note that this baseline requires a similar prior task to be effective. For example, the baseline outperforms us on cost and failure rate both only on CIFAR-100, since it is tuned on CIFAR-10. On the other hand, LOC does not require this prior data set to be effective as evidence by its performance on non-classification tasks.

}

\end{document}